\newcommand*{\ICML}{}
\newcommand*{\CAMREADY}{}
	\renewcommand{\cite}[1]{\citep{#1}}
	\definecolor{mydarkblue}{rgb}{0,0.08,0.5}
	\def\footnoterule{\kern-3pt \hrule width 12pc \kern 2.6pt }
	\renewenvironment{abstract}%
	{%
		\vskip 0in%
		\centerline%
		{\large\bf Abstract}%
		\vspace{-1ex}%
		\begin{quote}%
		}
		{
			\par%
		\end{quote}%
		\vskip 0ex%
	}
	\title{\vskip -4ex \bf{Implicit Bias of Policy Gradient in Linear Quadratic Control: Extrapolation to Unseen Initial States} \vskip 0ex}
	\author{
		Author 1,\, Author 2,\, Author 3\\[1.5mm]
		{
			\fontsize{11}{11}\selectfont
			\textit{Tel Aviv University}
		}
	}
	\date{}
	\newtheorem{claim}[theorem]{Claim}
	\newtheorem{fact}[theorem]{Fact}
	\newtheorem{procedure}{Procedure}
	\newtheorem{conjecture}{Conjecture}	
	\newtheorem{hypothesis}{Hypothesis}	
	\newcommand{\qed}{\hfill\ensuremath{\blacksquare}}
	\newtheorem{lemma}{Lemma}
	\newtheorem{corollary}{Corollary}
	\newtheorem{theorem}{Theorem}
	\newtheorem*{theorem*}{Theorem}
	\newtheorem{proposition}{Proposition}
	\newtheorem*{proposition*}{Proposition}
	\newtheorem{remark}{Remark}
	\theoremstyle{definition}
	\newtheorem{definition}{Definition}
\definecolor{green}{rgb}{0.0, 0.5, 0.0}
\definecolor{xcolor-gray}{gray}{0.95}
\def\be{\begin{equation}}
	\def\ee{\end{equation}}
\def\beas{\begin{eqnarray*}}
	\def\eeas{\end{eqnarray*}}
\def\bea{\begin{eqnarray}}
	\def\eea{\end{eqnarray}}
\newcommand{\xbf}{{\mathbf x}}
\newcommand{\ybf}{{\mathbf y}}
\newcommand{\zbf}{{\mathbf z}}
\newcommand{\ubf}{{\mathbf u}}
\newcommand{\vbf}{{\mathbf v}}
\newcommand{\wbf}{{\mathbf w}}
\newcommand{\ebf}{{\mathbf e}}
\newcommand{\1}{{\mathbf 1}}
\newcommand{\0}{{\mathbf 0}}
\newcommand{\Abf}{{\mathbf A}}
\newcommand{\Bbf}{{\mathbf B}}
\newcommand{\Cbf}{{\mathbf C}}
\newcommand{\Ibf}{{\mathbf I}}
\newcommand{\Pbf}{{\mathbf P}}
\newcommand{\Xbf}{{\mathbf X}}
\newcommand{\Ybf}{{\mathbf Y}}
\newcommand{\Qbf}{{\mathbf Q}}
\newcommand{\Rbf}{{\mathbf R}}
\newcommand{\Ubf}{{\mathbf U}}
\newcommand{\Vbf}{{\mathbf V}}
\newcommand{\Kbf}{{\mathbf K}}
\newcommand{\Mbf}{{\mathbf M}}
\newcommand{\Sigmabf}{{\mathbf \Sigma}}
\newcommand{\Zbf}{{\mathbf Z}}
\newcommand{\B}{{\mathcal B}}
\newcommand{\E}{{\mathcal E}}
\newcommand{\G}{{\mathcal G}}
\newcommand{\KK}{{\mathcal K}}
\newcommand{\M}{{\mathcal M}}
\newcommand{\NN}{{\mathcal N}}
\newcommand{\RR}{{\mathcal R}}
\renewcommand{\S}{{\mathcal S}}
\newcommand{\U}{{\mathcal U}}
\newcommand{\X}{{\mathcal X}}
\newcommand{\I}{{\mathcal I}}
\newcommand{\EE}{\mathop{\mathbb E}} 
\newcommand{\R}{{\mathbb R}}
\newcommand{\N}{{\mathbb N}}
\newcommand{\inprod}[2]  {\left\langle{#1},{#2}\right\rangle}
\newcommand{\inprodbig}[2]  {\big \langle{#1},{#2} \big \rangle}
\newcommand{\inprodBig}[2]  {\Big \langle{#1},{#2} \Big \rangle}
\DeclareMathOperator*{\argmin}{argmin}
\newcommand{\MT}{\mathcal{T}}
\DeclareMathOperator*{\Var}{Var}
\DeclareMathOperator*{\Cov}{Cov}
\DeclareMathOperator{\Tr}{Tr}
\newcommand{\Trnormalized}{\widebar{\mathrm{Tr}}}
\newcommand{\cost}{J}
\newcommand{\costhorizonone}{J_1}
\newcommand{\excesscost}{\E_{\mathrm{cost}}}
\newcommand{\optmes}{\E_{\mathrm{opt}}}
\newcommand{\Kpg}{\Kbf_{\mathrm{pg}}}
\newcommand{\Koptall}{\Kbf_{\mathrm{ext}}}
\newcommand{\Kminnorm}{\Kbf_{\mathrm{no\text{-}ext}}}
\newcommand{\trainstates}{\S}
\newcommand{\orthstates}{\U}
\newcommand{\pgstates}{\X_{\mathrm{{pg}}}}
\newcommand{\lspan}{\mathrm{span}}
\newcommand{\Ashift}{\Abf_{\mathrm{shift}}}
\DeclareFontFamily{U}{mathx}{\hyphenchar\font45}
\DeclareFontShape{U}{mathx}{m}{n}{<-> mathx10}{}
\DeclareSymbolFont{mathx}{U}{mathx}{m}{n}
\DeclareMathAccent{\widebar}{0}{mathx}{"73}
\definecolor{darkspringgreen}{rgb}{0.09, 0.45, 0.27}
	\renewcommand{\endnote}[1]{\null} 
	\newcommand*{\ABBR}{}
	\newcommand*{\ABBR}{}
	\newcommand*{\ABBR}{}
	\newcommand*{\ABBR}{}
	\newcommand*{\ABBR}{}
	\newcommand{\eg}{{\it e.g.}}
	\newcommand{\ie}{{\it i.e.}}
	\newcommand{\cf}{{\it cf.}}
\begin{document}
	
	
	\ifdefined\ARXIV
		\maketitle
	\fi
	\ifdefined\NEURIPS
	\title{Paper Title}
		\author{
			Author 1 \\
			Author 1 Institution \\	
			\texttt{author1@email} \\
			\And
			Author 1 \\
			Author 1 Institution \\	
			\texttt{author1@email} \\
		}
		\maketitle
	\fi
	\ifdefined\CVPR
		\title{Paper Title}
		\author{
			Author 1 \\
			Author 1 Institution \\	
			\texttt{author1@email} \\
			\and
			Author 2 \\
			Author 2 Institution \\
			\texttt{author2@email} \\	
			\and
			Author 3 \\
			Author 3 Institution \\
			\texttt{author3@email} \\
		}
		\maketitle
	\fi
	\ifdefined\AISTATS
		\twocolumn[
		\aistatstitle{Paper Title}
		\ifdefined\CAMREADY
			\aistatsauthor{Author 1 \And Author 2 \And Author 3}
			\aistatsaddress{Author 1 Institution \And Author 2 Institution \And Author 3 Institution}
		\else
			\aistatsauthor{Anonymous Author 1 \And Anonymous Author 2 \And Anonymous Author 3}
			\aistatsaddress{Unknown Institution 1 \And Unknown Institution 2 \And Unknown Institution 3}
		\fi
		]	
	\fi
	\ifdefined\ICML
		\icmltitlerunning{Implicit Bias of Policy Gradient in Linear Quadratic Control: Extrapolation to Unseen Initial States}
		\twocolumn[
		\icmltitle{Implicit Bias of Policy Gradient in Linear Quadratic Control: \\ Extrapolation to Unseen Initial States} 
		\icmlsetsymbol{equal}{*}
		\begin{icmlauthorlist}
			\icmlauthor{Noam Razin}{tau,equal} 
			\icmlauthor{Yotam Alexander}{tau,equal}
			\icmlauthor{Edo Cohen-Karlik}{tau}
			\icmlauthor{Raja Giryes}{tau}
			\icmlauthor{Amir Globerson}{tau,google}
			\icmlauthor{Nadav Cohen}{tau}
		\end{icmlauthorlist}
		\icmlaffiliation{tau}{Tel Aviv University}
		\icmlaffiliation{google}{Google}
		\icmlcorrespondingauthor{Noam Razin}{noamrazin@mail.tau.ac.il}
		\icmlcorrespondingauthor{Yotam Alexander}{yotama@mail.tau.ac.il}
		\icmlkeywords{Implicit Bias, Policy Gradient, Extrapolation, Control Theory, Linear Quadratic Regulator}
		\vskip 0.3in
		]
		\printAffiliationsAndNotice{\icmlEqualContribution} 
	\fi
	\ifdefined\ICLR
		\title{Paper Title}
		\author{
			Author 1 \\
			Author 1 Institution \\
			\texttt{author1@email}
			\And
			Author 2 \\
			Author 2 Institution \\
			\texttt{author2@email}
			\And
			Author 3 \\ 
			Author 3 Institution \\
			\texttt{author3@email}
		}
		\maketitle
	\fi
	\ifdefined\COLT
		\title{Paper Title}
		\coltauthor{
			\Name{Author 1} \Email{author1@email} \\
			\addr Author 1 Institution
			\And
			\Name{Author 2} \Email{author2@email} \\
			\addr Author 2 Institution
			\And
			\Name{Author 3} \Email{author3@email} \\
			\addr Author 3 Institution}
		\maketitle
	\fi

	\begin{abstract}
In modern machine learning, models can often fit training data in numerous ways, some of which perform well on unseen (test) data, while others do not.
Remarkably, in such cases gradient descent frequently exhibits an \emph{implicit bias} that leads to excellent performance on unseen data.
This implicit bias was extensively studied in supervised learning, but is far less understood in optimal control (reinforcement learning).
There, learning a controller applied to a system via gradient descent is known as \emph{policy gradient}, and a question of prime importance is the extent to which a learned controller \emph{extrapolates to unseen initial states}.
This paper theoretically studies the implicit bias of policy gradient in terms of extrapolation to unseen initial states.
Focusing on the fundamental \emph{Linear Quadratic Regulator} (\emph{LQR}) problem, we establish that the extent of extrapolation depends on the degree of exploration induced by the system when commencing from initial states included in training.
Experiments corroborate our theory, and demonstrate its conclusions on problems beyond LQR, where systems are non-linear and controllers are neural networks.
We hypothesize that real-world optimal control may be greatly improved by developing methods for informed selection of initial states to train on.
\end{abstract}

	\ifdefined\COLT
		\medskip
		\begin{keywords}
			\emph{TBD}, \emph{TBD}, \emph{TBD}
		\end{keywords}
	\fi

	
	\section{Introduction}
\label{sec:intro}

The ability to generalize from training data to unseen test data is a core aspect of machine learning.
Broadly speaking, there are two types of generalization one may hope for:
\emph{(i)}~in-distribution generalization, where test data is drawn from the same distribution as training data;
and
\emph{(ii)}~out-of-distribution generalization, also known as \emph{extrapolation}, where test data is drawn from a different distribution than that of the training data.
In modern regimes the training objective is often \emph{underdetermined}~---~\ie~it admits multiple solutions (parameter assignments fitting training data) that differ in their performance on test data~---~and the extent to which a learned solution generalizes is determined by an \emph{implicit bias} of the training algorithm~\cite{neyshabur2017implicit,vardi2023implicit}.
Remarkably, variants of gradient descent frequently converge to solutions with excellent in-distribution generalization~\cite{zhang2017understanding}, which in some cases extends to out-of-distribution generalization~\cite{miller2021accuracy}.
The implicit bias of gradient descent has accordingly attracted vast theoretical interest, with existing analyses focusing primarily on the basic framework of supervised learning (see, \eg,~\citet{neyshabur2014search,gunasekar2017implicit,soudry2018implicit,arora2019implicit,ji2019gradient,ji2019implicit,woodworth2020kernel,razin2020implicit,lyu2020gradient,lyu2021gradient,pesme2021implicit,razin2021implicit,razin2022implicit,frei2023double,frei2023benign,andriushchenko2023sgd,abbe2023generalization}).

As opposed to supervised learning, little is known about the implicit bias of gradient descent in the challenging framework of \emph{optimal control} (see overview of related work in \cref{sec:related}).
In optimal control~---~which in its broadest form is equivalent to reinforcement learning~---~the goal is to learn a \emph{controller} (also known as policy) that will steer a given \emph{system} (also known as environment) such that a given \emph{cost} is minimized (or equivalently, a given reward is maximized)~\cite{sontag2013mathematical}.
Algorithms that learn a controller by directly parameterizing it and setting its parameters through gradient descent are known as \emph{policy gradient} methods.
For implementing such methods, gradients with respect to controller parameters are either estimated via sampling~\cite{williams1992simple}, or, in cases where differentiable forms for the system and cost are at hand, the gradients may be computed through analytic differentiation (see, \eg,~\citet{hu2019chainqueen,qiao2020scalable,clavera2020model,mora2021pods,gillen2022leveraging,howell2022dojo,xu2022accelerated,wiedemann2023training}).

An issue of prime importance in optimal control (and reinforcement learning) is the extent to which a learned controller extrapolates to initial states unseen in training.
Indeed, in real-world settings training is often limited to few initial states, and a deployed controller is likely to encounter initial states that go well beyond what it has seen in training~\cite{zhu2020ingredients,dulac2021challenges}.
The ability of the controller to handle such initial states is imperative, particularly in safety-critical applications (\eg~robotics, industrial manufacturing, or autonomous driving).

The current paper seeks to take first steps towards theoretically addressing the following question.
\begin{center}
	\textbf{\textit{To what extent does the implicit bias of policy gradient lead to extrapolation to initial states unseen in training?}}
\end{center}
As a testbed for theoretical study, we consider the fundamental \emph{Linear Quadratic Regulator} (\emph{LQR}) problem~\cite{anderson2007optimal}.
There, systems are linear, costs are quadratic, and it is known that optimal controllers are linear~\cite{anderson2007optimal}.
Learning linear controllers in LQR via policy gradient has been the subject of various theoretical analyses (\eg,~\citet{fazel2018global,malik2019derivative,bhandari2019global,mohammadi2019global,mohammadi2021convergence,bu2019lqr,bu2020policy,jin2020analysis,gravell2020learning,hambly2021policy,hu2023toward}).
However, these analyses do not treat underdetermined training objectives, thus leave open the question of how implicit bias affects extrapolation.
To facilitate its study, we focus on LQR training objectives that are underdetermined.

Our theoretical analysis reveals that in underdetermined LQR problems, the extent to which linear controllers learned via policy gradient extrapolate to initial states unseen in training, depends on the interplay between the system and the initial states that were seen in training.
In particular, it depends on the degree of \emph{exploration} induced by the system when commencing from initial states seen in training.
We prove that if this exploration is insufficient, extrapolation does not take place.
On the other hand, we construct a setting that encourages exploration, and show that under it, extrapolation can be perfect.
We then consider a typical setting, \ie~one in which systems are generated randomly and initial states seen in training are arbitrary.
In this setting, we prove that the degree of exploration suffices for there to be non-trivial extrapolation~---~in expectation, and with high probability if the state space dimension is sufficiently large.

Two attributes of our analysis may be of independent interest.
First, are advanced tools we employ from the intersection of random matrix theory and topology.
Second, is a result by which the implicit bias of policy gradient over a linear controller does not minimize the Euclidean norm, in stark contrast to the implicit bias of gradient descent over linear predictors in supervised learning (\cf~\citet{zhang2017understanding}).

We corroborate our theory through experiments, demonstrating that the interplay between a linear system and initial states seen in training can lead a linear controller (learned via policy gradient) to extrapolate to initial states unseen in training.
Moreover, we show empirically that the phenomenon extends to non-linear systems and (non-linear) neural network controllers.

In real-world optimal control (and reinforcement learning), contemporary learning algorithms often extrapolate poorly to initial states unseen in training~\cite{rajeswaran2017towards,zhang2018study,zhang2019dissection,fujimoto2019off,witty2021measuring}.
Our results lead us to believe that this extrapolation may be greatly improved by developing methods for informed selection of initial states to train on.
We hope that our work will encourage research along this line. 

\textbf{Paper organization.}
The remainder of the paper is organized as follows.
\cref{sec:related} reviews related work.
\cref{sec:prelim} establishes preliminaries.
\cref{sec:analysis} delivers our theoretical analysis~---~a characterization of extrapolation to unseen initial states for linear controllers trained via policy gradient in underdetermined LQR problems.
\cref{sec:experiments} presents experiments with the analyzed LQR problems, as well as with non-linear systems and (non-linear) neural network controllers.
Lastly, \cref{sec:conclusion}~concludes.

	\section{Related Work}
\label{sec:related}

Most theoretical analyses of the implicit bias of gradient descent (or variants thereof) focus on the basic framework of supervised learning.
Such analyses traditionally aim to establish in-distribution generalization, or to characterize solutions found in training without explicit reference to test data (see, \eg,~\citet{neyshabur2014search,gunasekar2017implicit,soudry2018implicit,arora2019implicit,ji2019gradient,ji2019implicit,woodworth2020kernel,razin2020implicit,lyu2020gradient,lyu2021gradient,azulay2021implicit,pesme2021implicit,razin2021implicit,razin2022implicit,andriushchenko2023sgd,frei2023double,frei2023benign,marcotte2023abide,chou2023more,chou2024gradient}).
Among recent analyses are also ones centering on out-of-distribution generalization, \ie~on extrapolation (see~\citet{xu2021neural,abbe2022learning,abbe2023generalization,cohen2022implicit,cohen2023learning,zhou2023algorithms}).
These are motivated by the fact that in many real-world scenarios, training and test data are drawn from different distributions~\cite{shen2021towards}.
Our work is similar in that it also centers on extrapolation, and is also motivated by real-world scenarios (see \cref{sec:intro}).
It differs~in that it studies the challenging framework of optimal~control.

In optimal control, theoretical analyses of the implicit bias of gradient descent are relatively scarce.
For reinforcement learning, which in a broad sense is equivalent to optimal control:
\citet{hu2022actor} characterized a tendency towards high-entropy solutions with softmax parameterized policies;
and
\citet{kumar2021implicit,kumar2022dr3} revealed detrimental effects of implicit bias with value-based methods.
More relevant to our work are~\citet{zhang2020policy,zhang2021derivative,zhao2023data}, which for different LQR problems, establish that policy gradient implicitly enforces certain constraints on the parameters of a controller throughout optimization.
These analyses, however, do not treat underdetermined training objectives (they pertain to settings where there is a unique controller minimizing the training objective), thus leave open our question on the effect of implicit bias on extrapolation.
To the best of our knowledge, the current paper provides the first analysis of the implicit bias of policy gradient for underdetermined LQR problems.
Moreover, for optimal control problems in general, it provides the first analysis of the extent to which the implicit bias of policy gradient leads to extrapolation to initial states unseen in training.

Aside from \citet{zhang2020policy,zhang2021derivative,zhao2023data}, existing theoretical analyses of policy gradient for LQR problems largely fall into two categories.
First, are those proving convergence rates to the minimal cost, typically under assumptions ensuring a unique solution \cite{fazel2018global,malik2019derivative,bhandari2019global,mohammadi2019global,mohammadi2021convergence,bu2019lqr,bu2020policy,jin2020analysis,gravell2020learning,hambly2021policy,hu2023toward}.
Second, are those establishing sub-linear regret in online learning \cite{cohen2018online,agarwal2019online,agarwal2019logarithmic,cassel2021online,hazan2022introduction,chen2023regret}.
Both lines of work do not address the topic of implicit bias, which we focus on.

Finally, an empirical observation related to our work is that in real-world optimal control (and reinforcement learning), contemporary learning algorithms often extrapolate poorly to initial states unseen in training~\cite{rajeswaran2017towards,zhang2018study,zhang2019dissection,fujimoto2019off,witty2021measuring}.
This observation motivated our work, and our results suggest approaches for alleviating the limitation it reveals (see \cref{sec:conclusion}).

	\section{Preliminaries}
\label{sec:prelim}

\textbf{Notation.}
We use $\norm{\cdot}$ to denote the Euclidean norm of a vector or matrix,
$[N]$ to denote the set $\{ 1, \ldots, N \}$, where $N \in \N$, and $\%$ to denote the modulo operator.
We let $\ebf_1, \ldots, \ebf_D \in \R^D$ be the standard basis vectors.
Lastly, the subspace orthogonal to $\X \subset \R^D$ is denoted by $\X^\perp$, \ie~$\X^\perp := \brk[c]{ \vbf \in \R^D : \vbf \perp \xbf ~ , ~ \forall \xbf \in \X }$.

\subsection{Policy Gradient in Linear Quadratic Control}
\label{sec:prelim:policy_grad_lqr}

We consider a linear system, in which an initial state $\xbf_0 \in \R^D$ evolves according to:
\[
\xbf_{h} = \Abf \xbf_{h - 1} + \Bbf \ubf_{h - 1} ~~,~ \forall h \in \N \text{\,,}
\]
where $\Abf \in \R^{D \times D}$ and~$\Bbf^{D \times M}$ are matrices that define the system, and $\ubf_{h - 1} \in \R^M$ is the control at time ${h  - 1}$.
An LQR problem of horizon $H \in \N \cup \{ \infty\}$ over this system amounts to searching for controls $\ubf_0 , \ldots , \ubf_H$ that minimize the following quadratic cost:
\[
\sum\nolimits_{h = 0}^H \xbf_h^\top \Qbf \xbf_h + \ubf_h^\top \Rbf \ubf_h
\text{\,,}
\]
where $\Qbf \in \R^{D \times D}$ and $\Rbf \in \R^{M \times M}$ are positive semidefinite matrices that define the cost.
We focus on the practical case where the horizon $H$ is finite (extending our analysis to the asymptotic case $H = \infty$ is left for future work).
It is known that in the LQR problem, optimal controls are attained by a (state-feedback) linear controller~\cite{anderson2007optimal}, \ie~by setting each control $\ubf_h$ to be a certain linear function of the corresponding state~$\xbf_h$.
Accordingly, and in line with prior work (\eg,~\citet{fazel2018global,bu2019lqr,malik2019derivative}), we consider learning a linear controller parameterized by $\Kbf \in \R^{M \times D}$, which at time $h \in \{ 0 \} \cup [H]$ assigns the control $\ubf_h = \Kbf \xbf_h$.\footnote{
Since the horizon~$H$ is finite, in general, attaining optimal controls may require the linear controller to be time-varying, \ie~to implement different linear mappings at different times (\cf~\citet{anderson2007optimal}).
However, as detailed in~\cref{sec:prelim:underdetermined}, our analysis will consider settings in which a time-invariant linear controller suffices.
}
The cost attained by~$\Kbf$ with respect to a finite set $\X \subset \R^D$ of initial states is:
\be
\cost (\Kbf; \X) \! := \!
\frac{1}{\abs{ \X} } \! \sum\nolimits_{\xbf_0 \in \X} \sum\nolimits_{h = 0}^H \xbf_h^\top \Qbf \xbf_h + \xbf_h^\top \Kbf^\top \Rbf \Kbf \xbf_h
\text{\,,}
\label{eq:general_cost}
\ee
where, for each $\xbf_0 \in \X$, the states $\xbf_1 , \ldots , \xbf_H$ satisfy:\footnote{
As customary, we omit from the notation of $\xbf_1, \ldots, \xbf_H$ the dependence on $\xbf_0$ and $\Kbf$.
\label{foot:state_evol_notation}
}
\be
\xbf_{h} = \Abf \xbf_{h - 1} + \Bbf \Kbf \xbf_{h - 1} = (\Abf + \Bbf \Kbf)^h \xbf_0
\text{\,.}
\label{eq:state_evol}
\ee

Given a (finite) set $\trainstates \subset \R^D$ of initial states seen in training, the controller~$\Kbf$ is learned by minimizing the \emph{training cost}~$\cost (\cdot\, ; \trainstates)$.
Learning via policy gradient amounts to iteratively updating the controller as follows:
\be
\Kbf^{(t + 1)} = \Kbf^{(t)} - \eta \cdot \nabla \cost \brk{ \Kbf^{(t)} ; \trainstates } ~~ , ~ \forall t \in \N
\text{\,,}
\label{eq:policy_grad}
\ee
where $\eta > 0$ is a predetermined learning rate, and we assume throughout that $\Kbf^{(1)} = \0$.

\subsection{Underdetermined Linear Quadratic Control}
\label{sec:prelim:underdetermined}

Existing analyses of policy gradient for learning linear controllers in LQR (\eg,~\citet{fazel2018global,malik2019derivative,bu2019lqr,bu2020policy,mohammadi2019global,mohammadi2021convergence,bhandari2019global,hambly2021policy}) typically assume that $\Rbf$ is positive definite, meaning that controls are regularized, and that the set~$\S$ of initial states seen in training spans~$\R^D$ (or similarly, when training over a distribution of initial states, that the support of the distribution spans~$\R^D$).
Under these assumptions, the training cost $\cost ( \cdot\, ; \trainstates)$ is not underdetermined~---~it entails a single global minimizer, which produces optimal controls from any initial state.
Thus, our question on the effect of implicit bias on extrapolation (see \cref{sec:intro}) is not applicable.

To facilitate a study of the foregoing question, we focus on underdetermined problems (ones in which the training cost entails multiple global minimizers), obtained through the following assumptions:
\emph{(i)}~$\Rbf = \0$, meaning that controls are unregularized;\footnote{
This assumption is necessary, in the sense that without it, even if assumptions \emph{(ii)} and \emph{(iii)} hold, the training cost may not be underdetermined, \ie~it may entail a single global minimizer.
See \cref{app:determined_R_neq_zero} for details.
}
\emph{(ii)}~$M = D$ and $\Bbf \in \R^{D \times D}$ has full rank, implying that the controller's ability to affect the state is not limited;
and 
\emph{(iii)}~the set~$\S$ of initial states seen in training does not span~$\R^D$ (note that, except for the trivial case of $\S = \{ \0\}$,  this implies $D \geq 2$).
For conciseness, in the main text we fix $\Qbf$ to be an identity matrix, and assume that $\Bbf$ is an orthogonal matrix.
Extensions of our results to more general $\Qbf$ and $\Bbf$ are discussed throughout.

In our setting of interest, the cost attained by a controller $\Kbf$ with respect to an arbitrary (finite) set $\X \subset \R^D$ of initial states (\cref{eq:general_cost}) simplifies to:
\be
\cost (\Kbf; \X) =  \frac{1}{\abs{\X}} \hspace{-0.5mm}
\sum\nolimits_{\xbf_0 \in \X} \hspace{-0.25mm} \sum\nolimits_{h = 0}^H \norm*{ (\Abf + \Bbf \Kbf)^h \xbf_0 }^2
\text{,}
\label{eq:cost}
\ee
with the global minimum of this cost being:
\[
\cost^* (\X) := \min\nolimits_{\Kbf \in \R^{D \times D}} \cost (\Kbf; \X) = \frac{1}{ \abs{\X} } \sum\nolimits_{\xbf_0 \in \X} \norm{ \xbf_0 }^2
\text{\,.}
\]
A controller $\Kbf \in \R^{D \times D}$ attains this global minimum if and only if $\Kbf \xbf_0 = - \Bbf^{-1} \Abf \xbf_0$ for all $\xbf_0 \in \X$, or equivalently:
\be
\norm*{ (\Abf + \Bbf \Kbf) \xbf_0 }^2  = 0
~~ , ~ \forall \xbf_0 \in \X
\text{\,,}
\label{eq:optimal_cond}
\ee
\ie~every $\xbf_0 \in \X$ is mapped to zero by the state dynamics that $\Kbf$ induces (see \cref{app:proofs:cost_min_underdetermined} for step-by-step derivations of $\cost^* (\X)$ and the optimality condition in \cref{eq:optimal_cond}).

To see that in our setting the training cost $\cost ( \cdot\, ; \trainstates)$ is indeed underdetermined, notice that, since the set~$\trainstates$ of initial states seen in training does not span~$\R^D$, there exist infinitely many controllers~$\Kbf$ satisfying:
\be
\Kbf \xbf_0 = - \Bbf^{-1} \Abf \xbf_0 ~~,~ \forall \xbf_0 \in \trainstates
\text{\,.}
\label{eq:seen_controls_opt}
\ee
That is, there are infinitely many controllers minimizing the training cost.
We denote by~$\KK_\trainstates$ the (infinite) set comprising these controllers,~\ie:
\be
\KK_\trainstates := \brk[c]*{ \Kbf \in \R^{D \times D} : \cost ( \Kbf ; \trainstates) = \cost^* ( \trainstates ) }
\text{\,.}
\label{eq:train_cost_minimizers}
\ee

\textbf{Significance of underdetermined LQR.}
The main purpose of our underdetermined LQR setting is to serve as a testbed for theoretical study of implicit bias in optimal control, analogously to how underdetermined linear prediction serves as an important testbed for theoretical study of implicit bias in supervised learning (\eg,~\citet{soudry2018implicit,bartlett2020benign,shamir2022implicit}).  
We note however that our setting is also practically motivated.
See \cref{app:underdetermined_lqr_significance} for details.

\subsection{Quantifying Extrapolation}
\label{sec:prelim:extrapolation}

Let $\orthstates$ be an (arbitrary) orthonormal basis for~$\trainstates^\perp$ (subspace orthogonal to the initial states seen in training).
A controller~$\Kbf$ is fully determined by the controls it assigns to states in $\trainstates$ and~$\orthstates$.
The controllers in~$\KK_\trainstates$ (\cref{eq:train_cost_minimizers}), \ie~the controllers minimizing the training cost, all satisfy \cref{eq:seen_controls_opt}, and in particular agree on the controls they assign to states in~$\trainstates$.
However, they differ arbitrarily in the controls they assign to states in~$\orthstates$.
The performance of a controller~$\Kbf$ on states in~$\orthstates$ will quantify extrapolation of~$\Kbf$ to initial states unseen in training.
Two measures will facilitate this quantification.
The first, referred to as the \emph{optimality measure}, is based on the optimality condition in \cref{eq:optimal_cond}.
Namely, it measures extrapolation by how close $\norm{ (\Abf + \Bbf \Kbf) \xbf_0 }^2$ is to zero for every $\xbf_0 \in \orthstates$.
\begin{definition}
\label{def:opt_measure}
The \emph{optimality measure} of extrapolation for a controller~$\Kbf^{D \times D}$ is:
\[
\optmes ( \Kbf ) := \frac{1}{ \abs{ \orthstates } } \sum\nolimits_{\xbf_0 \in \orthstates} \norm*{ (\Abf + \Bbf \Kbf) \xbf_0 }^2
\text{\,.}
\]
\end{definition}
The second measure of extrapolation, referred to as the \emph{cost measure}, is the suboptimality of the cost attained by $\Kbf$ with respect to~$\orthstates$.
\begin{definition}
\label{def:cost_measure}
The \emph{cost measure} of extrapolation for a controller~$\Kbf^{D \times D}$ is:
\[
\excesscost ( \Kbf ) := \cost ( \Kbf ; \orthstates ) - \cost^* (\orthstates)
\text{\,.}
\]
\end{definition}
The optimality and cost measures are complementary: the former disentangles the impact of a controller on initial states from its impact on subsequent states, whereas the latter considers the impact on both initial states and subsequent states in a trajectory.
Both measures are non-negative, with lower values indicating better extrapolation.
Their minimal value is zero, and the unique member of~$\KK_\trainstates$ attaining this value is the perfectly extrapolating controller $\Koptall \in \KK_\trainstates$ defined by:
\be
\Koptall \xbf_0 = - \Bbf^{-1} \Abf \xbf_0 ~~,~ \forall \xbf_0 \in \orthstates
\text{\,.}
\label{eq:unseen_controls_opt}
\ee
More generally, an arbitrary controller has zero optimality measure if and only if it has zero cost measure.
We note that, as shown in \cref{app:extrapolation_measures_invariant}, the optimality and cost measures are both invariant to the choice of~$\orthstates$, hence we do not include it in their notation.

Throughout our analysis, we shall consider as a baseline the controller $\Kminnorm \in \KK_\trainstates$ defined by:
\be
\Kminnorm \xbf_0 = \0 ~~,~ \forall \xbf_0 \in \orthstates
\text{\,,}
\label{eq:unseen_controls_zero}
\ee
\ie~the controller which minimizes the training cost while assigning null controls to states in~$\orthstates$.\footnote{
	If one allows for a baseline that does not minimize the training cost, then the initial controller $\Kbf^{(1)} = \0$ is also a sensible choice.
	Our theoretical results (\cref{sec:analysis}) hold as stated when replacing $\Kminnorm$ with $\Kbf^{(1)}$ as the baseline.
}
Aside from degenerate cases, the optimality and cost measures of~$\Kminnorm$ are both positive.\footnote{%
	The optimality measure of~$\Kminnorm$ is zero if and only if $\Abf \xbf_0 = \0$ for all $\xbf_0 \in \U$, \ie~if and only if the zero controller minimizes the measure as well.
	An identical statement holds for the cost measure.
}
When quantifying extrapolation for a controller $\Kpg$ learned via policy gradient (\cref{eq:policy_grad}), we will compare its optimality and cost measures to those~of~$\Kminnorm$.
Namely, we will examine the ratios $\optmes (\Kpg) / \optmes (\Kminnorm)$ and $\excesscost (\Kpg) / \excesscost (\Kminnorm)$, where a value of one corresponds to trivial (no) extrapolation and a value of zero corresponds to perfect extrapolation.

	\begin{figure*}[t]
	\vspace{1.5mm}
	\begin{center}
		\includegraphics[width=1\textwidth]{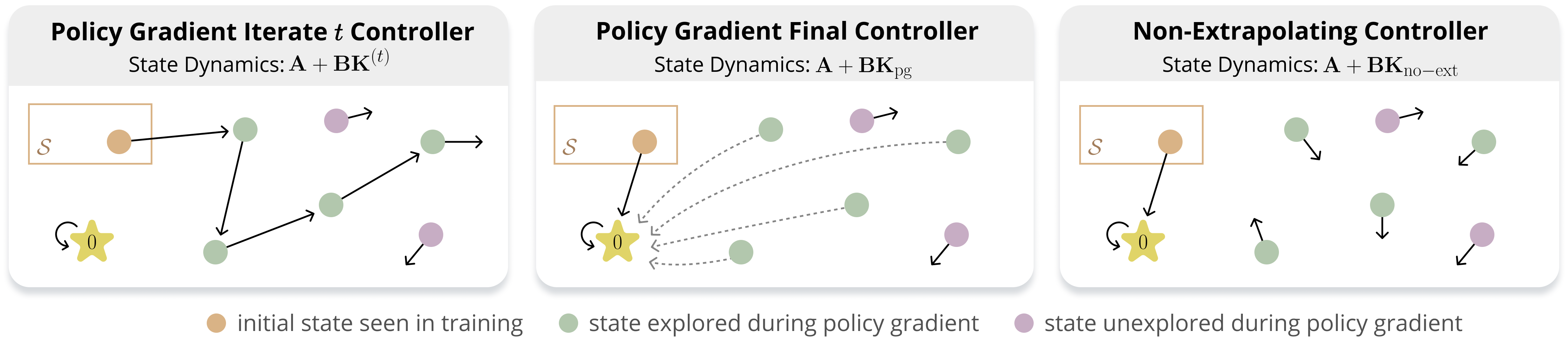}
	\end{center}
	\vspace{-1.5mm}
	\caption{
		Intuition behind our theoretical analysis: in underdetermined LQR problems (\cref{sec:prelim:underdetermined}), the extent to which a controller learned via policy gradient extrapolates to initial states unseen in training, depends on the degree of exploration induced by the system when commencing from initial states that were seen in training.
		Illustrated are the state dynamics induced by the $t$'th iterate of policy gradient $\Kbf^{(t)}$ (left), by the final policy gradient controller $\Kpg$ (middle), and by the non-extrapolating controller $\Kminnorm$ defined in \cref{sec:prelim:extrapolation} (right).
		An arbitrary controller $\Kbf$ extrapolates to initial states unseen in training if $\norm{ (\Abf + \Bbf \Kbf) \xbf}^2$ is small for $\xbf \in \trainstates^\perp$, \ie~if the dynamics induced by $\Kbf$ send towards zero states that are orthogonal to the set~$\trainstates$ of initial states seen in training (see \cref{sec:prelim:extrapolation}).
		Due to the structure of training cost gradients, the dynamics induced by the final policy gradient controller $\Kpg$ send towards zero every state encountered in training.
		Accordingly, the extent to which $\Kpg$ extrapolates depends on the degree of exploration~---~the overlap of states encountered in training with directions orthogonal to~$\trainstates$.
		On the other hand, the controller $\Kminnorm$ ensures that states in $\trainstates$ are sent to zero (thereby minimizing the training cost), but does not handle states in~$\trainstates^\perp$.
		It thus does not extrapolate.
	}
	\label{fig:intuition}
\end{figure*}

\section{Analysis of Implicit Bias}
\label{sec:analysis}

This section theoretically analyzes the extent to which policy gradient leads linear controllers in underdetermined LQR problems to extrapolate to initial states unseen in training.

\subsection{Intuition: Extrapolation Depends on Exploration}
\label{sec:analysis:intuition}

Our analysis will reveal that the extent of extrapolation to initial states unseen in training depends on the degree of exploration induced by the system when commencing from initial states that were seen in training.
Before going into the formal results, we provide intuition behind this dependence.

Per \cref{eq:cost,eq:state_evol}, the training cost attained by a controller~$\Kbf$ can be written as follows:
\[
	\cost (\Kbf ; \trainstates ) = c + \frac{1}{\abs{\trainstates}} \sum\nolimits_{\xbf_0 \in \trainstates} \sum\nolimits_{h = 1}^H \norm*{ (\Abf + \Bbf \Kbf) \xbf_{h - 1} }^2
	\text{\,,}
\]
where $c > 0$ does not depend on~$\Kbf$, and for each $\xbf_0 \in \trainstates$, the states $\xbf_1 , \ldots , \xbf_{H - 1}$ are produced by the system when commencing from~$\xbf_0$ and steered by~$\Kbf$.\textsuperscript{\ref{foot:state_evol_notation}}
Thus, minimizing the training cost amounts to finding a controller $\Kbf$ such that $\norm{(\Abf + \Bbf \Kbf) \xbf}^2 = 0$ for every $\xbf$ belonging to a trajectory emanating from $\trainstates$.
Notice that it is possible to do so by simply ensuring that $\norm{(\Abf + \Bbf \Kbf) \xbf}^2 = 0$ for every $\xbf \in \trainstates$.
This is because, if $(\Abf + \Bbf \Kbf) \xbf = \0$ for some $\xbf \in \trainstates$, then all subsequent states in a trajectory emanating from~$\xbf$ are~zero.

As discussed in \cref{sec:prelim:extrapolation}, for~$\Kbf$ to extrapolate to initial states unseen in training, we would like $\norm*{ (\Abf + \Bbf \Kbf) \xbf }^2$ to be small for every $\xbf \in \orthstates$, where $\orthstates$ is an orthonormal basis for~$\trainstates^\perp$ (subspace orthogonal to the set~$\trainstates$ of initial states seen in training).
We have seen in \cref{sec:prelim:extrapolation} that merely minimizing $\norm*{ (\Abf + \Bbf \Kbf) \xbf }^2$ for every $\xbf \in \trainstates$ implies nothing about the magnitude of this term for $\xbf \in \orthstates$.
In other words, it implies nothing about extrapolation.

Fortunately, it can be shown that the structure of $\nabla \cost ( \cdot \, ; \trainstates)$ is such that at every iteration~$t \in \N$ of policy gradient (\cref{eq:policy_grad}), the iterates $\Kbf^{( t )}$ and~$\Kbf^{( t + 1 )}$ tend to satisfy $\norm{(\Abf + \Bbf \Kbf^{( t + 1 )}) \xbf  \big\|^2 < \big\| (\Abf + \Bbf \Kbf^{( t )}) \xbf }^2$
for every state $\xbf$ along trajectories emanating from~$\trainstates$ that were encountered in the iteration, \ie~that have been steered by~$\Kbf^{( t )}$.
Consequently, with $\Kpg$ being a controller trained by policy gradient, \smash{$\norm*{ (\Abf + \Bbf \Kpg) \xbf }^2$} is relatively small for every state~$\xbf$ encountered in training (\ie~for every $\xbf$ belonging to a trajectory that was produced during training).
The extent to which $\Kpg$ extrapolates therefore depends on the degree of exploration~---~the overlap of states encountered in training with~$\orthstates$, \ie~with directions orthogonal to~$\trainstates$.

The above intuition is illustrated in \cref{fig:intuition}.
The remainder of the section is devoted to its formalization.

\subsection{Extrapolation Requires Exploration}
\label{sec:analysis:no_gen}

The current subsection proves that in the absence of sufficient exploration, extrapolation to initial states unseen in training does not take place.

Recall from \cref{sec:prelim} that we consider a linear system defined by matrices $\Abf$ and~$\Bbf$, a set~$\trainstates$ of initial states seen in training, and a linear controller learned via policy gradient, whose iterates are denoted by $\Kbf^{(t)}$ for $t \in \N$.
Let~$\pgstates$ be the set of states encountered in training.
More precisely, $\pgstates$~is the union over $\xbf_0 \in \trainstates$ and $t \in \N$, of the states in the length~$H$ trajectory emanating from~$\xbf_0$ and steered by~$\Kbf^{(t)}$:
\begin{align}
& \pgstates := \label{eq:pgstates} \\
& \brk[c]2{ \! (\Abf + \Bbf \Kbf^{(t)})^h \xbf_0 :  \xbf_0 \in \trainstates , h \in \{0\} \cup  [H - 1] , t \in \N } \nonumber
\text{.}
\end{align}

\cref{prop:no_exploration_no_extrapolation} below establishes that the learned controller can only extrapolate to initial states spanned by~$\pgstates$.
More precisely, for any $t \in \N$ and $\xbf \in \pgstates^\perp$, the controller $\Kbf^{(t)}$ assigns to~$\xbf$ a trivial control of zero.
This implies that if $\pgstates \subseteq \lspan (\trainstates)$, meaning no state outside $\lspan (\trainstates)$ is encountered in training, then the optimality extrapolation measure of~$\Kbf^{(t)}$ is trivial, \ie~equal to that of~$\Kminnorm$ (see \cref{sec:prelim:extrapolation}).
\cref{prop:no_exploration_no_extrapolation} further shows that such non-exploratory settings exist~---~there exist systems (matrices $\Abf$ and $\Bbf$) with which, for any choice of~$\trainstates$, it holds that $\pgstates \subseteq \lspan (\trainstates)$.
In the exemplified settings, similarly to the optimality extrapolation measure, the cost extrapolation measure of~$\Kbf^{(t)}$ is trivial (and greater than zero).

\begin{proposition}
\label{prop:no_exploration_no_extrapolation}
For any iteration $t \in \N$ of policy gradient, the following hold.
\begin{itemize}[leftmargin=1.5em]
	\vspace{-1.5mm}
	\item (Exploration is necessary for extrapolation) For any $\xbf \in \pgstates^\perp$ it holds that $\Kbf^{(t)} \xbf = \0$.
	Consequently, if $\pgstates \subseteq \lspan (\trainstates)$ then $\optmes (\Kbf^{(t)}) = \optmes ( \Kminnorm )$.
	
	\item (Existence of non-exploratory settings) There exist system matrices $\Abf$ and $\Bbf$ such that, for any set $\trainstates$ of initial states seen in training: $\pgstates \subseteq \lspan (\trainstates)$; and:
	\[
		\begin{split}
			\optmes (\Kbf^{(t)}) & = \optmes ( \Kminnorm ) = 1 \text{\,,} \\[0.3em]
			\excesscost (\Kbf^{(t)}) & = \excesscost ( \Kminnorm ) = H
			\text{\,,}
		\end{split}
	\]
	where we recall that $H$ is the horizon.
\end{itemize}
\end{proposition}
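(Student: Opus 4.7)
The plan is to exploit an explicit form for $\nabla \cost(\Kbf; \trainstates)$ in which every rank-one summand has, as its right factor, (the transpose of) a state visited during a training rollout. Setting $\Mbf := \Abf + \Bbf \Kbf$ and $\ybf_h := \Mbf^h \xbf_0$ for the $h$'th state of the trajectory emanating from $\xbf_0 \in \trainstates$, direct differentiation of \cref{eq:cost} gives
\begin{equation*}
\nabla \cost(\Kbf; \trainstates) = \frac{2}{\abs{\trainstates}} \sum_{\xbf_0 \in \trainstates} \sum_{h=1}^H \sum_{k=0}^{h-1} \Bbf^\top (\Mbf^\top)^{h-1-k} \ybf_h \ybf_k^\top .
\end{equation*}
Each right factor $\ybf_k^\top$ is the transpose of a state in $\pgstates$, so $\nabla \cost(\Kbf; \trainstates)$ annihilates every $\xbf \in \pgstates^\perp$ from the right.

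For the first bullet I would induct on $t$. The base case $\Kbf^{(1)} = \0$ is immediate, and the inductive step uses the formula above together with the policy gradient update to conclude $\Kbf^{(t+1)} \xbf = \0$ for every $\xbf \in \pgstates^\perp$. Specializing to $\xbf \in \orthstates \subseteq \trainstates^\perp$ under the hypothesis $\pgstates \subseteq \lspan(\trainstates)$ (so that $\trainstates^\perp \subseteq \pgstates^\perp$) gives $(\Abf + \Bbf \Kbf^{(t)}) \xbf = \Abf \xbf = (\Abf + \Bbf \Kminnorm) \xbf$, and summing squared norms over $\orthstates$ yields $\optmes(\Kbf^{(t)}) = \optmes(\Kminnorm)$.

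For the second bullet I propose the orthogonal choice $\Abf = \Bbf = \Ibf$, which is consistent with all the setting's assumptions. The key is to strengthen the inductive hypothesis to the two-sided invariant $\Kbf^{(t)} = \Pbf_\trainstates \Kbf^{(t)} \Pbf_\trainstates$, where $\Pbf_\trainstates$ is the orthogonal projection onto $\lspan(\trainstates)$; equivalently, $\Kbf^{(t)}$ vanishes on $\trainstates^\perp$ and has image inside $\lspan(\trainstates)$. Under this invariant, $\Mbf^{(t)} = \Ibf + \Kbf^{(t)}$ preserves $\lspan(\trainstates)$, so every $\ybf_h, \ybf_k$ appearing in the gradient formula lies in $\lspan(\trainstates)$; a parallel observation for $(\Mbf^{(t)})^\top = \Ibf + (\Kbf^{(t)})^\top$, which preserves $\lspan(\trainstates)$ by symmetry of $\Pbf_\trainstates$, ensures that each left factor $(\Mbf^\top)^{h-1-k} \ybf_h$ also stays in $\lspan(\trainstates)$. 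Hence both factors of every rank-one term in $\nabla \cost$ sit in the proper subspaces, the invariant passes to $\Kbf^{(t+1)}$, and $\pgstates \subseteq \lspan(\trainstates)$.

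Given the invariant, the extrapolation measures fall out by direct computation: by the first bullet $\Kbf^{(t)} \xbf_0 = \0$ for every $\xbf_0 \in \orthstates$, so $(\Abf + \Bbf \Kbf^{(t)})^h \xbf_0 = \xbf_0$ for all $h$, giving $\optmes(\Kbf^{(t)}) = 1$ and $\cost(\Kbf^{(t)}; \orthstates) = H + 1$, whence $\excesscost(\Kbf^{(t)}) = H$; the identical values for $\Kminnorm$ follow in the same way. I expect the main obstacle to be maintaining the strengthened invariant through all iterations of the second bullet: while the right-factor annihilation is inherited from the first bullet, closing the induction on the left factor requires that $\Mbf^\top$ preserve $\lspan(\trainstates)$ and that this property propagate through the powers $(\Mbf^\top)^{h-1-k}$ appearing in the gradient formula, which is precisely what the choice $\Abf = \Bbf = \Ibf$ makes automatic.
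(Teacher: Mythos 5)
Your proposal is correct and follows essentially the same route as the paper's proof: your expanded gradient formula is just the paper's gradient lemma (Lemma~\ref{lem:lqr_gradient} with $\Qbf = \Ibf$) written out as rank-one terms whose right factors are visited states, yielding the same conclusion that the rows of $\Kbf^{(t)}$ lie in $\lspan(\pgstates)$, and your second bullet uses the same example $\Abf = \Bbf = \Ibf$ with the same two-sided (rows-and-columns) inductive invariant and the same closing computations of the optimality and cost measures.
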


\begin{proof}[Proof sketch (proof in~\cref{app:proofs:no_exploration_no_extrapolation})]
We establish that for any $\Kbf \in \R^{D \times D}$, the rows of $\nabla \cost (\Kbf; \trainstates)$ are spanned by states in the trajectories that emanate from~$\trainstates$ and are steered by~$\Kbf$.
Since $\Kbf^{(1)} = \0$, it follows that for any $t \in \N$, the rows of~$\Kbf^{(t)}$ are spanned by~$\pgstates$, and so $\Kbf^{(t)} \xbf = \0$ for any $\xbf \in \pgstates^\perp$.
If $\pgstates \subseteq \lspan (\trainstates)$ then this immediately implies that the optimality measure attained by $\Kbf^{(t)}$ is equal to that of $\Kminnorm$.

As for existence of non-exploratory settings, suppose that $\Abf = \Bbf = \Ibf$, where $\Ibf$ is the identity matrix.
With arbitrary~$\trainstates$, we prove that $\pgstates \subseteq \lspan (\trainstates)$ by showing that the state dynamics induced by $\Abf, \Bbf,$ and $\Kbf^{(t)}$ are invariant to $\lspan (\trainstates)$, \ie~$(\Abf + \Bbf \Kbf^{(t)}) \xbf \in \lspan (\trainstates)$ if $\xbf \in \lspan (\trainstates)$.
Then, by the first part of the proposition, $\pgstates \subseteq \lspan (\trainstates)$ implies that $\Kbf^{(t)} \xbf = \0$ for any $\xbf \in \orthstates$.
The same is true for $\Kminnorm$.
Hence, $\Abf + \Bbf \Kbf^{(t)}$ and $\Abf + \Bbf \Kminnorm$ both map any $\xbf \in \orthstates$ back to itself.
Using this observation, the optimality and cost measures of extrapolation attained by $\Kbf^{(t)}$ and $\Kminnorm$ are readily computed.
\end{proof}

\begin{remark}
The first part of \cref{prop:no_exploration_no_extrapolation} (exploration is necessary for extrapolation) extends to the setting where $\Bbf \in \R^{D \times D}$ is arbitrary and $\Qbf \in \R^{D \times D}$ is any positive semidefinite matrix.
The proof in~\cref{app:proofs:no_exploration_no_extrapolation} accounts for this more general setting.
\end{remark}

\subsection{Extrapolation in Exploration-Inducing Setting}
\label{sec:analysis:shift}

\cref{sec:analysis:no_gen} proved that, in the absence of sufficient exploration, extrapolation to initial states unseen in training does not take place.
We now show that with sufficient exploration, extrapolation can take place.
Namely, we construct a system that encourages exploration when commencing from a given initial state, and show that with this system and initial state, training via policy gradient leads to extrapolation, which~---~depending on characteristics of the cost~---~varies between partial and perfect.

Suppose that we are given an initial state seen in training, which, without loss of generality, is the standard basis vector $\ebf_1 \in \R^D$.\footnote{
If the initial state seen in training is some non-zero vector~$\xbf_0$ that differs from~$\ebf_1$, then the system we will construct is to be modified by replacing~$\Abf$ with $\Pbf^{-1} \Abf \Pbf$, where $\Pbf \in \R^{D \times D}$ is some invertible matrix that maps $\xbf_0$ to~$\ebf_1$.
}
Assume for simplicity that the horizon~$H$ is divisible by the state space dimension~$D$.\footnote{%
Extension of the analysis in this subsection to arbitrary $H \geq 2$ is straightforward, but results in less concise expressions.
}
When commencing from~$\ebf_1$ and steered by the first iterate of policy gradient, \ie~by $\Kbf^{(1)} = \0$, the system produces the length~$H$ trajectory $\MT := ( \ebf_1, \Abf \ebf_1, \ldots, \Abf^{H - 1} \ebf_1 )$.
In light of \cref{sec:analysis:no_gen}, for encouraging exploration we would like the states in~$\MT$ to span the entire state space.
A simple choice that ensures this is $\Abf = \Ashift := \sum\nolimits_{d = 1}^{D} \ebf_{ d \% D + 1} \ebf_d^\top$.
Under this choice, $\MT$ cyclically traverses through the standard basis vectors, \ie~through $\ebf_1 , \ldots , \ebf_D$.

\cref{prop:shift} below establishes that, in the setting under consideration, the implicit bias of policy gradient leads to extrapolation.
Specifically, the learned controller attains optimality and cost measures of extrapolation that are substantially less than those of~$\Kminnorm$ (see \cref{sec:prelim:extrapolation}).
This phenomenon is more potent the longer the horizon $H$ is, with perfect extrapolation attained in the limit $H \to \infty$.

\begin{proposition}
\label{prop:shift}
Assume that $\trainstates = \brk[c]{ \ebf_1 }$, $\Abf = \Ashift$, and $H$ is divisible by~$D$.
Then, policy gradient with learning rate \smash{$\eta = \brk1{ H^2 / D + H }^{-1}$} converges to a controller~$\Kpg$ that:
\emph{(i)}~minimizes the training cost, \ie~$\cost (\Kpg ; \trainstates) = \cost^* (\trainstates)$;
and
\emph{(ii)}~satisfies:
\[
\begin{split}
\frac{ \optmes \brk1{ \Kpg } }{  \optmes \brk*{ \Kminnorm } } & \leq \frac{ 4(D - 1)^2 }{ (H + D)^2 }
\text{\,,} \\[0.5em]
\frac{ \excesscost (\Kpg) }{ \excesscost (\Kminnorm) } & \leq \frac{ 4(D - 1)^2 }{ (H + D)^2 }
\text{\,.}
\end{split}
\]
\end{proposition}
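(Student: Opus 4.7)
The plan is to reduce the policy gradient dynamics to a low-dimensional scalar system by exploiting two structural facts. First, since $\Bbf$ is orthogonal, the chain rule gives $\nabla_\Kbf \cost(\,\cdot\,; \trainstates) = \Bbf^\top \nabla_\Mbf f$, where $\Mbf := \Abf + \Bbf \Kbf$ and $f(\Mbf) := \sum_{h=0}^H \|\Mbf^h \ebf_1\|^2$. Hence the iterates satisfy $\Mbf^{(t+1)} = \Mbf^{(t)} - \eta \nabla_\Mbf f(\Mbf^{(t)})$, i.e.\ vanilla gradient descent on $\Mbf$ initialized at $\Mbf^{(1)} = \Ashift$. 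Second, $\Mbf^{(1)}$ has the cyclic form $\sum_{d=1}^D \ebf_{d \% D + 1} \ebf_d^\top$, and I would establish inductively that every iterate retains this shape, i.e.\ $\Mbf^{(t)} = \sum_{d=1}^D \alpha_d^{(t)} \ebf_{d \% D + 1} \ebf_d^\top$ for some scalars $\alpha_1^{(t)}, \ldots, \alpha_D^{(t)}$. This is verified by direct expansion: setting $p(j) := (j \% D) + 1$, whenever $\Mbf$ has the above form, $\Mbf^h \ebf_1$ is proportional to $\ebf_{p(h)}$ and $(\Mbf^\top)^{h-1-k} \ebf_{p(h)}$ is proportional to $\ebf_{p(k+1)}$, so every rank-one summand of $\nabla_\Mbf f$ is proportional to $\ebf_{p(k+1)} \ebf_{p(k)}^\top$, preserving the cyclic support.

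The next step is to establish one-step convergence in the reduced coordinates. Writing $f = \sum_{h=0}^H \beta_h^2$ with $\beta_h := \prod_{j=0}^{h-1} \alpha_{p(j)}$, at initialization every $\alpha_d^{(1)} = 1$, hence every $\beta_h = 1$. A combinatorial count of how many times $\ebf_d$ is visited during time-steps $0, \ldots, h-1$, summed over $h$, yields $\partial f / \partial \alpha_d |_{\Mbf^{(1)}} = H(J+1) - 2(d-1) J$, where $J := H/D$. Substituting the prescribed learning rate $\eta = 1/(H(J+1))$ gives exactly $\alpha_1^{(2)} = 0$ and $\alpha_d^{(2)} = 2(d-1)/(H+D)$ for $d \geq 2$. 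Since $\alpha_1^{(2)} = 0$ forces $\beta_h = 0$ for every $h \geq 1$, both the training cost at iteration $2$ equals $\cost^*(\trainstates) = 1$ and $\nabla_\Mbf f = 0$ at that point; the iterates are therefore stationary for $t \geq 2$, establishing convergence to $\Kpg = \Bbf^\top (\Mbf^{(2)} - \Abf)$ and proving claim (i).

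It then remains to read off the extrapolation measures from the closed form for $\alpha^{(2)}$. For $d \geq 2$ one has $(\Abf + \Bbf \Kpg) \ebf_d = \alpha_d^{(2)} \ebf_{p(d)}$, whereas $(\Abf + \Bbf \Kminnorm) \ebf_d = \ebf_{p(d)}$ by definition of $\Kminnorm$; thus $\optmes(\Kpg) = \frac{1}{D-1} \sum_{d=2}^D (\alpha_d^{(2)})^2$ and $\optmes(\Kminnorm) = 1$, so the pointwise bound $(\alpha_d^{(2)})^2 \leq (\alpha_D^{(2)})^2 = 4(D-1)^2/(H+D)^2$ gives the optimality-measure ratio. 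For the cost measure, iterating $\Mbf^{(2)}$ from $\ebf_d$ produces squared norms equal to the consecutive products $\prod_{k=d}^{d+h-1} (\alpha_k^{(2)})^2$, vanishing once the product wraps around to include $\alpha_1^{(2)} = 0$. Since each factor is at most $1$ (using $H \geq D - 2$), every such product is at most $4(D-1)^2/(H+D)^2$; summing the $(D-1)D/2$ non-zero terms and comparing to $\excesscost(\Kminnorm) = D/2$, computed from the analogous length-$(D-d+2)$ unit-norm trajectories under $\Abf + \Bbf \Kminnorm$, yields the stated ratio. The main delicate step is the cyclic-structure-preservation claim; once that is established, the remaining argument reduces to an elementary scalar calculation.
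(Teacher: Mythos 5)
Your proposal is correct and follows essentially the same route as the paper's proof: an explicit computation of $\nabla \cost(\Kbf^{(1)};\trainstates)$ revealing the cyclic structure, one-step convergence to the closed-form $\Kbf^{(2)}$ with coefficients $\tfrac{2(d-1)}{H+D}$, and then direct evaluation of the two extrapolation measures against $\Kminnorm$ (including the same $\excesscost(\Kminnorm)=D/2$ and term-by-term bound). Your reparameterization $\Mbf=\Abf+\Bbf\Kbf$ and the inductive preservation of the cyclic support are a harmless repackaging, and the induction is more than is needed, since once $\alpha_1^{(2)}=0$ the optimality condition $(\Abf+\Bbf\Kbf^{(2)})\ebf_1=\0$ already forces the gradient to vanish and the iterates to be stationary, which is exactly how the paper concludes convergence.
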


\begin{proof}[Proof sketch (proof in~\cref{app:proofs:shift})]
The analysis follows from first principles, building on a particularly lucid form that $\nabla \cost ( \Kbf^{(1)} ; \trainstates)$ takes.
Specifically, we derive an explicit expression for $\Kbf^{(2)}$, and show that it minimizes the training cost via the optimality condition of \cref{eq:optimal_cond}.
This implies that policy gradient converges to $\Kpg = \Kbf^{(2)}$.
Extrapolation in terms of the optimality and cost measures then follows from the derived expression for $\Kbf^{(2)}$.
\end{proof}

\begin{remark}
\cref{app:extension_q} generalizes \cref{prop:shift} to the setting where $\Qbf$ is any diagonal positive semidefinite matrix.
The generalized analysis sheds light on how $\Qbf$ impacts extrapolation. 
In particular, it shows that for certain values of $\Qbf$, extrapolation can be perfect even with a finite horizon~$H$.
\end{remark}

\subsubsection{Implicit Bias in Optimal Control $\neq$ Euclidean Norm Minimization}
\label{sec:analysis:shift:not_norm}

A widely known fact is that in supervised learning, when labels are continuous (regression) and the training objective is underdetermined, gradient descent over linear predictors implicitly minimizes the Euclidean norm.
That is, among all predictors minimizing the training objective, gradient descent converges to the one whose Euclidean norm is minimal (\cf~\citet{zhang2017understanding}).
A perhaps surprising implication of \cref{prop:shift}, formalized by \cref{lem:min_norm,cor:no_euc_norm_min} below, is that an analogous phenomenon does \emph{not} take place in optimal control.
In fact, among the controllers minimizing the training cost, the (unique) controller with minimal Euclidean norm is the non-extrapolating $\Kminnorm$.
Thus, the extrapolation guarantee of \cref{prop:shift} implies that policy gradient over a linear controller does not implicitly minimize the Euclidean norm.
This finding highlights that conventional wisdom regarding implicit bias in supervised learning cannot be blindly applied to optimal control.
We hope it will encourage further research dedicated to implicit bias in optimal control.\footnote{
	In particular, we do not exclude the possibility of policy gradient implicitly minimizing some complexity measure different from the Euclidean norm.
	We regard investigation of this prospect as an interesting avenue for future work.
}

\begin{lemma}
\label{lem:min_norm}
Of all controllers minimizing the training cost, \ie~all $\Kbf \in \KK_\trainstates$ (\cref{eq:train_cost_minimizers}), the non-extrapolating $\Kminnorm$ is the unique one with minimal Euclidean norm.
 \end{lemma}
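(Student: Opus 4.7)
The plan is to decompose the Frobenius norm of any $\Kbf \in \KK_\trainstates$ along the orthogonal splitting $\R^D = \lspan(\trainstates) \oplus \trainstates^\perp$, observe that the training cost condition pins down the action of $\Kbf$ on $\lspan(\trainstates)$, and conclude that the only remaining freedom lies in the action of $\Kbf$ on $\trainstates^\perp$, which is uniquely minimized by setting it to zero.

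More concretely, let $P_\trainstates$ and $P_\orthstates$ denote the orthogonal projections onto $\lspan(\trainstates)$ and $\trainstates^\perp$, so $\Ibf = P_\trainstates + P_\orthstates$ with $P_\trainstates P_\orthstates = \0$. I would write $\Kbf = \Kbf P_\trainstates + \Kbf P_\orthstates$, and using the cyclic property of the trace, check that
\[
\inprod{\Kbf P_\trainstates}{\Kbf P_\orthstates}_F = \Tr\brk*{P_\trainstates \Kbf^\top \Kbf P_\orthstates} = \Tr\brk*{\Kbf P_\orthstates P_\trainstates \Kbf^\top} = 0,
\]
so that $\norm{\Kbf}^2 = \norm{\Kbf P_\trainstates}^2 + \norm{\Kbf P_\orthstates}^2$. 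Next, I would invoke the characterization of $\KK_\trainstates$ given via \cref{eq:seen_controls_opt}: the condition $\Kbf \xbf_0 = -\Bbf^{-1} \Abf \xbf_0$ for all $\xbf_0 \in \trainstates$ extends by linearity to all of $\lspan(\trainstates)$, hence $\Kbf P_\trainstates = -\Bbf^{-1} \Abf P_\trainstates$ for every $\Kbf \in \KK_\trainstates$. This term is therefore fixed across $\KK_\trainstates$, so
\[
\norm{\Kbf}^2 = \norm{\Bbf^{-1} \Abf P_\trainstates}^2 + \norm{\Kbf P_\orthstates}^2.
\]

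Minimizing over $\KK_\trainstates$ then reduces to minimizing $\norm{\Kbf P_\orthstates}^2$, which is strictly positive unless $\Kbf P_\orthstates = \0$, \ie~unless $\Kbf \xbf_0 = \0$ for every $\xbf_0 \in \orthstates$ (since $\orthstates$ is an orthonormal basis of $\trainstates^\perp$). By the definition of $\Kminnorm$ in \cref{eq:unseen_controls_zero}, this uniquely identifies $\Kbf = \Kminnorm$, establishing both minimality and uniqueness. There is no serious obstacle here: the argument is a standard orthogonal-decomposition calculation, and the only mild care needed is to verify that the cross term in the Frobenius norm vanishes because the row-space constraints on $\Kbf P_\trainstates$ and $\Kbf P_\orthstates$ are mutually orthogonal, which is immediate from $P_\trainstates P_\orthstates = \0$.
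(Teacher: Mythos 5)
Your argument is correct, and it takes a genuinely different (and somewhat more elementary) route than the paper. You exploit the orthogonal splitting $\R^D = \lspan(\trainstates) \oplus \trainstates^\perp$: since every $\Kbf \in \KK_\trainstates$ satisfies $\Kbf P_{\trainstates} = -\Bbf^{-1}\Abf P_{\trainstates}$ (the constraint extends by linearity), the Pythagorean identity $\norm{\Kbf}^2 = \norm{\Kbf P_{\trainstates}}^2 + \norm{\Kbf P_{\orthstates}}^2$ immediately isolates the only free part, $\Kbf P_{\orthstates}$, which is minimized uniquely at zero, i.e.\ at $\Kminnorm$. The paper instead phrases the problem as minimizing a strongly convex function over the affine set of linear constraints $\Kbf \ubf_r = -\Bbf^{-1}\Abf\ubf_r$ and applies the method of Lagrange multipliers: the unique minimizer is characterized by its rows lying in $\lspan(\trainstates)$, and one then checks that $\Kminnorm$ has this property. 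The two arguments are dual descriptions of the same optimality condition, since ``rows of $\Kbf$ in $\lspan(\trainstates)$'' is exactly ``$\Kbf P_{\orthstates} = \0$''; your version buys a fully explicit formula for the minimal norm ($\norm{\Bbf^{-1}\Abf P_{\trainstates}}^2$) and avoids invoking strong convexity or KKT machinery, while the paper's version fits a standard template (existence and uniqueness from strong convexity plus a stationarity certificate) that generalizes routinely to other norms or regularizers. The only care your route needs is the vanishing cross term, which you verify correctly via $P_{\trainstates}P_{\orthstates} = \0$ and cyclicity of the trace.
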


\begin{proof}[Proof sketch (proof in~\cref{app:proofs:min_norm})]
Through the method of Lagrange multipliers, we show that if the rows of some $\Kbf \in \KK_\trainstates$ are in $\lspan (\trainstates)$, then $\Kbf$ is the unique member of~$\KK_\trainstates$ whose Euclidean norm is minimal.
We then show that the rows of~$\Kminnorm$ necessarily reside in $\lspan (\trainstates)$.
\end{proof}

\begin{corollary}
\label{cor:no_euc_norm_min}
In the setting of~\cref{prop:shift}, $\Kpg$~---~the controller to which policy gradient converges, and which minimizes the training cost~---~satisfies:
\[
\norm*{ \Kpg }^2 - \min_{\Kbf \in \KK_\trainstates} \norm{ \Kbf }^2 = \sum_{d = 2}^D \brk*{ 1 - \frac{ 2(d - 1) }{ H + D } }^2 = \Omega (D)
\text{\,.}
\]
\end{corollary}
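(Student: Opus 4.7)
\textbf{The plan} is to apply Lemma~\ref{lem:min_norm} to replace $\min_{\Kbf \in \KK_\trainstates} \| \Kbf \|^2$ by $\| \Kminnorm \|^2$, and then read off $\| \Kpg \|^2 - \| \Kminnorm \|^2$ column-by-column from the closed form for $\Kpg$ that is produced inside the proof of Proposition~\ref{prop:shift}.

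The easy half is $\| \Kminnorm \|^2$: taking $\orthstates = \{\ebf_2, \ldots, \ebf_D\}$, \cref{eq:seen_controls_opt,eq:unseen_controls_zero} give $\Kminnorm \ebf_1 = -\Bbf^{-1} \Ashift \ebf_1 = -\Bbf^{-1} \ebf_2$ and $\Kminnorm \ebf_d = \0$ for $d \geq 2$, so orthogonality of $\Bbf$ yields $\| \Kminnorm \|^2 = 1$. The substantive step is extracting a closed form for $\Kpg$. I would invoke $\Kpg = \Kbf^{(2)} = -\eta \nabla \cost(\0; \trainstates)$ (established in the proof of Proposition~\ref{prop:shift}), expand $\nabla_\Kbf \| (\Ashift + \Bbf \Kbf)^h \ebf_1 \|^2$ at $\Kbf = \0$ into a double sum $\sum_{h=1}^H \sum_{k=0}^{h-1}$ of rank-one matrices of the form $\Bbf^\top \ebf_{((h-k) \% D) + 1} \ebf_{((h-1-k) \% D) + 1}^\top$, group these by their ``column index'' $d \in [D]$ (using $H/D \in \N$ to reduce each coefficient to a short arithmetic progression), and multiply by $\eta = D / (H (H + D))$. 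The outcome is
\[
\Kpg \,=\, -\Bbf^\top \sum\nolimits_{d=1}^D \alpha_d \, \ebf_{d \% D + 1} \ebf_d^\top,
\qquad
\alpha_d := 1 - \tfrac{2(d-1)}{H+D}.
\]
A useful sanity check: $\alpha_1 = 1$ recovers $\Kpg \ebf_1 = -\Bbf^\top \ebf_2 = \Kminnorm \ebf_1$, consistent with $\Kpg \in \KK_\trainstates$.

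Since $\Bbf^\top$ is an isometry and $\{\ebf_{d \% D + 1}\}_{d=1}^D$ is a permutation of the standard basis, the columns of $\Kpg$ are pairwise orthogonal with norms $|\alpha_d|$, giving $\| \Kpg \|^2 = \sum_{d=1}^D \alpha_d^2$. Subtracting $\| \Kminnorm \|^2 = \alpha_1^2 = 1$ yields the claimed $\sum_{d=2}^D (1 - 2(d-1)/(H+D))^2$. The $\Omega(D)$ remark then follows easily: since $H$ is a positive multiple of $D$ we have $H + D \geq 2D$, so $\alpha_d \geq 1/2$ whenever $d - 1 \leq D/2$, yielding at least $\lfloor D/2 \rfloor$ summands of value $\geq 1/4$. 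I expect the main obstacle to be the cyclic-index bookkeeping in the gradient expansion; once the closed form above is in hand, the rest is a direct Frobenius-norm computation.
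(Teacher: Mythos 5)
Your proposal is correct and follows essentially the same route as the paper's proof: invoke \cref{lem:min_norm} to identify the minimal norm with $\norm{\Kminnorm}^2 = 1$ (via $\Kminnorm = -\Bbf^{-1}\ebf_2\ebf_1^\top$), use the closed form $\Kpg = -\Bbf^\top \sum_{d=1}^D \brk1{1 - \tfrac{2(d-1)}{H+D}} \ebf_{d \% D + 1}\ebf_d^\top$ established inside the proof of \cref{prop:shift} (\cref{lem:shift_Kpg_K2}), and compute the Frobenius norm column-by-column. Your $\Omega(D)$ count ($\lfloor D/2\rfloor$ terms of size at least $1/4$ using $H + D \geq 2D$) matches the paper's argument up to bookkeeping.
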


\begin{proof}[Proof sketch (proof in~\cref{app:proofs:no_euc_norm_min})]
We derive an expression for $\Kminnorm$ to compute its squared Euclidean norm, which by \cref{lem:min_norm} is equal to $\min\nolimits_{ \Kbf \in \KK_\trainstates } \norm{ \Kbf }^2$.
Then, an expression for $\Kpg$, established as a lemma in the proof of \cref{prop:shift}, yields the desired result.
\end{proof}

\subsection{Extrapolation in Typical Setting}
\label{sec:analysis:general}

\cref{sec:analysis:no_gen,sec:analysis:shift} presented two ends of a spectrum.
On one end, \cref{sec:analysis:no_gen} proved that, in the absence of sufficient exploration, extrapolation to initial states unseen in training does not take place.
On the other end, \cref{sec:analysis:shift} constructed an exploration-inducing setting (namely, a system for a given initial state seen in training), and showed that it leads to extrapolation, which~---~depending on characteristics of the cost~---~varies between partial and perfect.
A natural question is what extrapolation may be expected in a typical setting.

We address the foregoing question by considering an arbitrary (non-zero) initial state seen in training $\xbf_0 \in \R^D$~---~which without loss of generality is assumed to have unit norm\footnote{
If~$\xbf_0$ does not have unit norm then the results we will establish are to be modified by introducing a multiplicative factor of~$\norm{ \xbf_0}^2$.
}
~---~and a randomly generated system matrix~$\Abf$.
For the randomness of~$\Abf$, we draw entries independently from a Gaussian distribution with mean zero and standard deviation~$1 / \sqrt{D}$.
This choice of standard deviation is common in the literature on random matrix theory~\cite{anderson2010introduction}, and ensures that with high probability, the spectral norm of~$\Abf$ is roughly constant, \ie~independent of the state space dimension~$D$ (\cf~Theorem 4.4.5 in \citet{vershynin2020high}).
When commencing from~$\xbf_0$ and steered by the first iterate of policy gradient, \ie~by $\Kbf^{(1)} = \0$, the system produces the length~$D$ trajectory $( \xbf_0 , \Abf \xbf_0 , \ldots, \Abf^{D - 1} \xbf_0 )$.
Since $\xbf_0$ is a cyclic vector of~$\Abf$ almost surely (see \cref{app:cyc_vector_generic} for a proof of this fact), the latter trajectory spans the entire state space almost surely.
The necessary condition for extrapolation put forth in \cref{sec:analysis:no_gen} is thus supported, implying that extrapolation could take place.

\cref{thm:typical_system} below establishes that a single iteration of policy gradient already leads~---~in expectation, and with high probability if the state space dimension is large~---~to non-trivial extrapolation, as quantified by the optimality measure.
The theorem overcomes considerable technical challenges (arising from the complexity of random systems) via advanced tools from the intersection of random matrix theory and topology.
These tools may be of independent interest.

\begin{theorem}
\label{thm:typical_system}
Let $\xbf_0 \in \R^D$ be an arbitrary unit vector.
Assume that the set~$\trainstates$ of initial states seen in training consists of~$\xbf_0$ (\ie~$\trainstates = \brk[c]{ \xbf_0 }$), that the entries of~$\Abf$ are drawn independently from a Gaussian distribution with mean zero and standard deviation $1 / \sqrt{D}$, and that the horizon $H$ is greater than one.
Then, with learning rate $\eta \leq \frac{1}{ 4 D H (H-1) (4H - 1)!!}$, where $N!! := N (N - 2) (N - 4) \cdots 3$ is the double factorial of an odd $N \in \N$, the second iterate of policy gradient, \ie~$\Kbf^{(2)}$, satisfies:
\[
\frac{ \EE_{\Abf} \brk[s]*{ \optmes \brk*{ \Kbf^{(2)} }  } }{ \EE_{\Abf} \brk[s]*{ \optmes \brk*{ \Kminnorm }  } } \leq 1 - \eta \cdot \frac{H (H - 1)}{D}
\text{\,,}
\]
where $\Kminnorm$ is the non-extrapolating controller defined in~\cref{sec:prelim:extrapolation}.
Moreover, for any $\delta \in (0, 1)$, if $D \geq \abs{\trainstates} + \frac{ 6 \abs{ \trainstates} H (H - 1) (4H - 1)!! }{ \delta }$ and $\eta \leq \frac{ 1 }{ 8 D^{2} H (H-1)  (4H - 1)!! }$, then with probability at least $1 - \delta$ over the choice of $\Abf$:
\[
\frac{ \optmes \brk*{ \Kbf^{(2)} }  }{ \optmes \brk*{ \Kminnorm} } \leq 1 - \eta \cdot \frac{ H (H-1) }{ 4D }
\text{\,.}
\]
Lastly, the above results hold even if we replace $\trainstates$ by an arbitrary set of orthonormal vectors.
\end{theorem}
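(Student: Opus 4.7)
The plan is to analyze a single policy gradient step by combining an explicit formula for $\Kbf^{(2)}$ with orthogonal invariance of the Gaussian law of $\Abf$ and Wick (Isserlis) moment bounds for polynomials in Ginibre entries. First I would observe that, since $\Kbf^{(1)} = \0$, one has $\Kbf^{(2)} = -\eta \nabla \cost(\0; \trainstates)$. Differentiating the training cost at $\Kbf = \0$, and using the assumption that $\Bbf$ is orthogonal to reparameterize $\Kbf \mapsto \Bbf \Kbf$ (so we may take $\Bbf = \Ibf$ without loss of generality), yields for $\trainstates = \brk[c]{\xbf_0}$:
\[
\nabla \cost(\0; \brk[c]{\xbf_0}) = 2 \sum_{j=0}^{H-1} \wbf_j \vbf_j^\top, \qquad \wbf_j = \sum_{\ell=0}^{H-j-1} (\Abf^\top)^\ell \vbf_{j+\ell+1},
\]
where $\vbf_j := \Abf^j \xbf_0$. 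By the orthogonal invariance of the law of $\Abf$ (namely $\Abf \stackrel{d}{=} \Qbf_1 \Abf \Qbf_2$ for any orthogonal $\Qbf_1, \Qbf_2$) and the resulting rotational symmetry of the per-$\xbf$ contribution to $\optmes$, it suffices to analyze $\EE[\norm{(\Abf + \Kbf^{(2)}) \xbf}^2]$ for the canonical choice $\xbf_0 = \ebf_1$, $\xbf = \ebf_2$.

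Next I would Taylor expand in $\eta$:
\[
\norm{(\Abf + \Kbf^{(2)}) \xbf}^2 = \norm{\Abf \xbf}^2 - 2\eta L(\xbf) + \eta^2 Q(\xbf),
\]
with $L(\xbf) := \inprod{\Abf \xbf}{\nabla \cost(\0) \xbf}$ and $Q(\xbf) := \norm{\nabla \cost(\0) \xbf}^2$. Since $\EE[\norm{\Abf \xbf}^2] = 1$ for any unit $\xbf$, one has $\EE[\optmes(\Kminnorm)] = 1$, so the target is $\EE[\optmes(\Kbf^{(2)})] \leq 1 - \eta H(H-1)/D$. The first-order sum telescopes cleanly: substituting the factored form of $\nabla \cost(\0)$ into $\sum_{\xbf \in \orthstates} L(\xbf) = \Tr(\Pbf^\perp \Abf^\top \nabla \cost(\0))$ (where $\Pbf^\perp = \Ibf - \xbf_0 \xbf_0^\top$) and using the identity $\vbf_{j+1}^\top \wbf_j = \sum_{m=j+1}^H \norm{\vbf_m}^2$ together with $\vbf_0^\top \Pbf^\perp = \0$ gives
\[
\EE\!\left[ \sum_{\xbf \in \orthstates} L(\xbf) \right] = 2 \sum_{j=1}^{H-1} \sum_{m=j+1}^H \EE[\norm{\vbf_m}^2] \,-\, (\text{correction}) \,=\, H(H-1) \cdot (1 + O(1/D)),
\]
where a direct Wick calculation on Ginibre moments gives $\EE[\norm{\vbf_m}^2] = 1 + O(1/D)$ and controls the correction term (which couples $\EE[(\Abf^j)_{1,1}]$ with a higher-degree factor). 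Dividing by $\abs{\orthstates} = D - 1$ delivers $\EE[\optmes(\Kbf^{(2)})] \leq 1 - 2\eta H(H-1)/(D-1) + O(\eta^2)$, which implies the claimed $\leq 1 - \eta H(H-1)/D$ once the quadratic-in-$\eta$ term is controlled, with a comfortable factor-of-two slack.

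For second-order control and the high-probability bound, I would observe that $L(\xbf)$ and $Q(\xbf)$ are polynomials in the Gaussian entries of $\Abf$ of degree at most $4H - 1$, so by Wick's theorem their second moments are bounded by $\mathrm{poly}(H) \cdot (4H-1)!! / D$, the factor $(4H-1)!!$ being the number of pairings of up to $4H$ entries. The hypothesis $\eta \leq 1/(4 D H(H-1)(4H-1)!!)$ makes $\eta^2 \EE[Q(\xbf)]$ negligible compared to the first-order gain, yielding the expectation claim. For the high-probability statement, Markov's inequality applied to $\Var(L)$ and $\Var(Q)$ gives a failure probability proportional to $\abs{\trainstates} (4H-1)!! / (D \delta)$, which the hypothesis $D \geq \abs{\trainstates} + 6\abs{\trainstates} H (H-1) (4H-1)!!/\delta$ forces below $\delta$; the factor $1/4$ in the ratio bound absorbs the concentration slack. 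The extension to an arbitrary orthonormal $\trainstates$ of cardinality $s$ is handled identically: orthogonal invariance reduces to $\trainstates = \brk[c]{\ebf_1, \ldots, \ebf_s}$, the gradient averages over $\xbf_0 \in \trainstates$ by linearity, and $\abs{\orthstates} = D - s$.

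The principal obstacle is organizing the Wick expansion cleanly. Specifically, one must identify the planar (leading) pairings that contract each factor $\Abf^k$ with its transpose $(\Abf^\top)^k$~---~these produce the main $H(H-1)$ term~---~while showing that the remaining non-planar pairings, which correspond to higher-genus surfaces in the topological interpretation of Ginibre matrix moments, are suppressed by additional factors of $1/D$. This is the random-matrix-meets-topology machinery alluded to in the introduction, and it is where the combinatorial factors $(4H-1)!!$ in the hypotheses naturally arise.
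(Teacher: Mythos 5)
Your overall skeleton matches the paper's: an exact one-step (quadratic) expansion of the per-state optimality measure playing the role of a descent lemma, Gaussian moment (Wick) analysis for the first-order gain, suppression of ``non-planar'' pairings for the second-order and variance terms, and Chebyshev/Markov plus a union bound for the high-probability claim. However, there are genuine gaps. First, your first-order bound hinges on the projector/telescoping identity $\sum_{\xbf \in \orthstates} L(\xbf) = \Tr\brk1{(\Ibf - \xbf_0\xbf_0^\top)\Abf^\top \nabla \cost(\0;\trainstates)}$, which, after the exact cancellation of the $j=0$ terms, leaves the \emph{subtracted} correction $2\sum_{j\geq 1} (\vbf_1^\top \wbf_j)(\vbf_j^\top \xbf_0)$. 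Each of these correction terms has non-negative expectation and appears with a minus sign, so the lower bound on the gain requires an explicit, non-asymptotic \emph{upper} bound on them~---~which you never supply beyond ``a direct Wick calculation controls the correction.'' This matters because the expectation claim in the theorem has no largeness requirement on $D$: an unquantified ``$1+O(1/D)$'' with an $H$-dependent constant (which from Wick counting can be super-exponential in $H$) does not yield $1 - \eta H(H-1)/D$ for small $D$. The paper avoids this entirely by expanding $\inprod{\nabla \optmes(\Kbf^{(1)};\vbf_0)}{\nabla\cost(\Kbf^{(1)};\trainstates)}$ for each fixed $\vbf_0 \in \orthstates$ into $4\sum_{n,k} \inprod{\vbf_0}{\Abf^n\xbf_0}\Tr\brk1{\vbf_0(\Abf^n\xbf_0)^\top(\Abf^{k+1})^\top\Abf^{k+1}}$, where every Gaussian monomial has non-negative expectation, so a chosen subset of pairings gives each summand expectation at least $1/D$ for \emph{every} $D$~---~no correction term to fight.

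Second, the quantitative heart of the argument is asserted rather than executed. The bound $\EE_\Abf\brk[s]1{\norm{\nabla\cost(\Kbf^{(1)};\trainstates)}^2} \leq 4H^2(H-1)^2(4H-1)!!$ and, for the high-probability part, the bounds $\Var(Z_{\vbf_0,\xbf_0,n,k}) \leq (4H-1)!!/D^2$ and, crucially, $\Cov(Z_{\vbf_0,\xbf_0,n,k}, Z_{\vbf_0',\xbf_0,n,k}) \leq (4H-1)!!/D^3$ all require tracking the exact power of $D$ contributed by each pairing (the genus/connectivity bookkeeping), and the covariance bound further requires showing that the order-$1/D^2$ contributions (disconnected surfaces) cancel between $\EE[ZZ']$ and $\EE[Z]\EE[Z']$; without that cancellation, Chebyshev applied to an average over $\abs{\orthstates}\approx D$ unseen states does not concentrate. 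Your blanket ``second moments bounded by $\mathrm{poly}(H)(4H-1)!!/D$'' does not capture this, and you yourself flag the pairing organization as ``the principal obstacle''~---~i.e., the part you have not done is exactly the paper's technical contribution. Finally, in the high-probability statement the denominator $\optmes(\Kminnorm)$ is itself random; the paper needs a separate concentration step (showing $\optmes(\Kminnorm)\leq 2$ with high probability) before dividing, which your sketch omits and which the ``factor $1/4$ absorbs the slack'' remark does not substitute for.
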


\begin{proof}[Proof sketch (proof in~\cref{app:proofs:typical_system})]
The intuition behind the proof (valid for $H \geq D$) is as follows.
As stated in the discussion regarding exploration at the opening of this subsection, almost surely, the length~$D$ trajectory steered by the first iterate of policy gradient, \ie~by~$\Kbf^{(1)}$, spans the entire state space.
Therefore, almost surely, states encountered in training overlap with~$\orthstates$, \ie~with directions orthogonal to~$\trainstates$ (see \cref{sec:prelim:extrapolation}).
The intuitive arguments in \cref{sec:analysis:intuition} thus suggest that extrapolation will take place.

Converting the above intuition into a formal proof entails considerable technical challenges.
We address these challenges by employing advanced tools from the intersection of random matrix theory and topology.
Specifically, we employ a method from~\citet{redelmeier2014real} for computing expectations of traces of random matrix products, through the topological concept of \emph{genus expansion}.
For the convenience of the reader, a detailed outline of the proof is provided in \cref{app:proofs:typical_system}.
\end{proof}

\textbf{Limitations.}
Despite overcoming considerable technical challenges, \cref{thm:typical_system} remains limited in several ways:
\emph{(i)}~the requirements from the learning rate~$\eta$, and the requirement from the state space dimension~$D$ in the second (high probability) result, depend on $(4H - 1)!!$, which grows super exponentially with the horizon~$H$; 
\emph{(ii)}~extrapolation guarantees are provided only for the first iteration of policy gradient; \emph{(iii)}~in contrast to the analyses of~\cref{sec:analysis:no_gen,sec:analysis:shift}, extrapolation results apply only to the optimality measure, not to the cost measure; and \emph{(iv)} only Gaussian transition matrices are considered.
Experiments reported in \cref{sec:experiments:lqr} suggest that the limitations above can be alleviated.
Doing so is regarded as a valuable direction for future~work.

	\begin{figure*}[t]
	\vspace{1.5mm}
	\begin{center}
		\includegraphics[width=1\textwidth]{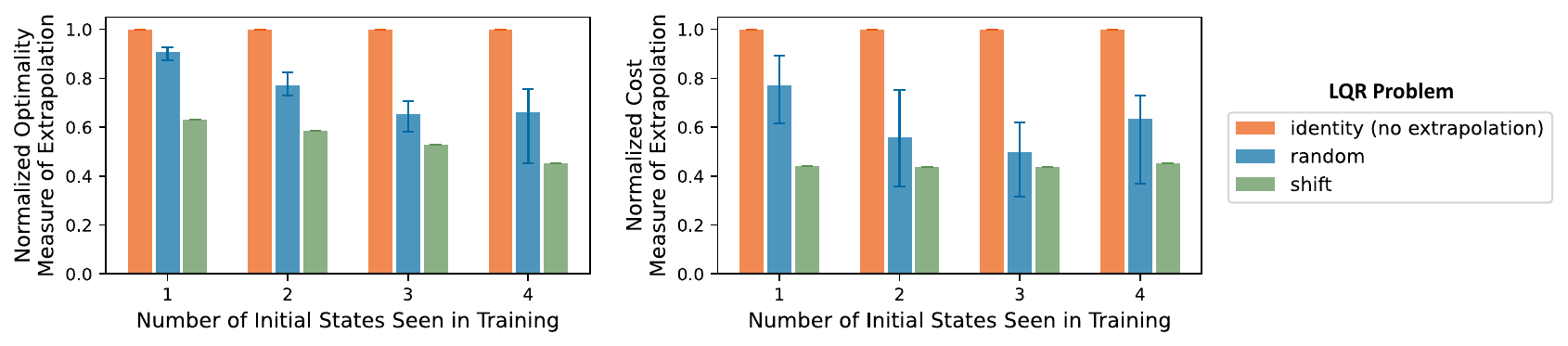}
	\end{center}
	\vspace{-2mm}
	\caption{
		In underdetermined LQR problems (\cref{sec:prelim:underdetermined}), the extent to which linear controllers learned via policy gradient extrapolate to initial states unseen in training, depends on the degree of exploration that the system induces from initial states that were seen in training.
		We evaluated LQR problems with state space dimension $D = 5$, horizon $H = 5$ (further experiments with larger $D$ and $H$ are reported in \cref{app:experiments:lqr}), and three different linear systems: \emph{(i)} an “identity'' system with $\Abf = \Ibf \in \R^{D \times D}$ (analyzed in \cref{sec:analysis:no_gen}); \emph{(ii)} a “shift'' system with \smash{$\Abf = \sum\nolimits_{d = 1}^D \ebf_{d \% D + 1} \ebf_d^\top$} (analyzed in \cref{sec:analysis:shift}); and \emph{(iii)} a random system, where the entries of $\Abf$ are sampled independently from a zero-mean Gaussian with standard deviation \smash{$1 / \sqrt{D}$} (analyzed in \cref{sec:analysis:general}).
		Reported are the optimality (\cref{def:opt_measure}) and cost (\cref{def:cost_measure}) measures of extrapolation, normalized by the respective quantities attained by the non-extrapolating controller $\Kminnorm$ (see \cref{sec:prelim:extrapolation}).
		A value of one corresponds to trivial (no) extrapolation and a value of zero corresponds to perfect extrapolation. 
		Bar heights stand for median values over $20$ runs differing in random seed, and error bars span the interquartile range ($25$'th to $75$'th percentiles).
		\textbf{Results:}
		In agreement with our theory: \emph{(i)} no extrapolation takes place under the “identity'' system, which does not induce exploration from initial states seen in training; while \emph{(ii)} substantial extrapolation is achieved under the “shift'' and random systems, which induce exploration.
		The extrapolation under “shift'' and random systems is not perfect, and this is also in agreement with our theory.  
		Note that our theory does not explain why random systems often (but not always) lead to less extrapolation than the “shift'' system.  
		Refining our analysis to explain this intricacy is an interesting direction for future work.
	}
	\label{fig:lqr_experiments_main}
\end{figure*}

\section{Experiments}
\label{sec:experiments}

In this section, we corroborate our theory (\cref{sec:analysis}) via experiments, demonstrating how the interplay between a system and initial states seen in training affects the extent to which a controller learned via policy gradient extrapolates to initial states unseen in training.
\cref{sec:experiments:lqr} presents experiments with the analyzed underdetermined LQR problems.
\cref{sec:experiments:nonlinear} considers non-linear systems and (non-linear) neural network controllers.
For conciseness, we defer some experiments and implementation details to \cref{app:experiments}.
\ifdefined\CAMREADY
Code for reproducing our experiments is available at \url{https://github.com/noamrazin/imp_bias_control}.
\fi

\subsection{Linear Quadratic Control}
\label{sec:experiments:lqr}

Our theoretical analysis considered underdetermined LQR problems in three settings, respectively comprising: \emph{(i)} systems that do not induce exploration from any initial state (\cref{sec:analysis:no_gen}); \emph{(ii)} systems with a “shift'' transition matrix~$\Abf$, which encourage exploration from certain initial states (\cref{sec:analysis:shift}); and \emph{(iii)} systems with a randomly generated transition matrix~$\Abf$, which admit exploration from any initial state (\cref{sec:analysis:general}).
According to our analysis, with systems that do not induce exploration from initial states seen in training, controllers trained via policy gradient do not extrapolate.
On the other hand, non-trivial extrapolation occurs under “shift'' and random systems.
\cref{fig:lqr_experiments_main} demonstrates these findings empirically, showcasing the relation between the system and extrapolation to initial states unseen in training.
\cref{fig:lqr_experiments_h8,fig:lqr_experiments_d40,fig:lqr_experiments_rnd_B_Q} in \cref{app:experiments:lqr} provide additional experiments in settings with, respectively: \emph{(i)} a longer time horizon; \emph{(ii)} a larger state space dimension; and \emph{(iii)} random $\Bbf$ and $\Qbf$ matrices.

\begin{figure*}[t!]
	\vspace{2mm}
	\begin{center}
		\includegraphics[width=0.98\textwidth]{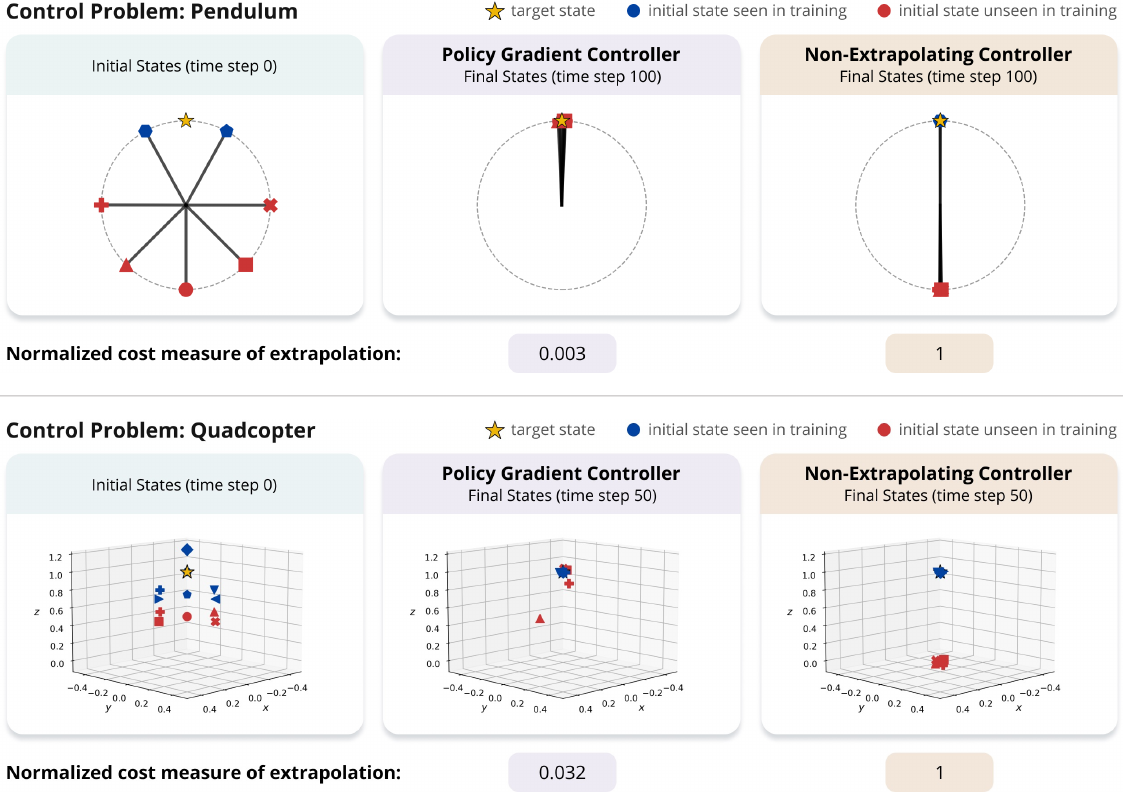}
	\end{center}
	\vspace{-1mm}
	\caption{
		In the pendulum and quadcopter control problems (see \cref{sec:experiments:nonlinear}), training a (non-linear) neural network controller via policy gradient often leads to a solution that extrapolates to initial states unseen in training, despite the existence of non-extrapolating solutions.
		\textbf{Left:} Initial states seen in training (blue) and initial states unseen in training that are used for evaluating extrapolation (red).
		\textbf{Middle:}
		Final states of trajectories emanating from initial states on the left, where the trajectories are steered by a (state-feedback) controller learned via policy gradient.
		The controller is parameterized as a fully-connected neural network with ReLU activation.
		\textbf{Right:}
		Final states of trajectories emanating from initial states on the left, where the trajectories are steered by a non-extrapolating controller, \ie~a controller that minimizes the cost for initial states seen in training while performing poorly on initial states unseen in training.
		We obtained such a controller by modifying the training objective to encourage steering unseen initial states to a state different than the target state.
		\textbf{Results:}
		Since an uncontrolled pendulum or quadcopter falls downwards from a given initial state, the systems qualitatively induce exploration of states with lower height.
		Complying with our theory for LQR problems (\cref{sec:analysis}), policy gradient yields near-perfect extrapolation to unseen initial states lower that those used for training.
		In particular, the cost measure of extrapolation, normalized by that attained by the non-extrapolating controller, is near the minimal value of zero (a value of one stands for no extrapolation).
		\textbf{Further details in \cref{app:experiments}:} 
		\cref{table:pend_experiments_states,table:quad_below_experiments_states} fully specify the initial and final states depicted above, and \cref{fig:pend_experiments_states_through_time,fig:quad_below_experiments_states_through_time} present the evolution of states through time under the policy gradient and non-extrapolating controllers.
	}
	\label{fig:pend_and_quad_experiments_main}
\end{figure*}

\subsection{Non-Linear Systems and Neural Network Controllers}
\label{sec:experiments:nonlinear}

The LQR problem is of central theoretical and practical importance in optimal control~\cite{anderson2007optimal}.
For example, it supports controlling \emph{non-linear systems} via iterative linearizations~\cite{li2004iterative}.
An alternative approach to controlling non-linear systems is to train (non-linear) \emph{neural network controllers} via policy gradient.
This approach is largely motivated by the success of neural networks in supervised learning, and has gained significant interest in recent years (see, \eg,~\citet{hu2019chainqueen,qiao2020scalable,clavera2020model,mora2021pods,gillen2022leveraging,howell2022dojo,xu2022accelerated,wiedemann2023training}).

Our analysis of underdetermined LQR problems (\cref{sec:analysis}) implies that, when a linear system induces exploration from initial states seen in training, a linear controller trained via policy gradient typically extrapolates to initial states unseen in training.
The current subsection empirically demonstrates that this phenomenon extends to non-linear systems and neural network controllers.
Experiments include two non-linear control problems, in which the goal is to steer either a pendulum or quadcopter towards a target state.

\textbf{The pendulum control problem.}
A classic non-linear control problem is that of stabilizing a (simulated) pendulum at an upright position (\cf~\citet{hazan2022introduction}).
At time step $h$, the two-dimensional state of the system is described by the vertical angle of the pendulum $\theta_h \in \R$ and its angular velocity $\dot{\theta}_h \in \R$.
The controller applies a torque $u_h \in \R$, with the goal of making the pendulum reach and stay at the target state $(\pi, 0)$.
Accordingly, the cost at each time step is the squared Euclidean distance between the current and target states.
See \cref{app:experiments:details:nn:pend} for explicit equations defining the state dynamics and cost.

\textbf{The quadcopter control problem.}
Another common non-linear control problem is that of controlling a (simulated) quadcopter (\cf~\citet{panerati2021learning}).
At time step $h$, the state of the system $\xbf_h \in \R^{12}$ comprises the quadcopter's position $(x_h, y_h, z_h) \in \R^3$, tilt angles $(\phi_h, \theta_h, \psi_h) \in \R^3$ (\ie~roll, pitch, and yaw), and their respective velocities.
The controller determines the revolutions per minute (RPM) for each of four motors by choosing $\ubf_h \in [0, \text{MAX\_RPM}]^4$, where $\text{MAX\_RPM}$ stands for the maximal supported RPM.
We consider the goal of making the quadcopter reach and stay at the target state $\xbf^* = (0, 0, 1, 0, \ldots, 0)$.
This is expressed by taking the cost at each time step to be a weighted squared Euclidean distance between the current and target states.
See \cref{app:experiments:details:nn:quad} for explicit equations defining the state dynamics and cost.

\textbf{Results.}
For both the pendulum and quadcopter control problems, we train via policy gradient a (state-feedback) controller, parameterized as a fully-connected neural network with ReLU activation.
The controls produced by a randomly initialized neural network are usually near zero.
Hence, during the first iterations of policy gradient, both the pendulum and quadcopter fall downwards from their respective initial states, qualitatively leading to exploration.
\cref{fig:pend_and_quad_experiments_main} shows that, in accordance with our theory for LQR problems, in both the pendulum and quadcopter problems, the controller can extrapolate near-perfectly to unseen initial states whose heights are lower than those of the seen initial states.
The extrapolation is observed qualitatively, in the sense of trajectories stabilizing near the target state, and quantitatively, as evaluated by the cost extrapolation measure in comparison to a non-extrapolating controller.\footnote{
	The cost measure of extrapolation (\cref{def:cost_measure}) is adapted to non-linear control problems by taking $\U$ to be a predetermined set of initial states unseen in training (see \cref{app:experiments:details} for further details).
	We do not evaluate the optimality measure (\cref{def:opt_measure}) since it is not directly applicable to non-linear control problems.
}
For the quadcopter control problem, \cref{fig:quad_experiments_add_unseen_initial_states,fig:quad_experiments_dist} in \cref{app:experiments:nn} demonstrate that, respectively: \emph{(i)} the extent of extrapolation to unseen initial states varies depending on their distance from the seen initial states; and \emph{(ii)} extrapolation also applies to unseen initial states with horizontal distance from the seen initial states.

	\section{Conclusion}
\label{sec:conclusion}

The implicit bias of gradient descent is a cornerstone of modern machine learning.
While extensively studied in supervised learning, it is far less understood in optimal control (reinforcement learning).
There, learning a controller applied to a system via gradient descent is known as policy gradient, and a question of prime importance (particularly for safety-critical applications, \eg~robotics, industrial manufacturing, or autonomous driving) is the extent to which a learned controller extrapolates to initial states unseen in training.
In this paper we theoretically studied the implicit bias of policy gradient in terms of extrapolation to initial states unseen in training.
Focusing on the fundamental LQR problem, we established that the extent of extrapolation depends on the degree of exploration induced by the system when commencing from initial states included in training.
Experiments corroborated our theory, and demonstrated its conclusions on problems beyond LQR, where systems are non-linear and controllers are neural networks.

Future work includes extending our theory in three ways.
First, is to alleviate the technical limitations specified in \cref{sec:analysis:general}.
Second, is to account for non-linear systems and neural network controllers such as those evaluated in our experiments.
Third, is to address settings where systems are unknown or non-differentiable, and gradients with respect to controller parameters are estimated via sampling.

An additional direction for future research, which we hope our work will inspire, is the development of practical methods for detecting initial states whose inclusion in training enhances extrapolation to initial states unseen in training.
In real-world optimal control (and reinforcement learning), with contemporary learning algorithms, extrapolation to initial states unseen in training is often poor \cite{rajeswaran2017towards,zhang2018study,zhang2019dissection,fujimoto2019off,witty2021measuring}.
We believe methods as described bear potential to greatly improve it.

	
	\ifdefined\NEURIPS
		\begin{ack}
			We thank Yonathan Efroni, Emily Redelmeier, Yuval Peled, and Alexander Hock for illuminating discussions, and Eshbal Hezroni for aid in preparing illustrative figures.
This work was supported by a Google Research Scholar Award, a Google Research Gift, Meta, the Yandex Initiative in Machine Learning, the Israel Science Foundation (grant 1780/21), the European Research Council (ERC) under the European Unions Horizon 2020 research and innovation programme (grant ERC HOLI 819080), the Tel Aviv University Center for AI and Data Science, the Adelis Research Fund for Artificial Intelligence, Len Blavatnik and the Blavatnik Family Foundation, and Amnon and Anat Shashua.
NR is supported by the Apple Scholars in AI/ML PhD fellowship.
		\end{ack}
	\else
		\newcommand{We thank Yonathan Efroni, Emily Redelmeier, Yuval Peled, and Alexander Hock for illuminating discussions, and Eshbal Hezroni for aid in preparing illustrative figures.
This work was supported by a Google Research Scholar Award, a Google Research Gift, Meta, the Yandex Initiative in Machine Learning, the Israel Science Foundation (grant 1780/21), the European Research Council (ERC) under the European Unions Horizon 2020 research and innovation programme (grant ERC HOLI 819080), the Tel Aviv University Center for AI and Data Science, the Adelis Research Fund for Artificial Intelligence, Len Blavatnik and the Blavatnik Family Foundation, and Amnon and Anat Shashua.
NR is supported by the Apple Scholars in AI/ML PhD fellowship.}{}
	\fi
	\ifdefined\ARXIV
		\section*{Acknowledgements}
		
	\else
		\ifdefined\COLT
			\acks{}
		\else
			\ifdefined\CAMREADY
				\ifdefined\ICLR
					\newcommand*{\subsuback}{}
				\fi
				\ifdefined\NEURIPS
				\else
					\section*{Acknowledgements}
					
				\fi
			\fi
		\fi
	\fi

	\section*{References}
	{\small
		\ifdefined\ICML
			\bibliographystyle{icml2024}
		\else
			\bibliographystyle{plainnat}
		\fi
		\bibliography{refs}
	}

	\clearpage
	\appendix
	
	\onecolumn
	
	\ifdefined\ENABLEENDNOTES
		\theendnotes
	\fi
	

	
	\section{Significance of Underdetermined Linear Quadratic Control}
\label{app:underdetermined_lqr_significance}

As mentioned in~\cref{sec:prelim:underdetermined}, the main purpose of our underdetermined LQR setting is to serve as a testbed for theoretical study of implicit bias in optimal control, analogously to how underdetermined linear prediction serves as an important testbed for theoretical study of implicit bias in supervised learning (\eg,~\citet{soudry2018implicit,bartlett2020benign,shamir2022implicit}).
Insights derived from the analysis of implicit bias in underdetermined linear prediction have later led to formal guarantees for more complex settings with non-linear neural networks (\eg,~\citet{lyu2020gradient,boursier2022gradient,frei2023benign}). 
We believe that our analysis of underdetermined LQR will play an analogous role, laying foundations for analyzing implicit bias for non-linear neural networks in optimal control. 
The neural network experiments we present in \cref{sec:experiments:nonlinear} support this prospect.
	
In addition to the aforementioned theoretical motivation, our underdetermined LQR setting is also practically motivated.
Specifically, the assumption that $\trainstates$~---~the set of initial states seen in training~---~does not span the state space is motivated by the importance of extrapolation to initial states unseen in training (\cf~\citet{zhu2020ingredients,dulac2021challenges}).
The assumption $\Rbf= \0$, \ie~that controls are unregularized, is motivated by the following: \emph{(i)} it leads to what is known as a \emph{deadbeat control} problem, where the goal is to drive an initial state to zero in as few steps as possible~\citep{emami1982deadbeat,kuvcera1998deadbeat,mattavelli2005improved,bhounsule2016dead}; \emph{(ii)} it has been used in the context of model predictive control~\citep{de1999disturbance}; and \emph{(iii)} it falls under the category of singular LQR problems, and has been evaluated under that context~\citep{hautus1983system,marro2002geometric}.
Lastly, the assumption that $\Bbf$ has full rank is designed to ensure controllability (for any transition matrix $\Abf$), a characteristic of many practical systems (\cf~\citet{hazan2022introduction}).
	
	\section{Training Cost May Have a Single Global Minimizer When $\Rbf \neq \0$}
\label{app:determined_R_neq_zero}

Our analysis considers LQR problems in which the cost matrix $\Rbf$ is zero~---~see \cref{sec:prelim:underdetermined}.
Along with the assumptions that $\Bbf$ is full rank and that the set $\trainstates$ of initial states seen in training does not span the state space, assuming that $\Rbf = \0$ ensures the training cost is underdetermined, \ie~multiple controllers attain its global minimum.

As \cref{lem:determined_shift_infinite_horizon} below shows, the assumption of $\Rbf = \0$ is necessary, in the sense that there exist settings where $\Rbf \neq 0$ and the training cost has a single global minimizer, despite $\Bbf$ being full rank and $\trainstates$ not spanning the state space.
On the other hand, there also exist settings where $\Rbf \neq 0$ and the training cost is underdetermined~---~see \cref{lem:underdetermined_R_nonzero} below.
Since it is non-trivial to completely characterize the conditions under which the training cost is underdetermined with $\Rbf \neq \0$, we regard further analyzing the case of $\Rbf \neq 0$ as suitable for future work. 

Note that the example given in \cref{lem:determined_shift_infinite_horizon} is of an infinite horizon LQR problem.
Preliminary indications lead us to believe that there also exist finite horizon problems with $\Rbf \neq \0$ whose training cost has a single global minimizer.
In particular, for every controller $\Kbf$ that stabilizes the system (\ie~the largest singular value of $\Abf + \Bbf \Kbf$ is less than one), the training cost with finite horizon $H$ converges exponentially fast to the training cost with infinite horizon as $H$ grows.
Thus, we expect that when the training cost with infinite horizon has a single global minimizer, the training cost with finite horizon $H$ will effectively have a single global minimizer as well, so long as $H$ is not especially small.
Meaning, even if there exist multiple controllers minimizing the finite horizon training cost, they should all be extremely close, and accordingly produce near identical controls.
Empirical evidence supports this prospect.
Namely, for a finite horizon variant of the LQR problem considered in \cref{lem:determined_shift_infinite_horizon}, we ran policy gradient from different initial controllers, whose entries were sampled independently from a Gaussian distribution with a relatively large standard deviation of $0.1$.
The maximal distance between any two controllers that policy gradient reached was extremely small~---~$0.000027$ ($0.00002$ when normalizing by the median norm of the controllers).
Furthermore, this maximal distance kept decaying as optimization progressed.

\begin{lemma}
\label{lem:determined_shift_infinite_horizon}
Consider the exploration-inducing setting of \cref{prop:shift}, \ie~$\trainstates = \brk[c]{ \ebf_1 }$ and $\Abf = \Ashift$.
Furthermore, suppose that $\Bbf = \Rbf = \Ibf$, where $\Ibf \in \R^{D \times D}$ stands for the identity matrix, and that the time horizon is infinite.
In this case the training cost of a controller $\Kbf \in \R^{D \times D}$ is given by:
\[
\cost (\Kbf; \trainstates) = \sum\nolimits_{h = 0}^\infty \brk[s]*{ \norm*{ (\Ashift + \Kbf)^h \ebf_1 }^2 + \norm*{ \Kbf (\Ashift + \Kbf)^h \ebf_1  }^2  }
\text{\,.}
\]
Then, the training cost $\cost (\cdot \, ; \trainstates)$ has a single global minimizer.
\end{lemma}

\begin{proof}
For a given state $\xbf \in \R^D$, the unique optimal control is given by (\cf~Chapter 2.4 in~\citet{anderson2007optimal}):
\[
\ubf^* = - (\Pbf + \Ibf)^{-1} \Pbf \Ashift \xbf
\text{\,,}
\]
where $\Pbf$ is the unique positive definite solution of the following discrete algebraic Riccati equation:
\[
\Pbf = \Ashift^\top \Pbf \Ashift + \Ibf - \Ashift^\top \Pbf ( \Pbf + \Ibf )^{-1} \Pbf \Ashift
\text{\,.}
\]
The control $\ubf^*$ is optimal in the sense that choosing any other control at state $\xbf$ leads to a suboptimal cost along the trajectory.
It can be straightforwardly verified that $\Pbf = \frac{1 + \sqrt{5}}{2} \cdot \Ibf$, and so $\ubf^* = - c \cdot \Ashift \xbf$ for $c = \frac{1 + \sqrt{5} }{ 3 + \sqrt{5} } \in (0, 1)$.
This implies that the controller $\Kbf^* = -c \cdot \Ashift$ minimizes the training cost.
Note that when applying the optimal control at state $\xbf$, the next state in the trajectory is $(1 - c) \cdot \Ashift \xbf$.
In particular, the optimally controlled trajectory emanating from the initial state seen in training $\ebf_1$ is $\ebf_1, (1-c) \cdot \ebf_2, (1-c)^2 \cdot \ebf_3, \ldots$.
In order for a controller to minimize the training cost, it must produce the unique optimal controls for states in this trajectory.
Since the trajectory spans the state space $\R^D$, these controls uniquely determine the controller, \ie~$\Kbf^*$ is the unique global minimizer of the training cost.
\end{proof}

\begin{lemma}
\label{lem:underdetermined_R_nonzero}
Assume that $\trainstates = \brk[c]{ \ebf_1 }$, $\Abf = \Bbf = \Rbf = \Ibf$, where $\Ibf \in \R^{D \times D}$ stands for the identity matrix, and that the time horizon is infinite.
In this case the training cost of a controller $\Kbf \in \R^{D \times D}$ is given by:
\[
\cost (\Kbf; \trainstates) = \sum\nolimits_{h = 0}^\infty \brk[s]*{ \norm*{ (\Ibf + \Kbf)^h \ebf_1 }^2 + \norm*{ \Kbf (\Ibf + \Kbf)^h \ebf_1  }^2  }
\text{\,.}
\]
Then, the training cost $\cost (\cdot \, ; \trainstates)$ is underdetermined, \ie~multiple controllers attain its global minimum.
\end{lemma}

\begin{proof}
For a given state $\xbf \in \R^D$, the unique optimal control is given by (\cf~Chapter 2.4 in~\citet{anderson2007optimal}):
\[
\ubf^* = - (\Pbf + \Ibf)^{-1} \Pbf \xbf
\text{\,,}
\]
where $\Pbf$ is the unique positive definite solution of the following discrete algebraic Riccati equation:
\[
\Pbf = \Pbf + \Ibf - \Pbf ( \Pbf + \Ibf )^{-1} \Pbf
\text{\,.}
\]
The control $\ubf^*$ is optimal in the sense that choosing any other control at state $\xbf$ leads to a suboptimal cost along the trajectory.
It can be straightforwardly verified that $\Pbf = \frac{1 + \sqrt{5}}{2} \cdot \Ibf$, and so $\ubf^* = - c \cdot \xbf$ for $c = \frac{1 + \sqrt{5} }{ 3 + \sqrt{5} } \in (0, 1)$.
This implies that the controller $\Kbf^* = -c \cdot \Ibf$ minimizes the training cost.
Note that when applying the optimal control at state $\xbf$, the next state in the trajectory is $(1 - c) \cdot \xbf$.
In particular, the optimally controlled trajectory emanating from the initial state seen in training $\ebf_1$ is $\ebf_1, (1-c) \cdot \ebf_1, (1-c)^2 \cdot \ebf_1, \ldots$.
Since every state in this trajectory is spanned by $\ebf_1$, for any $\Kbf' \in \R^{D \times D}$ with rows orthogonal to $\ebf_1$, the controller $\Kbf^* + \Kbf'$ produces the same (optimal) controls as $\Kbf^*$ when commencing from $\ebf_1$.
Hence, there exist infinitely many controllers that minimize the training cost.
\end{proof}

	\section{Extrapolation Measures Are Invariant to the Choice of Orthonormal Basis}
\label{app:extrapolation_measures_invariant}

This appendix establishes that the optimality and cost measures of extrapolation (\cref{def:opt_measure,def:cost_measure} in \cref{sec:prelim:extrapolation}, respectively) are invariant to the choice of orthonormal basis $\orthstates$ for $\trainstates^\perp$, where $\trainstates^\perp$ is the subspace orthogonal to the set $\trainstates$ of initial states seen in training.
That is, for any two such orthonormal bases $\orthstates$ and $\orthstates'$, the respective values of the optimality and cost measures are the same.

\begin{lemma}
\label{lem:extrapolation_measures_invariant}
For any controller $\Kbf \in \R^{D \times D}$, the optimality and cost measures of extrapolation are invariant to the choice of orthonormal basis $\orthstates$ for $\trainstates^\perp$.
\end{lemma}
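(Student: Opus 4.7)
The plan is to show that every quantity appearing in both extrapolation measures can be expressed purely in terms of the orthogonal projection onto $\trainstates^\perp$, which depends only on $\trainstates$ and not on the choice of orthonormal basis $\orthstates$.

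The key idea is as follows. Let $\orthstates = \{\ubf_1,\ldots,\ubf_K\}$ and $\orthstates' = \{\ubf'_1,\ldots,\ubf'_K\}$ be two orthonormal bases of $\trainstates^\perp$, and stack their elements as columns of matrices $\Ubf, \Ubf' \in \R^{D\times K}$. Since both are orthonormal bases of the same subspace, there exists an orthogonal matrix $\Obf \in \R^{K \times K}$ such that $\Ubf' = \Ubf \Obf$. In particular, $\Ubf \Ubf^\top = \Ubf' (\Ubf')^\top$ equals the orthogonal projector $\Pbf_{\trainstates^\perp}$ onto $\trainstates^\perp$, which is determined solely by $\trainstates$.

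For the optimality measure, I would write
\[
\sum_{\xbf_0 \in \orthstates} \norm{ (\Abf + \Bbf \Kbf) \xbf_0 }^2 = \norm{(\Abf + \Bbf \Kbf) \Ubf}_F^2 = \Tr\!\brk1{ (\Abf + \Bbf \Kbf) \Ubf \Ubf^\top (\Abf + \Bbf \Kbf)^\top } = \Tr\!\brk1{ (\Abf + \Bbf \Kbf) \Pbf_{\trainstates^\perp} (\Abf + \Bbf \Kbf)^\top },
\]
which depends on $\orthstates$ only through $\Pbf_{\trainstates^\perp}$. Dividing by $|\orthstates| = \dim(\trainstates^\perp)$, itself invariant, gives the same value for $\orthstates$ and $\orthstates'$.

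For the cost measure, using the explicit form of \cref{eq:cost}, the same argument applied termwise to each horizon index $h \in \{0\}\cup [H]$ yields
\[
\sum_{\xbf_0 \in \orthstates} \norm{ (\Abf + \Bbf \Kbf)^h \xbf_0 }^2 = \Tr\!\brk1{ (\Abf + \Bbf \Kbf)^h \Pbf_{\trainstates^\perp} \brk1{(\Abf + \Bbf \Kbf)^h}^\top },
\]
so $\cost(\Kbf;\orthstates)$ depends on $\orthstates$ only through $\Pbf_{\trainstates^\perp}$. The subtracted term $\cost^*(\orthstates) = \frac{1}{|\orthstates|}\sum_{\xbf_0\in\orthstates}\|\xbf_0\|^2$ equals $1$ for any orthonormal basis, so it is trivially invariant. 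Combining these observations proves $\excesscost$ is invariant as well. No step should present a real obstacle; the only care needed is to verify that $\Ubf \Ubf^\top$ coincides with the orthogonal projector onto $\trainstates^\perp$ for any orthonormal basis, which is a standard linear-algebra fact.
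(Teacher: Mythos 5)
Your proof is correct and takes essentially the same route as the paper's: both reduce the measures to basis-independent quantities, you via the projector identity $\Ubf \Ubf^\top = \Pbf_{\trainstates^\perp}$, the paper by completing $\orthstates$ with an orthonormal basis $\Zbf$ of $\lspan(\trainstates)$ and using orthogonal invariance of the norm, which is the same fact in the form $\Pbf_{\trainstates^\perp} = \Ibf - \Zbf \Zbf^\top$. Your term-by-term treatment of the horizon for the cost measure and the observation that $\cost^*(\orthstates)$ is constant likewise mirror the paper's argument.
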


\begin{proof}
Let $\orthstates$ be an orthonormal basis of $\trainstates^\perp$, and denote by $\Vbf \in \R^{D \times \abs{\orthstates}}$ the matrix whose columns are the initial states in $\orthstates$.
Furthermore, let $\Zbf \in \R^{D \times \dim (\lspan (\trainstates)) }$ be a matrix whose columns form an orthonormal basis for $\lspan (\trainstates)$.
Notice that the concatenated matrix $[\Vbf, \Zbf] \in \R^{D \times D}$ is an orthogonal matrix.

Now, for $\Kbf \in \R^{D \times D}$, the optimality measure of extrapolation can be written in a matricized form as follows:
\[
\optmes (\Kbf) = \frac{1}{ \abs{\orthstates} } \norm*{ (\Abf + \Bbf \Kbf) \Vbf}^2 = \frac{1}{ \abs{\orthstates} } \brk*{ \norm*{ (\Abf + \Bbf \Kbf) [\Vbf, \Zbf]}^2 - \norm*{ (\Abf + \Bbf \Kbf) \Zbf }^2 }
\text{\,.}
\]
Since the Euclidean norm is orthogonally invariant, we get that:
\[
\optmes (\Kbf) = \frac{1}{ \abs{\orthstates} } \brk*{ \norm*{ \Abf + \Bbf \Kbf }^2 - \norm*{ (\Abf + \Bbf \Kbf) \Zbf }^2 }
\text{\,.}
\]
As can be seen in the expression above, the optimality measure of extrapolation does not depend on the choice of $\orthstates$.

Similarly, for $\Kbf \in \R^{D \times D}$, the cost measure of extrapolation can be written in a matricized form as follows: 
\[
\begin{split}
\excesscost (\Kbf) & = \cost (\Kbf; \orthstates) - \cost^* ( \orthstates) \\
&  = \frac{1}{\abs{\orthstates}} \sum\nolimits_{h = 0}^H \norm*{ (\Abf + \Bbf \Kbf)^h \Vbf }^2 - 1 \\
&  = \frac{1}{\abs{\orthstates}} \sum\nolimits_{h = 1}^H \norm*{ (\Abf + \Bbf \Kbf)^h \Vbf }^2 \\
& = \frac{1}{\abs{\orthstates}} \sum\nolimits_{h = 1}^H \brk*{ \norm*{ (\Abf + \Bbf \Kbf)^h [\Vbf, \Zbf] }^2 - \norm*{ (\Abf + \Bbf \Kbf)^h \Zbf }^2 }
\text{\,,}
\end{split}
\]
where we used the fact that $\cost^* (\X) = 1$ for any finite set of unit norm initial states $\X \subset \R^D$.
Again, since the Euclidean norm is orthogonally invariant, we get an expression for $\excesscost (\Kbf)$ that does not depend on the choice of $\orthstates$:
\[
\excesscost (\Kbf) = \frac{1}{\abs{\orthstates}} \sum\nolimits_{h = 1}^H \brk*{ \norm*{ (\Abf + \Bbf \Kbf)^h }^2 - \norm*{ (\Abf + \Bbf \Kbf)^h \Zbf }^2 }
\text{\,.}
\]
\end{proof}
	
	\section{Extension of Analysis for Exploration-Inducing Setting to Diagonal $\Qbf$}
\label{app:extension_q}

In this appendix, we generalize the analysis of the exploration-inducing setting from \cref{sec:analysis:shift} to the case where $\Qbf$ is a general diagonal positive semidefinite matrix (not necessarily the identity matrix $\Ibf$).
The generalized analysis sheds light on how $\Qbf$ impacts extrapolation. 
In particular, it shows that for certain values of $\Qbf$ extrapolation can be perfect even for a finite horizon $H$ (recall that, as shown in \cref{sec:analysis:shift}, when $\Qbf = \Ibf$ perfect extrapolation in the setting considered therein is attained only when $H \to \infty$).

Let $\Qbf \in \R^{D \times D}$ be a diagonal positive semidefinite matrix with diagonal entries $q_1, \ldots, q_D \geq 0$, and assume that $q_j > 0$ for at least some $j \in [D]$ (otherwise, the problem is trivial~---~the cost for any controller and initial state is zero).
For such $\Qbf$, the cost in an underdetermined LQR problem (\cref{eq:cost}), attained by a controller $\Kbf \in \R^{D \times D}$ over a finite set $\X \subset \R^D$ of initial states, can be written as:
\be
\cost (\Kbf; \X) = \frac{ 1 }{ \abs{\X} } \sum\nolimits_{\xbf_0 \X} \sum\nolimits_{h = 0}^H \norm*{ (\Abf + \Bbf \Kbf)^h \xbf_0 }_\Qbf^2
\text{\,,}
\label{eq:cost_Q}
\ee
where $\norm{ \vbf }_{\Qbf} := \sqrt{\vbf^\top \Qbf \vbf}$ for $\vbf \in \R^D$.
The global minimum of this cost is:
\[
\cost^* (\X) := \min\nolimits_{\Kbf \in \R^{D \times D}} \cost (\Kbf; \X) = \frac{1}{ \abs{\X} } \sum\nolimits_{\xbf_0 \in \X} \norm{ \xbf_0 }_\Qbf^2
\text{\,,}
\]
and any controller $\Kbf$ that attains this global minimum satisfies:
\be
\norm*{ (\Abf + \Bbf \Kbf) \xbf_0 }_\Qbf^2  = 0
~~ , ~ \forall \xbf_0 \in \X
\text{\,.}
\label{eq:optimal_cond_general_Q}
\ee

Let $\trainstates \subset \R^D$ be a finite set of initial states seen in training and $\orthstates$ be an (arbitrary) orthonormal basis for $\trainstates^\perp$.
In our analysis of underdetermined LQR problems with $\Qbf = \Ibf$, we quantified extrapolation to initial states unseen in training via the optimality and cost measures over $\orthstates$ (\cref{def:opt_measure,def:cost_measure}, respectively).
\cref{def:opt_measure_general_q,def:cost_measure_general_q} extend the optimality and cost measures to the case of a non-identity $\Qbf$ matrix.
As shown in the subsequent \cref{lem:extrapolation_measures_invariant_general_q}, similarly to the the case of $\Qbf = \Ibf$, the generalized measures are invariant to the choice of $\orthstates$.

\begin{definition}
	\label{def:opt_measure_general_q}
	Let $\Qbf \in \R^{D \times D}$ be a positive semidefinite matrix.
	The \emph{$\Qbf$-optimality measure} of extrapolation for a controller $\Kbf \in \R^{D \times D}$ is:
	\[
	\optmes^\Qbf (\Kbf) := \frac{1}{ \abs{\orthstates} } \sum\nolimits_{\xbf_0 \in \orthstates } \norm*{ (\Abf + \Bbf \Kbf) \xbf_0 }_\Qbf^2 
	\text{\,.}
	\]
\end{definition}

\begin{definition}
\label{def:cost_measure_general_q}
Let $\Qbf \in \R^{D \times D}$ be a positive semidefinite matrix.
The \emph{$\Qbf$-cost measure} of extrapolation for a controller $\Kbf \in \R^{D \times D}$ is:
\[
\excesscost^\Qbf ( \Kbf ) := \cost ( \Kbf ; \orthstates) - \cost^* (\orthstates)
\text{\,,}
\]
where $\cost (\cdot \,  ; \orthstates)$ is as defined in \cref{eq:cost_Q}.
\end{definition}

\begin{lemma}
\label{lem:extrapolation_measures_invariant_general_q}
For any controller $\Kbf \in \R^{D \times D}$, the $\Qbf$-optimality and $\Qbf$-cost measures of extrapolation are invariant to the choice of orthonormal basis $\orthstates$ for $\trainstates^\perp$.
\end{lemma}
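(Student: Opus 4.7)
The plan is to reprise the matricization argument from the proof of \cref{lem:extrapolation_measures_invariant} (the $\Qbf = \Ibf$ case), substituting orthogonal invariance of the Euclidean norm with orthogonal invariance of the $\Qbf$-weighted quadratic form. First, I would collect the elements of $\orthstates$ as the columns of a matrix $\Vbf \in \R^{D \times \abs{\orthstates}}$, fix an arbitrary orthonormal basis of $\lspan(\trainstates)$ as the columns of a matrix $\Zbf$, and observe that the concatenation $[\Vbf, \Zbf] \in \R^{D \times D}$ is a square orthogonal matrix.

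Next, I would introduce the matrix weighted norm $\norm{\Xbf}_\Qbf^2 := \Tr(\Xbf^\top \Qbf \Xbf)$ and recast both measures accordingly: $\optmes^\Qbf(\Kbf) = \tfrac{1}{\abs{\orthstates}} \norm{(\Abf + \Bbf\Kbf) \Vbf}_\Qbf^2$, and, after cancelling the $h = 0$ summand against $\cost^*(\orthstates) = \tfrac{1}{\abs{\orthstates}} \sum_{\xbf_0 \in \orthstates} \norm{\xbf_0}_\Qbf^2$, $\excesscost^\Qbf(\Kbf) = \tfrac{1}{\abs{\orthstates}} \sum_{h = 1}^H \norm{(\Abf + \Bbf\Kbf)^h \Vbf}_\Qbf^2$. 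The central identity is then that right-multiplication by a square orthogonal matrix preserves $\norm{\cdot}_\Qbf$: $\norm{\Xbf [\Vbf, \Zbf]}_\Qbf^2 = \Tr([\Vbf, \Zbf]^\top \Xbf^\top \Qbf \Xbf [\Vbf, \Zbf]) = \Tr(\Xbf^\top \Qbf \Xbf) = \norm{\Xbf}_\Qbf^2$ by cyclicity of the trace. Splitting $\norm{\Xbf [\Vbf, \Zbf]}_\Qbf^2 = \norm{\Xbf \Vbf}_\Qbf^2 + \norm{\Xbf \Zbf}_\Qbf^2$ with $\Xbf = (\Abf + \Bbf\Kbf)^h$ yields $\norm{(\Abf + \Bbf\Kbf)^h \Vbf}_\Qbf^2 = \norm{(\Abf + \Bbf\Kbf)^h}_\Qbf^2 - \norm{(\Abf + \Bbf\Kbf)^h \Zbf}_\Qbf^2$, and rewriting the last term via the trace as $\Tr((\Abf + \Bbf\Kbf)^{h \top} \Qbf (\Abf + \Bbf\Kbf)^h \Zbf \Zbf^\top)$ exposes that its sole dependence on $\Zbf$ is through the projector $\Zbf \Zbf^\top$ onto $\lspan(\trainstates)$, which is intrinsic to $\trainstates$. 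Hence both measures are independent of $\orthstates$ (and of the auxiliary basis encoded by $\Zbf$).

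I do not foresee a genuine obstacle; the only point meriting care is that $\Qbf$ is allowed to be merely positive semidefinite rather than positive definite. Since every step relies solely on linearity and cyclicity of the trace, together with right-multiplication by a square orthogonal matrix, the argument goes through verbatim for singular $\Qbf$. The overall structure is thus a direct lifting of the $\Qbf = \Ibf$ proof in \cref{app:extrapolation_measures_invariant}, with the ordinary Frobenius norm replaced throughout by $\norm{\cdot}_\Qbf$.
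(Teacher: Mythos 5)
Your proposal is correct and follows essentially the same route as the paper's proof: both reduce the dependence on $\orthstates$ to the projector onto $\lspan(\trainstates)$ by completing $\orthstates$ with an orthonormal basis of $\lspan(\trainstates)$ into a full orthonormal basis (the paper phrases this via $\sum_{\vbf_0 \in \orthstates \cup \B} \vbf_0 \vbf_0^\top = \Ibf$ and an add-and-subtract step, you via right-multiplication by the orthogonal matrix $[\Vbf, \Zbf]$ and cyclicity of the trace, which is the same identity in matricized form). Your observation that positive semidefiniteness of $\Qbf$ poses no issue is also consistent with the paper's argument.
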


\begin{proof}[Proof sketch (proof in \cref{app:proofs:extrapolation_measures_invariant_general_q})]
The proof follows by arguments similar to those used for proving \cref{lem:extrapolation_measures_invariant}.
\end{proof}

With the generalized measures of extrapolation in hand, \cref{prop:shift_diag_q} below generalizes \cref{prop:shift} from \cref{sec:analysis:shift}.
Namely, for $\Abf  = \Ashift := \sum\nolimits_{d = 1}^D \ebf_{d \% D + 1} \ebf_d^\top$ and set $\trainstates = \{ \ebf_1 \}$ of initial states seen in training, \cref{prop:shift_diag_q} characterizes how the extent to which policy gradient extrapolates depends on the entries of $\Qbf$.
As was the case for $\Qbf = \Ibf$ (\cf~\cref{sec:analysis:shift}), the learned controller attains $\Qbf$-optimality and $\Qbf$-cost measures of extrapolation that are substantially less than those attained by $\Kminnorm$ (\cref{sec:prelim:extrapolation}).
This phenomenon is more potent the longer the horizon $H$ is, with perfect extrapolation attained in the limit $H \to \infty$.

An interesting consequence of considering a diagonal $\Qbf$, not necessarily equal to the identity matrix, is that it brings about another setting under which perfect extrapolation is achieved.
Specifically, if $q_2 = \cdots = q_D = 0$ and $q_1 > 0$, then for any $H$ divisible by $D$, the learned controller achieves zero $\Qbf$-optimality and cost measures.
The fact that such $\Qbf$ matrices lead to perfect extrapolation can be intuitively attributed to a “credit assignment'' mechanism of a policy gradient iteration.
Namely, due to the structure of $\Abf$, the trajectory of states induced by $\Kbf^{(1)} = \0$ when commencing from $\ebf_1$ consists of $H / D$ repetitions of the cycle $\ebf_1, \ebf_2, \ldots, \ebf_D, \ebf_1$.
A cost is incurred along this trajectory only at at the start of each cycle.
Thus, the components of $\nabla \cost (\Kbf^{(1)} ; \trainstates)$, which exactly align with those of $\Abf$, will be of the same magnitude, \ie~$\nabla \cost (\Kbf^{(1)}; \trainstates) = \sum\nolimits_{d = 1}^D \beta \cdot \ebf_{d \% D + 1} \ebf_d^\top$ for some $\beta > 0$.
Reducing the cost for $\ebf_1$ via a policy gradient iteration will therefore reduce the cost for initial states in $\orthstates$ by the same amount.
This is in contrast to the case of $\Qbf = \Ibf$, where the components of $\nabla \cost (\Kbf^{(1)} ; \trainstates)$ also aligned with those of $\Abf$, but have different magnitudes, thereby resulting in varying degrees of extrapolation to initial states in $\orthstates$.
	
\begin{proposition}
	\label{prop:shift_diag_q}
	Assume that $\trainstates = \{ \ebf_1\}$, $\Abf = \Ashift$, $H$ is divisible by $D$, and the cost matrix $\Qbf \in \R^{D \times D}$ has diagonal entries $q_1, \ldots, q_D \geq 0$, where $q_j > 0$ for at least some $j \in [D]$.
	Furthermore, let $\alpha_{d} := \brk1{ 2 \sum\nolimits_{j = 2}^{d} q_j } / \brk1{ \brk1{ \frac{H}{D} + 1 } \sum\nolimits_{j = 1}^D q_j } \in [0, 1)$ for $d \in [D]$.
	Then, policy gradient with learning rate $\eta = \brk1{ \frac{H}{D} \brk1{ \frac{H}{D} + 1 } \sum\nolimits_{j = 1}^D q_j }^{-1}$ converges to a controller $\Kpg$ that: \emph{(i)} minimizes the training cost, \ie~$\cost (\Kpg ; \trainstates) = \cost^* (\trainstates)$; and \emph{(ii)} satisfies:
	\[
		\begin{split}
		 \optmes^\Qbf \brk1{ \Kpg } &= \frac{ \sum\nolimits_{d = 2}^D q_{d \% D + 1} \cdot \alpha_d^2   }{ \sum\nolimits_{d = 2}^D q_{d \% D + 1} }	\cdot \optmes^\Qbf \brk1{ \Kminnorm }
		 \text{\,,} \\[0.5em]
		\excesscost^\Qbf (\Kpg) & = \frac{ \sum\nolimits_{d = 2}^D \sum\nolimits_{h = 1}^{D - d + 1} q_{(h + d - 1) \% D + 1} \cdot \prod\nolimits_{d' = d}^{h + d - 1} \alpha^2_{d'} }{ \sum\nolimits_{d = 2}^D \sum\nolimits_{h = 1}^{D - d + 1} q_{(h + d - 1) \% D + 1} } \cdot \excesscost (\Kminnorm )
		\text{\,,}
		\end{split}
	\]
	where by convention if $\sum\nolimits_{d = 2}^D q_{d \% D + 1} = 0$ then the right hand sides of both equations above are zero as well.
\end{proposition}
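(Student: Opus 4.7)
I would follow the strategy of \cref{prop:shift}, adapting each step to accommodate a general diagonal $\Qbf$. In outline: \emph{(i)}~compute $\nabla \cost(\Kbf^{(1)} ; \trainstates)$ at $\Kbf^{(1)} = \0$ in closed form; \emph{(ii)}~show that the resulting $\Kbf^{(2)} = -\eta \nabla \cost(\0 ; \trainstates)$ satisfies the optimality condition \cref{eq:optimal_cond_general_Q} for the training set $\{\ebf_1\}$; \emph{(iii)}~conclude that all subsequent policy gradient iterations are trivial, so $\Kpg = \Kbf^{(2)}$; and \emph{(iv)}~use the explicit form of $\Kbf^{(2)}$ to evaluate the $\Qbf$-optimality and $\Qbf$-cost measures on $\orthstates = \{\ebf_2, \ldots, \ebf_D\}$.

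For step \emph{(i)}, the trajectory starting from $\ebf_1$ under $\Kbf^{(1)} = \0$ and $\Abf = \Ashift$ is the periodic sequence $\xbf_h = \ebf_{h \% D + 1}$. Using the standard identity $\partial_{K_{ij}} (\Abf + \Bbf\Kbf)^h \big|_{\Kbf = \0} = \sum_{k=0}^{h-1} \Abf^k \Bbf \ebf_i \ebf_j^\top \Abf^{h-1-k}$, together with the facts that $\Qbf \Abf^h \ebf_1 = q_{h\%D+1} \ebf_{h\%D+1}$ (by the diagonal structure of $\Qbf$) and that $(\Abf^k)^\top$ acts as the inverse cyclic shift, I expect to show that after left-multiplication by $\Bbf^\top$ the gradient is supported on the same positions as $\Abf$, namely $\nabla \cost(\0 ; \trainstates) = \Bbf^\top \sum_{d=1}^D c_d \ebf_{d\%D+1} \ebf_d^\top$, where the coefficient $c_d$ aggregates $\Qbf$-weighted contributions from all pairs $(h,k)$ such that the corresponding subtrajectory performs the transition $\ebf_d \to \ebf_{d \% D + 1}$. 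A block-wise summation over the $H/D$ cycles of the trajectory should yield $c_d = (H/D) \brk[s]1{(H/D + 1) \sum_{j=1}^D q_j - 2 \sum_{j=2}^d q_j}$, so that $\eta c_1 = 1$ and $1 - \eta c_d = \alpha_d$ for every $d \in [D]$.

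For steps \emph{(ii)}--\emph{(iv)}, since $\Bbf$ is orthogonal, $\Bbf \Kbf^{(2)} = -\eta \sum_d c_d \ebf_{d\%D+1} \ebf_d^\top$, so $(\Abf + \Bbf \Kbf^{(2)}) \ebf_1 = (1 - \eta c_1) \ebf_2 = \0$. By \cref{eq:optimal_cond_general_Q} this means $\Kbf^{(2)} \in \KK_\trainstates$ minimizes the training cost, and because the trajectory from $\ebf_1$ under $\Kbf^{(2)}$ collapses to zero after a single step, every subsequent gradient vanishes and $\Kpg = \Kbf^{(2)}$. A direct computation then gives $(\Abf + \Bbf \Kpg) \ebf_d = \alpha_d \ebf_{d \% D + 1}$ for every $d \in [D]$, and iterating yields $(\Abf + \Bbf \Kpg)^h \ebf_d = \brk1{\prod_{d'=d}^{d+h-1} \alpha_{d'}} \ebf_{(d+h-1)\%D+1}$ for $1 \leq h \leq D - d + 1$, and zero thereafter (since $\alpha_1 = 0$ absorbs the trajectory the moment it reaches $\ebf_1$). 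Summing the $\Qbf$-norms of these states, and dividing by the analogous quantities for $\Kminnorm$~---~whose trajectory cycles indefinitely because $\Kminnorm \ebf_d = \0$ for $d \geq 2$~---~should deliver the stated closed forms for $\optmes^\Qbf(\Kpg)$ and $\excesscost^\Qbf(\Kpg)$.

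The main obstacle is step \emph{(i)}. While the trajectory is short and periodic, the gradient involves a double sum over horizons $h \in [H]$ and intermediate times $k \in \{0, \ldots, h-1\}$, and the $\Qbf$-weighting destroys the uniformity present in the identity-cost case: different transitions of the shift receive contributions with different $q_j$-weights, depending on where in the cycle the post-transition state lands and how many times each cost $q_{h \% D + 1}$ is subsequently accumulated. Organizing this combinatorics into the clean expression above for $c_d$~---~and in particular verifying that the alternating subtraction $-2 \sum_{j=2}^d q_j$ emerges~---~will require careful block-wise bookkeeping. Once this is in hand, the remaining steps are essentially as in the proof of \cref{prop:shift}.
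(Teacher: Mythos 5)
Your outline reproduces the paper's own strategy for \cref{prop:shift_diag_q}: compute $\nabla \cost(\0;\ebf_1)$ explicitly (your coefficient $c_d = \frac{H}{D}\brk[s]1{(\frac{H}{D}+1)\sum_{j=1}^D q_j - 2\sum_{j=2}^d q_j}$ is exactly what the paper obtains, so $1-\eta c_d = \alpha_d$), verify $(\Ashift+\Bbf\Kbf^{(2)})\ebf_1=\0$ to get single-iteration convergence, and then evaluate both measures from $(\Ashift+\Bbf\Kpg)\ebf_d = \alpha_d\,\ebf_{d\%D+1}$ and the resulting truncated product trajectories. Up to that point the proposal is sound and matches the paper step for step.

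There is, however, one genuine error in your step \emph{(iv)}: you assert that the trajectory of $\Kminnorm$ from $\ebf_d$, $d\ge 2$, ``cycles indefinitely because $\Kminnorm\ebf_d=\0$ for $d\ge 2$.'' The premise is right but the conclusion is wrong: $\Kminnorm$ is not the zero controller. By definition it lies in $\KK_\trainstates$, so it satisfies $\Kminnorm\ebf_1 = -\Bbf^{-1}\Ashift\ebf_1$, hence $(\Ashift+\Bbf\Kminnorm)\ebf_1=\0$. Consequently the trajectory emanating from $\ebf_d$ shifts through $\ebf_{d+1},\ldots,\ebf_D,\ebf_1$ and then terminates at zero after $D-d+1$ steps; it does not keep cycling. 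This matters quantitatively: with your ``indefinite cycling'' reading you would compute $\cost(\Kminnorm;\{\ebf_d\}) = \sum_{h=0}^{H} q_{(h+d-1)\%D+1}$, roughly $\frac{H}{D}\sum_j q_j$, instead of the correct $\sum_{h=0}^{D-d+1} q_{(h+d-1)\%D+1}$, and the resulting ratio would not reproduce the stated formula for $\excesscost^\Qbf(\Kpg)$ --- note that the denominator in the proposition, $\sum_{d=2}^D\sum_{h=1}^{D-d+1} q_{(h+d-1)\%D+1}$, is precisely $(D-1)\,\excesscost^\Qbf(\Kminnorm)$ for the truncated trajectory. (The $\Qbf$-optimality measure is unaffected, as it only involves one step.) Replacing the cycling claim with the correct truncated trajectory, as in the paper's proof, repairs the argument; everything else in your proposal goes through.
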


\begin{proof}[Proof sketch (full proof in~\cref{app:proofs:shift_diag_q})]
The proof follows a line identical to that of \cref{prop:shift}, generalizing it to account for a diagonal positive semidefinite $\Qbf$ (as opposed to $\Qbf = \Ibf$).
\end{proof}

	\section{Random Systems Generically Induce Exploration}
\label{app:cyc_vector_generic}

Below, we formally state and prove the claim made in \cref{sec:analysis:general} regarding random transition matrices generically inducing exploration.

\begin{lemma}
	\label{lem:cyc_vector_generic}
	Given a non-zero $\xbf \in \R^D$, suppose that $\Abf \in \R^{D \times D}$ is generated randomly from a continuous distribution whose support is $\R^{D \times D}$.
	Then, $\xbf, \Abf \xbf, \ldots \Abf^{D - 1} \xbf$ form a basis of $\R^D$ almost surely (\ie~$\xbf$ is a cyclic vector of $\Abf$ almost surely).
\end{lemma}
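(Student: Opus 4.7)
My plan is to recognize that the vectors $\xbf, \Abf\xbf, \ldots, \Abf^{D-1}\xbf$ form a basis of $\R^D$ if and only if the $D \times D$ matrix $\Mbf(\Abf) := [\xbf \mid \Abf\xbf \mid \Abf^2 \xbf \mid \cdots \mid \Abf^{D-1}\xbf]$ is invertible, i.e.\ iff $p(\Abf) := \det \Mbf(\Abf) \neq 0$. Since each entry of $\Abf^k \xbf$ is a polynomial of degree $k$ in the entries of~$\Abf$, and the determinant is in turn a polynomial in the columns, the map $\Abf \mapsto p(\Abf)$ is a polynomial function of the $D^2$ entries of $\Abf$. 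Thus the set of matrices for which $\xbf$ fails to be cyclic is precisely the zero locus of this polynomial.

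The next step is to show that $p$ is not the zero polynomial by exhibiting a single matrix $\Abf^\star$ at which $p(\Abf^\star) \neq 0$. To this end, since $\xbf \neq \0$, I would complete it to a basis $\vbf_1 := \xbf, \vbf_2, \ldots, \vbf_D$ of $\R^D$ and define $\Abf^\star$ as the unique linear map satisfying $\Abf^\star \vbf_d = \vbf_{d+1}$ for $d \in [D-1]$ and $\Abf^\star \vbf_D = \0$. By induction $(\Abf^\star)^{d-1} \xbf = \vbf_d$, so the columns of $\Mbf(\Abf^\star)$ are exactly $\vbf_1, \ldots, \vbf_D$, which form a basis; hence $p(\Abf^\star) \neq 0$ and $p$ is not identically zero.

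To conclude, I would invoke the standard fact that the zero locus of a nonzero polynomial on $\R^{D \times D}$ is a proper algebraic subvariety and therefore has Lebesgue measure zero. Because $\Abf$ is drawn from a continuous distribution on $\R^{D \times D}$ (in particular, absolutely continuous with respect to Lebesgue measure, as indicated by its support being the full ambient space), any Lebesgue-null set is assigned probability zero. Consequently, $p(\Abf) \neq 0$ almost surely, which is exactly the claim that $\xbf, \Abf\xbf, \ldots, \Abf^{D-1}\xbf$ form a basis of $\R^D$ almost surely.

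The only nontrivial step I anticipate is the explicit construction establishing that $p$ is not identically zero; everything else reduces to textbook facts about polynomials and continuous distributions. The construction above handles this cleanly, so I do not foresee a significant obstacle.
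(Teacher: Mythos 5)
Your proposal is correct and follows essentially the same route as the paper's proof: express the failure of cyclicity as the vanishing of the determinant of $[\xbf \mid \Abf\xbf \mid \cdots \mid \Abf^{D-1}\xbf]$, which is a polynomial in the entries of $\Abf$, exhibit one matrix (a shift along a basis completing $\xbf$) at which it is nonzero, and conclude via the measure-zero property of the zero set of a nonzero polynomial under a continuous distribution. No gaps.
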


\begin{proof}
	Denote by $\Ybf \in \R^{D \times D}$ the matrix whose columns are $\xbf, \Abf \xbf, \ldots, \Abf^{D - 1} \xbf$.
	Note that $\xbf$ is a cyclic vector of $\Abf$ if and only if the determinant of $\Ybf$, which is polynomial in the entries of $\Abf$, is non-zero.
	The zero set of a polynomial is either the entire space or a set of Lebesgue measure zero~\cite{caron2005zero}.
	Hence, it suffices to show that there exists an $\Abf$ such that the determinant of $\Ybf$ is non-zero, since that implies the set of matrices for which $\xbf$ is not a cyclic vector has probability zero.
	To see that such $\Abf$ exists, let $\zbf_1, \ldots, \zbf_{D - 1} \in \R^D$ be vectors completing $\xbf$ into a basis of $\R^D$.
	We can take $\Abf$ to be a matrix satisfying $\zbf_1 = \Abf \xbf$ and $\zbf_{d + 1} = \Abf \zbf_{d}$ for $d \in [D - 2]$ (the way $\Abf$ transforms $\zbf_{D - 1}$ can be chosen arbitrarily).
	Under this choice of $\Abf$, the columns of $\Ybf$, \ie~$\xbf, \Abf \xbf, \ldots, \Abf^{D - 1} \xbf$, are respectively equal to $\xbf, \zbf_1, \ldots, \zbf_{D - 1}$.
	Thus, $\Ybf$ is full rank and its determinant is non-zero.
\end{proof}
	
	\section{Deferred Proofs}
\label{app:proofs}

In this appendix, we provide full proofs for our theoretical results.

\textbf{Additional notation.} 
Throughout the proofs, we use $\Tr (\Cbf)$ to denote the trace of a matrix $\Cbf$.

\subsection{Cost Minimizing Controllers in an Underdetermined LQR Problem}
\label{app:proofs:cost_min_underdetermined}

We restate and prove the claim made in \cref{sec:prelim:underdetermined} regarding controllers that minimize the cost in an underdetermined LQR problem, for a given set of initial states.

\begin{lemma}
\label{lem:cost_min_underdetermined}
Let $\X \subset \R^D$ be an arbitrary finite set of initial states.
The global minimum of the cost $\cost (\cdot \, ; \X)$ (defined in \cref{eq:cost}) is:
\[
\cost^* (\X) := \min\nolimits_{\Kbf \in \R^{D \times D}} \cost (\Kbf; \X) = \frac{1}{ \abs{\X} } \sum\nolimits_{\xbf_0 \in \X} \norm{ \xbf_0 }^2
\text{\,.}
\]
Furthermore, a controller $\Kbf \in \R^{D \times D}$ attains this global minimum if and only if $\Kbf \xbf_0 = -\Bbf^{-1} \Abf \xbf_0$ for all $\xbf_0 \in \X$.
\end{lemma}

\begin{proof}
Let $\Kbf^* := - \Bbf^{-1} \Abf$.
Notice that this controller attains the following cost over $\X$:
\[
\begin{split}
\cost (\Kbf^* ; \X)  & =
\frac{1}{\abs{\X}}  \sum\nolimits_{\xbf_0 \in \X} \sum\nolimits_{h = 0}^H \norm*{ (\Abf + \Bbf \Kbf^*)^h \xbf_0 }^2 \\
& = \frac{1}{\abs{\X}}  \sum\nolimits_{\xbf_0 \in \X} \underbrace{\norm*{ ( \Abf + \Bbf \Kbf^*)^0 \xbf_0 }^2}_{ = \norm{\xbf_0}^2 }  + \frac{1}{\abs{\X}}  \sum\nolimits_{\xbf_0 \in \X} \sum\nolimits_{h = 1}^H \underbrace{ \norm*{ (\Abf + \Bbf \Kbf^*)^h \xbf_0 }^2 }_{ = 0 } \\
& =  \frac{1}{\abs{\X}} \sum\nolimits_{\xbf_0 \in \X} \norm*{\xbf_0 }^2
\text{\,.}
\end{split}
\]
Hence, $\cost^* (\X) \leq \frac{1}{\abs{\X}}  \sum\nolimits_{\xbf_0 \in \X} \norm*{\xbf_0 }^2$.
On the other hand, for any $\Kbf \in \R^{D \times D}$:
\[
\cost (\Kbf ; \X)  =
\frac{1}{\abs{\X}} \sum\nolimits_{\xbf_0 \in \X} \sum\nolimits_{h = 0}^H \norm*{ (\Abf + \Bbf \Kbf)^h \xbf_0 }^2 \geq \frac{1}{\abs{\X}}  \sum\nolimits_{\xbf_0 \in \X} \norm*{\xbf_0 }^2
\text{\,,}
\]
since for each initial state $\xbf_0 \in \X$, the term in the cost corresponding to time step $h = 0$ does not depend on $\Kbf$ and is equal to $\norm{\xbf_0}^2$.
This implies that $ \cost^* (\X) \geq \frac{1}{\abs{\X}}  \sum\nolimits_{\xbf_0 \in \X} \norm*{\xbf_0 }^2$, and so $\cost^* (\X) = \frac{1}{\abs{\X}}  \sum\nolimits_{\xbf_0 \in \X} \norm*{\xbf_0 }^2$.

Now, we show that a controller $\Kbf \in \R^{D \times D}$ attains the global minimum $\cost^* (\X)$ if and only if $\Kbf \xbf_0 = -\Bbf^{-1} \Abf \xbf_0$ for all $\xbf_0 \in \X$.
In the first direction, any $\Kbf \in \R^{D \times D}$ satisfying $\Kbf \xbf_0 = -\Bbf^{-1} \Abf \xbf_0$ for all $\xbf_0 \in \X$ upholds:
\[
\begin{split}
\cost (\Kbf ; \X)  & =
\frac{1}{\abs{\X}}  \sum\nolimits_{\xbf_0 \in \X} \sum\nolimits_{h = 0}^H \norm*{ (\Abf + \Bbf \Kbf)^h \xbf_0 }^2 \\
& = \frac{1}{\abs{\X}}  \sum\nolimits_{\xbf_0 \in \X} \underbrace{\norm*{ ( \Abf + \Bbf \Kbf)^0 \xbf_0 }^2}_{ = \norm{\xbf_0}^2 }  + \frac{1}{\abs{\X}}  \sum\nolimits_{\xbf_0 \in \X} \sum\nolimits_{h = 1}^H \underbrace{ \norm*{ (\Abf + \Bbf \Kbf)^{h - 1} (\Abf \xbf_0 + \Bbf \Kbf \xbf_0) }^2 }_{ = 0 } \\
& =  \cost^* (\X)
\text{\,.}
\end{split}
\]
In the other direction, let $\Kbf \in \R^{D \times D}$ be a controller for which $\cost (\Kbf; \X) = \cost^* (\X) =  \frac{1}{\abs{\X}} \sum\nolimits_{\xbf_0 \in \X} \norm*{\xbf_0 }^2$.
The cost attained by $\Kbf$ can be decomposed as done above:
\[
\begin{split}
	\cost (\Kbf ; \X)  & =
	\frac{1}{\abs{\X}}  \sum\nolimits_{\xbf_0 \in \X} \sum\nolimits_{h = 0}^H \norm*{ (\Abf + \Bbf \Kbf)^h \xbf_0 }^2 \\
	& = \cost^* (\X)  + \frac{1}{\abs{\X}} \sum\nolimits_{\xbf_0 \in \X} \sum\nolimits_{h = 1}^H \norm*{ (\Abf + \Bbf \Kbf)^h \xbf_0 }^2
	\text{\,.}
\end{split}
\]
Each term in the cost is non-negative.
Since $\cost (\Kbf ; \X) = \cost^* (\X)$, this implies that $\norm{ (\Abf + \Bbf \Kbf)^h \xbf_0 }^2 = 0$ for every $\xbf_0 \in \X$ and $h \in [H]$.
Focusing on time step $h = 1$, we get that $\Kbf$ satisfies $\norm{ (\Abf + \Bbf \Kbf) \xbf_0 }^2 = 0$ for all $\xbf_0 \in \X$.
Consequently, for all $\xbf_0 \in \X$:
\[
\Abf \xbf_0 = - \Bbf \Kbf \xbf_0
\text{\,.}
\]
Recalling that $\Bbf$ is invertible, we conclude that $\Kbf \xbf_0 = - \Bbf^{-1} \Abf \xbf_0$ for all $\xbf_0 \in \X$.
\end{proof}

\subsection{Gradient of the Cost in an LQR Problem}
\label{app:proofs:lqr_gradient}

Throughout, we make use of the following expression for the gradient of the cost in an underdetermined LQR problem (\cref{sec:prelim:underdetermined}).

\begin{lemma}
\label{lem:lqr_gradient}
Consider an underdetermined LQR problem defined by $\Abf, \Bbf \in \R^{D \times D}$, and positive semidefinite $\Qbf \in \R^{D \times D}$.
For any finite set of initial states $\X \subset \R^D$, the gradient of the cost $\cost (\cdot \,; \X)$ (\cref{eq:cost}) at $\Kbf \in \R^{D \times D}$ is given by:
\[
\nabla \cost (\Kbf; \X) = 2 \Bbf^\top \sum\nolimits_{h = 0}^{H - 1} \brk2{ \sum\nolimits_{s = 1}^{H - h} \brk[s]{ ( \Abf + \Bbf \Kbf )^{s - 1} }^\top \Qbf (\Abf + \Bbf \Kbf)^s } \Sigmabf_{\X, h}
\text{\,,}
\]
with $\Sigmabf_{\X, h} := \frac{1}{ \abs{\X} } \sum\nolimits_{\xbf_0 \in \X} \xbf_h \xbf_h^\top = \frac{1}{ \abs{\X} } \sum\nolimits_{\xbf_0 \in \X} (\Abf + \Bbf \Kbf)^h \xbf_0 \brk[s]{ (\Abf + \Bbf \Kbf)^h \xbf_0 }^\top$ for $h \in \{0\} \cup [H - 1]$.
\end{lemma}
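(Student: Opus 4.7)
The plan is to differentiate the cost entry-wise with respect to $\Kbf$, then re-index a double sum to arrive at the stated factored form. Setting $\Mbf := \Abf + \Bbf \Kbf$, each state $\xbf_t = \Mbf^t \xbf_0$ depends on $\Kbf$ only through $\Mbf$, and the matrix product rule yields $\partial \Mbf^t / \partial K_{ij} = \sum_{s=1}^{t} \Mbf^{s-1} \Bbf \ebf_i \ebf_j^\top \Mbf^{t-s}$. Since $\Qbf$ is symmetric, $\partial (\xbf_t^\top \Qbf \xbf_t)/\partial K_{ij} = 2\, \xbf_t^\top \Qbf\, \partial \xbf_t / \partial K_{ij}$. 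Substituting the previous identity and assembling entries into a matrix, the gradient of a single quadratic term takes the rank-one-per-$s$ form $\nabla_{\Kbf} (\xbf_t^\top \Qbf \xbf_t) = 2 \sum_{s=1}^{t} \Bbf^\top (\Mbf^{s-1})^\top \Qbf \Mbf^{t} \xbf_0 \, (\Mbf^{t-s} \xbf_0)^\top$.

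Next, I will sum over $\xbf_0 \in \X$ (with the $1/|\X|$ normalization) and over $t \in \{0,\ldots,H\}$; the $t=0$ term vanishes since $\xbf_0$ does not depend on $\Kbf$. Swapping the order of summation and changing variables via $h := t - s$, the index range $t \in \{1,\ldots,H\}$, $s \in \{1,\ldots,t\}$ becomes $h \in \{0,\ldots,H-1\}$, $s \in \{1,\ldots,H-h\}$. Using the semigroup identity $\Mbf^{t} = \Mbf^{s} \Mbf^{h}$ to rewrite $\Mbf^{t}\xbf_0 = \Mbf^{s}\xbf_h$, and noting that $\Mbf^{t-s}\xbf_0 = \xbf_h$, the two factors of the rank-one term decouple: the left factor depends only on $s$ and contributes the matrix $(\Mbf^{s-1})^\top \Qbf \Mbf^{s}$, while the right factor becomes the outer product $\xbf_h \xbf_h^\top$.

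Averaging $\xbf_h \xbf_h^\top$ over $\xbf_0 \in \X$ produces precisely $\Sigmabf_{\X, h}$, and pulling the $\Bbf^\top$ out of the outer sum yields the claimed identity. The computation is essentially routine differentiation; the only delicate step is the bookkeeping for the summation order and the index substitution, which must be carried out carefully so that the inner $s$-sum depends only on $\Mbf$ and $\Qbf$ while the averaged outer-product factor on the right matches $\Sigmabf_{\X, h}$ exactly.
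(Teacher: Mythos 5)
Your proposal is correct and follows essentially the same route as the paper's proof: both differentiate the powers of $\Abf + \Bbf \Kbf$ via the product rule and then re-index the double sum so that the averaged outer products $\Sigmabf_{\X, h}$ factor out against the $\Bbf^\top \brk[s]{(\Abf+\Bbf\Kbf)^{s-1}}^\top \Qbf (\Abf+\Bbf\Kbf)^s$ terms. The only difference is presentational~---~you compute entry-wise partials $\partial / \partial K_{ij}$ and assemble the gradient matrix directly, whereas the paper expands $\cost(\Kbf + \Delta; \X)$ to first order in $\Delta$ and identifies the gradient through trace inner products (the symmetry of $\Qbf$ that gives you the factor of $2$ is what the paper establishes as $(I)=(II)$)~---~so the underlying computation is identical.
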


\begin{proof}
	Notice that $\cost (\Kbf; \X)$ can be written as:
	\[
	\cost (\Kbf; \X) = \inprodBig{ \sum\nolimits_{h = 0}^H [ (\Abf + \Bbf \Kbf)^h ]^\top \Qbf (\Abf + \Bbf \Kbf)^h }{  \Sigmabf_{\X, 0} }
	\text{\,.}
	\]
	A straightforward computation shows that for any $\Delta \in \R^{D \times D}$:
	\[
	\begin{split}
		\cost (\Kbf + \Delta; \X) & = \cost (\Kbf ; \X) \\
		& \hspace{3mm} + \underbrace{ \inprodBig{ \sum\nolimits_{h = 1}^H \sum\nolimits_{s = 0}^{h - 1} [ (\Abf + \Bbf \Kbf)^{s} \Bbf \Delta (\Abf + \Bbf \Kbf)^{h - s - 1} ]^\top \Qbf (\Abf + \Bbf \Kbf)^h }{  \Sigmabf_{\X, 0} } }_{ (I) } \\
		& \hspace{3mm} + \underbrace{ \inprodBig{ \sum\nolimits_{h = 1}^H \sum\nolimits_{s = 0}^{h - 1} [ (\Abf + \Bbf \Kbf)^h ]^\top \Qbf (\Abf + \Bbf \Kbf)^{h - s - 1} \Bbf \Delta (\Abf + \Bbf \Kbf)^{s} }{  \Sigmabf_{\X, 0} } }_{(II)} \\
		& \hspace{3mm} + o ( \norm{\Delta} )
		\text{\,.}
	\end{split}
	\]
	Then, the identity $\Tr (\Xbf^\top \Ybf) = \Tr (\Xbf \Ybf^\top) = \inprod{\Xbf}{\Ybf}$ for any matrices $\Xbf, \Ybf$ of the same dimensions, along with the cyclic property of the trace, leads to:
	\[
		(I) = (II) = \inprodBig{  \sum\nolimits_{h = 1}^H \sum\nolimits_{s = 0}^{h - 1} \Bbf^\top \brk[s]1{ (\Abf + \Bbf \Kbf)^{h - s - 1} }^\top \Qbf (\Abf + \Bbf \Kbf)^{h - s} \Sigmabf_{\X, s} }{  \Delta }
		\text{\,,}
	\]
	from which we get:
	\[
		\cost (\Kbf + \Delta; \X) = \cost (\Kbf; \X) + 2  \inprodBig{  \sum\nolimits_{h = 1}^H \sum\nolimits_{s = 0}^{h - 1} \Bbf^\top \brk[s]1{ (\Abf + \Bbf \Kbf)^{h - s - 1} }^\top \Qbf (\Abf + \Bbf \Kbf)^{h - s} \Sigmabf_{\X, s} }{  \Delta } + o ( \norm{ \Delta } )
		\text{\,.}
	\]
	Since $\nabla \cost (\Kbf; \X)$ is the unique linear approximation of $\cost (\cdot\, ; \X)$ at $\Kbf$, it follows that:
	\[
		\begin{split}
		\nabla \cost (\Kbf; \X) & = 2 \sum\nolimits_{h = 1}^H \sum\nolimits_{s = 0}^{h - 1} \Bbf^\top \brk[s]1{ (\Abf + \Bbf \Kbf)^{h - s - 1} }^\top \Qbf (\Abf + \Bbf \Kbf)^{h - s} \Sigmabf_{\X, s} \\
		& = 2 \Bbf^\top\sum\nolimits_{h = 1}^H \sum\nolimits_{s = 0}^{h - 1} \brk[s]1{ (\Abf + \Bbf \Kbf)^{h - s - 1} }^\top \Qbf (\Abf + \Bbf \Kbf)^{h - s} \Sigmabf_{\X, s}
		\text{\,.}
		\end{split}
	\]
	The proof concludes by grouping terms with $\Sigmabf_{\X, h}$, for each $h \in \{0\} \cup [H - 1]$.
\end{proof}

\subsection{Proof of~\cref{prop:no_exploration_no_extrapolation}}
\label{app:proofs:no_exploration_no_extrapolation}

\textbf{Exploration is necessary for extrapolation.}
From~\cref{lem:lqr_gradient}, the gradient of $\cost ( \cdot \, ; \trainstates )$ at any $\Kbf \in \R^{D \times D}$ takes on the following form:
\[
\nabla \cost (\Kbf; \trainstates) = 2 \Bbf^\top \sum\nolimits_{h = 0}^{H - 1} \brk2{ \sum\nolimits_{s = 1}^{H - h} \brk[s]{ ( \Abf + \Bbf \Kbf )^{s - 1} }^\top \Qbf (\Abf + \Bbf \Kbf)^s } \Sigmabf_{\trainstates, h}
\text{\,,}
\]
where $\Sigmabf_{\trainstates, h} := \frac{1}{ \abs{\trainstates} } \sum\nolimits_{\xbf_0 \in \trainstates} (\Abf + \Bbf \Kbf)^h \xbf_0 \brk[s]{ (\Abf + \Bbf \Kbf)^h \xbf_0 }^\top$ for $h \in \{0\} \cup [H - 1]$.
Thus, at every policy gradient iteration $t \in \N$, the rows of $\nabla \cost (\Kbf^{(t)} ; \trainstates)$ are in the span of $\brk[c]{ (\Abf + \Bbf \Kbf^{(t)})^h \xbf_0 : \xbf_0 \in \trainstates, h \in \{0\} \cup [H - 1] }$, \ie~in the span of the states encountered when starting from initial states in $\trainstates$ and using the controller $\Kbf^{(t)}$.
Since $\Kbf^{(1)} = \0$, at every iteration $t \in \N$, the rows of $\Kbf^{(t)} = - \eta \sum\nolimits_{i = 1}^{t - 1} \nabla \cost (\Kbf^{(i)} ; \trainstates)$ are in the span of $\pgstates$.
Consequently, for any initial state $\vbf_0 \in \pgstates^\perp$ and $t \in \N$ we have that $\Kbf^{(t)} \vbf_0 = \0$.
On the other hand, for the non-extrapolating controller $\Kminnorm$ (defined in \cref{eq:unseen_controls_zero}) it also holds that $\Kminnorm \vbf_0 = \0$ for any $\vbf_0 \in \pgstates^\perp$, as $\pgstates^\perp \subseteq \trainstates^\perp$.
Thus, if $\pgstates \subseteq \lspan (\trainstates)$, then $\Kbf^{(t)} \vbf_0 = \Kminnorm \vbf_0 = \0$ for any $\vbf_0 \in\orthstates$, and:
\[
\optmes (\Kbf^{(t)}) = \frac{1}{ \abs{\orthstates} } \sum\nolimits_{ \vbf_0 \in \orthstates } \norm*{ ( \Abf + \Bbf \Kbf^{(t)} ) \vbf_0 }^2 = \frac{1}{ \abs{\orthstates} } \sum\nolimits_{ \vbf_0 \in \orthstates } \norm*{ \Abf \vbf_0 }^2 = \optmes ( \Kminnorm ) 
\text{\,.}
\]

\textbf{Existence of non-exploratory systems.}
Let $\Abf = \Bbf = \Ibf \in \R^{D \times D}$, where $\Ibf$ is the identity matrix.

We first prove that $\pgstates \subseteq \lspan (\trainstates)$.
To do so, it suffices to prove that for all $t \in \N$ the rows and columns of  $\Kbf^{(t)}$ are spanned by the set $\trainstates$ of initial states seen in training.
Indeed, in such a case $\Abf + \Bbf \Kbf^{(t)} = \Ibf + \Kbf^{(t)}$ is invariant to $\lspan (\trainstates)$, \ie~for any $\xbf \in \lspan (\trainstates)$ it holds that $(\Ibf + \Kbf^{(t)}) \xbf = \xbf + \Kbf^{(t)} \xbf \in \lspan (\trainstates)$, from which it readily follows that $\pgstates = \brk[c]{ (\Ibf + \Kbf^{(t)})^h \xbf_0 :  \xbf_0 \in \trainstates , h \in \{0\} \cup [H] , t \in \N } \subseteq \lspan (\trainstates)$.

We prove that the rows and columns of $\Kbf^{(t)}$ are spanned by $\trainstates$ by induction over $t \in \N$.
The base case of $t = 1$ is trivial since $\Kbf^{(1)} = \0$.
Assuming that the inductive claim holds for $t - 1 \in \N$, we show that it holds for $t$ as well.
According to \cref{lem:lqr_gradient}:
\[
\nabla \cost (\Kbf^{(t - 1)}; \trainstates) = 2 \sum\nolimits_{h = 0}^{H - 1} \brk2{ \sum\nolimits_{s = 1}^{H - h} \brk[s]{ ( \Ibf + \Kbf^{(t - 1)} )^{s - 1} }^\top (\Ibf+ \Kbf^{(t - 1)} )^s } \Sigmabf^{(t - 1)}_{\trainstates, h}
\text{\,,}
\]
where $\Sigmabf^{(t - 1)}_{\trainstates, h} := \frac{1}{ \abs{\trainstates} } \sum\nolimits_{\xbf_0 \in \trainstates} (\Ibf + \Kbf^{(t - 1)})^h \xbf_0 \brk[s]{ (\Ibf + \Kbf^{(t - 1)} )^h \xbf_0 }^\top$ for $h \in \{0\} \cup [H - 1]$.
By the inductive assumption, the rows and columns of $\Kbf^{(t - 1)}$ are in $\lspan (\trainstates)$.
Hence, both $\Ibf + \Kbf^{(t - 1)}$ and $( \Ibf + \Kbf^{(t - 1)} )^\top$ are invariant to $\lspan (\trainstates)$.
This implies that $(\Ibf + \Kbf^{(t - 1)})^h \xbf_0 \in \lspan (\trainstates)$ and $[( \Ibf + \Kbf^{(t - 1)} )^{s - 1}]^\top (\Ibf + \Kbf^{(t - 1)})^{h + s} \xbf_0 \in \lspan (\trainstates)$ for all $\xbf_0 \in \trainstates, h \in \{0\} \cup [H - 1],$ and $s \in [H - h]$.
Consequently, $\nabla \cost (\Kbf^{(t - 1)} ; \trainstates)$ is a sum of outer products between vectors that reside in $\lspan (\trainstates)$, and so its rows and columns are in $\lspan (\trainstates)$.
Along with the inductive assumption, we thus conclude that the rows and columns of $\Kbf^{(t)} = \Kbf^{(t - 1)} - \eta \cdot \nabla \cost ( \Kbf^{(t - 1)} ; \trainstates)$ are in $\lspan (\trainstates)$ as well.

\medskip

We now turn to prove that:
\[
\begin{split}
	& \optmes (\Kbf^{(t)}) = \optmes ( \Kminnorm ) = 1
	\text{\,,} \\[0.3em]
	& \excesscost (\Kbf^{(t)} ) = \excesscost ( \Kminnorm ) = H 
	\text{\,.}
\end{split}	
\]
As shown above, $\pgstates \subseteq \lspan (\trainstates)$, and so, by the first part of the proof, $\Kbf^{(t)} \vbf_0 =  \Kminnorm \vbf_0 = \0$ for any $\vbf_0 \in \orthstates \subseteq \pgstates^\perp$.
This implies that $(\Ibf + \Kbf^{(t)}) \vbf_0 = \vbf_0$ for any $\vbf_0 \in \orthstates$, from which it follows that:
\[
\optmes (\Kbf^{(t)}) = \frac{1}{ \abs{\orthstates} } \sum\nolimits_{ \vbf_0 \in \orthstates } \norm*{ ( \Ibf + \Kbf^{(t)} ) \vbf_0 }^2 = \frac{1}{ \abs{\orthstates} } \sum\nolimits_{ \vbf_0 \in \orthstates } \norm*{ \vbf_0 }^2 = 1
\text{\,.}
\]
Noticing that $\cost^* (\orthstates) = 1$ (\eg, this minimal cost is attained by $\Koptall$, defined in \cref{eq:unseen_controls_opt}), we similarly get:
\[
\excesscost (\Kbf^{(t)}) =  \frac{1}{ \abs{\orthstates} } \sum\nolimits_{ \vbf_0 \in \orthstates } \sum\nolimits_{h = 0}^H \norm*{ (\Ibf + \Kbf^{(t - 1)})^h \vbf_0 }^2 - 1 =   \frac{1}{ \abs{\orthstates} } \sum\nolimits_{ \vbf_0 \in \orthstates }  \sum\nolimits_{h = 0}^H \norm*{ \vbf_0 }^2 - 1 = H
\text{\,.}
\]
Additionally, by the definition of $\Kminnorm$ (\cref{eq:unseen_controls_zero}), we know that $(\Ibf + \Kminnorm) \vbf_0 = \vbf_0$ for any $\vbf_0 \in \orthstates$.
By the same computation made above for $\Kbf^{(t)}$, we thus get that the optimality and cost measures of extrapolation attained by $\Kminnorm$ over $\orthstates$ are equal to those attained by $\Kbf^{(t)}$.
\qed

\subsection{Proof of~\cref{prop:shift}}
\label{app:proofs:shift}

We begin by deriving an explicit formula for $\Kbf^{(2)}$ in \cref{lem:shift_Kpg_K2}, from which it follows that policy gradient converges in a single iteration to $\Kpg = \Kbf^{(2)}$.

\begin{lemma}
\label{lem:shift_Kpg_K2}
Policy gradient converges in a single iteration to:
\[
\Kpg = \Kbf^{(2)} = - \Bbf^\top \sum\nolimits_{d = 1}^D \brk*{ 1 - \frac{2(d - 1)}{H + D} } \cdot \ebf_{d \% D + 1} \ebf_d^\top
\text{\,,}
\]
which minimizes the training cost, \ie~$\cost (\Kpg ; \trainstates ) = \cost (\Kbf^{(2)} ; \trainstates ) = \cost^* (\trainstates)$.
\end{lemma}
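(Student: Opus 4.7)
The plan is to explicitly compute $\Kbf^{(2)} = -\eta \, \nabla \cost(\0;\{\ebf_1\})$ using the gradient formula established in \cref{lem:lqr_gradient}, and then verify that the resulting controller already achieves the global minimum of the training cost, which forces all subsequent policy gradient iterates to coincide with it. First I would instantiate \cref{lem:lqr_gradient} at $\Kbf=\0$ with $\Qbf=\Ibf$ and $\X=\{\ebf_1\}$, which gives
\[
\nabla \cost(\0;\{\ebf_1\}) \;=\; 2\Bbf^\top \sum_{h=0}^{H-1}\Bigl(\sum_{s=1}^{H-h}(\Ashift^{s-1})^\top \Ashift^s\Bigr)\,\Ashift^h \ebf_1 (\Ashift^h \ebf_1)^\top.
\]

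The next step exploits the two defining features of $\Ashift$: it is orthogonal (so $(\Ashift^{s-1})^\top \Ashift^s = \Ashift$ for every $s$) and it acts as a cyclic shift on the standard basis, so $\Ashift^h \ebf_1 = \ebf_{h\%D+1}$. These simplify the inner sum to $(H-h)\Ashift$ and reduce the outer expression to $2\Bbf^\top \Ashift \sum_{h=0}^{H-1}(H-h)\,\ebf_{h\%D+1}\ebf_{h\%D+1}^\top$. Using that $H$ is divisible by $D$, I would partition $h \in \{0,\ldots,H-1\}$ into residue classes mod $D$ and evaluate $\sum_{j=0}^{H/D - 1}(H - (d-1) - jD)$ in closed form, obtaining a coefficient proportional to $\tfrac{H+D}{2} - (d-1)$ on each outer product $\ebf_{d\%D+1}\ebf_d^\top$. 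Multiplying by $-\eta = -D/(H(H+D))$ then yields precisely the stated formula
\[
\Kbf^{(2)} \;=\; -\Bbf^\top \sum_{d=1}^D \left(1 - \frac{2(d-1)}{H+D}\right) \ebf_{d\%D+1}\ebf_d^\top.
\]

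Next, to confirm that this controller minimizes the training cost, I would apply the optimality condition \cref{eq:optimal_cond} restricted to $\ebf_1$: one needs $(\Abf + \Bbf\Kbf^{(2)})\ebf_1 = \0$. Since $\Bbf$ is orthogonal, $\Bbf\Bbf^\top = \Ibf$, and only the $d=1$ summand contributes when acting on $\ebf_1$ (with coefficient $1$ and column vector $\ebf_2$), so $\Bbf\Kbf^{(2)}\ebf_1 = -\ebf_2 = -\Ashift\ebf_1$, as required. Once $\Kbf^{(2)} \in \KK_{\trainstates}$ is established, the training cost $\cost(\cdot;\trainstates)$ being a smooth non-negative function attaining its global infimum at $\Kbf^{(2)}$ forces $\nabla \cost(\Kbf^{(2)};\trainstates)=\0$, so the policy gradient iteration stalls and $\Kpg = \Kbf^{(2)}$.

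The main obstacle I anticipate is purely combinatorial bookkeeping: correctly grouping the $H$ terms of the outer sum according to their residue class modulo $D$ and simplifying the arithmetic progression so that the prefactor $H/D$ cleanly cancels against the learning rate $\eta = D/(H(H+D))$ to leave the simple coefficient $1 - 2(d-1)/(H+D)$. Everything else~---~the identity $(\Ashift^{s-1})^\top \Ashift^s = \Ashift$, the cyclic action on basis vectors, and the verification of the optimality condition~---~is a direct consequence of the orthogonality and cyclic structure of $\Ashift$ and $\Bbf$.
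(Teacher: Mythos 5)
Your proposal is correct and follows essentially the same route as the paper's proof: instantiate the gradient formula at $\Kbf^{(1)}=\0$, use the orthogonality of $\Ashift$ (so $(\Ashift^{s-1})^\top\Ashift^s=\Ashift$) and its cyclic action on the standard basis, group the $h$-sum by residue class mod $D$ to get the coefficient $\frac{H}{D}\brk1{\frac{H+D}{2}-(d-1)}$, cancel against $\eta=(H^2/D+H)^{-1}$, and verify $(\Ashift+\Bbf\Kbf^{(2)})\ebf_1=\0$ via $\Bbf\Bbf^\top=\Ibf$ to conclude global optimality. Your closing remark that a global minimizer of the smooth cost is a stationary point, so the iteration stalls at $\Kbf^{(2)}$, just makes explicit what the paper leaves implicit.
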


\begin{proof}
For $\trainstates = \{ \ebf_1 \}$, by~\cref{lem:lqr_gradient}, the gradient of the training cost at $\Kbf^{(1)} = \0$ is given by:
\[
\nabla \cost ( \0  ; \ebf_1) = 2 \Bbf^\top \sum\nolimits_{h = 0}^{H - 1} \brk2{ \sum\nolimits_{s = 1}^{H - h} \brk[s]{ \Ashift^{s - 1} }^\top \Ashift^s } \Sigmabf_{\ebf_1, h}
\text{\,,}
\]
where $\Ashift = \sum\nolimits_{d = 1}^D \ebf_{d \% D + 1} \ebf_d^\top$ and $\Sigmabf_{\ebf_1, h} := \Ashift^h \ebf_1 \brk[s]{ \Ashift^h \ebf_1 }^\top = \ebf_{ h \% D + 1 } \ebf_{ h \% D + 1 }^\top$ for $h \in \{0\} \cup [H - 1]$.
Notice that $\Ashift \Ashift^\top = \Ibf$, \ie~$\Ashift$ is an orthogonal matrix.
Hence, $\brk[s]{ \Ashift^{s - 1} }^\top \Ashift^s =\Ashift$ for all $s \in [H]$ and:
\[
\begin{split}
\nabla \cost ( \0 ; \ebf_ 1) & = 2 \Bbf^\top \sum\nolimits_{h = 0}^{H - 1} \sum\nolimits_{s = 1}^{H - h} \Ashift \ebf_{h \% D + 1} \ebf_{h \% D + 1}^\top \\
& = 2 \Bbf^\top \sum\nolimits_{h = 0}^{H - 1} \sum\nolimits_{s = 1}^{H - h} \ebf_{(h + 1) \% D + 1} \ebf_{h \% D + 1}^\top \\
& = 2 \Bbf^\top \sum\nolimits_{h = 0}^{H - 1} (H - h) \cdot \ebf_{(h + 1) \% D + 1} \ebf_{h \% D + 1}^\top
\text{\,.}
\end{split}
\]
Recalling that $H = D \cdot L$ for some $L \in \N$, there are exactly $L = \frac{H}{D}$ terms in the sum corresponding to $\ebf_{d \% D + 1} \ebf_d^\top$, for each $d \in [D]$.
Focusing on elements $h \in \{0, D, 2D, \ldots, H - D\}$ in the sum, which satisfy $h \% D + 1 = 1$, the sum of coefficients for $\ebf_2 \ebf_1^\top$ is given by $H + (H - D) + \cdots + D = (\frac{H^2}{D} + H) \cdot 2^{-1}$.
More generally, for $d \in [D]$, the sum of coefficients for $\ebf_{d \% D + 1} \ebf_d^\top$ is $(H - d + 1) + (H - D - d + 1) + \cdots + (D - d + 1) = (\frac{H^2}{D} + H) \cdot 2^{-1}  - (d - 1) \frac{H}{D}$.
Thus, we may write:
\[
\nabla \cost ( \0 ; \ebf_ 1)  = \Bbf^\top \sum\nolimits_{d = 1}^{D} \brk*{ \brk*{ \frac{H^2}{D} + H } - \frac{2 (d - 1) H}{D} } \cdot \ebf_{d \% D + 1} \ebf_{d}^\top
\text{\,,}
\]
which, combined with $\Kbf^{(1)} = \0$ and $\eta = (H^2 / D + H)^{-1}$, leads to the sought-after expression for $\Kbf^{(2)}$:
\[
\Kbf^{(2)} = \Kbf^{(1)} - \eta \cdot \nabla \cost ( \Kbf^{(1)} ; \ebf_1 ) = - \Bbf^\top \sum\nolimits_{d = 1}^{D} \brk*{1 - \frac{ 2 (d - 1) }{H + D} } \cdot \ebf_{d \% D + 1} \ebf_{d}^\top
\text{\,.}
\]
To see that $\Kbf^{(2)}$ minimizes the training cost, notice that:
\[
(\Ashift + \Bbf \Kbf^{(2)}) \ebf_1 = \Ashift \ebf_1 - \Bbf \Bbf^\top \sum\nolimits_{d = 1}^{D} \brk*{1 - \frac{ 2(d - 1) }{H + D} } \cdot \ebf_{d \% D + 1} \ebf_{d}^\top \ebf_1 = \ebf_2 - \ebf_2 = \0
\text{\,,}
\]
where the second equality is due to $\Bbf \Bbf^\top = \Ibf$ and $\ebf_d^\top \ebf_1 = 0$ for $d \in \{2, \ldots, D\}$.
Consequently, $\cost (\Kbf^{(2)} ; \trainstates) = \sum\nolimits_{h = 0}^H \norm{ (\Ashift + \Bbf \Kbf^{(2)} )^h \ebf_1 }^2 = \norm{ \ebf_1 }^2 = 1$, which is the minimal training cost $\cost^* (\trainstates)$ since for any $\Kbf \in \R^{D \times D}$ the cost is a sum of $H + 1$ non-negative terms, with the one corresponding to $h = 0$ being equal to $\norm{\ebf_1}^2 = 1$.
\end{proof}

\medskip

\textbf{Extrapolation in terms of the optimality measure.} Next, we characterize the extent to which $\Kpg = \Kbf^{(2)}$ extrapolates, as measured by the optimality measure.
As shown by \cref{lem:extrapolation_measures_invariant} in \cref{app:extrapolation_measures_invariant}, the optimality measure is invariant to the choice of orthonormal basis $\orthstates$ for $\trainstates^\perp$.
Thus, because $\trainstates = \{\ebf_1\}$ we may assume without loss of generality that $\orthstates = \{ \ebf_2, \ldots, \ebf_D \}$.

For any $\ebf_d \in \orthstates$, by the definition of $\Kminnorm$ (\cref{eq:unseen_controls_zero}) we have that $(\Ashift + \Bbf \Kminnorm) \ebf_d = \Ashift \ebf_d = \ebf_{d \% D + 1}$.
Hence:
\be
\optmes  (\Kminnorm) = \frac{1}{D - 1} \sum\nolimits_{d = 2}^D \norm*{ (\Ashift + \Bbf \Kminnorm) \ebf_d }^2 = \frac{1}{D - 1} \sum\nolimits_{d = 2}^D \norm*{  \ebf_{d \% D + 1} }^2 = 1
\text{\,.}
\label{eq:shift_proof_Kminnorm_sf}
\ee
On the other hand, by \cref{lem:shift_Kpg_K2} for any $\ebf_d \in \orthstates$:
\[
\begin{split}
(\Ashift + \Bbf \Kpg) \ebf_d & = \ebf_{d \% D + 1} - \Bbf \Bbf^\top \sum\nolimits_{d' = 1}^{D} \brk*{1 - \frac{ 2 (d' - 1) }{H + D} } \cdot \ebf_{d' \% D + 1} \ebf_{d'}^\top \ebf_d \\
& =  \ebf_{d \% D + 1} - \brk*{1 - \frac{ 2 (d - 1) }{H + D} } \cdot \ebf_{d \% D + 1} \\
& = \frac{2 (d - 1) }{H + D} \cdot \ebf_{d \% D + 1}
\text{\,,}
\end{split}
\]
and so:
\be
 \optmes \brk*{ \Kpg } = \frac{1}{D - 1} \sum\nolimits_{d = 2}^D \norm*{ (\Ashift + \Bbf \Kpg) \ebf_d }^2 =  \frac{4  \sum\nolimits_{d = 2}^D (d - 1)^2 }{ (D - 1) (H + D)^2 }
\label{eq:shift_proof_K2_transition}
\ee
The desired guarantee on extrapolation in terms of the optimality measure follows from~\cref{eq:shift_proof_Kminnorm_sf,eq:shift_proof_K2_transition}:
\[
\frac{ \optmes \brk1{ \Kpg } }{  \optmes \brk*{ \Kminnorm } } = \frac{ 4 \sum\nolimits_{d = 2}^D (d - 1)^2 }{ (D - 1) (H + D)^2 } \leq \frac{ 4(D - 1)^2 }{ (H + D)^2 }
\text{\,.}
\]

\medskip

\textbf{Extrapolation in terms of the cost measure.}
Lastly, we characterize the extent to which $\Kpg = \Kbf^{(2)}$ extrapolates, as quantified by the cost measure.
As done above for proving extrapolation in terms of the optimality measure, by \cref{lem:extrapolation_measures_invariant} in \cref{app:extrapolation_measures_invariant} we may assume without loss of generality that $\orthstates = \{ \ebf_2, \ldots, \ebf_D\}$.

Fix some $\ebf_d \in \orthstates$.
We use the fact that $\Kpg = \Kbf^{(2)} = - \Bbf^\top \sum\nolimits_{d = 1}^D  (1 - \frac{d - 1}{H + D}) \cdot \ebf_{d \% D + 1} \ebf_d^\top$ (\cref{lem:shift_Kpg_K2}) to straightforwardly compute $\excesscost (\Kpg )$.
Specifically, recalling that $\Bbf \Bbf^\top = \Ibf$, we have that $\Ashift + \Bbf \Kpg = \sum\nolimits_{d' = 2}^D \frac{ 2(d' - 1) }{ H + D} \cdot \ebf_{d' \% D + 1} \ebf_{d'}^\top$.
Now, for any $h \in [H]$:
\[
\begin{split}
	(\Ashift + \Bbf \Kpg)^h \ebf_d & = (\Ashift + \Bbf \Kpg)^{h - 1}  \sum\nolimits_{d' = 2}^D \frac{ 2(d' - 1) }{ H + D} \cdot \ebf_{d' \% D + 1} \ebf_{d'}^\top \ebf_d \\
	& =  \frac{ 2 (d - 1) }{ H + D } \cdot (\Ashift + \Bbf \Kpg )^{h - 1} \ebf_{d \% D + 1}
	\text{\,.}
\end{split}
\]
If $h \leq D - d + 1$, unraveling the recursion from $h - 1$ to $0$ leads to:
\[
(\Ashift + \Bbf \Kpg )^h \ebf_d  = \brk*{ \prod\nolimits_{d' = d}^{h + d - 1} \frac{ 2 (d' - 1) }{ H + D } } \cdot \ebf_{(h + d - 1 )\% D + 1}
\text{\,.}
\]
On the other hand, if $h > D - d + 1$, then $(\Ashift + \Bbf \Kpg )^h \ebf_d = \0$ since:
\[
\begin{split}
	(\Ashift + \Bbf \Kpg )^h \ebf_d & = (\Ashift + \Bbf \Kpg )^{ h - (D - d + 1)} (\Ashift + \Bbf\Kpg )^{ D - d + 1} \ebf_d \\
	& = \brk*{ \prod\nolimits_{d' = d}^{D} \frac{ 2 (d' - 1) }{ H + D } } \cdot ( \Ashift + \Bbf \Kpg )^{ h - (D - d + 1)}  \ebf_1
	\text{\,,}
\end{split}
\]
and $( \Ashift + \Bbf \Kpg ) \ebf_{1} =  \sum\nolimits_{d' = 2}^D \frac{ 2(d' - 1) }{ H + D} \cdot \ebf_{d' \% D + 1} \ebf_{d'}^\top \ebf_1 = \0$.
Altogether, we get:
\[
\cost ( \Kpg ; \{ \ebf_d \} ) = \sum\nolimits_{h = 0}^H \norm*{ ( \Ashift + \Bbf \Kpg )^h \ebf_d   }^2 = \sum\nolimits_{h = 0}^{D - d + 1} \prod\nolimits_{d' = d}^{h + d - 1} \frac{ 4 (d' - 1)^2 }{ (H + D)^2 }
\text{\,,}
\]
and so:
\be
\cost ( \Kpg ; \orthstates ) = \frac{1}{D - 1} \sum\nolimits_{d = 2}^D \sum\nolimits_{h = 0}^{D - d + 1} \prod\nolimits_{d' = d}^{h + d - 1} \frac{ 4 (d' - 1)^2 }{ (H + D)^2 }
\text{\,.}
\label{eq:shift_cost_proof_pg_cost}
\ee

As for the cost attained by $\Kminnorm$, let $\ebf_d \in \orthstates$.
By the definition of $\Kminnorm$ (\cref{eq:unseen_controls_zero}), for $\ebf_{d'}\in \orthstates$ we have that $(\Ashift + \Bbf \Kminnorm) \ebf_{d'} = \Ashift \ebf_{d'} = \ebf_{d' \% D + 1}$ while $(\Ashift + \Bbf \Kminnorm ) \ebf_1 = \0$.
Thus, $( \Ashift + \Bbf \Kminnorm )^h \ebf_d = \ebf_{ (h + d - 1) \% D + 1 } $ for $h \leq D - d + 1$ and $(\Ashift + \Bbf \Kminnorm)^h \ebf_d = \0$ for $h > D - d + 1$.
This implies that:
\[
\cost ( \Kminnorm ; \{ \ebf_d \} ) = \sum\nolimits_{h = 0}^H \norm*{ ( \Ashift + \Bbf \Kminnorm )^h \ebf_d }^2 = D - d + 2
\text{\,,}
\]
and so:
\be
\cost ( \Kminnorm ; \orthstates ) = \frac{1}{D - 1} \sum\nolimits_{d = 2}^D \brk{ D - d + 2 }
\text{\,.}
\label{eq:shift_cost_proof_Kminnorm_cost}
\ee

Finally, noticing that $\cost^* (\orthstates) = 1$ (\eg, this minimal cost is attained by $\Koptall$, defined in \cref{eq:unseen_controls_opt}), by \cref{eq:shift_cost_proof_pg_cost,eq:shift_cost_proof_Kminnorm_cost} we get:
\[
\begin{split}
\frac{ \excesscost (\Kpg) }{ \excesscost ( \Kminnorm ) } & = \frac{ \cost (\Kpg ; \orthstates) - \cost^* (\orthstates) }{ \cost (\Kminnorm ; \orthstates) - \cost^* (\orthstates) } \\
& = \frac{ \frac{1}{D - 1} \sum\nolimits_{d = 2}^D \sum\nolimits_{h = 0}^{D - d + 1} \prod\nolimits_{d' = d}^{h + d - 1} \frac{ 4 (d' - 1)^2 }{ (H + D)^2 } - 1 }{  \frac{1}{D - 1} \sum\nolimits_{d = 2}^D \brk{ D - d + 2 } - 1 } \\
& = \frac{ \sum\nolimits_{d = 2}^D \sum\nolimits_{h = 1}^{D - d + 1} \prod\nolimits_{d' = d}^{h + d - 1} \frac{ 4 (d' - 1)^2 }{ (H + D)^2 } }{ \sum\nolimits_{d = 2}^D \brk{ D - d + 1 } }
\text{\,.}
\end{split}
\]
Since we can upper bound the nominator as follows:
\[
\sum\nolimits_{d = 2}^D \sum\nolimits_{h = 1}^{D - d + 1} \prod\nolimits_{d' = d}^{h + d - 1} \frac{ 4 (d' - 1)^2 }{ (H + D)^2 } \leq \frac{ 4(D - 1)^2 }{ (H + D)^2 } \sum\nolimits_{d = 2}^D \sum\nolimits_{h = 1}^{D - d + 1} 1 =  \frac{ 4(D - 1)^2 }{ (H + D)^2 } \sum\nolimits_{d = 2}^D (D - d + 1)
\text{\,,}
\]
we may conclude:
\[
\frac{ \excesscost (\Kpg) }{ \excesscost ( \Kminnorm ) } =  \frac{ \sum\nolimits_{d = 2}^D \sum\nolimits_{1 = 0}^{D - d + 1} \prod\nolimits_{d' = d}^{h + d - 1} \frac{ 4 (d' - 1)^2 }{ (H + D)^2 } }{ \sum\nolimits_{d = 2}^D \brk{ D - d + 1 } } \leq \frac{ 4(D - 1)^2 }{ (H + D)^2 }
\text{\,.}
\]
\qed

\subsection{Proof of~\cref{lem:min_norm}}
\label{app:proofs:min_norm}

Consider minimizing the squared Euclidean norm over the set of controllers with minimal training cost, \ie~over $\KK_\trainstates := \{ \Kbf \in \R^{D \times D} : \cost ( \Kbf ; \trainstates) = \cost^* ( \trainstates ) \}$:
\be
\min\nolimits_{\Kbf \in \KK_\trainstates} \norm{ \Kbf }^2
\text{\,.}
\label{eq:min_norm_proof_min_norm_objective_orig}
\ee
In an underdetermined LQR problem (\cref{sec:prelim:underdetermined}), the minimal training cost $\cost^* (\trainstates)$ is attained by a controller $\Kbf$ if and only if $\Kbf \xbf_0 = - \Bbf^{-1} \Abf \xbf_0$ for all initial states $\xbf_0 \in \trainstates$.
Let $\ubf_1, \ldots, \ubf_R \in \R^D$ be a basis of $\lspan (\trainstates)$, where $R \in [ \abs{\trainstates} ]$.
Requiring that $\Kbf \xbf_0 = - \Bbf^{-1} \Abf \xbf_0$ for all $\xbf_0 \in \trainstates$ is equivalent to requiring the equality holds for the basis $\ubf_1, \ldots, \ubf_R$. 
Thus, the objective in~\cref{eq:min_norm_proof_min_norm_objective_orig} is equivalent to:
\be
\min\nolimits_{\Kbf \in \R^{D \times D} } \, \norm{ \Kbf }^2 \text{ s.t. } \Kbf \ubf_r = - \Bbf^{-1} \Abf \ubf_r ~~,~ \forall r \in [R]
\text{\,,}
\label{eq:min_norm_proof_min_norm_objective_basis}
\ee
which entails minimizing a strongly convex function over a finite set of linear constraints.
Since the feasible set is non-empty, \eg, it contains $\Kminnorm$ (see its definition in~\cref{eq:unseen_controls_zero}), there exists a unique (optimal) solution, \ie~a unique controller that has minimal squared Euclidean norm among those minimizing the training cost.
We now prove that this unique solution is $\Kminnorm$.

Denote the $d$'th row of a matrix $\Cbf \in \R^{D \times D}$ by $\Cbf[d,:] \in \R^{D}$, for $d \in [D]$.
We can write the linear constraints in~\cref{eq:min_norm_proof_min_norm_objective_basis} as $R \cdot D$ constraints on the rows of $\Kbf$:
\[
\inprod{ \Kbf [d,:] }{ \ubf_r } = - \inprodbig{ \Bbf^{-1} [d, :] }{ \Abf \ubf_r} ~~,~ \forall d \in [D] , r \in [R]
\text{\,.}
\]
Since $\Kminnorm$ satisfies these constraints, by the method of Lagrange multipliers, to prove that $\Kminnorm$ is the unique solution of~\cref{eq:min_norm_proof_min_norm_objective_basis} we need only show that there exist $\{ \lambda_{d, r} \in \R \}_{d \in [D], r \in [R]}$ for which:
\[
\Kminnorm [d, :] = \sum\nolimits_{r = 1}^R \lambda_{d, r} \cdot \ubf_r ~~,~ \forall d \in [D]
\text{\,.}
\]
That is, it suffices to show that the rows of $\Kminnorm$ are in $\lspan (\ubf_1, \ldots, \ubf_R) = \lspan (\trainstates)$.
To see that this is indeed the case, recall that by the definition of $\Kminnorm$ (\cref{eq:unseen_controls_zero}) it satisfies $\Kminnorm \vbf_0 = \0$ for all $\vbf_0 \in \trainstates^\perp$.
This implies that the rows of $\Kminnorm$ necessarily reside in $\lspan (\trainstates)$, concluding the proof.
\qed

\subsection{Proof of~\cref{cor:no_euc_norm_min}}
\label{app:proofs:no_euc_norm_min}

By \cref{lem:min_norm}, $\Kminnorm = \argmin_{\Kbf \in \KK_\trainstates} \norm{ \Kbf }^2$.
We claim that in the considered setting $\Kminnorm = - \Bbf^{-1} \ebf_2 \ebf_1^\top$. 
Indeed, $(\Ashift + \Bbf (- \Bbf^{-1} \ebf_2 \ebf_1^\top)) \ebf_1 = \ebf_2 - \ebf_2 = \0$, meaning $- \Bbf^{-1} \ebf_2 \ebf_1^\top$ satisfies the optimality condition in \cref{eq:optimal_cond}.
Furthermore, for any $\vbf_0 \in \orthstates$ it holds that $- \Bbf^{-1} \ebf_2 \ebf_1^\top \vbf_0 = \0$ since $\vbf_0$ is orthogonal to $\ebf_1$, meaning $- \Bbf^{-1} \ebf_2 \ebf_1^\top$ satisfies \cref{eq:unseen_controls_zero}.
Thus, 
$\Kminnorm = - \Bbf^{-1} \ebf_2 \ebf_1^\top$ and \smash{$\norm{ \Kminnorm }^2 = \min_{\Kbf \in \KK_\trainstates} \norm{ \Kbf }^2 = 1$} (recall $\Bbf$ is orthogonal).
On the other hand, as established by \cref{lem:shift_Kpg_K2} in the proof of \cref{prop:shift}, $\Kpg = - \Bbf^\top \sum\nolimits_{d = 1}^D  (1 - \frac{2 (d - 1) }{H + D}) \cdot \ebf_{d \% D + 1} \ebf_d^\top$.
Consequently:
\[
\norm{ \Kpg }^2 = 1 + \sum\nolimits_{d = 2}^D \brk2{ 1 - \frac{ 2(d - 1) }{ H + D } }^2 = \argmin\nolimits_{\Kbf \in \KK_\trainstates} \norm{ \Kbf }^2 + \sum\nolimits_{d = 2}^D \brk2{ 1 - \frac{ 2(d - 1) }{ H + D } }^2
\text{\,.}
\]
Since $H \geq D \geq 2$ it holds that:
\[
\sum\nolimits_{d = 2}^D \brk2{ 1 - \frac{ 2(d - 1) }{ H + D } }^2  \geq  \sum\nolimits_{d = 2}^D \brk2{ 1 - \frac{ (d - 1) }{ D } }^2 \geq \sum\nolimits_{d = 2}^{ \lceil D / 2 \rceil } \frac{1}{4} = \frac{  \lceil D / 2 \rceil - 1}{ 4 }
\text{\,,}
\]
and so:
\[
\norm{ \Kpg }^2 - \argmin\nolimits_{\Kbf \in \KK_\trainstates} \norm{ \Kbf }^2 = \sum\nolimits_{d = 2}^D \brk2{ 1 - \frac{ 2(d - 1) }{ H + D } }^2 = \Omega (D)
\text{\,.}
\]
\qed

\subsection{Proof of~\cref{lem:extrapolation_measures_invariant_general_q}}
\label{app:proofs:extrapolation_measures_invariant_general_q}

Let $\orthstates$ be an orthonormal basis of $\trainstates^\perp$, and $\B$ be an orthonormal basis of $\lspan (\trainstates)$.

Now, for $\Kbf \in \R^{D \times D}$, the $\Qbf$-optimality measure of extrapolation can be written as follows:
\[
\begin{split}
\optmes^\Qbf (\Kbf) & = \frac{1}{ \abs{\orthstates} } \sum\nolimits_{\vbf_0 \in \orthstates} \norm*{ (\Abf + \Bbf \Kbf) \vbf_0 }_\Qbf^2 \\
& = \frac{1}{ \abs{\orthstates} } \sum\nolimits_{\vbf_0 \in \orthstates} \vbf_0^\top (\Abf + \Bbf \Kbf)^\top \Qbf (\Abf + \Bbf \Kbf) \vbf_0^\top \\
& = \frac{1}{ \abs{\orthstates} } \inprod{(\Abf + \Bbf \Kbf)^\top \Qbf (\Abf + \Bbf \Kbf) }{ \sum\nolimits_{\vbf_0 \in \orthstates} \vbf_0^\top  \vbf_0^\top }
\text{\,.}
\end{split}
\]
Adding and subtracting
\[
 \frac{1}{ \abs{\orthstates} }\inprod{(\Abf + \Bbf \Kbf)^\top \Qbf (\Abf + \Bbf \Kbf) }{ \sum\nolimits_{\vbf_0 \in \B} \vbf_0^\top  \vbf_0^\top }
\]
to the right hand side of the equation above, we have that:
\[
\optmes^\Qbf (\Kbf) =  \frac{1}{ \abs{\orthstates} } \inprod{(\Abf + \Bbf \Kbf)^\top \Qbf (\Abf + \Bbf \Kbf) }{ \sum\nolimits_{\vbf \in \orthstates \cup \B} \vbf_0 \vbf_0^\top } -  \frac{1}{ \abs{\orthstates} } \inprod{(\Abf + \Bbf \Kbf)^\top \Qbf (\Abf + \Bbf \Kbf) }{ \sum\nolimits_{\vbf \in \B} \vbf_0 \vbf_0^\top }
\text{\,.}
\]
Notice that $\sum\nolimits_{\vbf_0 \in \orthstates \cup \B} \vbf_0 \vbf_0^\top = \Ibf$, where $\Ibf$ stands for the identity matrix, since $\orthstates \cup \B$ is an orthonormal basis of $\R^D$.
Thus:
\[
\optmes^\Qbf (\Kbf) =  \frac{1}{ \abs{\orthstates} } \inprod{(\Abf + \Bbf \Kbf)^\top \Qbf (\Abf + \Bbf \Kbf) }{ \Ibf } -  \frac{1}{ \abs{\orthstates} } \inprod{(\Abf + \Bbf \Kbf)^\top \Qbf (\Abf + \Bbf \Kbf) }{ \sum\nolimits_{\vbf \in \B} \vbf_0 \vbf_0^\top }
\text{\,.}
\]
As can be seen in the expression above, the $\Qbf$-optimality measure of extrapolation does not depend on the choice of $\orthstates$.

Similarly, for $\Kbf \in \R^{D \times D}$, the $\Qbf$-cost measure of extrapolation can be written as follows: 
\[
\begin{split}
	\excesscost^\Qbf (\Kbf) & = \cost (\Kbf; \orthstates) - \cost^* ( \orthstates) \\
	&  = \frac{1}{\abs{\orthstates}} \sum\nolimits_{\vbf_0 \in \orthstates} \brk*{ \sum\nolimits_{h = 0}^H \norm*{ (\Abf + \Bbf \Kbf)^h \vbf_0}_\Qbf^2 - \norm{ \vbf_0 }_\Qbf^2 } \\
	& = \frac{1}{\abs{\orthstates}} \sum\nolimits_{\vbf_0 \in \orthstates} \sum\nolimits_{h = 1}^H \norm*{ (\Abf + \Bbf \Kbf)^h \vbf_0}_\Qbf^2 \\
	& = \frac{1}{\abs{\orthstates}} \sum\nolimits_{h = 1}^H \inprod{ \brk[s]*{ (\Abf + \Bbf \Kbf)^h }^\top \Qbf (\Abf + \Bbf \Kbf)^h }{ \sum\nolimits_{\vbf_0 \in \orthstates} \vbf_0 \vbf_0^\top }
	\text{\,,}
\end{split}
\]
where we used the fact that $\cost^* (\X) = \frac{1}{ \abs{\X} } \sum\nolimits_{\xbf_0 \in \X} \norm{ \xbf_0}_\Qbf^2$ for any finite set of initial states $\X \subset \R^D$.
Adding and subtracting for each summand $h \in [H]$ on the right hand side the term
\[
\inprod{ \brk[s]*{ (\Abf + \Bbf \Kbf)^h }^\top \Qbf (\Abf + \Bbf \Kbf)^h }{ \sum\nolimits_{\vbf_0 \in \B} \vbf_0 \vbf_0^\top }
\text{\,,}
\]
we have that:
\[
\begin{split}
	\excesscost^\Qbf (\Kbf) & = \frac{1}{\abs{\orthstates}} \sum\nolimits_{h = 1}^H \Big ( \inprod{ \brk[s]*{ (\Abf + \Bbf \Kbf)^h }^\top \Qbf (\Abf + \Bbf \Kbf)^h }{ \sum\nolimits_{\vbf_0 \in \orthstates \cup \B} \vbf_0 \vbf_0^\top } \\
	& \hspace{24mm} -  \inprod{ \brk[s]*{ (\Abf + \Bbf \Kbf)^h }^\top \Qbf (\Abf + \Bbf \Kbf)^h }{ \sum\nolimits_{\vbf_0 \in \B} \vbf_0 \vbf_0^\top } \Big ) \\
	&  = \frac{1}{\abs{\orthstates}} \sum\nolimits_{h = 1}^H \Big ( \inprod{ \brk[s]*{ (\Abf + \Bbf \Kbf)^h }^\top \Qbf (\Abf + \Bbf \Kbf)^h }{\Ibf } \\
	& \hspace{24mm} -  \inprod{ \brk[s]*{ (\Abf + \Bbf \Kbf)^h }^\top \Qbf (\Abf + \Bbf \Kbf)^h }{ \sum\nolimits_{\vbf_0 \in \B} \vbf_0 \vbf_0^\top } \Big )
	\text{\,,}
\end{split}
\]
where we again used the fact that $\sum\nolimits_{\vbf_0 \in \orthstates \cup \B} \vbf_0 \vbf_0^\top = \Ibf$ since $\orthstates \cup \B$ is an orthonormal basis of $\R^D$.
As can be seen from the expression above, the $\Qbf$-cost measure of extrapolation does not depend on the choice of $\orthstates$.
\qed

\subsection{Proof of~\cref{prop:shift_diag_q}}
\label{app:proofs:shift_diag_q}

The proof follows a line identical to that of \cref{prop:shift} (\cref{app:proofs:shift}), generalizing it to account for a diagonal $\Qbf$ with entries $q_1, \ldots, q_D \geq 0$ (as opposed to $\Qbf = \Ibf$), where $q_j > 0$ for at least some $j \in [D]$.

We first prove that $\Kpg = \Kbf^{(2)} = - \Bbf^\top \sum\nolimits_{d = 1}^D  (1 - \alpha_d) \cdot \ebf_{d \% D + 1} \ebf_d^\top$.
That is, policy gradient converges in a single iteration to the controller $- \Bbf^\top \sum\nolimits_{d = 1}^D  (1 - \alpha_d) \cdot \ebf_{d \% D + 1} \ebf_d^\top$, which minimizes the training cost.
For $\trainstates = \{ \ebf_1 \}$, by~\cref{lem:lqr_gradient} the gradient of the training cost at $\Kbf^{(1)} = \0$ is given by:
\[
\nabla \cost ( \0  ; \ebf_1) = 2 \Bbf^\top \sum\nolimits_{h = 0}^{H - 1} \brk2{ \sum\nolimits_{s = 1}^{H - h} \brk[s]{ \Ashift^{s - 1} }^\top \Qbf \Ashift^s } \Sigmabf_{\ebf_1, h}
\text{\,,}
\]
where $\Ashift = \sum\nolimits_{d = 1}^D \ebf_{d \% D + 1} \ebf_d^\top$ and $\Sigmabf_{\ebf_1, h} := \Ashift^h \ebf_1 \brk[s]{ \Ashift^h \ebf_1 }^\top = \ebf_{ h \% D + 1 } \ebf_{ h \% D + 1 }^\top$ for $h \in \{0\} \cup [H - 1]$.
Notice that $\Qbf \Ashift^s = \sum\nolimits_{d = 1}^D q_{(s + d - 1) \% D + 1} \cdot \ebf_{(s + d - 1) \% D + 1} \ebf_d^\top$ and $\brk[s]{ \Ashift^{s - 1} }^\top \Qbf \Ashift^s = \sum\nolimits_{d = 1}^D q_{(s + d - 1) \% D + 1} \cdot \ebf_{d \% D + 1} \ebf_d^\top$, for all $s \in [H]$.
Hence:
\[
\begin{split}
\nabla \cost ( \0 ; \ebf_ 1) & = 2 \Bbf^\top \sum\nolimits_{h = 0}^{H - 1} \sum\nolimits_{s = 1}^{H - h} \brk2{ \sum\nolimits_{d = 1}^D q_{(s + d - 1) \% D + 1} \cdot \ebf_{d \% D + 1} \ebf_d^\top } \ebf_{h \% D + 1} \ebf_{h \% D + 1}^\top \\
& = 2 \Bbf^\top \sum\nolimits_{h = 0}^{H - 1} \brk2{ \sum\nolimits_{s = 1}^{H - h} q_{ (h + s) \% D + 1 } } \cdot \ebf_{(h + 1) \% D + 1} \ebf_{h \% D + 1}^\top \\
& = 2 \Bbf^\top \sum\nolimits_{h = 0}^{H - 1} \brk2{ \sum\nolimits_{s = h + 1}^{H} q_{ s \% D + 1 } } \cdot \ebf_{(h + 1) \% D + 1} \ebf_{h \% D + 1}^\top 
\text{\,.}
\end{split}
\]
Recalling that $H = D \cdot L$ for some $L \in \N$, there are exactly $L = \frac{H}{D}$ terms in the sum corresponding to $\ebf_{d \% D + 1} \ebf_d^\top$, for each $d \in [D]$.
Focusing on elements $h \in \{0, D, 2D, \ldots, H - D\}$ in the sum, which satisfy $h \% D + 1 = 1$, the sum of coefficients for $\ebf_2 \ebf_1^\top$ is given by $\frac{H}{D} \sum\nolimits_{j = 1}^D q_j + (\frac{H}{D} - 1) \sum\nolimits_{j = 1}^D q_{j} + \cdots + \sum\nolimits_{j = 1}^D q_{j} = \frac{H}{2D} \brk{ \frac{H}{D} + 1 } \sum\nolimits_{j = 1}^D q_{j}$.
More generally, for $d \in [D]$, the relevant coefficients are those corresponding to $h \in \{ d - 1, D + d - 1, 2D + d - 1, \ldots, H - D + d - 1 \}$.
Since for every $l \in [\frac{H}{D}]$ it holds that $\sum\nolimits_{s = (l \cdot D + d - 1) + 1}^{H} q_{ s \% D + 1 } = \sum\nolimits_{s = l \cdot D + 1}^{H} q_{ s \% D + 1 } - \sum\nolimits_{j = 2}^d q_j$, the sum of coefficients for $\ebf_{d \% D + 1} \ebf_d^\top$ is obtained by subtracting $\frac{H}{D} \sum\nolimits_{j = 2}^d q_j$ from the sum of coefficients for $\ebf_2 \ebf_1^\top$, \ie~it is equal to $\frac{H}{2D} \brk{ \frac{H}{D} + 1 } \sum\nolimits_{j = 1}^D q_{j} - \frac{H}{D} \sum\nolimits_{j = 2}^d q_j$.
We may therefore write:
\[
\nabla \cost ( \0 ; \ebf_ 1)  = \Bbf^\top \sum\nolimits_{d = 1}^{D} \brk*{ \frac{H}{D} \brk*{ \frac{H}{D} + 1 } \sum\nolimits_{j = 1}^D q_{j} - 2 \frac{H}{D} \sum\nolimits_{j = 2}^d q_j } \cdot \ebf_{d \% D + 1} \ebf_{d}^\top
\text{\,,}
\]
which, combined with $\Kbf^{(1)} = \0$ and $\eta = \brk1{ \frac{H}{D} (\frac{H}{D} + 1) \sum\nolimits_{j = 1}^D q_{j} }^{-1}$, leads to the sought-after expression for $\Kbf^{(2)}$:
\[
\begin{split}
\Kbf^{(2)} & = \Kbf^{(1)} - \eta \cdot \nabla \cost ( \Kbf^{(1)} ; \ebf_1 ) \\
& = - \Bbf^\top \sum\nolimits_{d = 1}^{D} \brk3{ 1 - \frac{ 2 \sum\nolimits_{j = 2}^{d} q_j }{ \brk1{ \frac{H}{D} + 1} \sum\nolimits_{j = 1}^D q_j } } \cdot \ebf_{d \% D + 1} \ebf_{d}^\top \\
& = - \Bbf^\top \sum\nolimits_{d = 1}^{D} \brk*{ 1 - \alpha_d } \cdot \ebf_{d \% D + 1} \ebf_{d}^\top
\text{\,,}
\end{split}
\]
where $\alpha_{d} := \frac{ 2 \sum\nolimits_{j = 2}^{d} q_j }{ (\frac{H}{D} + 1) \sum\nolimits_{j = 1}^D q_j } \in [0, 1]$ for $d \in [D]$.
To see that $\Kbf^{(2)}$ minimizes the training cost, notice that:
\[
(\Ashift + \Bbf \Kbf^{(2)}) \ebf_1 = \Ashift \ebf_1 - \Bbf \Bbf^\top \sum\nolimits_{d = 1}^{D} \brk*{1 - \alpha_d } \cdot \ebf_{d \% D + 1} \ebf_{d}^\top \ebf_1 = \ebf_2 - \ebf_2 = \0
\text{\,,}
\]
where the second equality is by $\Bbf \Bbf^\top = \Ibf$, $\ebf_d^\top \ebf_1 = 0$ for $d \in \{2, \ldots, D\}$, and $\alpha_1 = 0$.
Consequently, $\cost (\Kbf^{(2)} ; \ebf_1) = \sum\nolimits_{h = 0}^H \norm{ (\Ashift + \Bbf \Kbf^{(2)} )^h \ebf_1 }_\Qbf^2 = \norm{ \ebf_1 }^2_\Qbf$, which is the minimal training cost $\cost^* (\ebf_1)$ since for any $\Kbf \in \R^{D \times D}$ the cost is a sum of $H + 1$ non-negative terms, with the one corresponding to $h = 0$ being equal to $\norm{\ebf_1}^2_\Qbf$.

\medskip

\textbf{Extrapolation in terms of the $\Qbf$-optimality measure.}
Next, we characterize the extent to which $\Kpg = \Kbf^{(2)}$ extrapolates, as measured by the $\Qbf$-optimality measure.
As shown by \cref{lem:extrapolation_measures_invariant_general_q} in \cref{app:extension_q}, the $\Qbf$-optimality measure is invariant to the choice of orthonormal basis $\orthstates$ for $\trainstates^\perp$.
Thus, because $\trainstates = \{ \ebf_1 \}$ we may assume without loss of generality that $\orthstates = \{ \ebf_2, \ldots, \ebf_D\}$.

For any $\ebf_d \in \orthstates$, by the definition of $\Kminnorm$ (\cref{eq:unseen_controls_zero}) we have that $(\Ashift + \Bbf \Kminnorm) \ebf_d = \Ashift \ebf_d = \ebf_{d \% D + 1}$.
Thus:
\be
\optmes^\Qbf \brk1{ \Kminnorm } = \frac{ 1 }{ D - 1 } \sum\nolimits_{d = 2}^D \norm*{ (\Ashift + \Bbf \Kminnorm) \ebf_d }_\Qbf^2 =  \frac{1}{D - 1} \sum\nolimits_{d = 2}^D \norm*{ \ebf_{d \% D + 1} }_\Qbf^2 = \frac{1}{D - 1} \sum\nolimits_{d = 2}^D q_{d \% D + 1}
\text{\,.}
\label{eq:mag_Q_shift_Kminnorm_deriv}
\ee
On the other hand:
\[
\begin{split}
	(\Ashift + \Bbf \Kpg) \ebf_d & = \ebf_{d \% D + 1} - \Bbf \Bbf^\top \sum\nolimits_{d' = 1}^{D} \brk*{1 - \alpha_{d'} } \cdot \ebf_{d' \% D + 1} \ebf_{d'}^\top \ebf_d \\
	& =  \ebf_{d \% D + 1} - \brk*{1 - \alpha_d } \cdot \ebf_{d \% D + 1} \\
	& = \alpha_d \cdot \ebf_{d \% D + 1}
	\text{\,,}
\end{split}
\]
and so:
\be
\begin{split}
	 \optmes^\Qbf \brk*{ \Kpg } & = \frac{ 1 }{ D - 1 } \sum\nolimits_{d = 2}^D \norm*{ (\Ashift + \Bbf \Kpg) \ebf_d }_\Qbf^2 \\
	 & = \frac{1}{D - 1} \sum\nolimits_{d = 2}^D \alpha_d^2 \cdot  \norm*{ \ebf_{d \% D + 1} }_\Qbf^2 \\
	 & = \frac{1}{ D - 1} \sum\nolimits_{d = 2}^D \alpha_d^2 \cdot q_{d \% D + 1}
\text{\,.}
\end{split}
\label{eq:mag_Q_shift_Kpg_deriv}
\ee
The desired guarantee on extrapolation in terms of the $\Qbf$-optimality measure follows from \cref{eq:mag_Q_shift_Kminnorm_deriv,eq:mag_Q_shift_Kpg_deriv}.

\medskip

\textbf{Extrapolation in terms of the $\Qbf$-cost measure.}
Lastly, we characterize the extent to which $\Kpg = \Kbf^{(2)}$ extrapolates, as quantified by the $\Qbf$-cost measure.
As done above for proving extrapolation in terms of the $\Qbf$-optimality measure, by \cref{lem:extrapolation_measures_invariant_general_q} in \cref{app:extension_q} we may assume without loss of generality that $\orthstates = \{ \ebf_2, \ldots, \ebf_D \}$.

Fix some $\ebf_d \in \orthstates$.
We use the fact that $\Kpg = \Kbf^{(2)} = - \Bbf^\top \sum\nolimits_{d = 1}^D  (1 - \alpha_d) \cdot \ebf_{d \% D + 1} \ebf_d^\top$, established in the beginning of the proof, to straightforwardly compute $\excesscost^\Qbf (\Kpg)$.
Specifically, recalling that $\Bbf \Bbf^\top = \Ibf$, we have that $\Ashift + \Bbf \Kpg = \sum\nolimits_{d' = 1}^D \alpha_{d'} \cdot \ebf_{d' \% D + 1} \ebf_{d'}^\top = \sum\nolimits_{d' = 2}^D \alpha_{d'} \cdot \ebf_{d' \% D + 1} \ebf_{d'}^\top$, where the second equality is by noticing that $\alpha_1 = 0$.
Now, for any $h \in [H]$:
\[
\begin{split}
	(\Ashift + \Bbf \Kpg)^h \ebf_d & = (\Ashift + \Bbf \Kpg)^{h - 1}  \sum\nolimits_{d' = 2}^D \alpha_{d'} \cdot \ebf_{d' \% D + 1} \ebf_{d'}^\top \ebf_d \\
	& =  \alpha_d \cdot (\Ashift + \Bbf \Kpg )^{h - 1} \ebf_{d \% D + 1}
	\text{\,.}
\end{split}
\]
If $h \leq D - d + 1$, unraveling the recursion from $h - 1$ to $0$ leads to:
\[
(\Ashift + \Bbf\Kpg )^h \ebf_d  = \brk2{ \prod\nolimits_{d' = d}^{h + d - 1} \alpha_{d'} } \cdot \ebf_{(h + d - 1 ) \% D + 1}
\text{\,.}
\]
On the other hand, if $h > D - d + 1$, then $(\Ashift + \Bbf \Kpg )^h \ebf_d = \0$ since:
\[
\begin{split}
	(\Ashift + \Bbf \Kbf^{(2)})^h \ebf_d & = (\Ashift + \Bbf \Kpg )^{ h - (D - d + 1)} (\Ashift + \Bbf \Kpg )^{ D - d + 1} \ebf_d \\
	& = \brk2{ \prod\nolimits_{d' = d}^{D} \alpha_{d'} } \cdot ( \Ashift + \Bbf \Kpg)^{ h - (D - d + 1)}  \ebf_1
	\text{\,,}
\end{split}
\]
and $( \Ashift + \Bbf \Kpg ) \ebf_{1} = \sum\nolimits_{d' = 2}^D \alpha_{d'} \cdot \ebf_{d' \% D + 1} \ebf_{d'}^\top \ebf_1 = \0$.
Altogether, we get:
\[
\begin{split}
	\cost (\Kpg ; \{ \ebf_d \} ) & = \sum\nolimits_{h = 0}^H \norm*{ ( \Ashift + \Bbf \Kpg )^h \ebf_d   }_\Qbf^2 \\
	& = \sum\nolimits_{h = 0}^{D - d + 1} \norm*{ \brk2{ \prod\nolimits_{d' = d}^{h + d - 1} \alpha_{d'} } \cdot \ebf_{(h + d - 1 ) \% D + 1} }_\Qbf^2 \\
	& = \sum\nolimits_{h = 0}^{D - d + 1} q_{ (h + d - 1) \% D + 1} \cdot \prod\nolimits_{d' = d}^{h + d - 1} \alpha_{d'}^2
	\text{\,,}
\end{split}
\]
and so:
\be
\cost (\Kpg ; \orthstates ) = \frac{1}{ D - 1} \sum\nolimits_{d = 2}^D \sum\nolimits_{h = 0}^{D - d + 1} q_{ (h + d - 1) \% D + 1} \cdot \prod\nolimits_{d' = d}^{h + d - 1} \alpha_{d'}^2
\text{\,.}
\label{eq:shiftQ_cost_proof_pg_cost}
\ee

As for the cost attained by $\Kminnorm$, let $\ebf_d \in \orthstates$.
By the definition of $\Kminnorm$ (\cref{eq:unseen_controls_zero}), for $\ebf_{d'} \in \orthstates$ we have that $(\Ashift + \Bbf \Kminnorm) \ebf_{d'} = \Ashift \ebf_{d'} = \ebf_{d' \% D + 1}$ while $(\Ashift + \Bbf \Kminnorm ) \ebf_1 = \0$.
Thus, $( \Ashift + \Bbf \Kminnorm )^h \ebf_d = \ebf_{ (h + d - 1) \% D + 1 } $ for $h \leq D - d + 1$ and $(\Ashift + \Bbf \Kminnorm)^h \ebf_d = \0$ for $h > D - d + 1$.
This implies that:
\[
\cost ( \Kminnorm ; \{ \ebf_d \} ) = \sum\nolimits_{h = 0}^H \norm*{ ( \Ashift + \Bbf \Kminnorm )^h \ebf_d }_\Qbf^2 = \sum\nolimits_{h = 0}^{D - d + 1} \norm*{ \ebf_{ (h + d - 1) \% D+ 1 } }_\Qbf^2 = \sum\nolimits_{h = 0}^{D - d + 1} q_{ (h + d - 1) \% D + 1 }
\text{\,,}
\]
and so:
\be
\cost (\Kminnorm ; \orthstates) = \frac{1}{ D - 1 } \sum\nolimits_{d = 2}^D  \sum\nolimits_{h = 0}^{D - d + 1} q_{ (h + d - 1) \% D + 1 }
\label{eq:shiftQ_cost_proof_Kminnorm_cost}
\ee

Finally, noticing that $\cost^* (\orthstates) = \frac{1}{D - 1} \sum\nolimits_{ d = 2}^D \norm{\ebf_d}_\Qbf^2 = \frac{1}{D - 1} \sum\nolimits_{d = 2}^D q_d$ (\eg,this minimal cost is attained by $\Koptall$, defined in \cref{eq:unseen_controls_opt}), by \cref{eq:shiftQ_cost_proof_pg_cost,eq:shiftQ_cost_proof_Kminnorm_cost} we get:
\[
\begin{split}
\excesscost^\Qbf (\Kpg)  & = \cost (\Kpg ; \orthstates) - \cost^* (\orthstates) \\
& =  \frac{1}{ D - 1} \sum\nolimits_{d = 2}^D \sum\nolimits_{h = 0}^{D - d + 1} q_{ (h + d - 1) \% D + 1} \cdot \prod\nolimits_{d' = d}^{h + d - 1} \alpha_{d'}^2 - \frac{1}{D - 1} \sum\nolimits_{d = 2}^D q_d \\
& = \frac{1}{ D - 1} \sum\nolimits_{d = 2}^D \sum\nolimits_{h = 1}^{D - d + 1} q_{ (h + d - 1) \% D + 1} \cdot \prod\nolimits_{d' = d}^{h + d - 1} \alpha_{d'}^2 
\text{\,,}
\end{split}
\]
and:
\[
\begin{split}
	\excesscost^\Qbf (\Kminnorm)  & = \cost (\Kminnorm ; \orthstates) - \cost^* (\orthstates) \\
	& =  \frac{1}{ D - 1} \sum\nolimits_{d = 2}^D \sum\nolimits_{h = 0}^{D - d + 1} q_{ (h + d - 1) \% D + 1} - \frac{1}{D - 1} \sum\nolimits_{d = 2}^D q_d \\
	& = \frac{1}{ D - 1} \sum\nolimits_{d = 2}^D \sum\nolimits_{h = 1}^{D - d + 1} q_{ (h + d - 1) \% D + 1}
	\text{\,.}
\end{split}
\]
The desired result readily follows from the expressions above for $\excesscost^\Qbf (\Kpg)$ and $\excesscost^\Qbf (\Kminnorm)$.
\qed

\subsection{Proof of~\cref{thm:typical_system}}
\label{app:proofs:typical_system}

In the proof below, we treat the more general case where $\trainstates$ is an arbitrary set of orthonormal initial states seen in training, which includes the special case of $\trainstates = \{ \xbf_0 \}$ for a unit norm $\xbf_0 \in \R^D$.
Furthermore, it will be useful to consider the optimality measure of extrapolation for individual states in $\orthstates$, as defined below.

\begin{definition}
\label{def:opt_measure_single}
The \emph{optimality measure} of extrapolation for a controller $\Kbf \in \R^{D \times D}$ and initial state $\xbf_0 \in \orthstates$ is:
\[
\optmes ( \Kbf; \xbf_0) := \norm{ (\Abf + \Bbf \Kbf) \xbf_0 }^2
\text{\,.}
\]
\end{definition}

\subsubsection{Proof Outline}
\label{app:proofs:typical_system:outline}

We begin with several preliminary lemmas in \cref{prep_lemmas}.
Then, towards establishing that an iteration of policy gradient leads to extrapolation in terms of the optimality measure, we examine $\inprodbig{ \nabla \optmes ( \Kbf^{(1)} ) }{  \nabla \cost (\Kbf^{(1)} ; \trainstates ) }$.
This inner product can be represented as a sum of matrix traces, where each matrix is a product of powers of $\Abf$ and matrices that depend only on $\vbf_0$ and initial states in $\trainstates$.
In \cref{app:proofs:typical_system:in_prod_lower_bound}, we show that $\EE_\Abf \brk[s]1{ \inprodbig{ \nabla \optmes ( \Kbf^{(1)} ) }{ \nabla \cost (\Kbf^{(1)} ; \trainstates ) } } \geq 2H(H-1) / D$ via basic properties of Gaussian random variables.

The remainder of the proof converts the lower bound on $\EE_\Abf \brk[s]1{ \inprodbig{ \nabla \optmes ( \Kbf^{(1)} ) }{ \nabla \cost (\Kbf^{(1)} ; \trainstates ) } }$ into guarantees on the optimality measure attained by $\Kbf^{(2)} = \Kbf^{(1)} - \eta \cdot \nabla \cost (\Kbf^{(1)}; \trainstates)$.
To do so, we employ tools lying at the intersection of random matrix theory and topology.
Namely, at the heart of our analysis lies a method from~\citet{redelmeier2014real} for computing the expectation for traces of random matrix products, based on the topological concept of \emph{genus expansion}. 
\cref{genus_expansion} provides a self-contained introduction to this method, for the interested reader.

In \cref{expectation_bound}, we employ the method of~\citet{redelmeier2014real} for establishing extrapolation in terms of expected optimality measure.
Specifically, the method facilitates upper bounding $\EE_\Abf \brk[s]1{ \norm{ \nabla \cost ( \Kbf^{(1)} ; \trainstates) }^2 }$. 
Along with the fact that $\optmes ( \cdot )$ is $2$-smooth and the lower bound on $\EE_\Abf \brk[s]1{ \inprodbig{ \nabla \optmes ( \Kbf^{(1)} ) }{ \nabla \cost (\Kbf^{(1)} ; \trainstates ) } }$, this guarantees a reduction in 	optimality measure compared to $\Kbf^{(1)}$ through an argument analogous to the fundamental descent lemma.
Noticing that the optimality measure attained by $\Kbf^{(1)}$ and $\Kminnorm$ are equal, concludes this part of the proof.

In \cref{high_prob_bound}, to establish extrapolation occurs with high probability for systems with sufficiently large state space dimension, we decompose $\inprodbig{ \nabla \optmes ( \Kbf^{(1)} ) }{ \nabla \cost (\Kbf^{(1)} ; \trainstates ) }$ into a sum of random variables, whose variances we upper bound by again employing the method of~\citet{redelmeier2014real}.
Chebyshev's inequality then implies that with high probability $\inprodbig{ \nabla \optmes ( \Kbf^{(1)} ) }{  \nabla \cost (\Kbf^{(1)} ; \trainstates ) } \geq  H (H-1) / D$.
Lastly, following arguments analogous to those used for establishing reduction of optimality measure in expectation leads to the high probability guarantee.

\subsubsection{Preliminary Lemmas}
\label{prep_lemmas}

\begin{lemma}
	\label{lem:norm_concentration}
	
	Let $Z_{1}, \ldots, Z_{K}$ be $D$-dimensional independent Gaussian random variables, such that $Z_{k} \sim \NN \brk1{ \0 ,\frac{1}{D}  \Ibf }$ for $k \in [K]$.
	Then: 
	\[
	\Pr \brk*{ \frac{\sum_{k = 1}^K \norm{ Z_{k} }^{2} }{ K } \geq  2 } \leq \frac{2}{K D}
	\text{\,.}
	\]
\end{lemma}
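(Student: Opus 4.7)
The plan is to prove the bound via Chebyshev's inequality applied to $S_K := \frac{1}{K} \sum_{k=1}^K \norm{Z_k}^2$, after computing its mean and variance exactly.

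First, I would observe that each $Z_k$ has i.i.d.\ coordinates $Z_{k,j} \sim \NN(0, 1/D)$ for $j \in [D]$, so that $\norm{Z_k}^2 = \sum_{j=1}^D Z_{k,j}^2$. Using the second and fourth moments of a one-dimensional Gaussian, $\EE[Z_{k,j}^2] = 1/D$ and $\EE[Z_{k,j}^4] = 3/D^2$, which gives $\Var(Z_{k,j}^2) = 2/D^2$. By independence across coordinates, $\EE[\norm{Z_k}^2] = 1$ and $\Var(\norm{Z_k}^2) = 2/D$.

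Second, since the $Z_k$'s are independent across $k$, I would compute $\EE[S_K] = 1$ and $\Var(S_K) = \frac{1}{K^2} \sum_{k=1}^K \Var(\norm{Z_k}^2) = \frac{2}{KD}$.

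Finally, I would apply Chebyshev's inequality directly:
\[
\Pr \brk*{ S_K \geq 2 } = \Pr \brk*{ S_K - \EE[S_K] \geq 1 } \leq \Pr \brk*{ \abs*{ S_K - \EE[S_K] } \geq 1 } \leq \Var(S_K) = \frac{2}{KD}
\text{\,,}
\]
which is exactly the claimed inequality. The computation is entirely routine and presents no genuine obstacle; the only point requiring mild care is correctly bookkeeping the variance scaling with $1/D$ in the covariance of each $Z_k$, so that the final bound has the advertised $1/(KD)$ rate rather than something looser.
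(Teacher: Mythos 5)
Your proposal is correct and follows essentially the same route as the paper's proof: compute $\EE[\norm{Z_k}^2]=1$ and $\Var(\norm{Z_k}^2)=2/D$ from the Gaussian fourth moment, use independence to get $\Var(S_K)=2/(KD)$, and conclude via Chebyshev's inequality exactly as in the paper. No gaps.
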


\begin{proof}
	For all $k \in [K]$, we have that $\EE \brk[s]1{ \norm{Z_{k}}^{2} } = 1$.
	Furthermore, let $z$ denote some entry of $Z_k$.
	Then:
	\[
	\Var \left ( \norm{ Z_{k} }^{2} \right ) = D \cdot \Var \left( z^{2} \right) = D\left( \EE \brk[s]*{ z^4 } - \EE \brk[s]*{ z^2 }^{2} \right) = D \left ( \frac{3}{D^{2}} - \frac{1}{D^{2}} \right ) = \frac{2}{D} 
	\text{\,,}
	\]
	where the third equality is by the fact that, for a univariate Gaussian random variable $y \sim \NN (0,1)$, we have $\EE \brk[s]{ y^4 } = 3$.
	Since $Z_1, \ldots, Z_K$ are independent: 
	\[
	\Var \left( \frac{ \sum_{k = 1}^K \norm{Z_k}^{2} }{K} \right) = \frac{2}{KD}
	\text{\,,}
	\]
	and so by Chebyshev's inequality we get:
	\[
	\Pr \brk*{ \frac{\sum_{k = 1}^K \norm{ Z_{k} }^{2} }{ K } \geq  2 } \leq \Pr \brk*{ \abs*{ \frac{ \sum\nolimits_{k = 1}^K \norm{ Z_k }^2 }{K} - 1 } \geq 1 } \leq \frac{2}{KD}
	\text{\,.}
	\]
\end{proof}

\begin{lemma}
	\label{lem:clm_cost_grad_inner_prod}
	For any $\vbf_0 \in \orthstates$ and $\xbf_0 \in \trainstates$ it holds that:
	\[
	\inprod{ \nabla \optmes ( \Kbf^{(1)} ; \vbf_0) }{ \nabla \cost( \Kbf^{(1)} ; \xbf_0 ) } = 4 \sum\nolimits_{n=  0}^{H - 1} \sum\nolimits_{k = 0}^{H - n - 1} \inprod{ \vbf_0 }{ \Abf^{n}\xbf_0 } \cdot \Tr  \brk*{ \vbf_0 (\Abf^{n}\xbf_0)^\top (\Abf^{k + 1})^{\top} \Abf^{k + 1}}
	\text{\,.}
	\]
\end{lemma}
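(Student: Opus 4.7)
The plan is to differentiate both quantities explicitly at $\Kbf^{(1)} = \0$ and then massage the resulting trace expression. From \cref{lem:lqr_gradient} with $\Kbf = \0$, $\Qbf = \Ibf$, and $\X = \{ \xbf_0 \}$, together with $\Bbf \Bbf^\top = \Ibf$ (since $\Bbf$ is orthogonal), one obtains
\[
\nabla \cost(\0; \xbf_0) = 2 \Bbf^\top \sum\nolimits_{h=0}^{H-1} \sum\nolimits_{s=1}^{H-h} (\Abf^{s-1})^\top \Abf^s \Abf^h \xbf_0 (\Abf^h \xbf_0)^\top \text{\,.}
\]
A direct differentiation of $\optmes(\Kbf; \vbf_0) = \vbf_0^\top (\Abf + \Bbf \Kbf)^\top (\Abf + \Bbf \Kbf) \vbf_0$ (cf.~\cref{def:opt_measure_single}) yields $\nabla \optmes(\0; \vbf_0) = 2 \Bbf^\top \Abf \vbf_0 \vbf_0^\top$.

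Next, I would expand the Frobenius inner product as a trace, collapsing the $\Bbf \Bbf^\top = \Ibf$ factor that appears between the two $\Bbf^\top$ terms. Each summand then becomes $4 \Tr\brk{ \vbf_0 (\Abf \vbf_0)^\top (\Abf^{s-1})^\top \Abf^s \Abf^h \xbf_0 (\Abf^h \xbf_0)^\top}$. I would next use the transpose-reverses-order identity $(\Abf \vbf_0)^\top (\Abf^{s-1})^\top = \vbf_0^\top \Abf^\top (\Abf^{s-1})^\top = \vbf_0^\top (\Abf^{s-1} \Abf)^\top = \vbf_0^\top (\Abf^s)^\top$ to merge the two $\Abf$-power factors. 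The scalar $(\Abf^h \xbf_0)^\top \vbf_0 = \inprod{\vbf_0}{\Abf^h \xbf_0}$ can then be cycled out of the trace via the cyclic property, leaving $4 \inprod{\vbf_0}{\Abf^h \xbf_0} \cdot \Tr\brk{ \vbf_0 (\Abf^h \xbf_0)^\top (\Abf^s)^\top \Abf^s }$. Reindexing $n := h$ and $k := s - 1$, so that $s = k + 1$ and $s \in [1, H - h]$ becomes $k \in \{ 0 \} \cup [H - n - 1]$, produces exactly the claimed expression.

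There is no substantive obstacle; this is essentially a bookkeeping computation. The only care required is in the trace manipulation itself, where one must avoid spuriously commuting $\Abf$ past $(\Abf^{s-1})^\top$ and instead rely solely on the cyclic property of the trace and the transpose-reverses-order identity. Writing the final factor in the symmetric form $(\Abf^{k+1})^\top \Abf^{k+1}$ (rather than $\Abf^{k+1} (\Abf^{k+1})^\top$) is the form that will be most convenient for the subsequent genus-expansion arguments, and it follows from the same cyclic rearrangement.
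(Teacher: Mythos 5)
Your proposal is correct and follows essentially the same route as the paper's proof: both invoke \cref{lem:lqr_gradient} for $\nabla \cost(\0;\xbf_0)$, obtain $\nabla \optmes(\0;\vbf_0) = 2\Bbf^\top \Abf \vbf_0 \vbf_0^\top$ (the paper does this by viewing $\optmes(\cdot;\vbf_0)$ as an $H=1$ cost and reusing \cref{lem:lqr_gradient}, which is the same computation as your direct differentiation), and then conclude via orthogonality of $\Bbf$, merging $\Abf^\top(\Abf^{s-1})^\top = (\Abf^s)^\top$, the cyclic/transpose-invariance properties of the trace, and the reindexing $n = h$, $k = s-1$. No gaps.
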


\begin{proof}
	For a controller $\Kbf \in \R^{ D \times D }$, let $\costhorizonone ( \Kbf ; \{\vbf_0\} ) :=  \norm{\vbf_0}^{2}+\norm{ (\Abf+\Bbf \Kbf ) \vbf_0 }^{2}$ denote the cost (\cref{eq:cost}) that it attains over $\vbf_0$ for a time horizon $H = 1$. 
	Notice that $\optmes ( \Kbf; \vbf_0) = \costhorizonone ( \Kbf ; \{\vbf_0\}) - 1$, and so $\nabla \optmes ( \Kbf; \vbf_0) = \nabla \costhorizonone ( \Kbf ; \{\vbf_0\})$.
	Thus, applying the cost gradient formula of \cref{lem:lqr_gradient}, for both $\nabla \optmes ( \Kbf^{(1)} ; \vbf_0)$ and $\nabla \cost ( \Kbf^{(1)} ; \xbf_0)$, while recalling that $\Qbf = \Ibf$ and $\Kbf^{(1)} = \0$, we obtain:
	\[
	\inprod{ \nabla \optmes  (\Kbf^{(1)}; \vbf_0) }{ \nabla \cost(\Kbf^{(1)}; \xbf_0) } = \inprod{ 2 \Bbf^\top  \Abf  \vbf_0 \vbf_0^\top }{ 2 \Bbf^\top \sum\nolimits_{h = 0}^{H - 1} \brk2{ \sum\nolimits_{s = 1}^{H - h} \brk{ \Abf^{s - 1} }^\top \Abf^s } \Sigmabf_{\xbf_0, h} }
	\text{\,,}
	\]
	with $\Sigmabf_{\xbf_0, h} := \frac{1}{ \abs{\X} } \sum\nolimits_{\xbf_0 \in \X} \xbf_h \xbf_h^\top = \frac{1}{ \abs{\X} } \sum\nolimits_{\xbf_0 \in \X} \Abf^h \xbf_0 \brk[s]{ \Abf^h \xbf_0 }^\top$ for $h \in \{0\} \cup [H - 1]$.
	Since $\Bbf$ is an orthogonal matrix, by the identity $\Tr (\Xbf^\top \Ybf) = \Tr (\Xbf \Ybf^\top) = \inprod{\Xbf}{\Ybf}$ for matrices $\Xbf, \Ybf$ of the same dimensions, and the cyclic property of the trace, we get:
	\[
	\begin{split}
		\inprod{ \nabla \optmes  (\Kbf^{(1)}; \vbf_0) }{ \nabla \cost(\Kbf^{(1)}; \xbf_0) } & = 4 \Tr \brk*{ \vbf_0 \vbf_0^\top \Abf^\top \sum\nolimits_{h = 0}^{H - 1} \sum\nolimits_{s = 1}^{H - h} \brk{ \Abf^{s - 1} }^\top \Abf^s  \Sigmabf_{\xbf_0, h} } \\
		& = 4\sum\nolimits_{h = 0}^{H - 1} \sum\nolimits_{s = 1}^{H - h} \Tr \brk*{ \vbf_0 \vbf_0^\top \brk{ \Abf^{s} }^\top \Abf^s  \Sigmabf_{\xbf_0, h} } \\
		& = 4 \sum\nolimits_{h = 0}^{H - 1} \sum\nolimits_{s = 1}^{H - h} \Tr \brk*{  \Sigmabf_{\xbf_0, h}  \vbf_0 \vbf_0^\top \brk{ \Abf^{s} }^\top \Abf^s } \\
		& = 4 \sum\nolimits_{h = 0}^{H - 1} \sum\nolimits_{s = 1}^{H - h} \Tr \brk*{  \Abf^h \xbf_0 \xbf_0^\top (\Abf^h)^\top  \vbf_0 \vbf_0^\top \brk{ \Abf^{s} }^\top \Abf^s } \\
		& = 4 \sum\nolimits_{h = 0}^{H - 1} \sum\nolimits_{s = 1}^{H - h} \inprod{ \vbf_0 }{ \Abf^h \xbf_0 } \cdot \Tr \brk*{  \Abf^h \xbf_0 \vbf_0^\top \brk{ \Abf^{s} }^\top \Abf^s }
		\text{\,.}
	\end{split}
	\]
	The trace of a matrix and its transpose are equal.
	Hence, $\Tr \brk1{  \Abf^h \xbf_0 \vbf_0^\top \brk{ \Abf^{s} }^\top \Abf^s } = \Tr \brk1{   \brk{ \Abf^{s} }^\top  \Abf^s \vbf_0 \brk{ \Abf^h \xbf_0 }^\top }$.
	Applying the cyclic property of the trace once more, and introducing the indices $n = h$ and $k = s - 1$, concludes:
	\[
	\begin{split}
		\inprod{ \nabla \optmes  (\Kbf^{(1)}; \vbf_0) }{ \nabla \cost(\Kbf^{(1)}; \xbf_0) } & = 4 \sum\nolimits_{h = 0}^{H - 1} \sum\nolimits_{s = 1}^{H - h} \inprod{ \vbf_0 }{ \Abf^h \xbf_0 } \cdot \Tr \brk*{  \Abf^h \xbf_0 \vbf_0^\top \brk{ \Abf^{s} }^\top \Abf^s } \\
		& = 4 \sum\nolimits_{n = 0}^{H - 1} \sum\nolimits_{k = 0}^{H - n - 1} \inprod{ \vbf_0 }{ \Abf^n \xbf_0 } \cdot  \Tr \brk*{ \vbf_0 (\Abf^{n}\xbf_0)^\top (\Abf^{k + 1})^{\top} \Abf^{k + 1}}
		\text{\,.}
	\end{split}
	\]
\end{proof}

\begin{lemma}
	\label{smoothness_lemma}
	The function $\optmes (\cdot)$ is $2$-smooth.
	That is, for any $\Kbf, \Kbf' \in \R^{D \times D}$ it holds that $\norm{ \nabla \optmes (\Kbf ) - \nabla \optmes (\Kbf') } \leq 2 \norm{\Kbf - \Kbf'}$.
\end{lemma}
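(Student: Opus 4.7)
The plan is to compute the gradient of $\optmes$ in closed form and observe that it depends affinely on $\Kbf$, so smoothness will follow from a straightforward operator-norm bound on the linear part.

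First, I would write $\optmes(\Kbf) = \frac{1}{|\orthstates|} \sum_{\vbf_0 \in \orthstates} \norm{(\Abf+\Bbf\Kbf)\vbf_0}^2$, and compute the gradient of each summand. A direct differentiation gives $\nabla_{\Kbf} \norm{(\Abf+\Bbf\Kbf)\vbf_0}^2 = 2\Bbf^\top(\Abf+\Bbf\Kbf)\vbf_0 \vbf_0^\top$. Summing and dividing, and introducing the projector $\Pbf := \sum_{\vbf_0 \in \orthstates} \vbf_0 \vbf_0^\top$ onto $\trainstates^\perp$, this yields the compact expression
\[
\nabla \optmes(\Kbf) = \frac{2}{|\orthstates|} \Bbf^\top (\Abf + \Bbf \Kbf) \Pbf \text{\,.}
\]

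Next, I would subtract the corresponding expression at $\Kbf'$. Since $\Abf$ and $\Pbf$ do not depend on the controller, and $\Bbf$ is orthogonal so that $\Bbf^\top \Bbf = \Ibf$, almost everything cancels and one obtains the clean identity
\[
\nabla \optmes(\Kbf) - \nabla \optmes(\Kbf') = \frac{2}{|\orthstates|} (\Kbf - \Kbf') \Pbf \text{\,.}
\]

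Finally, since $\Pbf$ is an orthogonal projector its operator norm is at most $1$, and therefore the Frobenius (Euclidean) norm of the right-hand side is at most $\frac{2}{|\orthstates|} \norm{\Kbf - \Kbf'} \leq 2 \norm{\Kbf - \Kbf'}$, establishing $2$-smoothness. There is no real obstacle here~---~the only subtlety worth noting explicitly is that the argument uses $\Bbf^\top \Bbf = \Ibf$, which is where the orthogonality assumption on $\Bbf$ (standing assumption in \cref{sec:prelim:underdetermined}) enters; without it one would pick up an extra factor of $\norm{\Bbf}_{\mathrm{op}}^2$ in the smoothness constant.
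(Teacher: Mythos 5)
Your proof is correct and takes essentially the same route as the paper's: compute the gradient explicitly, observe it is affine in $\Kbf$ with the controller-dependent part $2\Bbf^\top\Bbf(\Kbf-\Kbf')$ collapsing via $\Bbf^\top\Bbf = \Ibf$, and bound the resulting norm using the unit-norm structure of $\orthstates$. The only cosmetic difference is that you aggregate the states into the projector $\Pbf$ before bounding—incidentally yielding the slightly sharper constant $2/\abs{\orthstates}$—whereas the paper bounds each summand $\optmes(\,\cdot\,;\vbf_0)$ separately and then averages.
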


\begin{proof}
	For a controller $\Kbf \in \R^{ D \times D }$ and $\vbf_0 \in \orthstates$, let $\costhorizonone ( \Kbf ; \{ \vbf_0 \} ) :=  \norm{\vbf_0}^{2}+\norm{ (\Abf+\Bbf \Kbf ) \vbf_0 }^{2}$ denote the cost (\cref{eq:cost}) that it attains over $\vbf_0$ for a time horizon $H = 1$. 
	Notice that $\optmes ( \Kbf; \vbf_0) = \costhorizonone ( \Kbf ; \{\vbf_0\}) - 1$, and so $\nabla \optmes ( \Kbf; \vbf_0) = \nabla \costhorizonone ( \Kbf ; \{\vbf_0\})$.
	Thus, applying the cost gradient formula of \cref{lem:lqr_gradient}, we obtain:
	\[
	\nabla \optmes ( \Kbf ; \vbf_0 ) = 2 \Bbf^\top (\Abf + \Bbf \Kbf) \vbf_0 \vbf_0^\top
	\text{\,.}
	\]
	For any $\Kbf' \in \R^{D \times D}$ the above formula gives:
	\[
	\begin{split}
		\norm*{ \nabla \optmes (\Kbf ; \vbf_0) - \nabla \optmes (\Kbf' ; \vbf_0) } & = \norm*{ 2 \Bbf^\top \Bbf (\Kbf - \Kbf')  \vbf_0 \vbf_0^\top } \\
		& = \norm*{ 2 (\Kbf - \Kbf')  \vbf_0 \vbf_0^\top } 
		\text{\,,}
	\end{split}
	\]
	where the second equality is by recalling that $\Bbf^\top \Bbf = \Ibf$.
	By sub-multiplicativity of the matrix Euclidean norm, we get that:
	\[
	\norm*{ \nabla \optmes (\Kbf ; \vbf_0) - \nabla \optmes (\Kbf' ; \vbf_0) }  \leq 2 \norm*{ \vbf_0 \vbf_0^\top } \cdot \norm{\Kbf - \Kbf'} = 2 \norm*{ \Kbf - \Kbf' }
	\text{\,.}
	\]
	where the last equality is due to $\vbf_0$ being of unit norm. Finally, we have:
	\[
	\optmes ( \Kbf ) = \frac{1}{ \abs{\orthstates} } \sum\nolimits_{\vbf_0 \in \orthstates} \optmes ( \Kbf; \vbf_0)
	\text{\,,}
	\]
	and therefore $\optmes (\cdot)$ is $2$-smooth, being an average of $2$-smooth functions.
\end{proof}

\subsubsection{Lower Bound on $\EE_\Abf \brk[s]1{ \inprodbig{ \nabla \optmes ( \Kbf^{(1)} ) }{ \nabla \cost (\Kbf^{(1)} ; \trainstates ) } }$}
\label{app:proofs:typical_system:in_prod_lower_bound}

In this part of the proof, we establish that:
\[
	\EE\nolimits_\Abf \brk[s]*{ \inprodbig{ \nabla \optmes ( \Kbf^{(1)} ) }{ \nabla \cost (\Kbf^{(1)} ; \trainstates ) } } \geq \frac{2H(H-1)}{D}
	\text{\,.}
\]
To do so, it suffices to show that for all $\vbf_0 \in \orthstates$ it holds that $\EE_\Abf \brk[s]1{ \inprodbig{ \nabla \optmes ( \Kbf^{(1)} ; \vbf_0 ) }{ \nabla \cost (\Kbf^{(1)} ; \trainstates ) } } \geq \frac{2H(H-1)}{D}$.
Indeed, since $\optmes (\Kbf^{(1)}) = \frac{1}{ \abs{\orthstates} } \sum\nolimits_{\vbf_0 \in \orthstates} \optmes (\Kbf^{(1)} ; \vbf_0)$, linearity of the gradient and expectation then yield the desired lower bound.

We begin by proving that the expected inner product does not depend on the choice of orthonormal initial states in $\trainstates$ and $\orthstates$.

\begin{lemma} 
	\label{lem:in_prod_initial_states}
	For any $\vbf_0 \in \orthstates$, $\xbf_0 \in \trainstates$, and any two different standard basis vectors $\ebf_i, \ebf_j \in \R^D$:
	\[
	\EE\nolimits_{\Abf} \brk[s]*{ \inprod{ \nabla \optmes (\Kbf^{(1)} ; \vbf_0 ) }{\nabla \cost(\Kbf^{(1)}; \xbf_0 ) } } = \EE\nolimits_{\Abf} \brk[s]*{ \inprod{ \nabla \optmes (\Kbf^{(1)}; \ebf_{i}) }{ \nabla \cost(\Kbf^{(1)}; \ebf_{j}) } }
	\text{\,,}
	\]
	and, in particular, for any $n \in \{ 0\} \cup [H  - 1]$ and $k \in \{0\} \cup [H - n - 1]$:
	\[
		\EE\nolimits_{\Abf} \brk[s]*{ \inprod{ \vbf_0 }{ \Abf^{n}\xbf_0 } \cdot \Tr  \brk*{ \vbf_0 (\Abf^{n}\xbf_0)^\top (\Abf^{k + 1})^{\top} \Abf^{k + 1}} } = 
		\EE\nolimits_{\Abf} \brk[s]*{ \inprod{ \ebf_i }{ \Abf^{n} \ebf_j} \cdot \Tr  \brk*{ \ebf_i (\Abf^{n} \ebf_j)^\top (\Abf^{k + 1})^{\top} \Abf^{k + 1}} }
		\text{\,.}
	\]
\end{lemma}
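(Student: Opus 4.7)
The plan is to exploit the orthogonal invariance of the distribution of $\Abf$. Since the entries of $\Abf$ are i.i.d.\ Gaussian with mean zero and variance $1/D$, the rotational invariance of the multivariate Gaussian implies that for any orthogonal $\mathbf{O} \in \R^{D \times D}$, the matrix $\mathbf{O} \Abf \mathbf{O}^\top$ has the same distribution as $\Abf$. I would leverage this by constructing a specific orthogonal $\mathbf{O}$ that transports the orthonormal pair $(\vbf_0, \xbf_0)$ to the standard basis pair $(\ebf_i, \ebf_j)$, and then performing a change of variables inside the expectation.

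Concretely, since $\vbf_0 \in \orthstates$ and $\xbf_0 \in \trainstates$ are orthogonal unit vectors, and since $\ebf_i, \ebf_j$ are orthogonal unit vectors (as $i \neq j$), a routine basis-completion argument yields an orthogonal $\mathbf{O}$ satisfying $\mathbf{O} \vbf_0 = \ebf_i$ and $\mathbf{O} \xbf_0 = \ebf_j$. For fixed $n \in \{0\} \cup [H-1]$ and $k \in \{0\} \cup [H-n-1]$, I would then verify the summand-level identity by direct computation. Using the fact that $(\mathbf{O} \Abf \mathbf{O}^\top)^m = \mathbf{O} \Abf^m \mathbf{O}^\top$ for any $m \in \N$, together with orthogonality of $\mathbf{O}$, one sees that $\inprod{\ebf_i}{(\mathbf{O} \Abf \mathbf{O}^\top)^n \ebf_j} = \inprod{\vbf_0}{\Abf^n \xbf_0}$. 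An analogous manipulation of the trace term, exploiting $((\mathbf{O} \Abf \mathbf{O}^\top)^{k+1})^\top (\mathbf{O} \Abf \mathbf{O}^\top)^{k+1} = \mathbf{O} (\Abf^{k+1})^\top \Abf^{k+1} \mathbf{O}^\top$ together with the cyclic property of the trace to move a factor of $\mathbf{O}^\top$ past $\ebf_i$ and turn it into $\vbf_0$, shows that the full summand evaluated at $\mathbf{O} \Abf \mathbf{O}^\top$ with $(\ebf_i, \ebf_j)$ coincides with the summand evaluated at $\Abf$ with $(\vbf_0, \xbf_0)$. Taking expectations and invoking the distributional equality $\mathbf{O} \Abf \mathbf{O}^\top \stackrel{d}{=} \Abf$ then yields the ``in particular'' clause.

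The first equality in the lemma follows at once: by \cref{lem:clm_cost_grad_inner_prod}, the inner product $\inprod{\nabla \optmes(\Kbf^{(1)}; \vbf_0)}{\nabla \cost(\Kbf^{(1)}; \xbf_0)}$ is a sum over $n, k$ of terms of exactly the form handled above, with combinatorial coefficients independent of the chosen initial states, so linearity of expectation finishes the argument. I do not foresee any substantive obstacle here; the two items worth checking carefully are the basis-completion step producing the required $\mathbf{O}$ (straightforward, given that both pairs are orthonormal) and the bookkeeping of transposes and cyclic permutations within the trace.
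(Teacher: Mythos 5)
Your proposal is correct and is essentially the paper's own argument: the paper likewise fixes an orthogonal matrix carrying the pair $(\ebf_i,\ebf_j)$ to $(\vbf_0,\xbf_0)$ (the inverse of your $\mathbf{O}$), uses the cyclic property of the trace to show each summand is unchanged under conjugation, invokes the orthogonal invariance of the Gaussian ensemble to equate the distributions, and then concludes the first equality from \cref{lem:clm_cost_grad_inner_prod} and linearity of expectation. No gap to report.
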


\begin{proof}
	By \cref{lem:clm_cost_grad_inner_prod}:
	\[
	\inprod{ \nabla \optmes ( \Kbf^{(1)} ; \vbf_0) }{ \nabla \cost( \Kbf^{(1)} ; \xbf_0 ) } = 4 \sum\nolimits_{n =  0}^{H - 1} \sum\nolimits_{k = 0}^{H - n - 1} \inprod{ \vbf_0 }{ \Abf^{n}\xbf_0 } \cdot \Tr  \brk*{ \vbf_0 (\Abf^{n}\xbf_0)^\top (\Abf^{k + 1})^{\top} \Abf^{k + 1}}
	\text{\,.}
	\]
	It suffices to show that, for all $n \in \{0\} \cup [H - 1]$ and $k \in \{ 0\} \cup [H - n - 1]$, the random variable
	\[
	\inprod{ \vbf_0 }{ \Abf^{n}\xbf_0 } \cdot \Tr \brk*{ \vbf_0 (\Abf^n \xbf_0)^\top (\Abf^{k + 1})^\top \Abf^{k + 1} }
	\]
	is distributed identically as the random variable
	\[
	\inprod{ \ebf_i }{ \Abf^{n} \ebf_j } \cdot \Tr \brk*{ \ebf_i (\Abf^n \ebf_j)^\top (\Abf^{k + 1})^\top \Abf^{k + 1} }
	\text{\,.}
	\]
	Indeed, this implies that, for all $n \in \{0\} \cup [H - 1]$ and $k \in \{ 0\} \cup [H - n - 1]$:
	\[
	\EE\nolimits_{\Abf} \brk[s]*{ \inprod{ \vbf_0 }{ \Abf^{n}\xbf_0 } \cdot \Tr \brk*{ \vbf_0 (\Abf^n \xbf_0)^\top (\Abf^{k + 1})^\top \Abf^{k + 1} } } = \EE\nolimits_{\Abf} \brk[s]^{ \inprod{ \ebf_i }{ \Abf^{n} \ebf_j } \cdot \Tr \brk*{ \ebf_i (\Abf^n \ebf_j)^\top (\Abf^{k + 1})^\top \Abf^{k + 1} } }
	\text{\,,}
	\]
	and so from linearity of the expectation:
	\[
	\begin{split}
		\EE\nolimits_{\Abf} \brk[s]*{ \inprod{ \nabla \optmes (\Kbf^{(1)} ; \vbf_0 ) }{\nabla \cost(\Kbf^{(1)}; \xbf_0 ) } } & = \EE\nolimits_{\Abf} \brk[s]*{ 4 \sum\nolimits_{n =  0}^{H - 1} \sum\nolimits_{k = 0}^{H - n - 1} \inprod{ \vbf_0 }{ \Abf^{n}\xbf_0 } \cdot \Tr  \brk*{ \vbf_0 (\Abf^{n}\xbf_0)^\top (\Abf^{k + 1})^{\top} \Abf^{k + 1}} } \\
		& = \EE\nolimits_{\Abf} \brk[s]*{ 4 \sum\nolimits_{n =  0}^{H - 1} \sum\nolimits_{k = 0}^{H - n - 1}  \inprod{ \ebf_i }{ \Abf^{n} \ebf_j } \cdot \Tr \brk*{ \ebf_i (\Abf^n \ebf_j)^\top (\Abf^{k + 1})^\top \Abf^{k + 1} } } \\
		& =  \EE\nolimits_{\Abf} \brk[s]*{ \inprod{ \nabla \optmes (\Kbf^{(1)}; \ebf_{i}) }{ \nabla \cost(\Kbf^{(1)}; \ebf_{j}) } }
		\text{\,.}
	\end{split}
	\]
	
	Now, fix some $n \in \{0\} \cup [H - 1]$ and $k \in \{ 0\} \cup [H - n - 1]$.
	Let $\Ubf \in \R^{D \times D}$ be an orthogonal matrix satisfying $\Ubf \ebf_i = \vbf_0$ and $\Ubf \ebf_j = \xbf_0$, and let $\Mbf := \Ubf^\top \Abf \Ubf \in \R^{D \times D}$.
	Consider the random variable
	\[
	\inprod{ \ebf_i }{ \Mbf^{n} \ebf_j } \cdot \Tr \brk*{ \ebf_i (\Mbf^n \ebf_j)^\top (\Mbf^{k + 1})^\top \Mbf^{k + 1} }
	\text{\,.}
	\]
	By the definitions of $\Ubf$ and $\Mbf$ we have that $\inprod{ \ebf_i }{ \Mbf^{n} \ebf_j }  = \ebf_i^\top \Mbf^n \ebf_j =  \ebf_i^\top \Ubf^\top \Abf^n \Ubf \ebf_j = \vbf_0^\top \Abf^n \xbf_0$ and:
	\[
	\begin{split}
		\Tr \brk*{ \ebf_i (\Mbf^n \ebf_j)^\top (\Mbf^{k + 1})^\top \Mbf^{k + 1} } & = \Tr \brk*{ \Ubf^\top \vbf_0 (\Abf^n \xbf_0)^\top \Ubf (\Mbf^{k + 1})^\top \Mbf^{k + 1} } \\
		& = \Tr \brk*{ \vbf_0 (\Abf^n \xbf_0)^\top \Ubf (\Mbf^{k + 1})^\top \Mbf^{k + 1} \Ubf^\top} \\
		& = \Tr \brk*{ \vbf_0 (\Abf^n \xbf_0)^\top \Ubf \Ubf^\top (\Abf^{k + 1})^\top \Ubf\Ubf^\top \Abf^{k + 1} \Ubf \Ubf^\top} \\
		& = \Tr \brk*{ \vbf_0 (\Abf^n \xbf_0)^\top (\Abf^{k + 1})^\top \Abf^{k + 1} } 
		\text{\,,}
	\end{split}
	\]
	where the second equality is by the cyclic property of the trace.
	Thus:
	\[
	\inprod{ \ebf_i }{ \Mbf^{n} \ebf_j } \cdot \Tr \brk*{ \ebf_i (\Mbf^n \ebf_j)^\top (\Mbf^{k + 1})^\top \Mbf^{k + 1} } = \inprod{ \vbf_0 }{ \Abf^{n} \xbf_0 } \cdot \Tr \brk*{ \vbf_0 (\Abf^n \xbf_0)^\top (\Abf^{k + 1})^\top \Abf^{k + 1} } 
	\text{\,.}
	\]
	Notice that the orthogonality of $\Ubf$ implies that the entries of $\Mbf$ are independent Gaussian random variables with mean zero and standard deviation $1 / \sqrt{D}$.
	That is, the entries of $\Mbf$ and $\Abf$ are identically distributed.
	Combined with the equality above, we conclude that $\inprod{ \vbf_0 }{ \Abf^{n} \xbf_0 } \cdot \Tr \brk*{ \vbf_0 (\Abf^n \xbf_0)^\top (\Abf^{k + 1})^\top \Abf^{k + 1} } $ and $\inprod{ \ebf_i }{ \Abf^{n} \ebf_j } \cdot \Tr \brk*{ \ebf_i (\Abf^n \ebf_j)^\top (\Abf^{k + 1})^\top \Abf^{k + 1} }$ are identically distributed.
\end{proof}

\medskip

With \cref{lem:in_prod_initial_states} in place, we now lower bound the expected inner product between $\nabla \optmes ( \Kbf^{(1)} ; \vbf_0 )$ and $\nabla \cost (\Kbf^{(1)} ; \trainstates )$, for any $\vbf_0 \in \orthstates$ as necessary.

	Let $\vbf_0 \in \orthstates$.
	By \cref{lem:clm_cost_grad_inner_prod}:
	\[
	\inprod{ \nabla \optmes ( \Kbf^{(1)} ; \vbf_0) }{ \nabla \cost( 	\Kbf^{(1)} ; \trainstates ) } = \frac{ 4 }{ \abs{\trainstates } } \sum_{\xbf_0 \in \trainstates} \sum_{n =  0}^{H - 1} \sum_{k = 0}^{H - n - 1} \inprod{ \vbf_0 }{ \Abf^{n}\xbf_0 } \cdot \Tr  \brk*{ \vbf_0 (\Abf^{n}\xbf_0)^\top (\Abf^{k + 1})^{\top} \Abf^{k + 1}} 
	\text{\,.}
	\]
	Taking the expectation with respect to $\Abf$, we get:
	\be
		\EE\nolimits_{\Abf}  \brk[s]*{ \inprod{ \nabla \optmes (\Kbf^{(1)} ; \vbf_0) }{ \nabla \cost(\Kbf^{(1)};\trainstates) } } = \frac{ 4 }{ \abs{\trainstates } } \sum_{\xbf_0 \in \trainstates} \sum_{n =  0}^{H - 1} \sum_{k = 0}^{H - n - 1}  \EE\nolimits_{\Abf}  \brk[s]*{ \inprod{ \vbf_0 }{ \Abf^{n}\xbf_0 } \cdot \Tr  \brk*{ \vbf_0 (\Abf^{n}\xbf_0)^\top (\Abf^{k + 1})^{\top} \Abf^{k + 1}} }
		\text{\,.}
	\label{eq:in_prod_lower_bound_opt_train_cost_expec_grad_form}
	\ee
	The sought-after result will readily follow from the lemma below.
	
	\begin{lemma}
		\label{lem:expec_in_prod_term_lower_bound}
		For any $\xbf_0 \in \trainstates, \vbf_0 \in \orthstates, n \in [H - 1]$, and $k \in \{ 0 \} \cup [H - n - 1]$ it holds that:
		\be
		\EE\nolimits_{\Abf}  \brk[s]*{ \inprod{ \vbf_0 }{ \Abf^{n}\xbf_0 } \cdot \Tr  \brk*{ \vbf_0 (\Abf^{n}\xbf_0)^\top (\Abf^{k + 1})^{\top} \Abf^{k + 1}} }  \geq \frac{1}{D}
		\text{\,.}
		\label{eq:grad_inner_prod_one_term_expectation_lower_bound}
		\ee
	\end{lemma}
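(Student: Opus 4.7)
The plan is to invoke Wick's theorem (Isserlis' identity) for Gaussian moments to express the expectation as a sum over pairings of non-negative contributions, and to exhibit a single distinguished pairing whose total contribution equals exactly $1/D$.

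First, by \cref{lem:in_prod_initial_states} I will reduce to the case $\xbf_0 = \ebf_1$, $\vbf_0 = \ebf_2$. Using the identity $\Tr(\vbf \ybf^\top \Mbf) = \ybf^\top \Mbf \vbf$ together with $\Abf^{n+k+1} = \Abf^{k+1} \cdot \Abf^n$, the random variable inside the expectation becomes
\[
X \;=\; (\Abf^n)_{2,1} \sum\nolimits_{p=1}^{D} (\Abf^{n+k+1})_{p,1}\,(\Abf^{k+1})_{p,2}.
\]
Expanding each matrix-power entry as a product of entries $a_{ij}$ of $\Abf$ turns $X$ into a multi-indexed sum of monomials of total degree $2(n+k+1)$. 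Since the $a_{ij}$'s are iid zero-mean Gaussians of variance $1/D$, Wick's theorem writes the expectation of each such monomial as a sum over perfect pairings of its factors, with each pair $(a_{u,v}, a_{u',v'})$ contributing $\delta_{u,u'}\delta_{v,v'}/D$. In particular, every pairing contributes a \emph{non-negative} quantity after summing over its free multi-indices, so it suffices to exhibit one pairing whose total contribution is $1/D$.

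To construct the distinguished pairing, I will label the three chains of factors as $d_0 \cdots d_{n-1}$ (from $(\Abf^n)_{2,1}$), $b_0 \cdots b_{n+k}$ (from $(\Abf^{n+k+1})_{p,1}$), and $c_0 \cdots c_k$ (from $(\Abf^{k+1})_{p,2}$). Leveraging the decomposition $\Abf^{n+k+1} = \Abf^{k+1} \cdot \Abf^n$, I will pair $b_r \leftrightarrow c_r$ for $r = 0, \ldots, k$ and $b_{k+1+s} \leftrightarrow d_s$ for $s = 0, \ldots, n-1$. A short index chase shows that the resulting Kronecker-delta constraints are jointly consistent, pin down the internal indices $j_1, \ldots, j_{n+k}$ of the $b$-chain in terms of the remaining free indices $p, m_1, \ldots, m_k, i_1, \ldots, i_{n-1}$, and leave exactly $n+k$ summed indices free. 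This yields a contribution of $D^{n+k} \cdot D^{-(n+k+1)} = 1/D$, which, combined with non-negativity of every other pairing's contribution, gives $\EE_\Abf [X] \geq 1/D$.

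The main obstacle I anticipate is the consistency check for the distinguished pairing; it rests essentially on $\vbf_0 \perp \xbf_0$, since the pair $b_k \leftrightarrow c_k = a_{m_k, 2}$ forces the intermediate index $j_{k+1} = 2$, which must mesh with the subsequent pair $b_{k+1} \leftrightarrow d_0 = a_{2, i_1}$ rather than produce the contradictory constraint $j_{k+1} = 1$ that would arise had $\xbf_0 = \vbf_0 = \ebf_1$.
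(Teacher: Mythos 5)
Your proof is correct and takes essentially the same route as the paper's: after the identical reduction to two distinct standard basis vectors via \cref{lem:in_prod_initial_states}, both arguments expand the expression into monomials in the entries of $\Abf$, use non-negativity of every Gaussian (Wick) contribution, and single out one matching of factors whose total contribution, $D^{n+k}\cdot D^{-(n+k+1)}=1/D$, yields the bound---the paper merely organizes this through explicit index constraints, treating $k=0$ first and then folding in $(\Abf^k)^\top\Abf^k$ by pairing its factors internally, whereas you pair the $(\Abf^{k+1})_{p,2}$ chain directly against the head of the $(\Abf^{n+k+1})_{p,1}$ chain, which is a cosmetic difference. One minor aside: your closing worry is off---if $\xbf_0=\vbf_0=\ebf_1$ the two adjacent pairs would both impose $j_{k+1}=1$, which is consistent rather than contradictory---but since the lemma only concerns orthogonal $\xbf_0\perp\vbf_0$ this has no bearing on the validity of your argument.
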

	\begin{proof}
		According to \cref{lem:in_prod_initial_states}, we may replace $\xbf_0$ and $\vbf_0$ with any two different standard basis vectors $\ebf_j \in \R^D$ and $\ebf_i \in \R^D$, respectively, since:
		\[
			\EE\nolimits_{\Abf}  \brk[s]*{ \inprod{ \vbf_0 }{ \Abf^{n}\xbf_0 } \cdot \Tr  \brk*{ \vbf_0 (\Abf^{n}\xbf_0)^\top (\Abf^{k + 1})^{\top} \Abf^{k + 1}} }
			= \EE\nolimits_{\Abf}  \brk[s]*{ \inprod{ \ebf_i }{ \Abf^{n} \ebf_j } \cdot \Tr  \brk*{ \ebf_i (\Abf^{n} \ebf_j)^\top (\Abf^{k + 1})^{\top} \Abf^{k + 1}} }
			\text{\,.}
		\]
		Thus, in what follows we show that:
		\[
			\EE\nolimits_{\Abf}  \brk[s]*{ \inprod{ \ebf_i }{ \Abf^{n} 	\ebf_j } \cdot \Tr  \brk*{ \ebf_i (\Abf^{n} \ebf_j)^\top (\Abf^{k + 1})^{\top} \Abf^{k + 1}} } \geq \frac{1}{D}
			\text{\,.}
		\]
		
		First, let us consider the case of $k = 0$.
		Note that in this case, by the cyclic property of the trace:
		\[
		\inprod{ \ebf_i }{ \Abf^{n} \ebf_j } \cdot \Tr  \brk*{ \ebf_i (\Abf^{n} \ebf_j)^\top (\Abf^{k + 1})^{\top} \Abf^{k + 1} } =  \inprod{ \ebf_i }{ \Abf^{n} \ebf_j } \cdot \inprod{  \Abf \ebf_i }{  \Abf^{n + 1} \ebf_j }
		\text{\,.}
		\]
		Denoting the $(w, z)$'th entry of $\Abf$ by $a_{w, z} \in \R$, for $w, z \in [D]$, the inner products on the right hand side can be written as:
		\[
		\inprod{ \ebf_i }{ \Abf^{n} \ebf_j } = \sum_{t_1, \ldots, t_{n - 1} = 1}^D a_{i, t_i} a_{t_1, t_2} \cdot \cdots \cdot a_{t_{n - 1}, j}
		\text{\,,}
		\]
		and:
		\[
		\inprod{  \Abf \ebf_i }{  \Abf^{n + 1} \ebf_j } = \sum_{l, r_1, \ldots, r_n = 1}^D a_{l, i} \cdot a_{l, r_1} a_{r_1, r_2} \cdot \cdots \cdot a_{r_n, j}
		\text{\,.}
		\]
		Combining both equations above leads to:
		\be
		\inprod{ \ebf_i }{ \Abf^{n} \ebf_j } \cdot \inprod{  \Abf \ebf_i }{  \Abf^{n + 1} \ebf_j } = \sum_{t_1, \ldots, t_{n - 1} = 1}^D \sum_{l, r_1, \ldots, r_n = 1}^D a_{i, t_i} a_{t_1, t_2} \cdot \cdots \cdot a_{t_{n - 1}, j} \cdot a_{l, i} \cdot a_{l, r_1} a_{r_1, r_2} \cdot \cdots \cdot a_{r_n, j}
		\text{\,.}
		\label{eq:A_entries_sum_k_zero}
		\ee
		Since the entries of $\Abf$ are independently distributed according to a zero-mean Gaussian with standard deviation $1 / \sqrt{D}$, basic properties of Gaussian random variables imply that, for any $w, z \in [D]$ and $p \in \N$:
		\be
		D^{ \frac{p}{2} } \cdot \EE\nolimits_{\Abf} \brk[s]*{ a_{w, z}^p } = \begin{cases}
			(p - 1)!! := (p - 1) (p - 3) \cdots 3 & , \text{ if $p$ is even} \\
			0 & , \text{ otherwise}
		\end{cases}
		\text{\,.}
		\label{eq:gaussian_power_expectation}
		\ee
		According to the above, the expectation of each summand on the right hand side of \cref{eq:A_entries_sum_k_zero} is non-negative.
		Moreover, for the expectation of a summand to be positive, every entry of $\Abf$ in it needs to have an even power (otherwise, the expectation is zero).
		We now describe a subset of indices for which this occurs.
		Consider indices $l, t_1, \ldots, t_{n - 1}, r_1, \ldots, r_n$ satisfying:
		\[
		t_1 = r_2, t_2 = r_3, \ldots, r_{n - 1} = r_n
		\text{\,.}
		\]
		This implies that:
		\[
		a_{t_1, t_2} = a_{r_2, r_3} , \ldots, a_{t_{n - 1}, i} = a_{r_n, i}
		\text{\,,}
		\]
		so these terms are paired up.
		We are left with $a_{i, r_2}, a_{l, i}, a_{l, r_1}, a_{r_1, r_2}$.
		Requiring that $r_1 = i$ pairs up these remaining terms.
		For all possible index assignments satisfying the specified constraints, the entries of $\Abf$ in the corresponding summand have even powers.
		As a result, by \cref{eq:gaussian_power_expectation} for such choice of indices:
		\[
		\EE\nolimits_{\Abf} \brk[s]*{ a_{i, t_i} a_{t_1, t_2} \cdot \cdots \cdot a_{t_{n - 1}, j} \cdot a_{l, i} \cdot a_{l, r_1} a_{r_1, r_2} \cdot \cdots \cdot a_{r_n, j} } \geq \frac{1}{ D^{n + 1} }
		\text{\,,}
		\]
		as there are overall $2n + 2$ terms in the product.
		It remains to count the number of index assignments, for the sums in \cref{eq:A_entries_sum_k_zero}, that satisfy the specified constraints.
		We have $D$ options for each of the unconstrained indices $l, r_2, \ldots, r_n$, and so there are overall $D^n$ relevant index assignments.
		Thus:
		\[
		\EE\nolimits_{\Abf}  \brk[s]*{  \inprod{ \ebf_i }{ \Abf^{n} \ebf_j } \cdot \inprod{  \Abf \ebf_i }{  \Abf^{n + 1} \ebf_j } } \geq \frac{D^n}{D^{n + 1}} = \frac{1}{D}
		\text{\,,}
		\]
		\ie, we have established \cref{eq:grad_inner_prod_one_term_expectation_lower_bound} for the case of $k = 0$.
		
		Next, we show that \cref{eq:grad_inner_prod_one_term_expectation_lower_bound} holds for $k \in [H - n - 1]$ by reducing this case to the case of $k = 0$.
		Notice that the expression within the expectation from \cref{eq:grad_inner_prod_one_term_expectation_lower_bound} can be written as:
		\[
		\inprod{ \ebf_i }{ \Abf^{n} \ebf_j } \cdot \Tr  \brk*{ \ebf_i (\Abf^{n} \ebf_j)^\top (\Abf^{k + 1})^{\top} \Abf^{k + 1}} = \inprod{ \ebf_i }{ \Abf^{n} \ebf_j } \cdot \inprod{ (\Abf^k)^\top \Abf^k \Abf \ebf_i }{ \Abf^{n + 1} \ebf_j }
		\text{\,.}
		\]
		Denoting $\Cbf := (\Abf^k)^\top \Abf^k$ and the $(w, z)$'th entry of $\Cbf$ by $c_{w,z}$, we have that:
		\be
		\begin{split}
			& \inprod{ \ebf_i }{ \Abf^{n} \ebf_j } \cdot \Tr  \brk*{ \ebf_i (\Abf^{n} \ebf_j)^\top (\Abf^{k + 1})^{\top} \Abf^{k + 1}}  \\[0.4em]
			& \hspace{3mm} = \inprod{ \ebf_i }{ \Abf^{n} \ebf_j } \cdot \inprod{ \Cbf \Abf \ebf_i }{ \Abf^{n + 1} \ebf_j } \\
			& \hspace{3mm} = \sum\nolimits_{w, z = 1}^D \sum_{t_1, \ldots, t_{n - 1} = 1}^D \sum_{l, r_1, \ldots, r_n = 1}^D a_{i, t_i} a_{t_1, t_2} \cdot \cdots \cdot a_{t_{n - 1}, j} \cdot c_{w, z} \cdot  a_{z, i} \cdot a_{w, r_1} a_{r_1, r_2} \cdot \cdots \cdot a_{r_n, j}
			\text{\,.}
		\end{split}
		\label{eq:A_entries_sum_k}
		\ee
		Let us focus on the contribution of $\Cbf$ to the expression above.
		For $w, z \in [D]$, the $(w, z)$'th entry of $\Cbf$ is given by:
		\[
		c_{w, z} = \sum_{y_1, \ldots, y_{2k - 1} = 1}^D ( a_{y_1, w} a_{y_2, y_1} \cdot \cdots \cdot a_{y_k, y_{k - 1}} ) \cdot ( a_{y_k, y_{k + 1} } a_{ y_{k + 1}, y_{k + 2}} \cdot \cdots \cdot a_{y_{2k - 1}, z} )
		\text{\,.}
		\]
		As before, we would like to pair up indices to achieve a lower bound on the number of summands in \cref{eq:A_entries_sum_k} in which entries of $\Abf$ have even powers.
		We therefore require that:
		\[
		y_1 = y_{2k - 1}, y_2 = y_{2k - 2}, \ldots, y_{k - 1} = y_{k + 1}
		\text{\,.}
		\]
		This pairs up all but the leftmost and rightmost terms of $c_{w, z}$.
		Going back to \cref{eq:A_entries_sum_k} and imposing $w = z$ on the index assignment, we have matched the terms added due to $k$ being non-zero.
		For the remaining indices, we can apply the same matching scheme used for the $k = 0$ case.
		Due to $k$ being non-zero, there are $2k$ additional terms, which add to the power of $D$ when lower bounding the expectation of each summand, but they are compensated by a summation over $k$ additional unconstrained indices.
	\end{proof}
	
	\medskip
	
	Overall, going back to \cref{eq:in_prod_lower_bound_opt_train_cost_expec_grad_form} and applying \cref{lem:expec_in_prod_term_lower_bound} concludes the proof:
	\[
	\begin{split}
		\EE\nolimits_{\Abf}  \brk[s]*{ \inprod{ \nabla \optmes (\Kbf^{(1)} ; \vbf_0) }{ \nabla \cost(\Kbf^{(1)};\trainstates) } } & = \frac{ 4 }{ \abs{\trainstates } } \sum_{\xbf_0 \in \trainstates} \sum_{n =  0}^{H - 1} \sum_{k = 0}^{H - n - 1}  \EE\nolimits_{\Abf}  \brk[s]*{ \inprod{ \vbf_0 }{ \Abf^{n}\xbf_0 } \cdot \Tr  \brk*{ \vbf_0 (\Abf^{n}\xbf_0)^\top (\Abf^{k + 1})^{\top} \Abf^{k + 1}} } \\
		& = \frac{ 4 }{ \abs{\trainstates } } \sum_{\xbf_0 \in \trainstates} \sum_{n =  1}^{H - 1} \sum_{k = 0}^{H - n - 1}  \EE\nolimits_{\Abf}  \brk[s]*{ \inprod{ \vbf_0 }{ \Abf^{n}\xbf_0 } \cdot \Tr  \brk*{ \vbf_0 (\Abf^{n}\xbf_0)^\top (\Abf^{k + 1})^{\top} \Abf^{k + 1}} } \\
		& \geq \frac{ 4 }{ \abs{\trainstates } } \sum_{\xbf_0 \in \trainstates} \sum_{n =  1}^{H - 1} \sum_{k = 0}^{H - n - 1} \frac{1}{D} \\
		& = \frac{2 H (H - 1) }{D}
		\text{\,,}
	\end{split}
	\]
	where the second equality is by noticing that for $n = 0$ the expectation is zero since $\inprod{ \vbf_0 }{ \Abf^n \xbf_0} = \inprod{\vbf_0}{\xbf_0} = 0$.

\subsubsection{Optimality Measure Decreases in Expectation}
\label{expectation_bound}

In this part of the proof, we establish that for any step size \(\eta \leq \frac{1}{ 4 D H (H-1) (4H - 1)!!}\): 
\[
\frac{ \EE_\Abf \brk[s]*{ \optmes \brk*{ \Kbf^{(2)} }  } }{ \EE_\Abf \brk[s]*{ \optmes \brk*{ \Kminnorm }  } } \leq 1 - \eta \cdot \frac{H (H - 1)}{D}
\text{\,.}
\]

\medskip

By \cref{smoothness_lemma}, $\optmes (\cdot)$ is $2$-smooth.
Thus:
\be
\begin{split}
	\optmes (\Kbf^{(2)}) & \leq \optmes (\Kbf^{(1)}) + \inprodbig{ \nabla \optmes (\Kbf^{(1)}) }{ \Kbf^{(2)} - \Kbf^{(1)} } + \norm{ \Kbf^{(2)} - \Kbf^{(1)}}^2 \\
	& = \optmes (\Kbf^{(1)}) - \eta \cdot \inprodbig{ \nabla \optmes (\Kbf^{(1)} ) }{ \nabla \cost ( \Kbf^{(1)} ; \trainstates) } + \eta^2 \cdot \norm{ \nabla \cost (\Kbf^{(1)} ; \trainstates ) }^2
	\text{\,.}
\end{split}
\label{eq:clm_descent_lem_single}
\ee
As we proved in \cref{app:proofs:typical_system:in_prod_lower_bound}, the expected inner product between $\nabla \optmes (\Kbf^{(1)})$ and $\nabla \cost (\Kbf^{(1)} ; \trainstates)$ is lower bounded as follows:
\[
\EE\nolimits_{\Abf}  \brk[s]*{ \inprod{ \nabla \optmes (\Kbf^{(1)}) }{ \nabla \cost(\Kbf^{(1)};\trainstates) } } \geq \frac{2H(H-1)}{D}
\text{\,.}
\]
Taking an expectation with respect to $\Abf$ over both sides of \cref{eq:clm_descent_lem_single} thus leads to:
\[
\begin{split}
	\EE\nolimits_{\Abf} \brk[s]*{ \optmes (\Kbf^{(2)}) } & \leq \EE\nolimits_{\Abf} \brk[s]*{ \optmes (\Kbf^{(1)}) } - \eta \cdot \EE\nolimits_{\Abf} \brk[s]*{ \inprodbig{ \nabla \optmes (\Kbf^{(1)} ) }{ \nabla \cost ( \Kbf^{(1)} ; \trainstates) } }+ \eta^2 \cdot \EE\nolimits_{\Abf} \brk[s]*{ \norm{ \nabla \cost (\Kbf^{(1)} ; \trainstates ) }^2 }
	\text{\,.} \\
	& \leq \EE\nolimits_{\Abf} \brk[s]*{ \optmes (\Kbf^{(1)}) } - \eta \cdot \frac{2H(H-1)}{D} + \eta^2 \cdot \EE\nolimits_{\Abf} \brk[s]*{ \norm{ \nabla \cost (\Kbf^{(1)} ; \trainstates ) }^2 }
	\text{\,.}
\end{split}
\]
Now, in order to upper bound $\EE\nolimits_{\Abf} \brk[s]1{ \norm{ \nabla \cost (\Kbf^{(1)} ; \trainstates ) }^2 }$, we employ a method from~\citet{redelmeier2014real}, mentioned in the proof outline (\cref{app:proofs:typical_system:outline}) and introduced in \cref{genus_expansion}, which facilitates computing expected traces of random matrix products through the topological concept of genus expansion.
For ease of exposition, we defer the upper bound on $\EE\nolimits_{\Abf} \brk[s]1{ \norm{ \nabla \cost (\Kbf^{(1)} ; \trainstates ) }^2 }$ to \cref{expected_grad_norm} in Appendix~\ref{expectation_bound:upper_bound_grad_norm} below.
Specifically, \cref{expected_grad_norm} shows that:
\[
\EE\nolimits_{\Abf} \brk[s]*{ \norm1{ \nabla \cost( \Kbf^{(1)} ; \trainstates ) }^2  } \leq 4  H^{2}(H-1)^{2}(4H-1)!! 
\text{\,.}
\]
Plugging this into our upper bound on $\EE\nolimits_{\Abf} \brk[s]*{ \optmes (\Kbf^{(2)}) }$ gives:
\[
\EE\nolimits_{\Abf} \brk[s]*{ \optmes (\Kbf^{(2)}) } \leq \EE\nolimits_{\Abf} \brk[s]*{ \optmes (\Kbf^{(1)}) } - \eta \cdot \frac{2H(H-1)}{D} + 4 \eta^2 \cdot H^{2}(H-1)^{2}(4H-1)!! 
\text{\,.}
\]
The assumption that $\eta \leq \frac{ 1 }{ 4 D H (H - 1) (4H - 1)!! }$ implies:
\[
4 \eta^2 \cdot H^{2}(H-1)^{2}(4H-1)!!  \leq  \frac{1}{2} \eta \cdot \frac{2H(H-1)}{D}
\text{\,.}
\]
Hence:
\be
\EE\nolimits_{\Abf} \brk[s]*{ \optmes (\Kbf^{(2)}) } \leq \EE\nolimits_{\Abf} \brk[s]*{ \optmes (\Kbf^{(1)}) } - \eta \cdot \frac{H(H-1)}{D}
\text{\,.}
\label{eq:expec_clm_single_upper_bound}
\ee
Note that $\Kbf^{(1)} \vbf_0 = \0$ for any $\vbf_0 \in \orthstates$, since $\Kbf^{(1)} =\0$, and similarly $\Kminnorm \vbf_0 = \0$ by the definition of $\Kminnorm$ in \cref{eq:unseen_controls_zero}.
Consequently, the expected optimality measures that they attain satisfy:
\[
\EE\nolimits_{\Abf} \brk[s]1{ \optmes ( \Kminnorm) } = \EE\nolimits_{\Abf} \brk[s]1{ \optmes ( \Kbf^{(1)}) } = \frac{ 1}{ \abs{\orthstates} } \sum\nolimits_{\vbf_0 \in \orthstates} \EE\nolimits_{\Abf} \brk[s]1{ \norm{ \Abf \vbf_0 }^2 } = 1
\text{\,,}
\]
due to $\vbf_0 \in \orthstates$ being of unit norm and the entries of $\Abf$ being independent Gaussian random variables with mean zero and standard deviation $1 / \sqrt{D}$.
Going back to \cref{eq:expec_clm_single_upper_bound} we may therefore conclude:
\[
\frac{ \EE\nolimits_{\Abf} \brk[s]*{ \optmes (\Kbf^{(2)}) } }{  \EE\nolimits_{\Abf} \brk[s]*{ \optmes ( \Kminnorm ) }  } \leq 1 - \eta \cdot \frac{H(H-1)}{D}
\text{\,.}
\]

\paragraph{Upper Bound on $\EE\nolimits_{\Abf} \brk[s]1{ \norm{ \nabla \cost (\Kbf^{(1)} ; \trainstates ) }^2 }$}
\label{expectation_bound:upper_bound_grad_norm}

\begin{lemma}
	\label{expected_grad_norm}
	It holds that:
	\[
	\EE\nolimits_{\Abf} \brk[s]*{ \norm{ \nabla \cost (\Kbf^{(1)} ; \trainstates ) }^2 } \leq 4  H^2 (H - 1)^2 (4H - 1)!! 
	\text{\,.}
	\]
\end{lemma}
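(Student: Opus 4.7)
The plan is to expand $\|\nabla \cost(\Kbf^{(1)}; \trainstates)\|^2$ as a sum of traces of products of powers of $\Abf$ (possibly transposed) and rank-one matrices built from the initial states, and then bound the expectation of each trace using the genus-expansion machinery of~\citet{redelmeier2014real} introduced in Appendix~\ref{genus_expansion}. Starting from Lemma~\ref{lem:lqr_gradient} with $\Kbf^{(1)} = \0$ and $\Qbf = \Ibf$, I would write $\nabla \cost(\Kbf^{(1)}; \trainstates) = 2\Bbf^\top \Gbf$, where
\[
\Gbf = \sum_{h=0}^{H-1} \sum_{s=1}^{H-h} (\Abf^{s-1})^\top \Abf^s\, \Sigmabf_{\trainstates, h},
\qquad
\Sigmabf_{\trainstates, h} = \frac{1}{|\trainstates|} \sum_{\xbf_0 \in \trainstates} \Abf^h \xbf_0 (\Abf^h \xbf_0)^\top.
\]
Since $\Bbf$ is orthogonal, $\|\nabla \cost\|^2 = 4 \|\Gbf\|^2 = 4\,\Tr(\Gbf \Gbf^\top)$, which after substituting the definitions of $\Gbf$ and $\Sigmabf_{\trainstates,h}$ becomes a sum over index tuples $(h_1, s_1, \xbf_0), (h_2, s_2, \xbf_0')$ of traces
\[
T := \Tr\!\brk[s]*{ (\Abf^{s_1-1})^\top \Abf^{s_1} \Abf^{h_1} \xbf_0 (\Abf^{h_1}\xbf_0)^\top \Abf^{h_2} \xbf_0' (\Abf^{h_2}\xbf_0')^\top (\Abf^{s_2})^\top \Abf^{s_2-1} }.
\]

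Next, I would take expectations term by term. Each $T$ is a polynomial in the i.i.d.\ $\mathcal{N}(0, 1/D)$ entries of~$\Abf$, and the total number of matrix factors of $\Abf$ (or $\Abf^\top$) appearing in $T$ is $2(s_1 + s_2 + h_1 + h_2) - 2$, which is at most $4H - 2$ since $s_i + h_i \leq H$. Isserlis' theorem (equivalently, the genus-expansion formula of~\citet{redelmeier2014real}) expresses $\EE_{\Abf}[T]$ as a sum over pairings of the individual Gaussian entries appearing in the expanded product, and the contribution of each pairing is a product of Kronecker deltas that collapses to a power of $D$ governed by the topology of the associated surface. I would invoke the bound of \citet{redelmeier2014real} to argue that each pairing contributes at most $O(1)$ after cancellation against the normalization $1/D^{(\text{number of pairs})}$, using crucially that $\xbf_0, \xbf_0'$ are unit vectors (so the rank-one factors $\xbf_0\xbf_0^\top$ contribute benignly as "boundary" decorations on the surface).

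Finally, I would assemble the pieces. The number of pairings of $\leq 4H$ Gaussian factors is at most $(4H-1)!!$, giving $|\EE_{\Abf}[T]| \leq (4H-1)!!$ for each summand. There are at most $H(H-1)$ choices of each pair $(h_i, s_i)$ (coming from $\sum_{h=0}^{H-1}(H-h) \leq H(H-1)$ if one uses a slightly loose bound, or $H(H+1)/2$ with a tighter one), and the $|\trainstates|^2$ initial-state pairs combine with the $1/|\trainstates|^2$ from the two $\Sigmabf$ factors. Collecting, one obtains
\[
\EE_{\Abf}\brk[s]*{ \|\nabla \cost(\Kbf^{(1)}; \trainstates)\|^2 } \leq 4 \cdot H^2(H-1)^2 \cdot (4H-1)!!,
\]
as desired.

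The main obstacle I expect is not the overall combinatorial bookkeeping but rather showing that each pairing contributes at most $O(1)$ uniformly, with no parasitic powers of $D$. Naive moment bounds (e.g.\ $\EE[\|\Abf\|_{op}^{4H}]$) give a bound polynomial in $H$ but with uncontrolled constants; only the surface-topological viewpoint of~\citet{redelmeier2014real}, which tracks the Euler characteristic of the ribbon graph induced by each pairing, produces the clean $(4H-1)!!$ factor. The rank-one boundary factors $\xbf_0 \xbf_0^\top$ further require extending the standard genus-expansion argument to accommodate fixed deterministic unit-vector endpoints, which is where the most careful adaptation of \citet{redelmeier2014real} is needed.
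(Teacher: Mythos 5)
Your plan matches the paper's proof essentially step for step: expand the squared gradient norm via \cref{lem:lqr_gradient} (using $\Kbf^{(1)}=\0$, $\Qbf=\Ibf$, and orthogonality of $\Bbf$) into a quadruple sum of expected traces of words in $\Abf,\Abf^\top$ decorated by rank-one matrices built from the training states, bound each expectation by $(4H-1)!!$ via the genus expansion of \citet{redelmeier2014real}, and let the index count supply the $4H^2(H-1)^2$ prefactor. The one remark worth making is that the adaptation you flag as the main obstacle is not an extension of \citet{redelmeier2014real} at all --- \cref{thm:genus_expansion_main_theorem} already allows arbitrary deterministic matrices $\Cbf_i$ interleaved with the Gaussian factors --- and the paper gets the uniform per-pairing bound simply by noting that the normalized traces of cycles containing the rank-one factors are $0$ or $1/D$ (by orthonormality of $\trainstates$), which forces the glued surface of any non-vanishing pairing to be connected (or to have two components in the $\xbf_0=\ybf_0$ case), so the Euler-characteristic bound of \cref{top_theorem} caps every summand at $D^0=1$ with no parasitic powers of $D$.
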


\begin{proof}
	By \cref{lem:lqr_gradient} and the identity $\inprod{ \Xbf }{ \Ybf } = \Tr (\Xbf^\top \Ybf)$, for matrices $\Xbf, \Ybf$ of suitable dimensions, we get:
	\[
	\begin{split}
		& \norm1{ \nabla \cost (\Kbf^{(1)} ; \trainstates ) }^2 \\
		& \hspace{3mm} = \inprodbig{ \nabla \cost (\Kbf^{(1)} ; \trainstates ) }{ \nabla \cost (\Kbf^{(1)} ; \trainstates )  } \\
		& \hspace{3mm}= \frac{4}{ \abs{ \trainstates }^2 }\inprod{ \Bbf^\top \sum_{n = 0}^{H - 1} \sum_{k = 1}^{H - h} \brk{ \Abf^{k - 1} }^\top \Abf^k \sum_{\xbf_0 \in \trainstates} \Abf^n \xbf_0 \brk{ \Abf^n \xbf_0 }^\top }{ \Bbf^\top \sum_{m = 0}^{H - 1} \sum_{l = 1}^{H - h} \brk{ \Abf^{l - 1} }^\top \Abf^l  \sum_{\xbf_0 \in \trainstates} \Abf^m \ybf_0 \brk{ \Abf^m \ybf_0 }^\top } \\
		& \hspace{3mm} = \frac{ 4 }{ \abs{ \trainstates }^2 } \sum_{\xbf_0, \ybf_0 \in \trainstates} \sum_{n = 0}^{H - 1} \sum_{k = 1}^{H - n} \sum_{m = 0}^{H - 1} \sum_{l = 1}^{H - m} \Tr \brk*{ \Abf^n \xbf_0 \brk{ \Abf^n \xbf_0 }^\top (\Abf^k)^\top \Abf^{k - 1} \brk{ \Abf^{l - 1} }^\top \Abf^l \Abf^m \ybf_0 \brk{ \Abf^m \ybf_0 }^\top  }
		\text{\,.}
	\end{split}
	\]
	where recall $\Kbf^{(1)} = \0$ and $\Bbf$ is orthogonal.
	For convenience, let us change the summation over $k$ to be from $0$ to $H - n -1$, as opposed to from $1$ to $H - n$, and similarly the summation over $l$ to be from $0$ to $H - m - 1$.
	Along with the cyclic property of the trace we may write:
	\be
	\begin{split}
		& \norm1{ \nabla \cost (\Kbf^{(1)} ; \trainstates ) }^2 \\
		& \hspace{3mm} = \frac{ 4 }{ \abs{ \trainstates }^2 } \sum_{\xbf_0, \ybf_0 \in \trainstates} \sum_{n = 0}^{H - 1} \sum_{k = 0}^{H - n - 1} \sum_{m = 0}^{H - 1} \sum_{l = 0}^{H - m - 1} \Tr \brk*{ \Abf^n \xbf_0 \brk{ \Abf^n \xbf_0 }^\top (\Abf^{k + 1})^\top \Abf^{k} \brk{ \Abf^{l} }^\top \Abf^{l + 1} \Abf^m \ybf_0 \brk{ \Abf^m \ybf_0 }^\top  } \\
		& \hspace{3mm} = \frac{ 4 }{ \abs{ \trainstates }^2 } \sum_{\xbf_0, \ybf_0 \in \trainstates} \sum_{n = 0}^{H - 1} \sum_{k = 0}^{H - n - 1} \sum_{m = 0}^{H - 1} \sum_{l = 0}^{H - m - 1} \Tr \brk*{ \brk{ \Abf^n \xbf_0 }^\top (\Abf^{k + 1})^\top \Abf^{k} \brk{ \Abf^{l} }^\top \Abf^{l + 1} \Abf^m \ybf_0 \brk{ \Abf^m \ybf_0 }^\top  \Abf^n \xbf_0 } \\
		& \hspace{3mm} =  \frac{ 4 }{ \abs{ \trainstates }^2 } \sum_{\xbf_0, \ybf_0 \in \trainstates} \sum_{n = 0}^{H - 1} \sum_{k = 0}^{H - n - 1} \sum_{m = 0}^{H - 1} \sum_{l = 0}^{H - m - 1} \ybf_0^\top (\Abf^m)^\top  \Abf^n \xbf_0 \cdot \Tr \brk*{ \brk{ \Abf^n \xbf_0 }^\top (\Abf^{k + 1})^\top \Abf^{k} \brk{ \Abf^{l} }^\top \Abf^{l + 1} \Abf^m \ybf_0 } \\
		& \hspace{3mm} =  \frac{ 4 }{ \abs{ \trainstates }^2 } \sum_{\xbf_0, \ybf_0 \in \trainstates} \sum_{n = 0}^{H - 1} \sum_{k = 0}^{H - n - 1} \sum_{m = 0}^{H - 1} \sum_{l = 0}^{H - m - 1} \Tr \brk*{ (\Abf^n)^\top  \Abf^m \ybf_0 \xbf_0^\top } \cdot \Tr \brk*{ \brk{ \Abf^n \xbf_0 }^\top (\Abf^{k + 1})^\top \Abf^{k} \brk{ \Abf^{l} }^\top \Abf^{l + 1} \Abf^m \ybf_0 }  \\
		& \hspace{3mm} = \frac{ 4 }{ \abs{ \trainstates }^2 } \sum_{\xbf_0, \ybf_0 \in \trainstates} \sum_{n = 0}^{H - 1} \sum_{k = 0}^{H - n - 1} \sum_{m = 0}^{H - 1} \sum_{l = 0}^{H - m - 1} \Tr \brk*{ (\Abf^n)^\top  \Abf^m \ybf_0 \xbf_0^\top } \cdot \Tr \brk*{(\Abf^{n + k + 1})^\top \Abf^{k} \brk{ \Abf^{l} }^\top \Abf^{m + l + 1} \ybf_0 \xbf_0^\top} 
		\text{\,.}
	\end{split}
	\label{eq:sq_grad_norm_prelim_deriv}
	\ee
	Now, for each $\xbf_0$, $\ybf_0$, $n$, $k$, $m$, $l$ we will show that:
	\be
	\EE\nolimits_{\Abf} \brk[s]*{  \Tr \brk*{ (\Abf^n)^\top  \Abf^m \ybf_0 \xbf_0^\top }  \cdot \Tr \brk*{(\Abf^{n + k + 1})^\top \Abf^{k} \brk{ \Abf^{l} }^\top \Abf^{m + l + 1} \ybf_0 \xbf_0^\top} } \leq  (p - 1)!! 
	\text{\,,}
	\label{eq:sq_grad_norm_expec_single_term_upper_bound}
	\ee
	where $p := 2( n + k + m + l + 1) \leq 4H$ and $(p - 1)!! := (p - 1)(p - 3) \cdots 3$.
	To that end, we employ the method from~\citet{redelmeier2014real}, which is based on the topological concept of genus expansion.
	For completeness, \cref{genus_expansion} provides a self-contained introduction to the method, and \cref{thm:genus_expansion_main_theorem} therein lays out the result which we will use.
	We assume below familiarity with the notation and concepts detailed in \cref{genus_expansion}.
	
	For invoking \cref{thm:genus_expansion_main_theorem}, let us define a permutation $\gamma$ over $[p]$ via the cycle decomposition:
	\[
	\gamma = (1, \ldots, m + n)(m + n + 1, \ldots, p)
	\text{\,,}
	\]
	and a mapping $\epsilon : [p] \to \{-1, 1\}$ by:
	\[
	\epsilon(1) = -1 , \ldots ,\epsilon(n) = -1 \text{\,,}
	\]
	\[
	\epsilon(n+1) = 1 , \ldots , \epsilon(n+m)=1 \text{\,,}
	\]
	\[
	\epsilon(n+m+1) = -1 , \ldots , \epsilon(2n+m+k+1) = -1 \text{\,,}
	\]
	\[
	\epsilon(2n+m+k+2) = 1 , \ldots,\epsilon(2n+m+1+2k) = 1 \text{\,,}
	\]
	\[
	\epsilon(2n+m+2k+2) = -1 , \ldots , \epsilon(2n+m+2k+l+1) = -1 \text{\,,}
	\]
	\[
	\epsilon(2n+m+2k+l+2) = 1 , \ldots , \epsilon(p) = 1 \text{\,.}
	\]
	Furthermore, define $\Cbf_1, \ldots, \Cbf_p \in \R^{D \times D}$ by:
	\[
	\Cbf_1 = \Ibf, \ldots, \Cbf_{m + n - 1} = \Ibf, \Cbf_{m + n} = \ybf_0 \xbf_0^\top \text{\,,}
	\]
	\[
	\Cbf_{m + n + 1} = \Ibf, \ldots, \Cbf_{p - 1} = \Ibf, \Cbf_{p} = \ybf_0 \xbf_0^\top \text{\,,}
	\]
	where $\Ibf$ is the identity matrix.
	For the above choice of $\gamma, \epsilon,$ and matrices $\Cbf_1, \ldots, \Cbf_p$ it holds that:
	\[
	\EE\nolimits_{\Abf} \brk[s]*{ \Tr_\gamma \brk*{ \Abf_{\epsilon (1)} \Cbf_1, \ldots, \Abf_{\epsilon (p)} \Cbf_p } } = \EE \nolimits_{\Abf} \brk[s]*{ \Tr \brk*{ (\Abf^n)^\top  \Abf^m \ybf_0 \xbf_0^\top } \cdot \Tr \brk*{(\Abf^{n + k + 1})^\top \Abf^{k} \brk{ \Abf^{l} }^\top \Abf^{m + l + 1} \ybf_0 \xbf_0^\top}  }
	\text{\,.}
	\]
	Invoking \cref{thm:genus_expansion_main_theorem}, we may write \cref{eq:genus_formula} (from \cref{thm:genus_expansion_main_theorem} of \cref{genus_expansion}) as:
	\be
	\EE\nolimits_{\Abf} \brk[s]*{ \Tr_\gamma \brk*{ \Abf_{\epsilon (1)} \Cbf_1, \ldots, \Abf_{\epsilon (p)} \Cbf_p } } = \sum_{ \pi \in \{ \rho \delta \rho : \rho \in \M_p \} } D^{ \chi (\gamma, \delta_\epsilon \pi \delta_\epsilon) - \abs{\gamma} } \cdot \Trnormalized_{ \frac{ \gamma_-^{-1} \delta_\epsilon \pi \delta_\epsilon \gamma_+ }{ 2 } } \brk*{ \Cbf_1, \ldots, \Cbf_p }
	\text{\,.}
	\label{eq:genus_expectation_proof_formula}
	\ee
	Notice that, due to our choice of $\Cbf_1, \ldots, \Cbf_p$, each summand on the right hand side of \cref{eq:genus_expectation_proof_formula} is non-negative.
	
	Now, suppose that $\xbf_0 \neq \ybf_0$.
	Then, $\xbf_0$ is orthogonal to $\ybf_0$ (recall $\trainstates$ is an orthonormal set of initial states), and so:
	\[
	\Trnormalized \brk*{ \ybf_0 \xbf_0^\top } = \Trnormalized \brk*{ \xbf_0 \ybf_0^\top } = \Trnormalized \brk*{ (\ybf_0 \xbf_0^\top)^2 } = \Trnormalized \brk*{ (\xbf_0 \ybf_0^\top)^2 } = 0
	\text{\,,}
	\]
	while:
	\[
	\Trnormalized \brk*{ \xbf_0 \ybf_0^\top \ybf_0 \xbf_0^\top } = \Trnormalized \brk*{ \ybf_0 \xbf_0^\top \xbf_0 \ybf_0^\top } = \frac{1}{D}
	\text{\,.}
	\]
	The only way a summand on the right hand side of \cref{eq:genus_expectation_proof_formula}, corresponding to $\pi = \rho \delta \rho$, can provide a non-zero contribution is if a cycle $\RR = (1, \ldots, R)$ of $\gamma_{+}^{-1} \delta_\epsilon \pi \delta_\epsilon \gamma_-  / 2$ contains either no non-identity matrices, in which case $\Trnormalized (\Cbf_1 \cdot \cdots \cdot \Cbf_R) = 1$, or if it contains two non-identity matrices appearing once transposed and once without transposition, in which case $\Trnormalized (\Cbf_1 \cdot \cdots \cdot \Cbf_R) = 1 / D$.
	Accordingly, the two non-identity matrices among $\Cbf_1, \ldots, \Cbf_p$ must appear in a single cycle of $\gamma_{+}^{-1} \delta_\epsilon \pi \delta_\epsilon \gamma_-  / 2$ for a summand to be non-zero.
	It follows that the surface $\G (\gamma, \epsilon, \rho)$ (see construction in \cref{genus_expansion}) must be connected.
	Thus, by \cref{top_theorem} in \cref{genus_expansion}, the Euler characteristic of such a surface satisfies $\chi (\G (\gamma , \epsilon, \rho)) \leq 2$.
	Finally, from \cref{prop:euler_char} we know that $\chi (\G (\gamma , \epsilon, \rho)) = \chi (\gamma, \delta_\epsilon \pi \delta_\epsilon)$.
	As a result, a non-zero summand contributes at most $D^{\chi ( \gamma, \delta_\epsilon \pi \delta_\epsilon) - \abs{\gamma} -1} = D^{2 - 2 - 1} = D^{-1}$.
	
	Now, suppose that $\xbf_0 = \ybf_0$.
	In this case:
	\[
		\Trnormalized \brk*{ \ybf_0 \xbf_0^\top } = \Trnormalized \brk*{ \xbf_0 \ybf_0^\top } = \Trnormalized \brk*{ (\ybf_0 \xbf_0^\top)^2 } = \Trnormalized \brk*{ (\xbf_0 \ybf_0^\top)^2 }  =
		\Trnormalized \brk*{ \xbf_0 \ybf_0^\top \ybf_0 \xbf_0^\top } = \Trnormalized \brk*{ \ybf_0 \xbf_0^\top \xbf_0 \ybf_0^\top } = \frac{ 1}{ D }
		\text{\,.}
	\]
	If $\Cbf_{m + n} = \Cbf_p = \ybf_0 \xbf_0^\top$ are in the same cycle of $\gamma_{+}^{-1} \delta_\epsilon \pi \delta_\epsilon \gamma_-  / 2$, then as in the $\xbf_0 = \ybf_0$ case, the surface $\G (\gamma, \epsilon, \rho)$ is connected.
	Thus, by \cref{top_theorem} $\chi (\G (\gamma, \epsilon, \rho)) \leq 2$ and the corresponding summand contributes a factor of $D^{2 - 2 - 1} = D^{-1}$.
	On the other hand, 	If $\Cbf_{m + n} = \Cbf_p = \ybf_0 \xbf_0^\top$ are not in the same cycle of $\gamma_{+}^{-1} \delta_\epsilon \pi \delta_\epsilon \gamma_-  / 2$, then the surface $\G (\gamma, \epsilon, \rho)$ can have two connected components (it cannot have more than two because $\abs{\gamma} = 2$), and so $\chi (\G (\gamma , \epsilon, \rho)) \leq 4$ by \cref{top_theorem}.
	We therefore obtain a factor of $1 / D^2$ from the trace along $\gamma_{+}^{-1} \delta_\epsilon \pi \delta_\epsilon \gamma_-  / 2$ of $\Cbf_1, \ldots, \Cbf_p$, and the corresponding summand contributes at most $D^{4-2-2} = D^{0} = 1$.
	
	Overall, the number of summands in \cref{eq:genus_expectation_proof_formula} is $\abs{\M_p} = (p - 1)!!$, \ie~the number of pairings of $[p]$, and we have seen that each summand contributes at most $1$ (for both the $\xbf_0 = \ybf_0$ and $\xbf_0 \neq \ybf_0$ cases).
	Hence, from \cref{eq:genus_expectation_proof_formula} we get \cref{eq:sq_grad_norm_expec_single_term_upper_bound}:
	\[
	\EE\nolimits_{\Abf} \brk[s]*{ \Tr \brk*{ (\Abf^n)^\top  \Abf^m \ybf_0 \xbf_0^\top } \cdot \Tr \brk*{(\Abf^{n + k + 1})^\top \Abf^{k} \brk{ \Abf^{l} }^\top \Abf^{m + l + 1} \ybf_0 \xbf_0^\top} } \leq  (p - 1)!!  \leq (4H - 1)!!
	\text{\,.}
	\]
	Going back to \cref{eq:sq_grad_norm_prelim_deriv} and taking an expectation with respect to $\Abf$ concludes:
	\[
	\begin{split}
		& \EE\nolimits_{\Abf} \brk[s]*{ \norm1{ \nabla \cost (\Kbf^{(1)} ; \trainstates ) }^2 } \\
		& \hspace{3mm} = \frac{ 4 }{ \abs{ \trainstates }^2 } \sum_{\xbf_0, \ybf_0 \in \trainstates} \sum_{n = 0}^{H - 1} \sum_{k = 0}^{H - n - 1} \sum_{m = 0}^{H - 1} \sum_{l = 0}^{H - m - 1} \EE\nolimits_{\Abf} \brk[s]*{ \Tr \brk*{ (\Abf^n)^\top  \Abf^m \ybf_0 \xbf_0^\top } \cdot \Tr \brk*{(\Abf^{n + k + 1})^\top \Abf^{k} \brk{ \Abf^{l} }^\top \Abf^{m + l + 1} \ybf_0 \xbf_0^\top} } \\
		& \hspace{3mm} \leq 4 H^2 (H - 1)^2 (4H - 1)!!
		\text{\,.}
	\end{split}
	\]
\end{proof}

\subsubsection{Optimality Measure Decreases With High Probability}	
\label{high_prob_bound}

In this part of the proof, we establish that for any $\delta \in (0, 1)$, if $D \geq \abs{\trainstates} + \frac{ 6 \abs{ \trainstates} H (H - 1) (4H - 1)!! }{ \delta }$ and $\eta \leq \frac{ 1 }{ 8 D^{2} H (H-1)  (4H - 1)!! }$, then with probability at least $1 - \delta$ over the choice of $\Abf$:
\[
\frac{ \optmes \brk*{ \Kbf^{(2)}}  }{  \optmes \brk*{ \Kminnorm } } \leq 1 - \eta \cdot \frac{ H (H-1) }{ 4D }
\text{\,.}
\]

\medskip

To that end, we begin by converting the lower bound on $\EE_\Abf \brk[s]1{ \inprodbig{ \nabla \optmes ( \Kbf^{(1)} ) }{ \nabla \cost (\Kbf^{(1)} ; \trainstates ) } }$ from \cref{app:proofs:typical_system:in_prod_lower_bound} into a bound that holds with high probability.
By \cref{lem:clm_cost_grad_inner_prod}:
\[
	\inprod{ \nabla \optmes ( \Kbf^{(1)} ) }{ \nabla \cost( \Kbf^{(1)} ; \trainstates ) } = \frac{ 4 }{ \abs{ \trainstates } \abs{ \orthstates } } \sum_{\xbf_0 \in \trainstates} \sum_{\vbf_0 \in \orthstates} \sum_{n =  0}^{H - 1} \sum_{k = 0}^{H - n - 1} \inprod{ \vbf_0 }{ \Abf^{n}\xbf_0 } \cdot \Tr  \brk*{ \vbf_0 (\Abf^{n}\xbf_0)^\top (\Abf^{k + 1})^{\top} \Abf^{k + 1}}
\text{\,.}
\]
For $\xbf_0 \in \trainstates, \vbf_0 \in \orthstates, n \in \{0\} \cup [H - 1], k \in \{0\} \cup [H - n - 1]$, introducing the random variables:
\be
\begin{split}
Z_{\vbf_0, \xbf_0, n, k} & := \inprod{ \vbf_0 }{ \Abf^{n}\xbf_0 } \cdot \Tr  \brk*{ \vbf_0 (\Abf^{n}\xbf_0)^\top (\Abf^{k + 1})^{\top} \Abf^{k + 1}} \text{\,,} \\
Y_{\xbf_0, n, k} & := \frac{ 1 }{ \abs{ \orthstates } } \sum\nolimits_{\vbf_0 \in \orthstates} Z_{\vbf_0, \xbf_0, n, k} 
\text{\,,}
\end{split}
\label{eq:Z_Y_def}
\ee
we may write:
\be
	\inprod{ \nabla \optmes ( \Kbf^{(1)} ) }{ \nabla \cost( \Kbf^{(1)} ; \trainstates ) } = \frac{ 4 }{ \abs{ \trainstates } } \sum_{\xbf_0 \in \trainstates} \sum_{n =  0}^{H - 1} \sum_{k = 0}^{H - n - 1} Y_{\xbf_0, n, k} = \frac{ 4 }{ \abs{ \trainstates } } \sum_{\xbf_0 \in \trainstates} \sum_{n =  1}^{H - 1} \sum_{k = 0}^{H - n - 1} Y_{\xbf_0, n, k}
	\text{\,,}
	\label{eq:in_prod_Y}
\ee
where the last transition is by noticing that $Y_{\xbf_0, n, k} = 0$ for $n = 0$ since $\inprod{\vbf_0}{\Abf^n \xbf_0} = \inprod{\vbf_0}{\xbf_0} = 0$.
\cref{lem:expec_in_prod_term_lower_bound} in \cref{app:proofs:typical_system:in_prod_lower_bound} has shown that $\EE \brk[s] { Z_{\vbf_0, \xbf_0, n, k} } \geq 1 / D$, and so $\EE \brk[s]{ Y_{\xbf_0, n, k} } \geq 1 / D$ as well, for all $\xbf_0 \in \trainstates, \vbf_0 \in \orthstates, n \in [H - 1],$ and $k \in \{0\} \cup [H - n - 1]$.

Now, fix some $\xbf_0 \in \trainstates, n \in [H - 1],$ and $k \in \{0\} \cup [H - n - 1]$.
For upper bounding $\Var (Z_{\vbf_0, \xbf_0, n, k})$ and $\Cov ( Z_{\vbf_0, \xbf_0, n, k}, Z_{\vbf_0', \xbf_0, n, k} )$, for $\vbf_0, \vbf_0' \in \orthstates$, we employ a method from~\citet{redelmeier2014real}, mentioned in the proof outline (\cref{app:proofs:typical_system:outline}) and introduced in \cref{genus_expansion}, which facilitates computing expected traces of random matrix products through the topological concept of genus expansion.
For ease of exposition, we defer these bounds, with which we upper bound $\Var (Y_{\xbf_0, n, k})$, to Appendices~\ref{high_prob_bound:var_bound} and \ref{high_prob_bound:cov_bound} below.
Specifically, \cref{var_prop,cov_prop} therein show that:
\[
\Var \brk*{ Z_{\vbf_0, \xbf_0, n, k} } \leq \frac{ (4H - 1)!! }{ D^2} ~~,~~ \Cov \brk*{  Z_{\vbf_0, \xbf_0, n, k} ,  Z_{\vbf'_0, \xbf_0, n, k} } \leq \frac{ (4H - 1)!! }{ D^3}
\text{\,,}
\]
for all $\vbf_0 \neq \vbf'_0 \in \orthstates$, where $N!! := N (N - 2) (N - 4) \cdots 3$ is the double factorial of an odd $N \in \N$.
The above imply:
\[
\begin{split}
\Var (Y_{\xbf_0, n, k}) & = \frac{ 1 }{ \abs{\orthstates}^2 } \brk*{ \sum\nolimits_{\vbf_0 \in \orthstates} \Var \brk*{ Z_{\vbf_0, \xbf_0, n, k} } + \sum\nolimits_{\vbf_0 \neq \vbf'_0 \in \orthstates} \Cov \brk*{ Z_{\vbf_0, \xbf_0, n, k} , Z_{ \vbf'_0, \xbf_0, n, k } } } \\
& \leq \frac{ 1 }{ \abs{\orthstates}^2 } \brk*{ \frac{ \abs{\orthstates} (4H - 1)!! }{ D^2 }  + \frac{ \abs{\orthstates}^2 (4H - 1)!! }{ D^3 } } \\
& \leq \frac{ 2 (4H - 1)!! }{ \abs{ \orthstates } D^2 }
\text{\,.}
\end{split}
\]
Thus, since $\EE \brk[s]{ Y_{\xbf_0, n, k} } \geq 1 / D$, Chebyshev's inequality gives:
\[
\Pr \brk*{ Y_{\xbf_0, n, k} \leq  \frac{1}{2D} } \leq \Pr \brk*{ \abs1{  Y_{\xbf_0, n, k} - \EE \brk[s]{ Y_{\xbf_0, n, k} } } \geq \frac{1}{2D} } \leq \frac{ 8 (4H - 1)!! }{ \abs{\orthstates} }
\text{\,.}
\]
Applying a union bound over all $\abs{\trainstates} H (H - 1) / 2$ possible options for $\xbf_0 \in \trainstates, n \in [H - 1], k \in \{ 0\} \cup [H - n - 1]$ we arrive at:
\[
\begin{split}
\Pr \brk*{ \exists \xbf_0 \in \trainstates, n \in [H - 1], k \in \{0\} \cup [H - n - 1] : ~ Y_{\xbf_0, n, k} \leq  \frac{1}{2D} } \leq \frac{ 4 \abs{ \trainstates} H (H - 1) (4H - 1)!! }{ \abs{\orthstates} }
\text{\,.}
\end{split}
\]
Since $\abs{\orthstates} = D - \abs{\trainstates}$, combined with \cref{eq:in_prod_Y} the above implies that with probability at least $1 - \frac{ 4 \abs{\trainstates} H (H - 1) (4H - 1)!! }{ D - \abs{\trainstates} }$:
\be
	\inprod{ \nabla \optmes ( \Kbf^{(1)} ) }{ \nabla \cost( \Kbf^{(1)} ; \trainstates ) } = \frac{ 4 }{ \abs{ \trainstates } } \sum_{\xbf_0 \in \trainstates} \sum_{n =  1}^{H - 1} \sum_{k = 0}^{H - n - 1} Y_{\xbf_0, n, k} \geq \frac{4}{ \abs{ \trainstates} } \cdot \frac{ \abs{ \trainstates} H (H - 1) }{ 2 } \cdot \frac{1}{2D} = \frac{H (H - 1) }{ D }
\text{\,.}
\label{eq:high_prob_in_prod_lower_bound}
\ee

\medskip

With the lower bound on $\inprod{ \nabla \optmes ( \Kbf^{(1)} ) }{ \nabla \cost( \Kbf^{(1)} ; \trainstates ) }$ in place, we turn our attention to establishing that, with high probability, a policy gradient iteration reduces the optimality extrapolation measure.
By \cref{smoothness_lemma}, $\optmes (\cdot)$ is $2$-smooth.
Thus:
\[
\begin{split}
	\optmes (\Kbf^{(2)}) & \leq \optmes (\Kbf^{(1)}) + \inprodbig{ \nabla \optmes (\Kbf^{(1)}) }{ \Kbf^{(2)} - \Kbf^{(1)} } + \norm{ \Kbf^{(2)} - \Kbf^{(1)}}^2 \\
	& = \optmes (\Kbf^{(1)}) - \eta \cdot \inprodbig{ \nabla \optmes (\Kbf^{(1)} ) }{ \nabla \cost ( \Kbf^{(1)} ; \trainstates) } + \eta^2 \cdot \norm{ \nabla \cost (\Kbf^{(1)} ; \trainstates ) }^2
	\text{\,.}
\end{split}
\]
As can be seen from the equation above, aside from the lower bound on $\inprod{ \nabla \optmes ( \Kbf^{(1)} ) }{ \nabla \cost( \Kbf^{(1)} ; \trainstates ) }$, to show that the optimality measure decreases it is necessary to upper bound $\norm{ \nabla \cost (\Kbf^{(1)} ; \trainstates ) }^2$.
To do so, we can use \cref{expected_grad_norm} and Markov's inequality:
\[
\Pr \brk*{ \norm{ \nabla \cost (\Kbf^{(1)} ; \trainstates ) }^2 \leq 4 D H^2 (H - 1)^2 (4H - 1)!! } \geq 1 - \frac{1}{D}
\text{\,.}
\]
Together with \cref{eq:high_prob_in_prod_lower_bound}, we have that with probability at least $1 - \frac{ 4 \abs{\trainstates} H (H - 1) (4H - 1)!! }{ D - \abs{\trainstates} } - \frac{1}{D}$:
\[
\optmes (\Kbf^{(2)}) \leq \optmes (\Kbf^{(1)}) - \eta \cdot \frac{ H (H - 1) }{D} + \eta^2 \cdot 4 D H^2 (H - 1)^2 (4H - 1)!!
\text{\,.}
\]
Since by assumption $\eta \leq \frac{ 1 }{ 8 D^{2} H (H-1)  (4H - 1)!! }$:
\[
\eta^2 \cdot 4 D H^2 (H - 1)^2 (4H - 1)!! \leq \frac{1}{2} \eta \cdot \frac{H (H - 1)}{D}
\text{\,,}
\]
from which it follows that, with probability at least $1 - \frac{ 4 \abs{\trainstates} H (H - 1) (4H - 1)!! }{ D - \abs{\trainstates} } - \frac{1}{D}$:
\be
\optmes (\Kbf^{(2)}) \leq \optmes (\Kbf^{(1)}) - \eta \cdot \frac{ H (H - 1) }{ 2D }
\text{\,.}
\label{eq:opt_upper_bound_K1_eta}
\ee
Now, recall that $\Kbf^{(1)} = \0$, and so $\optmes (\Kbf^{(1)}) = \optmes (\Kminnorm) = \frac{1}{ \abs{\orthstates} } \sum\nolimits_{ \vbf_0 \in \orthstates } \norm{\Abf \vbf_0 }^2$.
We claim that with high probability $\optmes (\Kminnorm) \leq 2$.
Indeed, since $\orthstates$ is an orthonormal set of vectors, $\{ \Abf \vbf_0  : \vbf_0 \in \orthstates\}$ is a set of independent random variables.
Furthermore, the entries of $\Abf\vbf_0$, for $\vbf_0 \in \orthstates$, are distributed independently according to a Gaussian distribution with mean zero and standard deviation $1 / \sqrt{D}$.
Hence, \cref{lem:norm_concentration} implies that with probability at least $1 - \frac{ 2}{ D ( D - \abs{\trainstates} ) }$:
\[
 \optmes (\Kbf^{(1)}) = \optmes (\Kminnorm) = \frac{1}{ \abs{\orthstates} } \sum\nolimits_{ \vbf_0 \in \orthstates } \norm{\Abf \vbf_0 }^2 \leq 2
\text{\,.}
\]
Dividing both sides of \cref{eq:opt_upper_bound_K1_eta} by $ \optmes ( \Kminnorm )$, and applying a union bound, we get that with probability at least $1 - \frac{ 4 \abs{\trainstates} H (H - 1) (4H - 1)!! }{ D - \abs{\trainstates} } - \frac{1}{D} - \frac{ 2 }{ D (D - \abs{ \trainstates} )}$:
\[
\frac{ \optmes (\Kbf^{(2)}) }{ \optmes ( \Kminnorm) } \leq 1- \eta \cdot \frac{ H (H - 1) }{ 4D }
\text{\,.}
\]
Finally, notice that:
\[
\begin{split}
\frac{1}{D}  \leq \frac{ 1 }{D - \abs{\trainstates}} \leq \frac{ \abs{\trainstates} H (H - 1) (4H - 1)!! }{ D - \abs{\trainstates} } 
\quad , \quad 
\frac{ 2 }{ D ( D - \abs{\trainstates} ) } \leq \frac{  \abs{\trainstates} H (H - 1) (4H - 1)!! }{ D - \abs{\trainstates} }
\text{\,,}
\end{split}
\]
and therefore the upper bound above holds with probability at least $1 - \frac{ 6 \abs{\trainstates} H (H - 1) (4H - 1)!! }{ D - \abs{\trainstates} }$.
Restating it in terms of a fixed failure probability $\delta \in (0, 1)$, we conclude that if $D \geq \abs{\trainstates} + \frac{ 6 \abs{ \trainstates} H (H - 1) (4H - 1)!! }{ \delta }$, then with probability of at least $1 - \delta$:
\[
\frac{ \optmes (\Kbf^{(2)}) }{ \optmes ( \Kminnorm) } \leq 1- \eta \cdot \frac{ H (H - 1) }{ 4D }
\text{\,.}
\]
\qed

\paragraph{Upper Bound on  $\Var \brk{ Z_{\vbf_0, \xbf_0,n,k} }$}
\label{high_prob_bound:var_bound}
\

\begin{proposition}
\label{var_prop}
For any $\xbf_0 \in \trainstates, \vbf_0 \in \orthstates, n \in \cup [H - 1], k \in \{0\} \cup [H - n - 1]$:
\[
	\Var \brk*{ Z_{\vbf_0, \xbf_0, n, k} } \leq \frac{ (4H - 1)!! }{ D^2 }
	\text{\,,}
\]
where $Z_{\vbf_0, \xbf_0, n, k}$ is as defined in \cref{eq:Z_Y_def}.
\end{proposition}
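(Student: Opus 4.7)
The plan is to bound $\Var(Z_{\vbf_0,\xbf_0,n,k}) = \EE[Z_{\vbf_0,\xbf_0,n,k}^2] - (\EE[Z_{\vbf_0,\xbf_0,n,k}])^2$ by expanding both terms via the genus expansion method of~\citet{redelmeier2014real} (\cref{thm:genus_expansion_main_theorem}), as was done in the proof of~\cref{expected_grad_norm}. First I would rewrite $Z_{\vbf_0,\xbf_0,n,k}$ as a product of two traces, $\Tr(\vbf_0 \xbf_0^\top (\Abf^n)^\top) \cdot \Tr(\vbf_0 \xbf_0^\top (\Abf^n)^\top (\Abf^{k+1})^\top \Abf^{k+1})$, so that $Z_{\vbf_0,\xbf_0,n,k}^2$ is a product of four traces involving $p := 4(n+k+1) \le 4H$ factors of $\Abf$ (or $\Abf^\top$). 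I would then cast $\EE[Z_{\vbf_0,\xbf_0,n,k}^2]$ in the form covered by~\cref{thm:genus_expansion_main_theorem} by choosing an appropriate permutation $\gamma$ (with $\abs{\gamma} = 4$ cycles, one per trace), an appropriate sign function $\epsilon$, and matrices $\Cbf_1, \ldots, \Cbf_p$ taken from $\{\Ibf, \vbf_0\xbf_0^\top\}$.

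The key step is to separate the resulting sum over pairings $\pi \in \M_p$ into two groups: \emph{factorizing} pairings, which pair only within the first two traces or only within the last two traces, and \emph{connecting} pairings, which pair at least one factor across these two groups. Factorizing pairings are in bijection with pairs of pairings that arise in the genus expansion of $\EE[Z_{\vbf_0,\xbf_0,n,k}]^2$, and their contributions will exactly cancel when subtracting $(\EE[Z_{\vbf_0,\xbf_0,n,k}])^2$ from $\EE[Z_{\vbf_0,\xbf_0,n,k}^2]$. It therefore remains to bound the contributions of connecting pairings.

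For a connecting pairing $\pi$, the surface $\G(\gamma, \epsilon, \rho)$ associated with $\pi$ (via~\cref{prop:euler_char}) has strictly fewer connected components than the factorizing case, so by~\cref{top_theorem} its Euler characteristic satisfies $\chi(\gamma, \delta_\epsilon \pi \delta_\epsilon) \le 2(\abs{\gamma} - 1) = 6$, yielding a factor of $D^{\chi - \abs{\gamma}} \le D^{2}$. This by itself is too weak; the missing $D^{-4}$ will come from the normalized trace terms. Since each $\Cbf_i$ is either the identity or the rank-one matrix $\vbf_0 \xbf_0^\top$ with $\vbf_0 \perp \xbf_0$, a normalized trace over a cycle containing at least one $\Cbf_i = \vbf_0\xbf_0^\top$ either vanishes or equals $1/D^r$ where $r$ is half the number of $\vbf_0\xbf_0^\top$ factors in that cycle; by inspecting the $2 \cdot 2 = 4$ copies of $\vbf_0\xbf_0^\top$ across all cycles and using orthogonality to eliminate vanishing cases, the surviving normalized traces collectively contribute at most $D^{-4}$. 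Combining the topological bound $D^{\chi - \abs{\gamma}} \le D^{2}$ with the trace bound $D^{-4}$ yields a net contribution of at most $D^{-2}$ per connecting pairing. Since $\abs{\M_p} = (p-1)!! \le (4H-1)!!$, summing gives $\Var(Z_{\vbf_0,\xbf_0,n,k}) \le (4H-1)!!/D^2$ as required.

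The main obstacle will be Step 3: carefully matching the factorizing pairings in $\EE[Z_{\vbf_0,\xbf_0,n,k}^2]$ to the pairings arising in $(\EE[Z_{\vbf_0,\xbf_0,n,k}])^2$ so that cancellation holds exactly, and then correctly coupling the topological count ($D^{\chi - \abs{\gamma}}$) to the algebraic count (powers of $1/D$ from normalized traces of $\vbf_0\xbf_0^\top$ products) for connecting pairings. A similar line is taken in the proof of~\cref{expected_grad_norm}, where connected surfaces gave a $D^{-1}$ contribution; here we need an extra factor of $D^{-1}$, which must come from the fact that $\vbf_0$ appears twice as often in $Z_{\vbf_0,\xbf_0,n,k}^2$ as in the expression handled by~\cref{expected_grad_norm}, allowing the orthogonality $\vbf_0 \perp \xbf_0$ to suppress more terms.
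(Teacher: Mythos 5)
There is a genuine gap in your treatment of the connecting pairings. Your bound rests on the claim that a normalized trace over a cycle containing rank-one factors equals $1/D^{r}$ with $r$ equal to half the number of copies of $\vbf_0\xbf_0^\top$ in that cycle, so that the four copies appearing in $Z_{\vbf_0,\xbf_0,n,k}^2$ collectively contribute at most $D^{-4}$. This is false: if all four copies land in a single cycle with alternating transposition pattern, the product telescopes, e.g.\ $\brk*{\vbf_0\xbf_0^\top}\brk*{\xbf_0\vbf_0^\top}\brk*{\vbf_0\xbf_0^\top}\brk*{\xbf_0\vbf_0^\top} = \vbf_0\vbf_0^\top$, whose normalized trace is $1/D$, not $1/D^2$. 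So the trace factor can be as large as $D^{-1}$, and combined with your uncoupled topological bound $D^{\chi-\abs{\gamma}} \leq D^{2}$ this only yields $D^{1}$ per connecting pairing, which proves nothing. The correct argument must couple the two factors case by case: when the trace factor is only $1/D$ (all four rank-one matrices in one cycle), the corresponding four faces are all glued, the surface is connected, and \cref{top_theorem} gives $\chi \leq 2$, hence a contribution $\leq D^{-3}$; when the rank-one matrices split two-and-two over two cycles, the trace factor is $1/D^2$, the surface has at most two components, $\chi \leq 4$, hence a contribution $\leq D^{-2}$; all other distributions vanish by orthogonality of $\vbf_0$ and $\xbf_0$. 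Note also that because here both halves carry the \emph{same} $\vbf_0$, mixed cross-cycles do not vanish (unlike in the covariance setting), so connecting pairings genuinely contribute at order $1/D^2$ and no orthogonality argument can push them to $D^{-4}$.

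Separately, the cancellation machinery you import (matching factorizing pairings against the expansion of $\EE[Z_{\vbf_0,\xbf_0,n,k}]^2$) is the paper's device for the \emph{covariance} bound, where a $1/D^3$ rate is needed; for the variance bound stated here it is unnecessary. The paper simply uses $\Var\brk*{Z_{\vbf_0,\xbf_0,n,k}} \leq \EE\brk[s]*{Z_{\vbf_0,\xbf_0,n,k}^2}$, expands $\EE\brk[s]*{Z_{\vbf_0,\xbf_0,n,k}^2}$ via \cref{thm:genus_expansion_main_theorem} with $\abs{\gamma}=4$, and shows by the coupled case analysis above that every one of the at most $(4H-1)!!$ summands is bounded by $1/D^2$. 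If you repair your estimate on connecting pairings along these lines, your factorization-plus-cancellation route would also close, but it buys nothing extra at the stated $1/D^2$ precision and adds the burden of verifying the exact matching of factorizing terms.
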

\begin{proof} 
Since $\Var \brk*{ Z_{\vbf_0, \xbf_0, n, k} } = \EE \brk[s]1{Z_{\vbf_0, \xbf_0, n, k}^2 } - \EE \brk[s]1{ Z_{\vbf_0, \xbf_0, n, k}}^2 \leq \EE \brk[s]1{ Z_{\vbf_0, \xbf_0, n, k}^2 }$, it suffices to upper bound the second moment $\EE \brk[s]1{ Z_{\vbf_0, \xbf_0, n, k}^2 }$, which upholds:
\[
\EE \brk[s]*{ Z_{\vbf_0, \xbf_0, n, k}^2 } = \EE\nolimits_{\Abf} \brk[s]*{ \Tr \brk*{ (\Abf^n)^\top \vbf_0 \xbf_0^\top  }^2 \cdot \Tr  \brk*{ (\Abf^{n + k + 1})^{\top} \Abf^{k + 1} \vbf_0 \xbf_0^\top }^2  }
\text{\,.}
\]
Let  $p := 2 (n + k + 1) \leq 2H$.
To show that $\EE \brk[s]1{ Z_{\vbf_0, \xbf_0, n, k}^2 }  \leq \frac{ (p - 1)!! }{ D^2 }$ we employ the method from \citet{redelmeier2014real}, which is based on the topological concept of genus expansion.
For completeness, \cref{genus_expansion} provides a self-contained introduction to the method, and \cref{thm:genus_expansion_main_theorem} therein lays out the result which we will use.
We assume below familiarity with the notation and concepts detailed in \cref{genus_expansion}.

For invoking \cref{thm:genus_expansion_main_theorem}, let us define a permutation $\gamma$ over $[p]$ via the cycle decomposition:
\[
\gamma = (1, \ldots, n)(n + 1, \ldots, p)(p +1, \ldots, p + n)(p + n + 1, \ldots, 2p)
\text{\,,}
\]
and a mapping $\epsilon: [2p] \to \{ -1 , 1\}$ by:
\[
\epsilon(1) = -1 , \ldots ,\epsilon(2n + k + 1) = -1 \text{\,,}
\]
\[
\epsilon (2n + k + 2) = 1, \ldots, \epsilon (p) = 1 \text{\,,}
\]
\[
\epsilon (p + 1) = -1, \ldots, \epsilon (p + 2n + k + 1) = -1 \text{\,,}
\]
\[
\epsilon (p + 2n + k + 2) = 1, \ldots , \epsilon (2p) = 1
\text{\,.}
\]
Additionally, define $\Cbf_1, \ldots, \Cbf_{2p} \in \R^{D \times D}$ as follows:
\[
	\Cbf_1 = \Ibf, \ldots, \Cbf_{n - 1} = \Ibf, \Cbf_n = \vbf_0 \xbf_0^\top \text{\,,}
\]
\[
	\Cbf_{n + 1} = \Ibf, \ldots, \Cbf_{p - 1} = \Ibf, \Cbf_p = \vbf_0 \xbf_0^\top \text{\,,}
\]
\[
\Cbf_{p+ 1} = \Ibf, \ldots, \Cbf_{p + n - 1} = \Ibf, \Cbf_{p + n} = \vbf_0 \xbf_0^\top \text{\,,}
\]
\[
\Cbf_{p + n + 1} = \Ibf, \ldots, \Cbf_{2p - 1} = \Ibf, \Cbf_{2p} = \vbf_0 \xbf_0^\top \text{\,,}
\]
where $\Ibf$ is the identity matrix.
For the above choice of $\gamma, \epsilon,$ and matrices $\Cbf_1, \ldots, \Cbf_{2p}$ it holds that:
\[
\EE\nolimits_{\Abf} \brk[s]*{ \Tr_\gamma \brk*{ \Abf_{\epsilon (1)} \Cbf_1, \ldots, \Abf_{\epsilon (2p)} \Cbf_{2p} } } = \EE\nolimits_{\Abf} \brk[s]*{ \Tr \brk*{ (\Abf^n)^\top \vbf_0 \xbf_0^\top  }^2 \cdot \Tr  \brk*{ (\Abf^{n + k + 1})^{\top} \Abf^{k + 1} \vbf_0 \xbf_0^\top }^2  } = \EE \brk[s]*{ Z_{\vbf_0, \xbf_0, n, k}^2 } 
\text{\,.}
\]
Invoking \cref{thm:genus_expansion_main_theorem}, we may write \cref{eq:genus_formula} (from \cref{thm:genus_expansion_main_theorem} of \cref{genus_expansion}) as:
\[
 \EE \brk[s]*{ Z_{\vbf_0, \xbf_0, n, k}^2 }  = \EE\nolimits_{\Abf} \brk[s]*{ \Tr_\gamma \brk*{ \Abf_{\epsilon (1)} \Cbf_1, \ldots, \Abf_{\epsilon (2p)} \Cbf_{2p} } } = \sum_{ \pi \in \{ \rho \delta \rho : \rho \in \M_{2p} \} } D^{ \chi (\gamma, \delta_\epsilon \pi \delta_\epsilon) - \abs{\gamma} } \cdot \Trnormalized_{ \frac{ \gamma_-^{-1} \delta_\epsilon \pi \delta_\epsilon \gamma_+ }{ 2 } } \brk*{ \Cbf_1, \ldots, \Cbf_{2p} }
\text{\,.}
\]
Notice that, due to the choice of $\Cbf_1, \ldots, \Cbf_{2p}$, each summand on the right hand side is non-negative.
We claim that for a non-zero summand corresponding to $\pi$ it necessarily holds that $\chi ( \gamma, \delta_\epsilon \pi \delta_\epsilon ) \leq 4$.
Meaning:
\[
 \EE \brk[s]*{ Z_{\vbf_0, \xbf_0, n, k}^2 } = \sum_{ \pi \in \{ \rho \delta \rho : \rho \in \M_{2p} \} , \chi (\gamma, \delta_\epsilon \pi \delta_\epsilon) \leq 4} D^{ \chi (\gamma, \delta_\epsilon \pi \delta_\epsilon) - \abs{\gamma} } \cdot \Trnormalized_{ \frac{ \gamma_-^{-1} \delta_\epsilon \pi \delta_\epsilon \gamma_+ }{ 2 } } \brk*{ \Cbf_1, \ldots, \Cbf_{2p} }
\text{\,.}
\]
To see why this is the case, note that $\abs{\gamma} = 4$.
It follows that the corresponding surfaces described in \cref{genus_expansion} are obtained by gluing four faces.
Furthermore, because:
\[
\Trnormalized \brk*{ \vbf_0 \xbf_0^\top } = \Trnormalized \brk*{ \xbf_0 \vbf_0^\top } = \Trnormalized \brk*{ \brk*{ \vbf_0 \xbf_0^\top }^2 } = \Trnormalized \brk*{ \brk*{ \xbf_0 \vbf_0^\top }^2 } = 0
\text{\,,}
\]
\[
\Trnormalized \brk*{ \xbf_0 \vbf_0^\top \vbf_0 \xbf_0^\top } = \Trnormalized \brk*{ \vbf_0 \xbf_0^\top \xbf_0 \vbf_0^\top } = \frac{1}{D}
\text{\,,}
\]
the only way a summand corresponding to $\pi = \rho \delta \rho$ can be non-zero is if a cycle $\RR = (1, \ldots, R)$ of $\gamma_{+}^{-1} \delta_\epsilon \pi \delta_\epsilon \gamma_-  / 2$  contains either no non-identity matrices, in which case $\Trnormalized (\Cbf_1, \ldots, \Cbf_R) = 1$, or if it contains two or four such matrices, with non-identity matrices appearing once transposed and once without transposition, in which case $\Trnormalized ( \Cbf_1, \ldots, \Cbf_R) = 1 / D$.
Thus, the four non-identity matrices among $\Cbf_1, \ldots, \Cbf_{2p}$ must appear in either one or two different cycles.
Accordingly, to get a non-zero contribution, $\delta_{\epsilon} \pi \delta_{\epsilon}$ must either connect all four faces or connect two pairs among them, \ie~the surface $\G (\gamma, \epsilon, \rho)$ must have either one or two connected components (see construction in \cref{genus_expansion}).
By \cref{top_theorem} in \cref{genus_expansion}, the Euler characteristic of such a surface satisfies $\chi (\G (\gamma , \epsilon, \rho)) \leq 4$.

Overall, we have established that:
\[
 \EE \brk[s]*{ Z_{\vbf_0, \xbf_0, n, k}^2 } = \sum_{ \pi \in \{ \rho \delta \rho : \rho \in \M_{2p} \} , \chi (\gamma, \delta_\epsilon \pi \delta_\epsilon) \leq 4} D^{ \chi (\gamma, \delta_\epsilon \pi \delta_\epsilon) - \abs{\gamma} } \cdot \Trnormalized_{ \frac{ \gamma_-^{-1} \delta_\epsilon \pi \delta_\epsilon \gamma_+ }{ 2 } } \brk*{ \Cbf_1, \ldots, \Cbf_{2p} }
 \text{\,.}
\]
To conclude the proof, we show that each summand on the right hand side contributes at most $1 / D^2$.
Let us examine all possible cases for $\pi = \rho \delta \rho$.
If all non-identity matrices are in a single cycle of $\gamma_{+}^{-1} \delta_\epsilon \pi \delta_\epsilon \gamma_-  / 2$, then:
\[
\Trnormalized_{ \frac{ \gamma_-^{-1} \delta_\epsilon \pi \delta_\epsilon \gamma_+ }{ 2 } } \brk*{ \Cbf_1, \ldots, \Cbf_{2p} } \leq \frac{1}{D}
\text{\,,}
\]
and the surface $\G (\gamma, \epsilon, \rho)$ is connected, so $\chi ( \G (\gamma, \epsilon, \rho)) \leq 2$ and the summand corresponding to $\pi$ is at most $1 / D^3$.
On the other hand, if there are two cycles containing non-identity matrices, then:
\[
\Trnormalized_{ \frac{ \gamma_-^{-1} \delta_\epsilon \pi \delta_\epsilon \gamma_+ }{ 2 } } \brk*{ \Cbf_1, \ldots, \Cbf_{2p} } \leq \frac{1}{D^2}
\text{\,,}
\]
and $\chi ( \G (\gamma, \epsilon, \rho)) \leq 4$, so the summand corresponding to $\pi$ is at most $1 / D^2$.
As we showed above, these are the only cases which give a non-zero contribution.
Hence:
\[
\begin{split}
 \EE \brk[s]*{ Z_{\vbf_0, \xbf_0, n, k}^2 } & = \sum_{ \pi \in \{ \rho \delta \rho : \rho \in \M_{2p} \} , \chi (\gamma, \delta_\epsilon \pi \delta_\epsilon) \leq 4} D^{ \chi (\gamma, \delta_\epsilon \pi \delta_\epsilon) - \abs{\gamma} } \cdot \Trnormalized_{ \frac{ \gamma_-^{-1} \delta_\epsilon \pi \delta_\epsilon \gamma_+ }{ 2 } } \brk*{ \Cbf_1, \ldots, \Cbf_{2p} } \\
 & \leq \sum_{ \pi \in \{ \rho \delta \rho : \rho \in \M_{2p} \} , \chi (\gamma, \delta_\epsilon \pi \delta_\epsilon) \leq 4}  \frac{1}{D^2} \\
 & \leq \frac{ (2p - 1)!! }{ D^2}
\text{\,,}
\end{split}
\]
where the last transition is by the number of pairings of $[2p]$ being equal to $\abs{\M_{2p}} = (2p - 1)!!$.
The proof concludes by noticing that $2p = 4( n + k + 1) \leq 4H$.
\end{proof}

\paragraph{Upper Bound on $\Cov \brk{ Z_{\vbf_0,\xbf_0,n,k}, Z_{\vbf_0',\xbf_0,n,k} }$}
\label{high_prob_bound:cov_bound}
\

\begin{proposition}
\label{cov_prop}
For any $\xbf_0 \in \trainstates, \vbf_0, \vbf'_0 \in \orthstates, n \in \cup [H - 1], k \in \{0\} \cup [H - n - 1]$ with $\vbf_0 \neq \vbf'_0$:
\[
\Cov \brk*{ Z_{\vbf_0, \xbf_0, n, k}, Z_{\vbf'_0, \xbf_0, n, k} } \leq \frac{ (4H - 1)!! }{ D^3 }
\text{\,,}
\]
where $Z_{\vbf_0, \xbf_0, n, k}$ is as defined in \cref{eq:Z_Y_def}.
\end{proposition}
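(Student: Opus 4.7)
The plan is to mirror the proof of Proposition~\ref{var_prop}, applying Theorem~\ref{thm:genus_expansion_main_theorem} to both $\EE[Z_{\vbf_0,\xbf_0,n,k} Z_{\vbf'_0,\xbf_0,n,k}]$ and $\EE[Z_{\vbf_0,\xbf_0,n,k}]\EE[Z_{\vbf'_0,\xbf_0,n,k}]$, and exploiting the orthogonality $\vbf_0 \perp \vbf'_0$ (which was not available in the variance setting) to gain an additional factor of $D^{-1}$. Concretely, I would expand
\[
Z_{\vbf_0,\xbf_0,n,k}\, Z_{\vbf'_0,\xbf_0,n,k} = \Tr\brk*{(\Abf^n)^\top \vbf_0 \xbf_0^\top} \cdot \Tr\brk*{(\Abf^{n+k+1})^\top \Abf^{k+1} \vbf_0 \xbf_0^\top} \cdot \Tr\brk*{(\Abf^n)^\top \vbf'_0 \xbf_0^\top} \cdot \Tr\brk*{(\Abf^{n+k+1})^\top \Abf^{k+1} \vbf'_0 \xbf_0^\top}
\]
and set up the permutation $\gamma$ with $|\gamma|=4$, the sign map $\epsilon$, and the matrices $\Cbf_1,\ldots,\Cbf_{2p}$ (with $p=2(n+k+1)\le 2H$) in exactly the same way as in Proposition~\ref{var_prop}, the sole change being that two of the non-identity $\Cbf_i$'s equal $\vbf_0 \xbf_0^\top$ and the other two equal $\vbf'_0 \xbf_0^\top$. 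The genus formula then writes $\EE[Z_{\vbf_0}Z_{\vbf'_0}]$ as a sum over pairings $\pi\in\{\rho\delta\rho : \rho\in \M_{2p}\}$, while $\EE[Z_{\vbf_0}]\EE[Z_{\vbf'_0}]$ expands as a product of two analogous $\M_p$-sums, which can be identified with the ``block-respecting'' pairings of $[2p]$ that never match an index in the first copy with one in the second.

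Next I would use orthogonality to prune the surviving pairings. Since $\vbf_0 \perp \vbf'_0$ and $\xbf_0 \perp \vbf_0, \vbf'_0$, any cycle of $\gamma_-^{-1}\delta_\epsilon\pi\delta_\epsilon\gamma_+/2$ containing both a $\vbf_0\xbf_0^\top$ and a $\vbf'_0\xbf_0^\top$ yields a vanishing normalized trace (every reduction of the cyclic product introduces a factor of $\vbf_0^\top \vbf'_0$, $\xbf_0^\top \vbf_0$, or $\xbf_0^\top \vbf'_0$, all zero). Hence only ``type-segregated'' pairings, those placing the two $\vbf_0\xbf_0^\top$'s in cycles disjoint from the two $\vbf'_0\xbf_0^\top$'s, can contribute. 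Among these, the block-respecting ones contribute exactly $\EE[Z_{\vbf_0}]\EE[Z_{\vbf'_0}]$, by the factorization $D^{\chi_1+\chi_2-4}\cdot \Trnormalized^{(1)}\cdot\Trnormalized^{(2)} = (D^{\chi_1-2}\Trnormalized^{(1)})(D^{\chi_2-2}\Trnormalized^{(2)})$ valid whenever the pairing respects the split between the two copies. These terms therefore cancel in the covariance, leaving only crossing, type-segregated pairings to bound.

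The main obstacle, and the step where the extra factor of $D^{-1}$ is recovered, is to show that each such residual summand is of order $D^{-3}$. A crossing edge identifies faces of $\gamma$ that in the block-respecting case would lie in separate connected components of the associated surface $\G(\gamma,\epsilon,\rho)$; after adding this edge the surface has strictly fewer components, so by Theorem~\ref{top_theorem} its Euler characteristic obeys $\chi(\gamma, \delta_\epsilon\pi\delta_\epsilon)\le 2$ instead of the $\chi\le 4$ bound used in the block-respecting case. The orthogonality-enforced type segregation still bounds the cyclic product by $\Trnormalized\le D^{-2}$, so each residual summand is at most $D^{\chi-|\gamma|}\cdot D^{-2}\le D^{2-4}\cdot D^{-2}=D^{-4}\le D^{-3}$. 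Finally, bounding the total number of pairings by $\abs{\M_{2p}}=(2p-1)!!\le (4H-1)!!$ yields the claimed bound $(4H-1)!!/D^{3}$. The subtlety to verify carefully is that the type-segregation trace bound and the crossing-induced Euler drop hold simultaneously for every surviving pairing; this requires tracking which $\Cbf$'s end up in which cycle of $\gamma_-^{-1}\delta_\epsilon\pi\delta_\epsilon\gamma_+/2$ after the crossing identification, analogously to the case analysis at the end of the proof of Proposition~\ref{var_prop}.
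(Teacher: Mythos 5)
Your overall architecture (genus expansion of $\EE[Z_{\vbf_0,\xbf_0,n,k}Z_{\vbf'_0,\xbf_0,n,k}]$ over pairings of $[2p]$, identification of the block-respecting pairings with the expansion of $\EE[Z_{\vbf_0,\xbf_0,n,k}]\EE[Z_{\vbf'_0,\xbf_0,n,k}]$ and their cancellation in the covariance, then bounding the remaining terms) matches the paper's proof. However, there is a genuine gap in the pruning step: the claim that any cycle of $\gamma_-^{-1}\delta_\epsilon\pi\delta_\epsilon\gamma_+/2$ containing both a $\vbf_0\xbf_0^\top$-type and a $\vbf'_0\xbf_0^\top$-type matrix has vanishing normalized trace is false. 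Because transposed copies appear in these cycles, the cyclic product can reduce using only the ``good'' inner products $\xbf_0^\top\xbf_0$, $\vbf_0^\top\vbf_0$, $\vbf_0'^\top\vbf_0'$ and never a cross term; for instance
\[
\Trnormalized\brk*{(\vbf_0\xbf_0^\top)(\xbf_0\vbf_0'^\top)(\vbf_0'\xbf_0^\top)(\xbf_0\vbf_0^\top)}
= \frac{1}{D}\,\norm{\xbf_0}^4\norm{\vbf_0}^2\norm{\vbf_0'}^2 = \frac{1}{D} \neq 0
\text{\,.}
\]
So the pairings in which all four non-identity matrices land in a single cycle are \emph{not} killed by orthogonality, and they are precisely the dominant residual terms: the paper bounds each of them by $D^{\chi-\abs{\gamma}}\cdot\Trnormalized \leq D^{2-4}\cdot D^{-1} = D^{-3}$, using connectivity of the glued surface (\cref{top_theorem}) together with the trace value $1/D$. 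Your argument discards these terms outright, which is also why your residual bound comes out as $D^{-4}$ per summand---a strictly stronger conclusion than the paper's, and a symptom of the error rather than a genuine improvement.

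The fix is exactly the paper's case analysis: among non-block-respecting (crossing) pairings, those separating the $\vbf_0$-matrices from the $\vbf'_0$-matrices into distinct cycles either vanish (when the crossing splits the four faces into two mixed components, every cycle's trace is zero by orthogonality) or contribute at most $D^{-4}$ (connected surface, two cycles each worth $1/D$), while those placing all four matrices in one cycle contribute at most $D^{-3}$ as above. Combined with your (correct) cancellation of the block-respecting terms against $\EE[Z_{\vbf_0,\xbf_0,n,k}]\EE[Z_{\vbf'_0,\xbf_0,n,k}]$ and the count $\abs{\M_{2p}} = (2p-1)!! \leq (4H-1)!!$, this recovers the claimed bound $(4H-1)!!/D^3$; without handling the all-in-one-cycle case, the proof is incomplete.
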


\begin{proof}
Note that $\Cov \brk1{ Z_{\vbf_0, \xbf_0, n, k}, Z_{\vbf'_0, \xbf_0, n, k} } = \EE \brk[s]1{ Z_{\vbf_0, \xbf_0, n, k} Z_{\vbf'_0, \xbf_0, n, k}  } - \EE \brk[s]1{ Z_{\vbf_0, \xbf_0, n, k}} \EE \brk[s]1{ Z_{\vbf'_0, \xbf_0, n, k}  }$, where:
\[
\begin{split}
  & \EE \brk[s]1{ Z_{\vbf_0, \xbf_0, n, k}, Z_{\vbf'_0, \xbf_0, n, k}  } \\
  & \hspace{3mm} = \EE\nolimits_{\Abf} \brk[s]*{ \Tr \brk*{ (\Abf^n)^\top \vbf_0 \xbf_0^\top  } \Tr  \brk*{ (\Abf^{n + k + 1})^{\top} \Abf^{k + 1} \vbf_0 \xbf_0^\top } \Tr \brk*{ (\Abf^n)^\top \vbf'_0 \xbf_0^\top  } \Tr  \brk*{ (\Abf^{n + k + 1})^{\top} \Abf^{k + 1} \vbf'_0 \xbf_0^\top }  }  \text{\,,} \\[0.4em]
 & \EE \brk[s]1{ Z_{\vbf_0, \xbf_0, n, k} } \EE \brk[s]1{ Z_{\vbf'_0, \xbf_0, n, k}  } \\
 & \hspace{3mm} = \EE\nolimits_{\Abf} \brk[s]*{ \Tr \brk*{ (\Abf^n)^\top \vbf_0 \xbf_0^\top  } \Tr  \brk*{ (\Abf^{n + k + 1})^{\top} \Abf^{k + 1} \vbf_0 \xbf_0^\top } } \EE\nolimits_{\Abf} \brk[s]*{ \Tr \brk*{ (\Abf^n)^\top \vbf'_0 \xbf_0^\top  } \Tr  \brk*{ (\Abf^{n + k + 1})^{\top} \Abf^{k + 1} \vbf'_0 \xbf_0^\top }  } \text{\,.}
\end{split}
\]
Let  $p := 2 (n + k + 1) \leq 2H$.
We will show that both $\EE \brk[s]1{ Z_{\vbf_0, \xbf_0, n, k} Z_{\vbf'_0, \xbf_0, n, k}  }$ and $\EE \brk[s]1{ Z_{\vbf_0, \xbf_0, n, k}} \EE \brk[s]1{ Z_{\vbf'_0, \xbf_0, n, k}  }$ can be written as a sum, in which each summand is at most $1 / D^{2}$ and the coefficient corresponding to $1 / D^{2}$ is the same.
As a result, this will lead to an upper bound on the covariance that depends on $1 / D^{3}$.

We first examine $\EE \brk[s]1{ Z_{\vbf_0, \xbf_0, n, k} } $.
The analysis below applies equally to $\EE \brk[s]1{ Z_{\vbf'_0, \xbf_0, n, k} } $ as well (note that by \cref{lem:in_prod_initial_states} of \cref{app:proofs:typical_system:in_prod_lower_bound} we know that $\EE \brk[s]1{ Z_{\vbf_0, \xbf_0, n, k} } = \EE \brk[s]1{ Z_{\vbf'_0, \xbf_0, n, k} }$).
Below, we make use of the method from \citet{redelmeier2014real}, which is based on the topological concept of genus expansion.
For completeness, \cref{genus_expansion} provides a self-contained introduction to the method, and \cref{thm:genus_expansion_main_theorem} therein lays out the result which we will use.
We assume familiarity with the notation and concepts detailed in \cref{genus_expansion}.

For invoking \cref{thm:genus_expansion_main_theorem}, let us define a permutation $\gamma$ over $[p]$ via the cycle decomposition:
\[
\gamma = (1, \ldots, m + n)(m + n + 1, \ldots, p)
\text{\,,}
\]
and a mapping $\epsilon: [2p] \to \{ -1 , 1\}$ by:
\[
\epsilon(1) = -1 , \ldots ,\epsilon(2n + k + 1) = -1 \text{\,,}
\]
\[
\epsilon (2n + k + 2) = 1, \ldots, \epsilon (p) = 1 \text{\,.}
\]
Additionally, define $\Cbf_1, \ldots, \Cbf_{p} \in \R^{D \times D}$ as follows:
\[
\Cbf_1 = \Ibf, \ldots, \Cbf_{n - 1} = \Ibf, \Cbf_n = \vbf_0 \xbf_0^\top \text{\,,}
\]
\[
\Cbf_{n + 1} = \Ibf, \ldots, \Cbf_{p - 1} = \Ibf, \Cbf_p = \vbf_0 \xbf_0^\top \text{\,,}
\]
where $\Ibf$ is the identity matrix.
For the above choice of $\gamma, \epsilon,$ and matrices $\Cbf_1, \ldots, \Cbf_{p}$ it holds that:
\[
\EE\nolimits_{\Abf} \brk[s]*{ \Tr_\gamma \brk*{ \Abf_{\epsilon (1)} \Cbf_1, \ldots, \Abf_{\epsilon (p)} \Cbf_{p} } } =  \EE\nolimits_{\Abf} \brk[s]*{ \Tr \brk*{ (\Abf^n)^\top \vbf_0 \xbf_0^\top  } \Tr  \brk*{ (\Abf^{n + k + 1})^{\top} \Abf^{k + 1} \vbf_0 \xbf_0^\top } } = \EE \brk[s]*{ Z_{\vbf_0, \xbf_0, n, k} } 
\text{\,.}
\]
Invoking \cref{thm:genus_expansion_main_theorem}, we may write \cref{eq:genus_formula} (from \cref{thm:genus_expansion_main_theorem} of \cref{genus_expansion}) as:
\be
\EE \brk[s]*{ Z_{\vbf_0, \xbf_0, n, k} }  = \EE\nolimits_{\Abf} \brk[s]*{ \Tr_\gamma \brk*{ \Abf_{\epsilon (1)} \Cbf_1, \ldots, \Abf_{\epsilon (p)} \Cbf_{p} } } = \sum_{ \pi \in \{ \rho \delta \rho : \rho \in \M_{p} \} } D^{ \chi (\gamma, \delta_\epsilon \pi \delta_\epsilon) - \abs{\gamma} } \cdot \Trnormalized_{ \frac{ \gamma_-^{-1} \delta_\epsilon \pi \delta_\epsilon \gamma_+ }{ 2 } } \brk*{ \Cbf_1, \ldots, \Cbf_{p} }
\text{\,.}
\label{eq:genus_expectation_proof_formula_cov_bound}
\ee
We claim that, for any $\pi$ corresponding to a summand on the right hand side of the equation above either
\[
\Trnormalized_{ \frac{ \gamma_-^{-1} \delta_\epsilon \pi \delta_\epsilon \gamma_+ }{ 2 } } \brk*{ \Cbf_1, \ldots, \Cbf_{p} } = 0
\]
or 
\[
\Trnormalized_{ \frac{ \gamma_-^{-1} \delta_\epsilon \pi \delta_\epsilon \gamma_+ }{ 2 } } \brk*{ \Cbf_1, \ldots, \Cbf_{p} } = \frac{1}{D}
\text{\,.}
\]
To see it is so, notice that if $\gamma_{+}^{-1} \delta_\epsilon \pi \delta_\epsilon \gamma_-  / 2$ comprises a cycle containing the two non-identity matrices, appearing once transposed and once without transposition, then the normalized trace for that cycle is equal to $1 / D$ and the normalized trace for the remaining cycle is $1$.
Otherwise, one of the normalized traces for a cycle of $\gamma_{+}^{-1} \delta_\epsilon \pi \delta_\epsilon \gamma_-  / 2$ is equal to zero.
Hence, for each summand on the right hand side of \cref{eq:genus_expectation_proof_formula_cov_bound} corresponding to $\pi = \rho \delta \rho$, whose contribution is non-zero, the two faces of $\G(\gamma, \epsilon, \rho)$ (see construction in \cref{genus_expansion}) induced by $\gamma$ are connected.
By \cref{top_theorem}, for such $\pi$ we get $\chi (\gamma, \delta_\epsilon \pi \delta_\epsilon) - \abs{\gamma} = \chi (\gamma, \delta_\epsilon \pi \delta_\epsilon) - 2 \leq D^0 = 1$.

Overall, the above implies that each non-zero summand in the expression for $\EE \brk[s]{ Z_{\vbf_0, \xbf_0, n, k} }$, out of the $\abs{ \M_p } = (p - 1)!!$ summands, is upper bounded by $1 / D$.
Since, as mentioned above, the same holds for $\EE \brk[s]*{ Z_{\vbf'_0, \xbf_0, n, k} }$, we get that $\EE \brk[s]*{ Z_{\vbf_0, \xbf_0, n, k} } \EE \brk[s]*{ Z_{\vbf'_0, \xbf_0, n, k} }$ can be represented as a sum in which each term is upper bounded by $1 / D^3$ or is equal to $1 / D^2$.
What remains is to show that $\EE \brk[s]{ Z_{\vbf_0, \xbf_0, n, k} Z_{\vbf'_0, \xbf_0, n, k} }$ can be written as a sum of $(2p - 1)!!$ terms, each upper bounded by $1 / D^3$ or equal to $1 / D^2$.
Fortunately, we will see that terms equal to $1 / D^2$ cancel out with those of $\EE \brk[s]{ Z_{\vbf_0, \xbf_0, n, k} } \EE \brk[s]{ Z_{\vbf'_0, \xbf_0, n, k} }$, and as a result $\Cov \brk{ Z_{\vbf_0, \xbf_0, n, k}, Z_{\vbf'_0, \xbf_0, n, k}}$ is upper bounded by $(2p - 1)!! / D^3$.

For $i \in \mathbb{Z}$, let us denote by $c_i \brk1{ \EE \brk[s]{ Z_{\vbf_0, \xbf_0, n, k} Z_{\vbf'_0, \xbf_0, n, k} } }$ and $c_i \brk1{ \EE \brk[s]{ Z_{\vbf_0, \xbf_0, n, k} } \EE \brk[s]{ Z_{\vbf'_0, \xbf_0, n, k} } }$ the coefficients of $D^i$ in the respective expressions.
According to the discussion above, we need only show that:
\[
c_{-2} \brk1{ \EE \brk[s]{ Z_{\vbf_0, \xbf_0, n, k} Z_{\vbf'_0, \xbf_0, n, k} } } = c_{-2} \brk1{ \EE \brk[s]{ Z_{\vbf_0, \xbf_0, n, k} } \EE \brk[s]{ Z_{\vbf'_0, \xbf_0, n, k} } }
\text{\,,}
\]
and that for all $i \geq -1$:
\[
c_{i} \brk1{ \EE \brk[s]{ Z_{\vbf_0, \xbf_0, n, k} Z_{\vbf'_0, \xbf_0, n, k} } }  = 0	
\text{\,.}
\]
We apply again the method from \citet{redelmeier2014real}.
In particular, we invoke \cref{thm:genus_expansion_main_theorem} by defining the permutation $\gamma$ over $[2p]$ via the cycle decomposition:
\[
\gamma = (1, \ldots, n)(n + 1, \ldots, p)(p +1, \ldots, p + n)(p + n + 1, \ldots, 2p)
\text{\,,}
\]
and a mapping $\epsilon: [2p] \to \{ -1 , 1\}$ by:
\[
\epsilon(1) = -1 , \ldots ,\epsilon(2n + k + 1) = -1 \text{\,,}
\]
\[
\epsilon (2n + k + 2) = 1, \ldots, \epsilon (p) = 1 \text{\,,}
\]
\[
\epsilon (p + 1) = -1, \ldots, \epsilon (p + 2n + k + 1) = -1 \text{\,,}
\]
\[
\epsilon (p + 2n + k + 2) = 1, \ldots , \epsilon (2p) = 1
\text{\,.}
\]
Furthermore, define $\Cbf_1, \ldots, \Cbf_{2p} \in \R^{D \times D}$ as follows:
\[
\Cbf_1 = \Ibf, \ldots, \Cbf_{n - 1} = \Ibf, \Cbf_n = \vbf_0 \xbf_0^\top \text{\,,}
\]
\[
\Cbf_{n + 1} = \Ibf, \ldots, \Cbf_{p - 1} = \Ibf, \Cbf_p = \vbf_0 \xbf_0^\top \text{\,,}
\]
\[
\Cbf_{p+ 1} = \Ibf, \ldots, \Cbf_{p + n - 1} = \Ibf, \Cbf_{p + n} = \vbf'_0 \xbf_0^\top \text{\,,}
\]
\[
\Cbf_{p + n + 1} = \Ibf, \ldots, \Cbf_{2p - 1} = \Ibf, \Cbf_{2p} = \vbf'_0 \xbf_0^\top \text{\,,}
\]
where $\Ibf$ is the identity matrix.
For the above choice of $\gamma, \epsilon,$ and matrices $\Cbf_1, \ldots, \Cbf_{2p}$ it holds that:
\[
\EE\nolimits_{\Abf} \brk[s]*{ \Tr_\gamma \brk*{ \Abf_{\epsilon (1)} \Cbf_1, \ldots, \Abf_{\epsilon (2p)} \Cbf_{2p} } } = \EE \brk[s]*{ Z_{\vbf_0, \xbf_0, n, k} Z_{\vbf'_0, \xbf_0, n, k} } 
\text{\,.}
\]
Thus, invoking \cref{thm:genus_expansion_main_theorem} leads to:
\be
\EE \brk[s]*{ Z_{\vbf_0, \xbf_0, n, k} Z_{\vbf'_0, \xbf_0, n, k} } = \sum_{ \pi \in \{ \rho \delta \rho : \rho \in \M_{2p} \} } D^{ \chi (\gamma, \delta_\epsilon \pi \delta_\epsilon) - \abs{\gamma} } \cdot \Trnormalized_{ \frac{ \gamma_-^{-1} \delta_\epsilon \pi \delta_\epsilon \gamma_+ }{ 2 } } \brk*{ \Cbf_1, \ldots, \Cbf_{2p} }
\text{\,.}
\label{eq:genus_expectation_proof_formula_cov_bound_second}
\ee
Notice that, due to the choice of $\Cbf_1, \ldots, \Cbf_{2p}$, each summand on the right hand side is non-negative.
Specifically, the normalized traces for different combination of the non-identity matrices among $\Cbf_1, \ldots, \Cbf_{2p}$ satisfy:
\[
\Trnormalized \brk*{ \xbf_0 \vbf_0^\top } = \Trnormalized \brk*{ \vbf_0 \xbf_0^\top } = \Trnormalized \brk*{ \xbf_0 \vbf_0^{\prime\top} } = \brk*{  \vbf'_0 \xbf_0^\top } = 0
\text{\,,}
\]
\[
\Trnormalized \brk*{ \brk*{ \xbf_0 \vbf_0^\top }^2 } = \Trnormalized \brk*{ \brk*{ \vbf_0 \xbf_0^\top }^2 } = \Trnormalized \brk*{ \brk*{ \xbf_0 \vbf_0^{\prime \top} }^2 } = \Tr \brk*{ \brk*{ \vbf'_0 \xbf_0^\top }^2 } = \Trnormalized \brk*{ \vbf_0 \xbf_0^\top \xbf_0 \vbf_0^{\prime \top} } = \Tr \brk*{ \xbf_0 \vbf_0^\top \vbf'_0 \xbf_0^\top } = 0
\text{\,,}
\]
\[
\Trnormalized \brk*{ \vbf_0 \xbf_0^\top \xbf_0 \vbf_0^\top } = \Trnormalized \brk*{ \xbf_0 \vbf_0^\top \vbf_0 \xbf_0^\top } = \Trnormalized \brk*{ \vbf'_0 \xbf_0^\top \xbf_0 \vbf_0^{\prime \top} } = \Tr \brk*{ \xbf_0 \vbf_0^{\prime \top} \vbf'_0 \xbf_0^\top } = \frac{1}{D}
\text{\,.}
\]
Now, for $\pi = \rho \delta \rho$ corresponding to a summand on the right hand side of \cref{eq:genus_expectation_proof_formula_cov_bound_second}, let $F_1, F_2, F_3, F_4$ be the faces in $\G (\gamma, \epsilon, \rho)$ (see construction in \cref{genus_expansion}), ordered according to their appearance in $\gamma$.
It follows that for the summand to be non-zero there are only two options: either all four faces $F_1, F_2, F_3, F_4$ are connected, meaning all four non-identity matrices are in the same cycle of $\gamma_{+}^{-1} \delta_\epsilon \pi \delta_\epsilon \gamma_-  / 2$, or $F_1$ is connected to $F_2$ and $F_3$ to $F_4$.
Any other summand will contribute zero due to the trace identities above.
We claim that the first option gives a contribution of order $1 / D^3$.
Indeed, for such $\pi = \rho \delta \rho$  we have that:
\[
\Trnormalized_{ \frac{ \gamma_-^{-1} \delta_\epsilon \pi \delta_\epsilon \gamma_+ }{ 2 } } \brk*{ \Cbf_1, \ldots, \Cbf_{2p} } = \frac{1}{D} 
\text{\,,}
\]
and $ D^{ \chi (\gamma, \delta_\epsilon \pi \delta_\epsilon) - \abs{\gamma} } \leq D^{-2}$ by \cref{top_theorem}  (recall $\abs{\gamma} = 4$).
As for the second option, note that any $\sigma := \delta_\epsilon \pi \delta_\epsilon$ that connects $F_1$ to $F_2$ and $F_3$ to $F_4$ can be factorized into $\sigma = \sigma_1 \sigma_2$, where $\sigma_1$ and $\sigma_2$ are the restrictions of $\sigma$ to the elements of $F_1 \cup F_2$ and $F_3 \cup F_4$, respectively.
We may similarly factorize $\gamma$ as $\gamma = \gamma_1 \gamma_2$.
It follows that the contribution of this summand factorizes as:
\[
\begin{split}
& D^{ \chi (\gamma, \sigma) - \abs{\gamma} } \cdot \Trnormalized_{ \frac{ \gamma_-^{-1} \sigma\gamma_+ }{ 2 } } \brk*{ \Cbf_1, \ldots, \Cbf_{2p} } \\
& \hspace{3mm} = \brk*{ D^{ \chi (\gamma_1, \sigma_1) - \abs{\gamma_1} } \cdot \Trnormalized_{ \frac{ \gamma_{1,-}^{-1} \sigma_1 \gamma_{1,+} }{ 2 } } \brk*{ \Cbf_1, \ldots, \Cbf_{p} } } \brk*{ D^{ \chi (\gamma_2, \sigma_2) - \abs{\gamma_2} } \cdot \Trnormalized_{ \frac{ \gamma_{2,-}^{-1} \sigma_2 \gamma_{2,+} }{ 2 } } \brk*{ \Cbf_{p + 1}, \ldots, \Cbf_{2p} } }
\text{\,.}
\end{split}
\]
This factorization corresponds precisely to a term in the expansion of $\EE \brk[s]{ Z_{\vbf_0, \xbf_0, n, k} } \EE \brk[s]{ Z_{\vbf'_0, \xbf_0, n, k} }$, and vice-versa.
Because all summands which give a contribution of $1 / D^2$ have this form, we get that:
\[
c_{-2} \brk1{ \EE \brk[s]{ Z_{\vbf_0, \xbf_0, n, k} Z_{\vbf'_0, \xbf_0, n, k} } } = c_{-2} \brk1{ \EE \brk[s]{ Z_{\vbf_0, \xbf_0, n, k} } \EE \brk[s]{ Z_{\vbf'_0, \xbf_0, n, k} } }
\text{\,.}
\]

To conclude, we have shown that both $\EE \brk[s]{ Z_{\vbf_0, \xbf_0, n, k} Z_{\vbf'_0, \xbf_0, n, k} }$ and $\EE \brk[s]{ Z_{\vbf_0, \xbf_0, n, k} } \EE \brk[s]{ Z_{\vbf'_0, \xbf_0, n, k} }$ can be represented as a sum of non-negative terms, each upper bounded by $1 / D^3$ or equal to $1 / D^2$.
Furthermore, the terms equal to $1 / D^2$ are the same, for both $\EE \brk[s]{ Z_{\vbf_0, \xbf_0, n, k} Z_{\vbf'_0, \xbf_0, n, k} }$ and $\EE \brk[s]{ Z_{\vbf_0, \xbf_0, n, k} } \EE \brk[s]{ Z_{\vbf'_0, \xbf_0, n, k} }$, and so cancel out in $\Cov \brk1{ Z_{\vbf_0, \xbf_0, n, k}, Z_{\vbf'_0, \xbf_0, n, k} }$.
Consequently, the covariance can be upper bounded by a sum of at most $\abs{ \M_{2p} } = (2p - 1)!! \leq (4H - 1)!!$ terms, each upper bounded by $1 / D^3$.
Thus:
\[
\Cov \brk*{ Z_{\vbf_0, \xbf_0, n, k}, Z_{\vbf'_0, \xbf_0, n, k} } \leq \frac{ (4H - 1)!! }{ D^3 }
\text{\,.}
\]
\end{proof}

\subsubsection{Genus Expansion of Gaussian Matrices}
\label{genus_expansion}

In this appendix, we introduce the concept of a \emph{genus expansion}~---~a proof technique from random matrix theory, whereby one expresses the traces of random matrix products as a sum over topological spaces.
Specifically, we adapt a result from~\citet{redelmeier2014real} that is used for bounding certain quantities in \cref{expectation_bound,high_prob_bound}.

\textbf{Additional notation.}
We require the following notation, which is an adaptation of that used in \citet{redelmeier2014real}.
Given matrices $\Cbf_{1}, \ldots,\Cbf_{N} \in \R^{D \times D}$, we denote $\Cbf_{-n}:=\Cbf_{n}^{\top}$ for $n \in [N]$.
For $\Cbf \in \R^{D \times D}$, we let $\Trnormalized (\Cbf) = \frac{1}{D} \Tr (\Cbf)$ be its normalized trace.
We denote by $\M_N$ the set of all pairings of $[N]$, \ie~the set of all permutations which have $N / 2$ cycles of length $2$ (note that if $N$ is odd then this set is empty). 
For a permutation $\gamma : [N] \to [N]$, we denote by $\abs{ \gamma }$ the number of cycles in its cycle decomposition.
Lastly, we use $\delta: \{-N, \ldots, -1, 1, \ldots, N\} \to \{-N, \ldots, -1, 1, \ldots, N\}$ to denote the mapping satisfying $\delta (n) = -n$.

Towards adapting the result of~\citet{redelmeier2014real}, we lay out several preliminary definitions.

\begin{definition}
	For a subset $\I \subseteq [N]$, let $\gamma : \I \to \I$ be a permutation given by the following cycle decomposition: $\gamma = (z_1, \ldots,  z_{n_1}) (z_{n_1 + 1}, \ldots, z_{n_2}) \cdots (z_{n_{k - 1} + 1}, \ldots, z_{n_k} )$, where $z_1, \ldots, z_{n_k} \in \I$ denote the elements of $\I$.
	We define $\gamma_+$ to be the permutation on $\{ -N, \ldots, -1, 1, \ldots, N\}$ that extends $\gamma$ by acting as the identity for $i \notin \I$.
	Additionally, we define $\gamma_- := \delta \gamma_+ \delta$.
\end{definition}

Note that $\gamma_-$ is a permutation with cycle decomposition:
\[
\gamma_- = ( - z_1, \ldots, -z_{n_1}) (-z_{n_1 + 1}, \ldots, -z_{n_2}) \cdots ( -z_{n_{z - 1} + 1}, \ldots, -z_{n_k} )
\text{\,.}
\]

\begin{definition}
	For a set of non-zero integers $\I \subset \mathbb{Z}$, a permutation $\pi$ on $\I \cup -\I$, where $- \I := \{ -i : i \in \I\}$, is called a \emph{premap} if $\delta \pi \delta = \pi^{-1}$ and no cycle of $\pi$ contains both $i$ and $-i$, for any $i \in \I$. 
\end{definition}

\begin{definition}
	For a subset $\I \subseteq \{ -N, \ldots, -1, 1, \ldots, N\}$, let $\gamma$ be a premap given by the cycle decomposition $\gamma = (z_1, \ldots,  z_{n_1}) (z_{n_1 + 1}, \ldots, z_{n_2}) \cdots (z_{n_{k - 1} + 1}, \ldots, z_{n_k} )$.
	We define the permutation $\frac{\gamma}{2}$ over $\I$ as follows.
	For each cycle of $\gamma$, if its smallest element in absolute value is positive, then the cycle is left unchanged.
	Otherwise, the cycle is removed, \ie~$\frac{\gamma}{2}$ acts as the identity for the elements in the removed cycle.
\end{definition}

\begin{definition}
	For a subset $\I \subseteq \{-N, \ldots, -1, 1, \ldots, N\}$, let $\gamma$ be a premap given by the cycle decomposition $\gamma = (z_1, \ldots,  z_{n_1}) (z_{n_1 + 1}, \ldots, z_{n_2}) \cdots (z_{n_{k - 1} + 1}, \ldots, z_{n_k} )$ and $\Cbf_1, \ldots, \Cbf_N \in \R^{D \times D}$.
	We define the \emph{trace along $\gamma$} of $\Cbf_1, \ldots, \Cbf_N$ to be:
	\[
	\Tr_\gamma (\Cbf_1, \ldots, \Cbf_N) := \Tr ( \Cbf_{z_1} \cdot \cdots \cdot \Cbf_{z_{n_1}} ) \cdot \Tr ( \Cbf_{z_{n_1 + 1}} \cdot \cdots \cdot \Cbf_{z_{n_2}}) \cdot \cdots \cdot \Tr ( \Cbf_{z_{k - 1} + 1} \cdot \cdots \cdot \Cbf_{n_k} )
	\text{\,.}
	\]
	Analogously, we define the \emph{normalized trace along $\gamma$} to be:
	\[
	\Trnormalized_\gamma (\Cbf_1, \ldots, \Cbf_N) := \Trnormalized ( \Cbf_{z_1} \cdot \cdots \cdot \Cbf_{z_{n_1}} ) \cdot \Trnormalized ( \Cbf_{z_{n_1 + 1}} \cdot \cdots \cdot \Cbf_{z_{n_2}}) \cdot \cdots \cdot \Trnormalized ( \Cbf_{z_{k - 1} + 1} \cdot \cdots \cdot \Cbf_{n_k} )
	\text{\,.}
	\]
\end{definition}

\begin{definition}
	\label{def:euler_char_perm}
	Let $\I \subset \mathbb{Z}$ be a set of integers which does not contain both $i$ and $-i$, for any integer $i \in \mathbb{Z}$.
	Furthermore, let $\gamma$ be a permutation on $\I$ and $\pi$ a premap on $\I \cup -\I := \{ -i : i \in \I\}$.
	The \emph{Euler characteristic} of $(\gamma, \pi)$ is defined by:
	\[
	\chi (\gamma, \pi) := \abs3{ \frac{ \gamma_+^{-1} \gamma_- }{2} } + \abs3{ \frac{ \pi }{2} } + \abs3{ \frac{ \gamma_+^{-1} \pi^{-1} \gamma_- }{ 2 } } - \abs3{\I}
	\text{\,.}
	\]
\end{definition}

\medskip

With the definitions above in place, we are now in a position to import the result of~\citet{redelmeier2014real}, which, for random Gaussian matrices, provides a formula for the expectation of the normalized trace along a permutation $\gamma$.

\begin{theorem}[Adaptation of Lemma~3.8 in~\citet{redelmeier2014real}]
	\label{thm:genus_expansion_main_theorem}
	Let $\gamma$ be a permutation over $[N]$, and $\epsilon : [N] \to \{-1, 1\}$.
	Furthermore, suppose that the entries of $\Abf \in \R^{D \times D}$ are sampled independently from a Gaussian with mean zero and standard deviation $1 / \sqrt{D}$, and $\Cbf_1, \ldots, \Cbf_N \in \R^{D \times D}$ are some fixed (non-random) matrices.
	Then:
	\be
	\EE\nolimits_{\Abf} \brk[s]*{ \Trnormalized_\gamma \brk*{ \Abf_{\epsilon (1)} \Cbf_1, \ldots, \Abf_{\epsilon (N)} \Cbf_N } } = \sum_{ \pi \in \{ \rho \delta \rho : \rho \in \M_N \} } D^{ \chi (\gamma, \delta_\epsilon \pi \delta_\epsilon) - 2 \abs{\gamma} } \cdot \Trnormalized_{ \frac{ \gamma_-^{-1} \delta_\epsilon \pi \delta_\epsilon \gamma_+ }{ 2 } } \brk*{ \Cbf_1, \ldots, \Cbf_N }
	\text{\,,}
	\label{eq:genus_formula}
	\ee
	where $\delta_\epsilon$ is a mapping on $\{-N, \ldots, -1, 1, \ldots, N\}$ defined by $\delta_\epsilon : k \mapsto \epsilon (k) k$, and we extend $\epsilon$ to $\{-N, \ldots, -1\}$ symmetrically, \ie~by setting $\epsilon (k) = \epsilon (-k)$.
\end{theorem}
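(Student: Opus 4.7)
My plan is to prove the formula by coordinate expansion, Isserlis' (Wick's) theorem for the Gaussian entries of $\Abf$, and combinatorial bookkeeping that identifies the resulting sum with the claimed genus expansion.

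First, I would write the normalized trace along $\gamma$ in coordinates. Using a parameterization $f : \{-N, \ldots, -1, 1, \ldots, N\} \to [D]$ in which $f(k)$ and $f(-k)$ encode the row and column indices of the $k$-th matrix factor (with row and column roles exchanged by $\delta_\epsilon$ when $\epsilon(k) = -1$, i.e.\ when $\Abf_{\epsilon(k)} = \Abf^\top$), the cyclic structure of $\gamma$ imposes matching conditions on $f$ along the cycles of $\gamma_+$ and $\gamma_-$. This rewrites the left-hand side of~\cref{eq:genus_formula} as $D^{-\abs{\gamma}}$ times a sum over admissible $f$ of a product of entries of $\Abf$ (with index pattern $a_{f(\delta_\epsilon(k)), f(-\delta_\epsilon(k))}$) times a product of entries of the $\Cbf_k$'s.

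Second, I would take the expectation over $\Abf$ using Isserlis' theorem. Since entries of $\Abf$ are i.i.d.\ Gaussian with variance $1/D$, the expectation of a product of $N$ entries equals $D^{-N/2}$ times a sum over pairings $\rho \in \M_N$, with each pair $\{k,\ell\}$ contributing the constraint that both row and column indices of the $k$-th and $\ell$-th entries agree (and vanishing otherwise). Translated into $f$, this constraint is exactly the identification encoded by the premap $\delta_\epsilon \pi \delta_\epsilon$ with $\pi = \rho\delta\rho$~---~which is precisely the indexing set of $\pi$ in~\cref{eq:genus_formula}.

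Third, with $\pi$ fixed, the residual sum over $f$ runs over maps constant on the orbits on $\{\pm 1, \ldots, \pm N\}$ of the combined action of $\gamma_+$, $\gamma_-$, and $\delta_\epsilon\pi\delta_\epsilon$. A careful combinatorial argument identifies the number of such orbits with $\abs{\gamma_+^{-1}\gamma_-/2} + \abs{\delta_\epsilon\pi\delta_\epsilon/2} + \abs{\gamma_-^{-1}\delta_\epsilon\pi\delta_\epsilon\gamma_+/2}$, as in Definition~\ref{def:euler_char_perm}. Combining this with the prefactors $D^{-\abs{\gamma}}$ (from the two normalized traces defining $\Trnormalized_\gamma$) and $D^{-N/2}$ (from Isserlis), and using $N/2 = \abs{\pi/2}$, yields the exponent $\chi(\gamma, \delta_\epsilon\pi\delta_\epsilon) - 2\abs{\gamma}$. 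The residual factor from the $\Cbf_k$'s then reassembles into $\Trnormalized_{\gamma_-^{-1}\delta_\epsilon\pi\delta_\epsilon\gamma_+/2}(\Cbf_1, \ldots, \Cbf_N)$: consecutive $\Cbf_k$'s on a common cycle of $\gamma_-^{-1}\delta_\epsilon\pi\delta_\epsilon\gamma_+$ multiply together, and the free-index sum along that cycle traces out the resulting product; the ``$/2$'' reflects that cycles come in conjugate pairs under $\delta$, since $\delta\pi\delta = \pi^{-1}$ for any premap, so one canonical representative per pair suffices.

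The main obstacle is the third step: rigorously verifying that the orbit count equals the Euler characteristic of Definition~\ref{def:euler_char_perm}, and that the residual $\Cbf$-weighted sum reassembles into a single normalized trace along the stated quotient permutation. The cleanest formalization is the topological picture, in which one views each cycle of $\gamma$ as an oriented polygon whose edges are labeled by $\pm 1, \ldots, \pm N$ with orientations determined by $\epsilon$, and glues edges according to the pairing $\rho$. The resulting $2$-dimensional CW-complex has vertex, edge, and face counts equal to the three cycle-count terms above, so that $\chi(\gamma, \delta_\epsilon\pi\delta_\epsilon)$ is literally the Euler characteristic of this surface. Making this correspondence precise, including the factor-of-two conventions and the roles of $\gamma_+$ versus $\gamma_-$ in orienting glued edges, is the technical core of~\citet{redelmeier2014real}, which the statement adapts.
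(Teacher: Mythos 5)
Your outline is correct and follows essentially the same route as the source this theorem is taken from: the paper does not prove \cref{thm:genus_expansion_main_theorem} itself but imports it as an adaptation of Lemma~3.8 of \citet{redelmeier2014real}, and your three steps (coordinate expansion of $\Trnormalized_\gamma$, Wick/Isserlis pairing giving the sum over $\rho \in \M_N$ with $\pi = \rho\delta\rho$, and the identification of the surviving index-orbit count with the cycle counts in \cref{def:euler_char_perm}) are precisely that argument, with the exponent bookkeeping $\abs{\I} = N$, $\abs{\pi/2} = N/2$, $\abs{\gamma_+^{-1}\gamma_-/2} = \abs{\gamma}$ indeed yielding $\chi(\gamma,\delta_\epsilon\pi\delta_\epsilon) - 2\abs{\gamma}$. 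The step you correctly flag as the technical core~---~realizing the orbit count as the Euler characteristic of the surface obtained by gluing polygon edges according to the pairing, with orientations dictated by $\epsilon$~---~is the one piece the paper does establish on its own, in \cref{prop:euler_char}, and your sketch of it matches that construction.
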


To obtain explicit bounds over expected traces along a permutation, based on \cref{thm:genus_expansion_main_theorem}, we need to bound the Euler characteristic $\chi$.
For that purpose, \citet{redelmeier2014real} makes use of a topological interpretation of $\chi$ via the concept of genus expansion.
Specifically, as we show below, each summand on the right hand side of \cref{eq:genus_formula} corresponds to a two-dimensional surface whose topological properties determine the size of the summand.
We first give some necessary background from topology.

\begin{definition} 
	\label{def:euler_char_surface}
	The \emph{Euler characteristic} of a surface $\G$ is defined by:
	\[
	\chi (\G) := V(\G) + F(\G) - E(\G)
	\text{\,,}
	\]
	where $V, F, E$ are the number of vertices, faces, and edges of $\G$, respectively.
	Strictly speaking, $\chi (\G)$ is calculated by constructing a CW complex which is homeomorphic to $\G$ and determining $V (\G), F (\G),$ and $E (\G)$ through it.
	A basic theorem in topology shows that $V (\G), F (\G),$ and $E (\G)$ are invariant under homotopy, and so the choice of CW complex does not matter (\cf~\citet{munkres2018elements}).
\end{definition}

The following proposition establishes basic properties of the Euler characteristic of surfaces.
\begin{proposition}
	\label{top_theorem}
	For a surface $\G$, the Euler charactersitic $\chi$ satisfies:
	\begin{itemize}
		\item if $\G$ has $m \in \N$ connected components $\G_1, \ldots, \G_m$, then $\chi(\G)=\chi(\G_1) + \cdots + \chi (\G_m)$;
		
		\item and a connected surface $\G$ satisfies $\chi(\G) \leq 2$, with equality holding if and only if $\G$ is homeomorphic to a sphere.
	\end{itemize}
\end{proposition}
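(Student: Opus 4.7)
The statement is a standard result in topology, and the plan is to reduce each part to a well-known classification theorem. For the first assertion (additivity under disjoint union), I would note that if $\G$ has connected components $\G_1, \ldots, \G_m$, then a CW complex structure on $\G$ can always be chosen as the disjoint union of CW complex structures on $\G_1, \ldots, \G_m$. Since vertices, edges, and faces of $\G$ then split as disjoint unions of those of $\G_i$, the counts satisfy $V(\G) = \sum_i V(\G_i)$, $E(\G) = \sum_i E(\G_i)$, and $F(\G) = \sum_i F(\G_i)$. Plugging into the definition in~\cref{def:euler_char_surface} immediately gives $\chi(\G) = \sum_i \chi(\G_i)$.

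For the second assertion, I would invoke the classification theorem of closed surfaces (\cf~\citet{munkres2018elements}), according to which every compact connected surface without boundary is homeomorphic to exactly one of: the sphere $S^2$; the connected sum of $g \geq 1$ tori (orientable of genus $g$); or the connected sum of $k \geq 1$ real projective planes (non-orientable of genus $k$). A direct CW-complex computation in each of these three canonical forms yields $\chi(S^2) = 2$, $\chi(\Sigma_g) = 2 - 2g$ for the orientable genus-$g$ surface, and $\chi(N_k) = 2 - k$ for the non-orientable genus-$k$ surface. In all cases $\chi \leq 2$, and equality holds exactly in the sphere case.

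The main (only) obstacle is that the surfaces $\G(\gamma, \epsilon, \rho)$ produced by the genus-expansion construction in~\citet{redelmeier2014real} must actually fall within the scope of the classification theorem, \ie~they are closed (compact, without boundary) surfaces. This is guaranteed by the construction: each face of $\G(\gamma, \epsilon, \rho)$ is a polygon whose edges are glued in pairs by the pairing $\rho$ (and by $\gamma$), leaving no unmatched edges. Once closedness is verified, the invariance of $\chi$ under homeomorphism, combined with the classification, gives the inequality $\chi(\G) \leq 2$ and pins down the equality case.

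Since the proposition is a direct citation of the classification of closed surfaces and the additivity of Euler characteristic under disjoint union, no further computation is needed beyond referencing these standard facts.
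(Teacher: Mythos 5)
Your proposal is correct and is essentially the route the paper takes: the paper simply cites these as basic facts from topology (referring to \citet{munkres2018elements}), and your sketch fills in the standard justification via additivity of $V$, $E$, $F$ over a disjoint-union CW structure and the classification of closed surfaces, including the relevant check that the glued surfaces $\G(\gamma,\epsilon,\rho)$ are closed. No gap; your elaboration is a valid expansion of the paper's one-line citation.
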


\begin{proof}
	These are basic properties from the field of topology~---~see \citet{munkres2018elements}.
\end{proof}

Now, given a permutation $\gamma$ on $[N]$, a function $\epsilon : [N] \to \{-1, 1\}$, and a pairing $\rho \in \M_N$, we construct a surface $\G (\gamma, \epsilon, \rho)$, whose properties will then determine the corresponding term in the sum of \cref{eq:genus_formula}.

Let $\G(\gamma,\epsilon,\rho)$ be the following (perhaps disconnected) two dimensional surface.
Each cycle $(z_1, \ldots, z_m)$ of $\gamma$ is associated with the front of an $m$-gon. 
The back of this $m$-gon is associated with the corresponding cycle of \(\gamma_{-}\), \ie~with $(-z_1, \ldots, -z_m)$.
The $m$-gon will serve as one of the faces of $\G(\gamma,\epsilon,\rho)$. 
For orienting the edges of the face defined above, if $\Cbf_{\epsilon (z_j)}$ is transposed, \ie~$\epsilon(z_j) = -1$, the corresponding edge is oriented clockwise, and otherwise it is oriented counterclockwise. 
At each vertex of the face we place the matrix $\Cbf_{z_j}$.
We now connect faces defined by different cycles according to the following procedure.
Let $\sigma := \delta_{\epsilon} \rho \delta \rho\delta_{\epsilon}$, which is a pairing of $\{-N, \ldots, -1, 1, \ldots, N\}$. 
For every pair $(n , \sigma (n))$, where $n \in \{-N, \ldots, -1, 1, \ldots, N\}$, we glue edge $n$ to $\sigma(n)$ according to their respective orientations (where the signs of $(n, \sigma(n))$ determine whether we flip these orientations, \ie~glue the fronts or backs of each edge). 
Overall, we obtain a surface $\G(\gamma,\epsilon,\rho)$ from these glued faces.

Finally, \cref{prop:euler_char} establishes that the Euler characteristic of $\G (\gamma, \epsilon, \rho)$ (\cref{def:euler_char_surface}), constructed above, is equal to the Euler characteristic of $(\gamma , \delta_\epsilon \rho \delta \rho \delta_\epsilon)$ (\cref{def:euler_char_perm}).

\begin{proposition}
	\label{prop:euler_char}
	Given a permutation $\gamma$ on $[N]$, a function $\epsilon : [N] \to \{-1, 1\}$, and a pairing $\rho \in \M_N$, let $\sigma := \delta_\epsilon \rho \delta \rho \delta_\epsilon$.
	For the surface $\G (\gamma, \epsilon, \rho)$ constructed as specified above, it holds that $\chi (\gamma, \sigma) = \chi ( \G (\gamma, \epsilon, \rho) )$.
\end{proposition}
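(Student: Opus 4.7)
The plan is to show directly that the algebraically defined quantity $\chi(\gamma,\sigma)$, where $\sigma := \delta_\epsilon \rho \delta \rho \delta_\epsilon$, is equal to the topological Euler characteristic $V(\G) - E(\G) + F(\G)$ of the surface $\G(\gamma,\epsilon,\rho)$. I would do this by matching each of the three cycle-count terms in the algebraic definition with one of the topological quantities $F, E, V$, and then using $\lvert\I\rvert = N$ as the normalization that converts raw half-edge counts into actual edge counts.

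\textbf{Faces.} First, I would identify $\big|\tfrac{\gamma_+^{-1}\gamma_-}{2}\big|$ with $F(\G)$. By construction, each cycle of $\gamma$ contributes a single polygonal face, so $F(\G) = |\gamma|$. On the algebraic side, one checks that $\gamma_+^{-1}\gamma_-$ is a premap whose cycles occur in mirror pairs: every cycle $(z_1,\ldots,z_m)$ of $\gamma$ yields a positive-indexed cycle of $\gamma_+^{-1}$ and a mirrored negative-indexed cycle of $\gamma_-$. The halving operation selects exactly one representative per pair, producing $|\gamma|$ cycles; this gives $F(\G) = \big|\tfrac{\gamma_+^{-1}\gamma_-}{2}\big|$.

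\textbf{Edges.} Second, I would show that the pairing $\sigma$ on $\I \cup -\I$ encodes the gluing of half-edges. Each polygon carries $m$ front half-edges and $m$ back half-edges, giving $2N$ half-edges total before gluing. The pairing $\rho$ (together with the $\delta$ and $\delta_\epsilon$ twists that account for orientation flips when $\epsilon(j)=-1$) identifies these into $N$ honest edges. Taking cycle counts on the halved domain and noting that each pair in $\sigma$ corresponds to one glued edge (with the mirror pair being redundant), I would derive that $E(\G) = |\I| - \big|\tfrac{\sigma}{2}\big|$, which accounts for the $-|\I|$ correction and the sign of the $\big|\tfrac{\sigma}{2}\big|$ term in the algebraic formula.

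\textbf{Vertices.} Third, and most delicately, I would argue that $\big|\tfrac{\gamma_+^{-1}\sigma^{-1}\gamma_-}{2}\big| = V(\G)$. The idea is that starting at a vertex of a polygon and alternately traversing an incident edge of $\gamma$ (to move to the next vertex along the boundary of a face) and an edge of $\sigma$ (to hop across a glued edge to the corresponding vertex on another face) traces out precisely the vertex-orbit of $\G$ after gluing. The composition $\gamma_+^{-1}\sigma^{-1}\gamma_-$ is exactly this alternating transport, and its cycle structure (modulo the mirror halving) enumerates the post-gluing vertices. The main obstacle will be here: the $\epsilon$-signs control whether gluings are orientation-preserving or orientation-reversing, and I will need a careful local analysis (starting from a single polygon and a single gluing) to verify that $\gamma_+^{-1}\sigma^{-1}\gamma_-$ correctly encodes vertex identifications in both cases, and to confirm that the halving operation corresponds exactly to passing from the premap on $\I\cup -\I$ to the vertex set of $\G$.

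Combining the three identifications, $\chi(\gamma,\sigma) = F(\G) + \big(|\I| - E(\G)\big) + V(\G) - |\I| = V(\G) - E(\G) + F(\G) = \chi(\G(\gamma,\epsilon,\rho))$, as desired. I expect the vertex step to absorb the bulk of the technical effort, while the face and edge steps will follow relatively mechanically from the mirror-pair structure of premaps.
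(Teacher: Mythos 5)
Your plan is correct and follows essentially the same route as the paper's proof: identify $\bigl|\tfrac{\gamma_+^{-1}\gamma_-}{2}\bigr|$ with $F(\G)$, use $\bigl|\tfrac{\sigma}{2}\bigr| = N/2 = E(\G)$ so that the $-\lvert\I\rvert$ term supplies $-E$, and match the third cycle count with $V(\G)$ via the alternating face/gluing transport (noting $\sigma^{-1}=\sigma$, so the inverse is immaterial). The only difference is presentational: the paper delegates the vertex identification to a generalization of Lemma 13.5 of Kemp covering non-orientable gluings, whereas you propose to verify that orbit argument directly, which is exactly the content of that lemma.
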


\begin{proof}
	Recall that $\chi (\gamma,\sigma)$ is given by (\cf~\cref{def:euler_char_perm}):
	\[
	\abs3{ \frac{ \gamma_+^{-1} \gamma_- }{2} } + \abs3{ \frac{\sigma}{2} } + \abs3{ \frac{ \gamma_+^{-1} \sigma \gamma_- }{ 2 } } - N
	\text{\,,}
	\]
	and the Euler characteristic of $\G(\gamma,\epsilon,\rho)$ is given by (\cf~\cref{def:euler_char_surface}):
	\[
	V( \G(\gamma,\epsilon,\rho) ) + F (\G (\gamma,\epsilon,\rho) ) - E( \G(\gamma,\epsilon,\rho) )
	\text{\,.}
	\]
	Thus it suffices to show that the following hold:
	\[
	\abs3{ \frac{\gamma_{+}^{-1}\gamma_{-}}{2} } = F( \G(\gamma,\epsilon,\rho)) \quad , \quad \abs3{ \frac{ \sigma }{2} } - N = - E ( \G (\gamma, \epsilon, \rho)) \quad , \quad \abs3{ \frac{ \gamma_+^{-1} \sigma \gamma_{-} }{2} } = V (\G (\gamma, \epsilon, \rho))
	\text{\,.}
	\]
	The first equality (left) follows immediately from the fact that $|\frac{\gamma_{+}^{-1}\gamma_{-}}{2}| = |\gamma|$, and the construction of \(\G(\gamma,\epsilon,\rho ) \). 
	As for the second equality (middle), since $\sigma$ is a premap, which is a pairing on a domain of size $2N$, we have that $|\frac{\sigma}{2}| = \frac{|\sigma|}{2}= \frac{N}{2}$.
	On the other hand, by construction $\G(\gamma,\epsilon,\rho )$ has $\frac{N}{2}$ edges. 
	The third equality (right) relies on a generalization of Lemma~13.5 from~\citet{kemp2013math} to account for non-orientable gluings.
	Specifically, the vertices of \(\G(\gamma,\epsilon,\rho)\) correspond to the cycles of \(\frac{  \gamma_{-1}^{-1} \sigma \gamma_{+} }{2}\), \ie~each vertex of \(\G(\gamma,\epsilon,\rho)\) corresponds to the gluing of the vertices in some cycle of \(\frac{\gamma_{-1}^{-1}\sigma \gamma_{+}}{2}\).
	Note that \(\sigma\),and therefore \(\gamma_{-1}^{-1}\sigma \gamma_{+}\), are premaps. 
	Thus the division by $2$ leaves us with a permutation that acts on a set containing exactly one of $\{-n, n \}$, for each $n \in [N]$. 
	This corresponds to the choice whether to glue each edge  of the polygons from the front or the back.	
\end{proof}

	\section{Further Experiments and Implementation Details}
\label{app:experiments}

\subsection{Further Experiments With Underdetermined LQR Problems}
\label{app:experiments:lqr}

\cref{fig:lqr_experiments_h8,fig:lqr_experiments_d40,fig:lqr_experiments_rnd_B_Q} supplement \cref{fig:lqr_experiments_main} (from \cref{sec:experiments:lqr}) by including analogous experiments with, respectively: \emph{(i)} a longer time horizon $H = 8$ (instead of $H = 5$); \emph{(ii)} a larger state space dimension $D = 40$ (instead of $D = 5$); and \emph{(iii)} random $\Bbf$ and positive semidefinite $\Qbf$ matrices (instead of $\Bbf = \Qbf = \Ibf$).

\subsection{Further Experiments With Neural Network Controllers in Non-Linear Systems}
\label{app:experiments:nn}

For the quadcopter control problem,  \cref{fig:quad_experiments_add_unseen_initial_states,fig:quad_experiments_dist} supplement \cref{fig:pend_and_quad_experiments_main} by demonstrating that, respectively: \emph{(i)} the extent of extrapolation varies depending on the distance from initial states seen in training; and \emph{(ii)} extrapolation occurs to initial states unseen in training at different horizontal distances from the initial states seen in training (in addition to unseen initial states below those seen in training).

\subsection{Further Implementation Details}
\label{app:experiments:details}

We provide implementation details omitted from our experimental reports (\cref{sec:experiments,app:experiments:lqr,app:experiments:nn}).
Source code for reproducing our results and figures, based on the PyTorch~\citep{paszke2019pytorch} framework,\ifdefined\CAMREADY
~can be found at \url{https://github.com/noamrazin/imp_bias_control}.
\else
~is attached as supplementary material and will be made publicly available.
\fi
The experiments with underdetermined LQR problems (\cref{sec:experiments:lqr,app:experiments:lqr}) were carried out on a standard laptop, whereas for experiments with neural network controllers in non-linear systems (\cref{sec:experiments:nonlinear,app:experiments:nn}) we used a single Nvidia RTX 2080 Ti GPU.

\subsubsection{Linear Quadratic Control (\cref{sec:experiments:lqr})}
\label{app:experiments:details:lqr}

\textbf{System.} In all experiments, except for those with the “random $\Abf, \Bbf, \Qbf$'' system (\cref{fig:lqr_experiments_rnd_B_Q}), we set $\Bbf = \Qbf = \Ibf \in \R^{D \times D}$.

\textbf{Initial states.} For experiments with $d \in [D]$ initial states seen in training, we trained on the first $d$ standard basis vectors, and used the remaining standard basis vectors for evaluating extrapolation.

\textbf{Optimization.}
We ran policy gradient over a linear controller for $10^5$ iterations using a learning rate of $10^{-3}$.
For the experiments of \cref{fig:lqr_experiments_d40}, to allow stable training with a larger state space dimension and longer horizon, we ran policy gradient for twice as many iterations using a smaller learning rate of $10^{-4}$.

In the experiments of \cref{fig:lqr_experiments_main}, for all system types, median training cost across random seeds was within $10^{-8}$ of the minimal possible training cost.
In the experiments of \cref{fig:lqr_experiments_h8}, for all system types with $H = 8$, median training cost was within $2 \cdot 10^{-5}$ of the minimal possible training cost.
In the experiments of \cref{fig:lqr_experiments_d40}, for all system types, median training cost was within $2 \cdot 10^{-3}$ of the minimal possible training cost.
Lastly, in the experiments of \cref{fig:lqr_experiments_rnd_B_Q}, for the “random $\Abf, \Bbf, \Qbf$'' system type, median training cost was within $0.02$ of the minimal possible training cost.

\subsubsection{The Pendulum Control Problem (\cref{sec:experiments:nonlinear})}
\label{app:experiments:details:nn:pend}

	\textbf{System.}
	The two-dimensional state of the system is described by the vertical angle of the pendulum $\theta \in \R$ and its angular velocity $\dot{\theta} \in \R$.
	At time step $h$, the controller applies a torque $u_h \in \R$, giving rise to the following non-linear dynamics for a unit length pendulum with a unit mass object mounted on top of it:
	\be
	\begin{split}
		\theta_{h} & = \theta_{h - 1} + \Delta \cdot \dot{\theta}_{h - 1} \\
		\dot{\theta}_{h} & = \dot{\theta}_{h - 1} + \Delta \cdot \brk1{ u_{h - 1} - g \cdot \sin ( \theta_{h - 1} ) }
	\end{split}
	~~,~ \forall h \in [H]
	\text{\,,}
	\label{eq:pend_state_dynamics}
	\ee
	where $\Delta = 0.05$ is a time discretization resolution and $g = 10$ is the gravitational acceleration constant.

	\textbf{Cost.}
	The goal of the controller is to make the pendulum reach and stay at the target state $(\pi, 0)$.
	Accordingly, the cost at each time step is the squared Euclidean distance from $(\pi, 0)$.
	Specifically, suppose that we are given a (finite) set of initial states $\X \subset \R^2$.
	For a (state-feedback) controller $\pi_\wbf : \R^2 \to \R$, parameterized by $\wbf \in \R^P$, the cost is defined by:
	\be
	\cost (\wbf ; \X) := \frac{ 1 }{ H \cdot \abs{ \X } } \sum\nolimits_{ (\theta_0, \dot{\theta}_0) \in \X } \sum\nolimits_{h = 0}^H \norm*{ (\theta_h, \dot{\theta}_h ) - (\pi, 0) }^2
	\text{\,,}
	\label{eq:pend_cost}
	\ee
	where $\theta_h$ and $\dot{\theta}_h$ evolve according to \cref{eq:pend_state_dynamics} with $u_{h - 1} = \pi_\wbf ( \theta_{h - 1}, \dot{\theta}_{h - 1} )$, for $h \in [H]$.
	In all experiments, the time horizon is set to $H = 100$.
	
	\textbf{Initial states.}
	For the experiments of \cref{fig:pend_and_quad_experiments_main}, \cref{table:pend_experiments_states} specifies the initial states used for training and those used for evaluating extrapolation to initial states unseen in training.
	
	\textbf{Controller parameterization.} 
	We parameterized the controller as a fully-connected neural network with ReLU activation.
	The network was of depth $4$ and width $50$.
	Parameters were randomly initialized according to the default PyTorch implementation.
	
	\textbf{Non-extrapolating controller.}
	To obtain a non-extrapolating controller for \cref{fig:pend_and_quad_experiments_main}, we trained the controller using a modified objective instead of the standard training cost.
	In addition to the cost over initial states seen in training, the modified objective includes an “adversarial'' cost term over initial states unseen in training, for which the target state is set to be either $(0, 0)$ or $(2\pi, 0)$ (as opposed to the original target state $(\pi, 0)$).
	Specifically, for a coefficient $\lambda = 0.1$, the modified objective is given by:
	\[
		\cost (\wbf ; \trainstates) + \lambda \cdot \frac{1}{ H \cdot \abs{ \U } } \sum\nolimits_{ (\theta_0, \dot{\theta}_0) \in \U } \sum\nolimits_{h = 0}^H \norm*{ (\theta_h, \dot{\theta}_h ) - \brk*{ \bar{\theta}_{\theta_0} , 0} }^2
		\text{\,,}
	\]
	where $\trainstates \subset \R^2$ is the set of initial states seen in training, $\U \subset \R^2 \setminus \trainstates$ is the set of initial states used for evaluating extrapolation to initial states unseen in training, $\cost (\cdot \,; \trainstates)$ is defined by \cref{eq:pend_cost}, $\theta_h$ and $\dot{\theta}_h$ evolve according to \cref{eq:pend_state_dynamics} with $u_{h - 1} = \pi_\wbf ( \theta_{h - 1}, \dot{\theta}_{h - 1} )$, for $h \in [H]$, and $\bar{\theta}_{\theta_0}  = 0$ if $\theta_0 \leq \pi$ and $\bar{\theta}_{\theta_0} = 2 \pi$ if $\theta_0 > \pi$.
	We trained five controllers with this modified objective, using different random seeds, and selected for \cref{fig:pend_and_quad_experiments_main} the one attaining the lowest training cost.
	
	\textbf{Optimization.}
	The training cost was minimized via policy gradient with learning rate $5 \cdot 10^{-4}$.
	For training the non-extrapolating controller over the modified objective (specified above), we found the Adam optimizer~\cite{kingma2015adam} to be substantially more effective.
	Hence, for that purpose, we used Adam with default $\beta_1, \beta_2$ coefficients and learning rate $3 \cdot 10^{-4}$.
	Optimization proceeded until the training objective did not improve by at least $10^{-5}$ over $5,\!000$ consecutive iterations or $75,\!000$ iterations elapsed.
	The final controller in each run was taken to be that which achieved the lowest training cost across the iterations.
	We carried out five training runs with different random seeds, over both the standard and modified objectives, and chose to display the policy gradient controller that attained the median cost measure of extrapolation, and as a baseline the non-extrapolating controller that attained the lowest training cost.
	
	\textbf{Computing the normalized cost measure of extrapolation.} 
	Let $\wbf_{\mathrm{no-ext}} \in \R^P$ be the parameters of the non-extrapolating controller.
	The normalized cost measure of extrapolation attained by $\wbf \in \R^P$ for a set of initial states unseen in training $\U \subset \R^2 \setminus \trainstates$ is computed as follows: $ ( \cost ( \wbf ; \U ) - \tilde{\cost}^* (\U) ) / ( \cost (\wbf_{\mathrm{no-ext}} ; \U ) - \tilde{\cost}^* (\U) )$,
	where $\cost (\cdot \,; \U )$ is defined by \cref{eq:pend_cost} and $\tilde{\cost}^* (\U)$ is an estimate of the minimal possible cost over $\U$.
	We obtained the estimate $\tilde{\cost}^* (\U)$ by training a neural network controller (of the same architecture specified above) for minimizing the cost only over $\U$, \ie~for minimizing $\cost (\cdot \,; \U)$.
	We carried out five such runs, differing in random seed, and took  $\tilde{\cost}^* (\U)$ to be the minimal cost attained across the runs.

\subsubsection{The Quadcopter Control Problem (\cref{sec:experiments:nonlinear})}
\label{app:experiments:details:nn:quad}

\textbf{System.}
The state of the system $\xbf = (x, y, z, \phi, \theta, \psi, \dot{x}, \dot{y}, \dot{z}, \dot{\phi}, \dot{\theta}, \dot{\psi}) \in \R^{12}$ comprises the quadcopter's position $(x, y, z) \in \R^3$, tilt angles $(\phi, \theta, \psi) \in \R^3$ (\ie~roll, pitch, and yaw), and their respective velocities.
At time step $h$, the controller chooses $\ubf_h \in [0, \text{MAX\_RPM}]^4$, which determines the revolutions per minute (RPM) for each of the four motors, where $ \text{MAX\_RPM} = 21713.71$ is the maximal supported RPM. 
Our implementation of the state dynamics is adapted from the \href{https://github.com/DiffEqML/torchcontrol}{\texttt{torchcontrol}} GitHub repository, which is based on the explicit dynamics given in~\citet{panerati2021learning}.
For completeness, we lay out explicitly the evolution at time step $h \in [H]$:
\be
\begin{split}
	\begin{pmatrix}
		x_h \\
		y_h \\
		z_h
	\end{pmatrix} & = \begin{pmatrix}
		x_{h - 1} \\
		y_{h - 1} \\
		z_{h - 1}
	\end{pmatrix}  + \Delta \cdot \begin{pmatrix}
		\dot{x}_{h - 1} \\
		\dot{y}_{h - 1} \\
		\dot{z}_{h - 1}
	\end{pmatrix} \text{\,,}
\end{split}
\label{eq:quad_state_dynamics_1}
\ee
\be
\begin{split}
	\begin{pmatrix}
		\phi_h \\
		\theta_h \\
		\psi_h
	\end{pmatrix} & = \begin{pmatrix}
		\phi_{h - 1} \\
		\theta_{h - 1} \\
		\psi_{h - 1}
	\end{pmatrix}  + \Delta \cdot\begin{pmatrix}
		\dot{\phi}_{h - 1} \\
		\dot{\theta}_{h - 1} \\
		\dot{\psi}_{h - 1}
	\end{pmatrix} \text{\,,}
\end{split}
\label{eq:quad_state_dynamics_2}
\ee
\be
\begin{split}
	\begin{pmatrix}
		\dot{x}_{h} \\
		\dot{y}_{h} \\
		\dot{z}_{h}
	\end{pmatrix} & = \begin{pmatrix}
		\dot{x}_{h - 1} \\
		\dot{y}_{h - 1} \\
		\dot{z}_{h - 1}
	\end{pmatrix}  + \frac{\Delta}{m} \cdot \brk*{ \Vbf_{h - 1} \begin{pmatrix} 0 \\ 0 \\ k_f \cdot \norm{ \ubf_{h - 1} }^2 \end{pmatrix} - \begin{pmatrix} 0 \\ 0 \\ g \end{pmatrix} } \text{\,,}
\end{split}
\label{eq:quad_state_dynamics_3}
\ee
\be
\begin{split}
	\begin{pmatrix}
		\dot{\phi}_{h} \\
		\dot{\theta}_{h} \\
		\dot{\psi}_{h}
	\end{pmatrix} & = \begin{pmatrix}
		\dot{\phi}_{h - 1} \\
		\dot{\theta}_{h - 1} \\
		\dot{\psi}_{h - 1}
	\end{pmatrix} + \Delta \cdot \Pbf^{-1}  \brk*{ 
		\begin{pmatrix} 
			\frac{ k_F \cdot l}{ \sqrt{2} } \cdot \brk{ \ubf^2_{h - 1} [1] + \ubf^2_{h - 1} [2] - \ubf^2_{h - 1} [3] - \ubf^2_{h - 1} [4]} \\
			\frac{ k_F \cdot l}{ \sqrt{2} } \cdot \brk{ - \ubf^2_{h - 1} [1] + \ubf^2_{h - 1} [2] +	 \ubf^2_{h - 1} [3] - \ubf^2_{h - 1} [4]} \\
			k_T \cdot \brk{ - \ubf^2_{h - 1} [1] + \ubf^2_{h - 1} [2] - \ubf^2_{h - 1} [3] + \ubf^2_{h - 1} [4]}
		\end{pmatrix} 
		-  
		\begin{pmatrix}
			\dot{\phi}_{h - 1} \\
			\dot{\theta}_{h - 1} \\
			\dot{\psi}_{h - 1}
		\end{pmatrix} \times \Pbf \begin{pmatrix}
			\dot{\phi}_{h - 1} \\
			\dot{\theta}_{h - 1} \\
			\dot{\psi}_{h - 1}
		\end{pmatrix}
	}
	\text{\,,}
\end{split}
\label{eq:quad_state_dynamics_4}
\ee
where $\times$ stands here for the cross product of two vectors, $\ubf_{h - 1} [1], \ldots, \ubf_{h - 1} [4]$ are the entries of $\ubf_{h - 1}$, $\Delta = 0.02$ is the time discretization resolution, $g = 9.81$ is the gravitational acceleration constant, $m = 0.027$ is the quadcopter's mass, $l = 0.0397$ is the quadcopter's arm length, $k_F = 3.16 \cdot 10^{-10}, K_T = 7.94 \cdot 10^{-12}$ describe physical constants related to the conversion of motor RPM to torque, and:
\[
\begin{split}
	\Vbf_{h - 1} & =  \begin{pmatrix}
		\cos (\theta) & \sin ( \theta ) \sin ( \phi ) & \sin (\theta ) \cos ( \phi ) \\ 
		\sin ( \theta ) \sin ( \psi ) & - \cos (\theta ) \sin ( \phi ) \sin ( \psi ) + \cos ( \phi ) \cos (\psi) & - \cos (\theta)  \cos (\phi) \sin (\psi) - \sin (\phi) \cos (\psi) \\
		- \sin (\theta) \cos (\psi) & \cos (\theta) \sin (\phi) \cos (\psi) + \cost (\phi) \sin (\psi) & \cos (\theta) \cos (\phi) \cos (\psi) - \sin (\phi) \sin (\psi)
	\end{pmatrix}
	\\[0.4em]
	\Pbf & = \begin{pmatrix}
		1.4 \cdot 10^{-5} & 0 & 0 \\
		0 & 1.4 \cdot 10^{-5} & 0 \\
		0 & 0 & 2.17 \cdot 10^{-5}
	\end{pmatrix}
\end{split}
\]
are rotation and inertial matrices, respectively.
For brevity of notation, we omitted the subscript $h -1$ from $\phi, \theta, \psi$ in the definition of $\Vbf_{h - 1}$.

\textbf{Cost.}
The goal of the controller is to make the quadcopter reach and stay at the target state $\xbf^* = (0, 0, 1, 0, \ldots, 0)$.
In accordance with the \href{https://github.com/DiffEqML/torchcontrol}{\texttt{torchcontrol}} implementation, the cost at each time step is a weighted squared Euclidean distance from $\xbf^*$.
Specifically, suppose that we are given a (finite) set of initial states $\X\subset \R^{12}$.
For a (state-feedback) controller $\pi_\wbf : \R^{12} \to [0, \text{MAX\_RPM}]^4$, parameterized by $\wbf \in \R^P$, the cost is defined by:
\be
\cost (\wbf ; \X) := \frac{ 1 }{ H \cdot  \abs{ \X } } \sum\nolimits_{ \xbf_0 \in \X } \sum\nolimits_{h = 0}^H \sum\nolimits_{d = 1}^{12} \alpha_d^2 \cdot \brk*{ \xbf_h [d] - \xbf^*[d] }^2
\text{\,,}
\label{eq:quad_cost}
\ee
where $\xbf_h \in \R^{12}$ evolves according to \cref{eq:quad_state_dynamics_1,eq:quad_state_dynamics_2,eq:quad_state_dynamics_3,eq:quad_state_dynamics_4} with $\ubf_{h - 1} = \pi_\wbf ( \xbf_{h - 1} )$, for $h \in [H]$, the cost weights are $\alpha_1 = \alpha_2 = \alpha_3 = 1$ and $\alpha_4 = \cdots = \alpha_{12} = 0.1$, and $\xbf_h [d], \xbf^* [d]$ denote the $d$'th entries of $\xbf_h, \xbf^*$, respectively.
In all experiments, the time horizon is set to $H = 50$.

\textbf{Initial states.}
For the experiments of \cref{fig:pend_and_quad_experiments_main,fig:quad_experiments_add_unseen_initial_states,fig:quad_experiments_dist}, \cref{table:quad_below_experiments_states,,table:quad_experiments_add_unseen_initial_states_states,table:quad_dist_experiments_states} specify the initial states used for training and those used for evaluating extrapolation to initial states unseen in training, respectively.

\textbf{Controller parameterization.} 
As in pendulum control experiments (\cf~\cref{app:experiments:details:nn:pend}), we parameterized the controller as a fully-connected neural network with ReLU activation.
The network was of depth $4$ and width $50$, and its parameters were randomly initialized according to the default PyTorch implementation.
To convert the network's outputs into values within $[0, \text{MAX\_RPM}]$, we applied the hyperbolic tangent activation and linearly scaled the result.
That is, denoting by $\zbf \in \R^4$ the output of the network for some state, the chosen control was $\ubf = ( \tanh ( \zbf ) + \1 ) \cdot \frac{\text{MAX\_RPM}}{2}$, where $\tanh$ is applied element-wise and $\1 \in \R^4$ is the vector whose entries are all equal to one.

\textbf{Non-extrapolating controller.}
Similarly to the pendulum control experiments (\cf~\cref{app:experiments:details:nn:pend}), to obtain a non-extrapolating controller baselines for \cref{fig:pend_and_quad_experiments_main,fig:quad_experiments_add_unseen_initial_states,fig:quad_experiments_dist}, we trained controllers using a modified objective instead of the standard training cost.
In addition to the cost over initial states seen in training, the modified objective includes an “adversarial'' cost term over initial states unseen in training, for which the target state is set to be $\bar{\xbf} = (0, 0, 0, 0, \ldots, 0)$ (as opposed to the original target state $\xbf^* = (0, 0, 1, 0, \ldots, 0)$).
Specifically, for a coefficient $\lambda = 0.1$, the modified objective is given by:
\[
\cost (\wbf ; \trainstates) + \lambda \cdot \frac{1}{ \abs{ H \cdot \U } } \sum\nolimits_{ \xbf_0 \in \U } \sum\nolimits_{h = 0}^H\sum\nolimits_{d = 1}^{12} \alpha_d^2 \cdot \brk*{ \xbf_h [d] - \bar{\xbf}[d] }^2
\text{\,,}
\]
where $\trainstates \subset \R^{12}$ is the set of initial states seen in training, $\U \subset \R^{12} \setminus \trainstates$ is the set of initial states used for evaluating extrapolation to initial states unseen in training, $\cost (\cdot \,; \trainstates)$ is defined by \cref{eq:quad_cost}, $\xbf_h \in \R^{12}$ evolves according to \cref{eq:quad_state_dynamics_1,eq:quad_state_dynamics_2,eq:quad_state_dynamics_3,eq:quad_state_dynamics_4} with $\ubf_{h - 1} = \pi_\wbf ( \xbf_{h - 1} )$, for $h \in [H]$, the cost weights are $\alpha_1 = \alpha_2 = \alpha_3 = 1$ and $\alpha_4 = \cdots = \alpha_{12} = 0.1$, and $\xbf_h [d], \bar{\xbf} [d]$ denote the $d$'th entries of $\xbf_h, \bar{\xbf}$, respectively.
For each of~\cref{fig:pend_and_quad_experiments_main,fig:quad_experiments_add_unseen_initial_states,fig:quad_experiments_dist}, we trained five controllers with this modified objective, using different random seeds, and selected the one attaining the lowest training cost.

\textbf{Optimization.}
In all experiments, the training cost was minimized via the Adam optimizer~\cite{kingma2015adam} with default $\beta_1, \beta_2$ coefficients and learning rate $3 \cdot 10^{-4}$.
Optimization proceeded until the training objective did not improve by at least $10^{-5}$ over $5,\!000$ consecutive iterations or $75,\!000$ iterations elapsed.
The final controller in each run was taken to be that which achieved the lowest training cost across the iterations.
We carried out five training runs with different random seeds, over both the standard and modified objectives, and chose to display the policy gradient controller that attained the median cost measure of extrapolation, and as a baseline the non-extrapolating controller that attained the lowest training cost.

\textbf{Computing the normalized cost measure of extrapolation.} 
The normalized cost measure of extrapolation was computed according to the process described in \cref{app:experiments:details:nn:pend} for the pendulum control experiments.

\clearpage

\begin{figure*}[h!]
	\vspace{5mm}
	\begin{center}
		\includegraphics[width=1\textwidth]{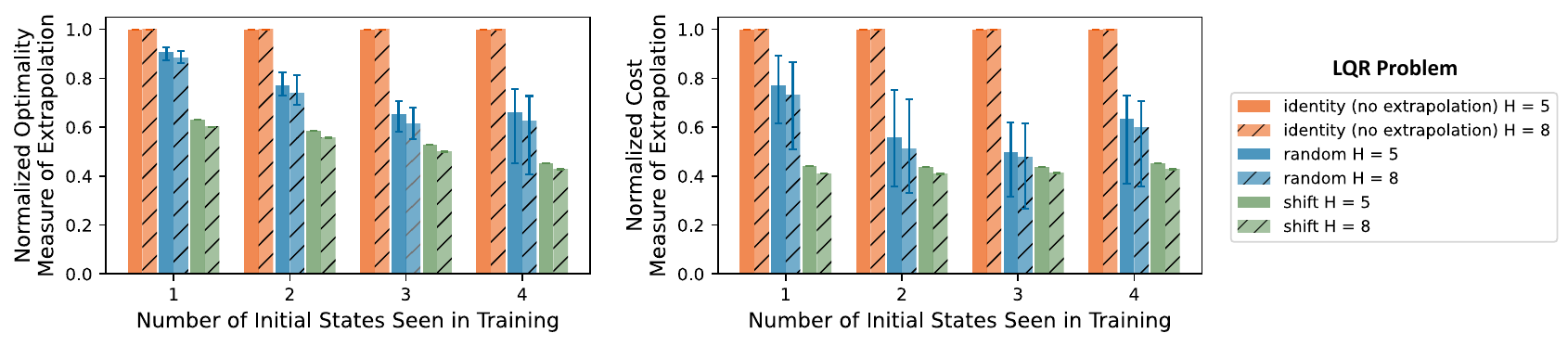}
	\end{center}
	\vspace{-2.5mm}
	\caption{
		In underdetermined LQR problems (\cref{sec:prelim:underdetermined}), the extent to which linear controllers learned via policy gradient extrapolate to initial states unseen in training, depends on the degree of exploration that the system induces from initial states that were seen in training.
		This figure supplements \cref{fig:lqr_experiments_main} by including results for analogous experiments over systems with a longer time horizon $H = 8$ (instead of $H = 5$).
		\textbf{Results:} The increase in time horizon improved extrapolation to unseen initial states, in accordance with the analysis of \cref{sec:analysis:shift}.
		A drawback of increasing the time horizon, however, is that it can lead to instabilities during training (\cf~\citet{metz2021gradients}).
		Indeed, for state space dimension $D = 5$, we were unable to consistently train controllers when the time horizon was substantially longer than $H = 8$.
		Thus, techniques enabling stable training with long time horizons may be a promising tool for improving extrapolation.
	}
	\label{fig:lqr_experiments_h8}
\end{figure*}

\begin{figure*}[h!]
	\vspace{0mm}
	\begin{center}
		\includegraphics[width=1\textwidth]{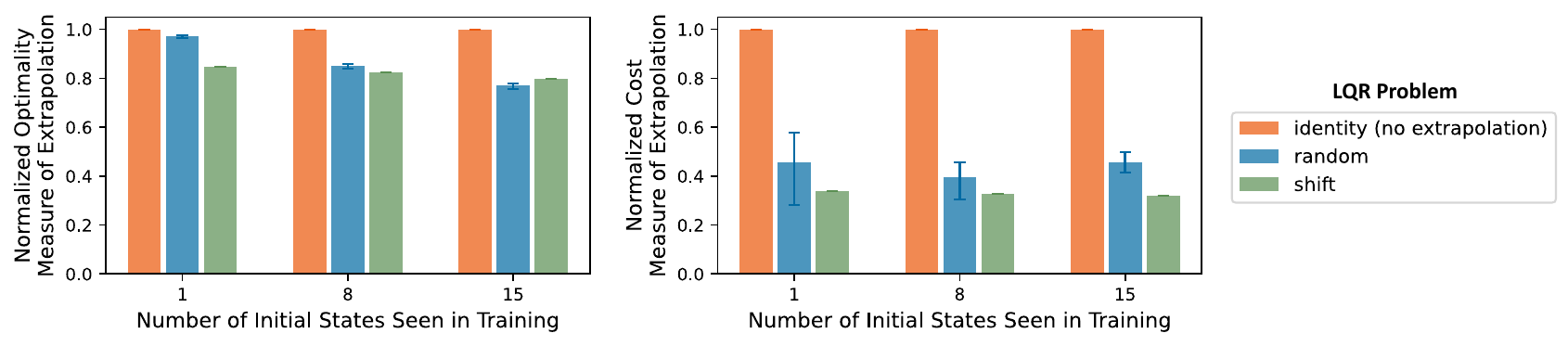}
	\end{center}
	\vspace{-2.5mm}
	\caption{
		In underdetermined LQR problems (\cref{sec:prelim:underdetermined}), the extent to which linear controllers learned via policy gradient extrapolate to initial states unseen in training, depends on the degree of exploration that the system induces from initial states that were seen in training.
		This figure supplements \cref{fig:lqr_experiments_main} by including results for analogous experiments over systems with a larger state space dimension $D = 40$ and horizon $H = 40$ (instead of $D = H = 5$).
		To reduce the cost of experiments with a larger state space dimension and longer horizon, we carried out $10$ (instead of $20$) runs per system type and number of initial states seen in training.
	}
	\label{fig:lqr_experiments_d40}
\end{figure*}

\begin{figure*}[h!]
	\vspace{0mm}
	\begin{center}
		\includegraphics[width=1\textwidth]{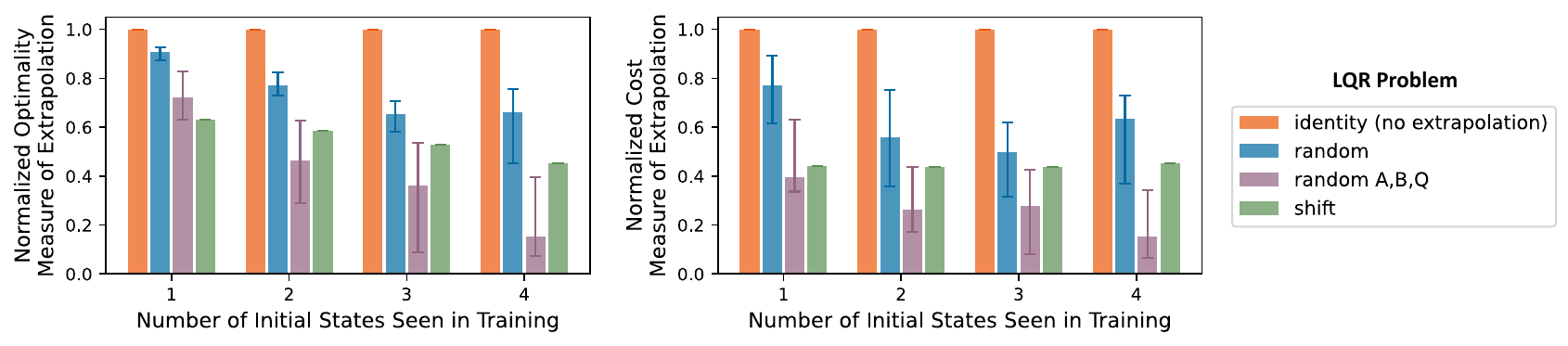}
	\end{center}
	\vspace{-2.5mm}
	\caption{
		In underdetermined LQR problems (\cref{sec:prelim:underdetermined}), the extent to which linear controllers learned via policy gradient extrapolate to initial states unseen in training, depends on the degree of exploration that the system induces from initial states that were seen in training.
		This figure supplements \cref{fig:lqr_experiments_main} by including results for analogous experiments over an LQR problem with random $\Abf \in \R^{D \times D}$, $\Bbf \in \R^{D \times D}$, and positive semidefinite $\Qbf \in \R^{D \times D}$ (instead of just a random $\Abf$).
		Specifically, in the “random $\Abf, \Bbf, \Qbf$'' system, the entries of $\Abf$ and $\Bbf$ were sampled independently from a zero-mean Gaussian with standard deviation \smash{$1 / \sqrt{D}$}.
		As for $\Qbf$, we first sampled the entries of a matrix $\Zbf \in \R^{D \times D}$ independently, again from a zero-mean Gaussian with standard deviation \smash{$1 / \sqrt{D}$}.
		Then, we set $\Qbf = \Zbf \Zbf^\top$.
		\textbf{Results:} Non-trivial extrapolation is achieved under the “random $\Abf, \Bbf, \Qbf$'' system, in accordance with the fact that random systems generically induce exploration (see discussion in \cref{sec:analysis:general}).
		The extent of extrapolation is significantly better compared to systems where just $\Abf$ is random (referred to as “random'' in the legend and analyzed in \cref{thm:typical_system}).
		Theoretical investigation of this phenomenon is left for future work.
	}
	\label{fig:lqr_experiments_rnd_B_Q}
\end{figure*}

\begin{figure*}[h!]
	\vspace{2mm}
	\begin{center}
		\includegraphics[width=\textwidth]{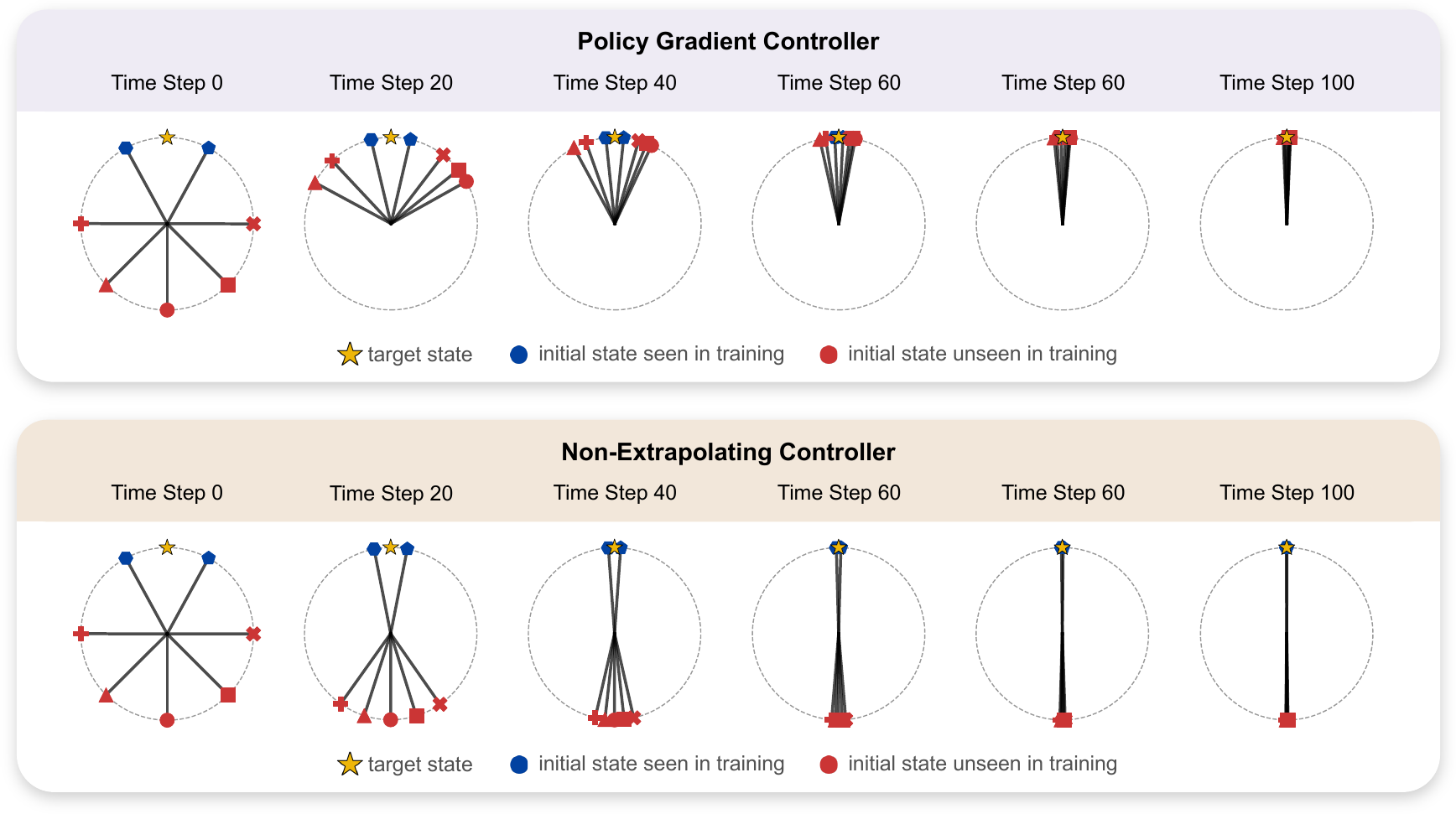}
	\end{center}
	\vspace{-2mm}
	\caption{
		For the pendulum control experiments in \cref{fig:pend_and_quad_experiments_main}, presented is the evolution of states through time under the policy gradient (top) and non-extrapolating (bottom) controllers. 
	}
	\label{fig:pend_experiments_states_through_time}
\end{figure*}

\begin{figure*}[h!]
	\vspace{2mm}
	\begin{center}
		\includegraphics[width=\textwidth]{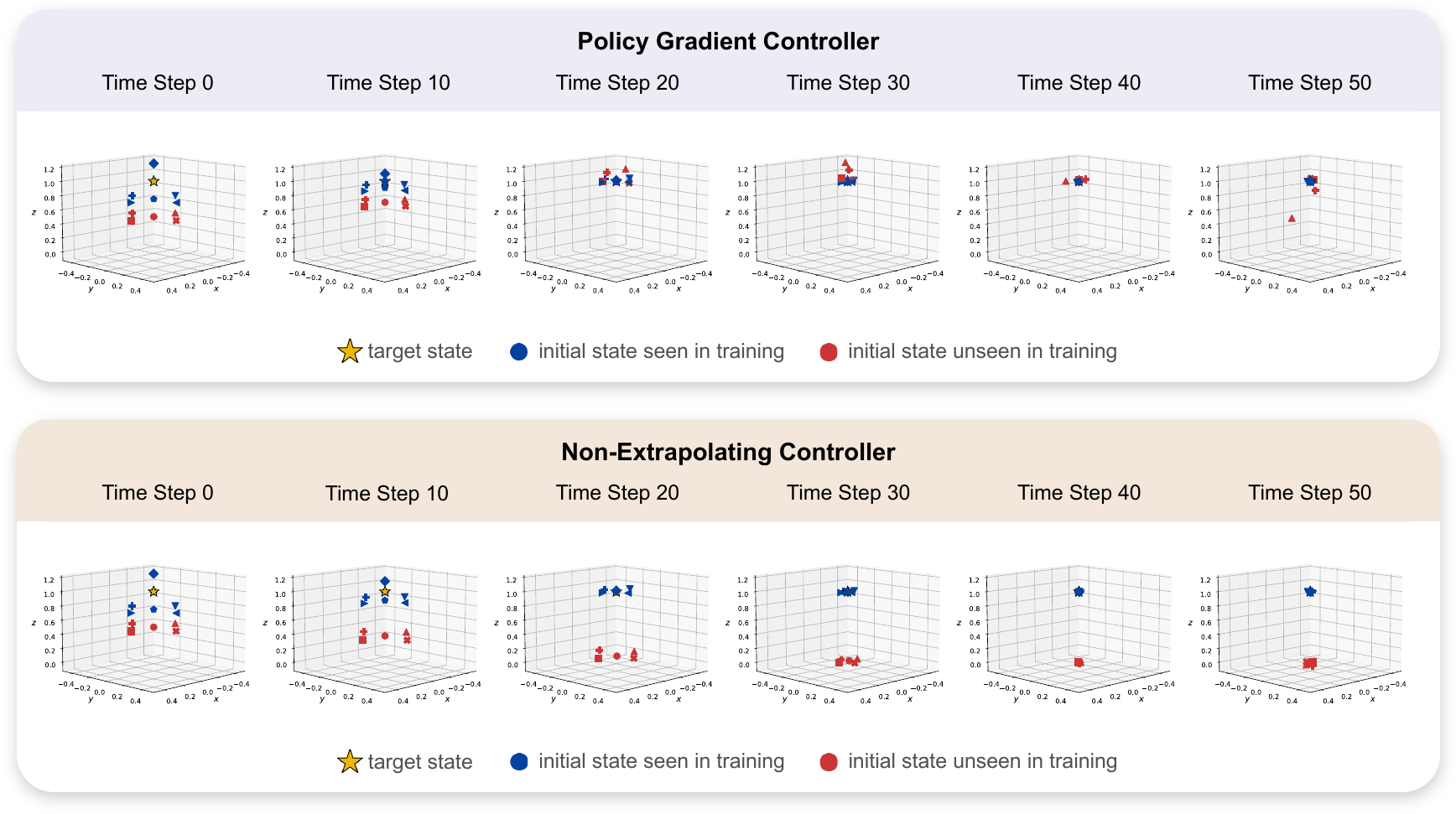}
	\end{center}
	\vspace{-2mm}
	\caption{
		For the quadcopter control experiments in \cref{fig:pend_and_quad_experiments_main}, presented is the evolution of states through time under the policy gradient (top) and non-extrapolating (bottom) controllers. 
	}
	\label{fig:quad_below_experiments_states_through_time}
\end{figure*}

\begin{figure*}[h!]
	\vspace{1mm}
	\begin{center}
		\includegraphics[width=1\textwidth]{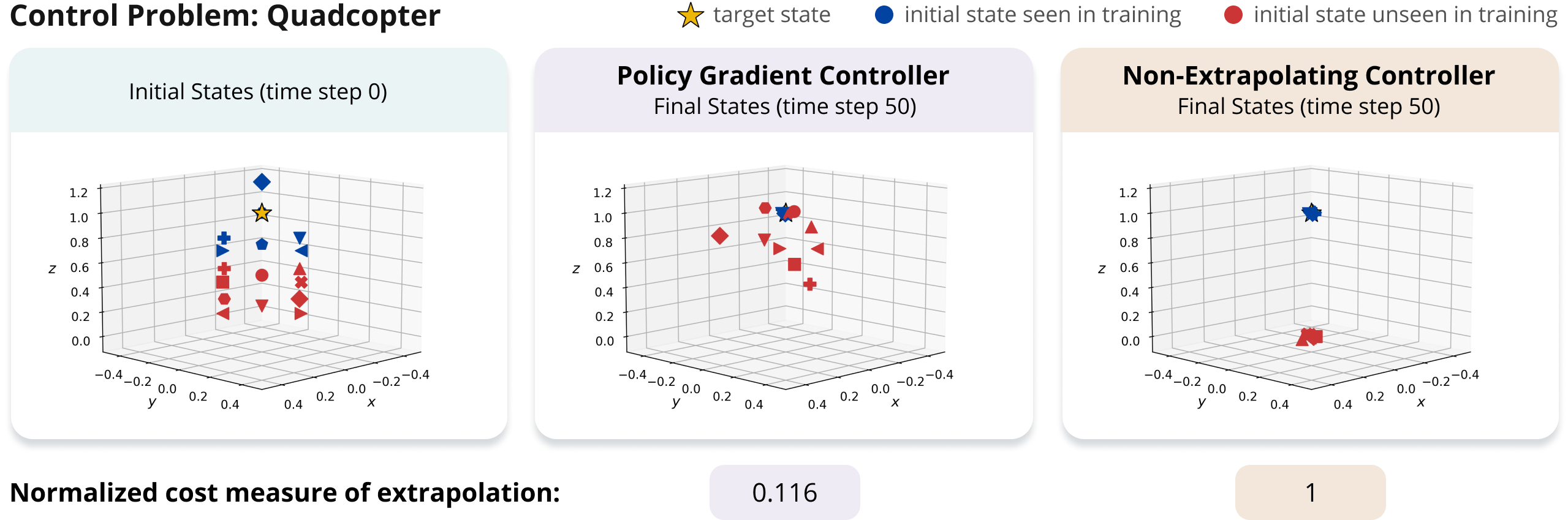}
	\end{center}
	\vspace{-1.5mm}
	\caption{
		In the quadcopter control problem (\cref{sec:experiments:nonlinear}), training a (non-linear) neural network controller via policy gradient often leads to a solution that extrapolates to initial states unseen in training, despite the existence of non-extrapolating solutions.
		This figure supplements \cref{fig:pend_and_quad_experiments_main} by including the results of an identical experiment, but with additional unseen initial states that are farther away from the initial states seen in training.
		See caption of \cref{fig:pend_and_quad_experiments_main} for details on the experiment.
		\textbf{Results:}
		As one might expect, while the extent of extrapolation is still highly non-trivial, it decays the farther away initial states unseen in training are from the initial states seen in training.
		\textbf{Further details in \cref{app:experiments}:} 
		\cref{table:quad_experiments_add_unseen_initial_states_states} fully specifies the initial and final states depicted above, and \cref{fig:quad_below_add_unseen_initial_states_experiments_states_through_time} presents the evolution of states through time under the policy gradient and non-extrapolating controllers.
	}
	\label{fig:quad_experiments_add_unseen_initial_states}
\end{figure*}

\begin{figure*}[h!]
	\vspace{0mm}
	\begin{center}
		\includegraphics[width=\textwidth]{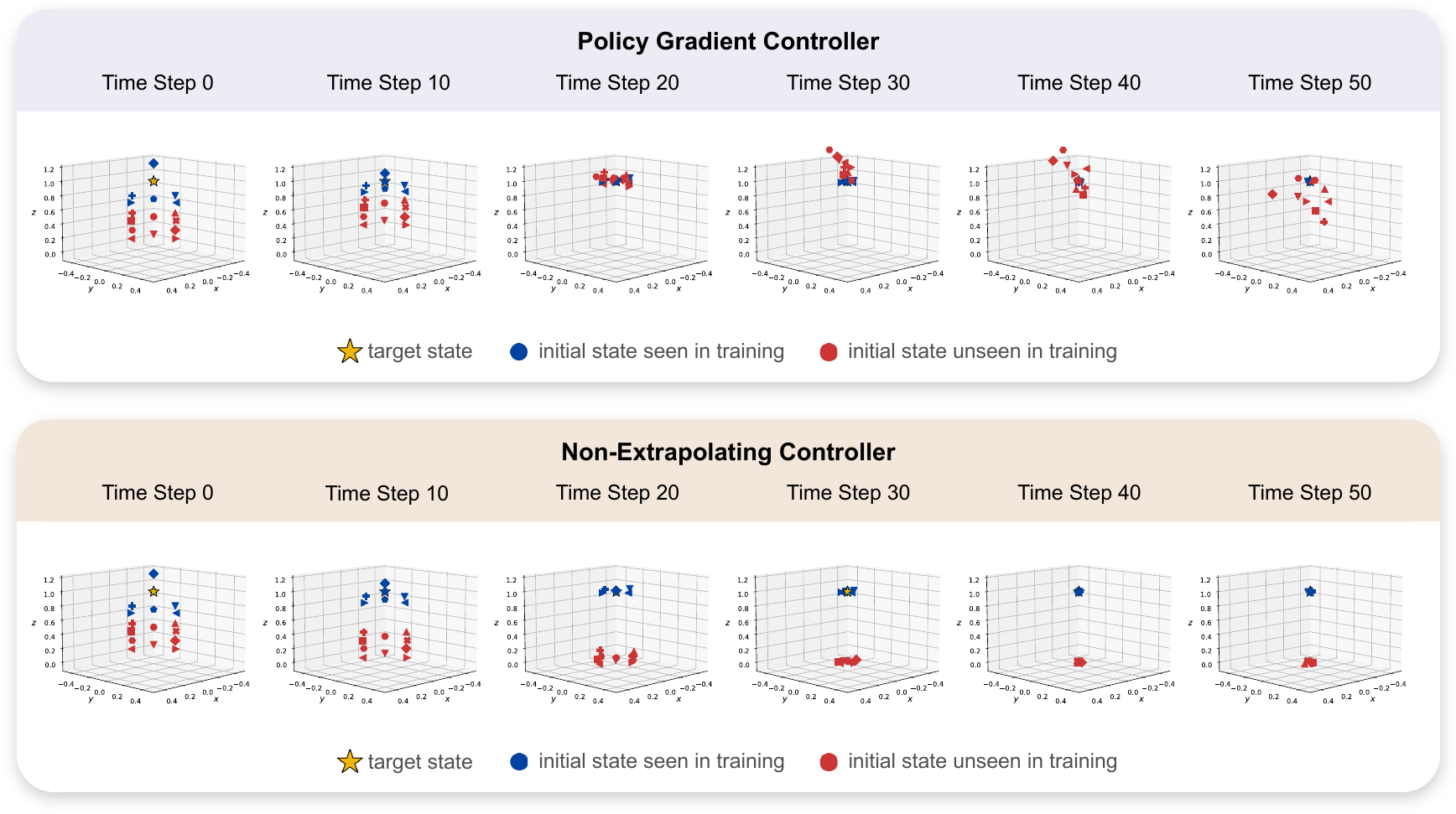}
	\end{center}
	\vspace{-2mm}
	\caption{
		For the policy gradient (top) and non-extrapolating (bottom) controllers from \cref{fig:quad_experiments_add_unseen_initial_states}, presented is the evolution of states through time.
	}
	\label{fig:quad_below_add_unseen_initial_states_experiments_states_through_time}
\end{figure*}

\begin{figure*}[h!]
	\vspace{1mm}
	\begin{center}
		\includegraphics[width=1\textwidth]{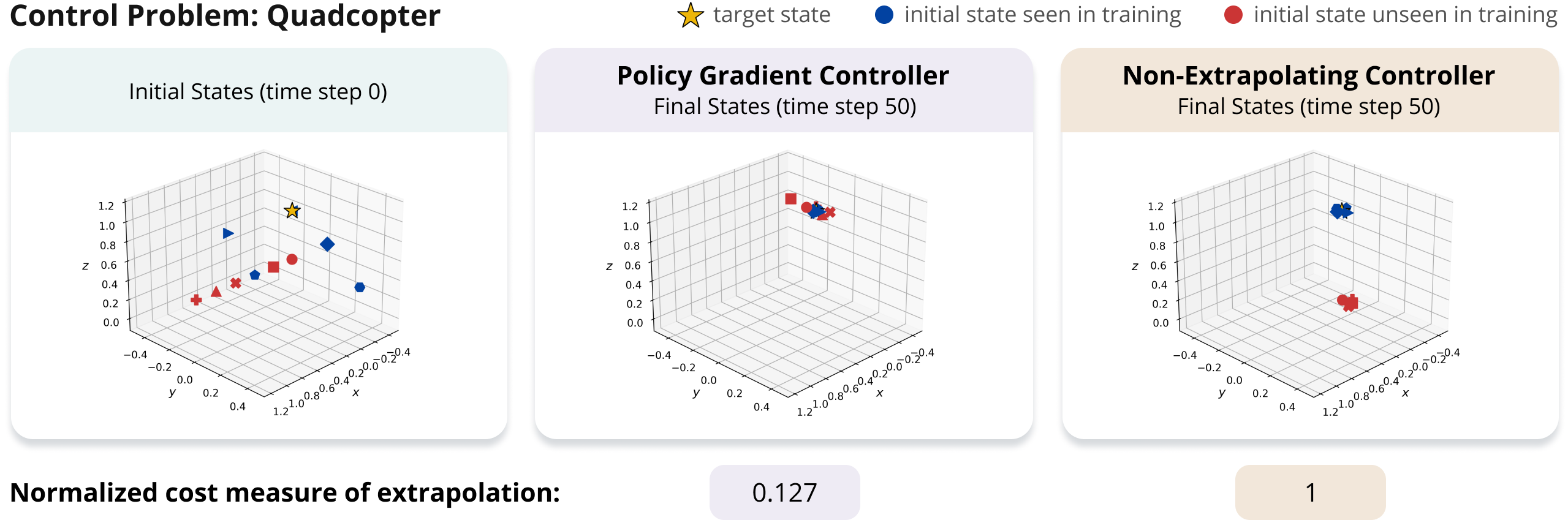}
	\end{center}
	\vspace{-1.5mm}
	\caption{
		In the quadcopter control problem (\cref{sec:experiments:nonlinear}), training a (non-linear) neural network controller via policy gradient often leads to a solution that extrapolates to initial states unseen in training, despite the existence of non-extrapolating solutions.
		This figure supplements \cref{fig:pend_and_quad_experiments_main} by including the results of an analogous experiment, where the unseen initial states are at different horizontal distances from the initial states seen in training (instead of being at a lower height).
		See caption of \cref{fig:pend_and_quad_experiments_main} for details on the experiment.
		\textbf{Results:}
		Remarkably, the controller trained via policy gradient extrapolates well to unseen initial states at various horizontal distances from the initial states seen in training.
		In contrast to unseen initial states below those used for training, for which extrapolation was observed in~\cref{fig:pend_and_quad_experiments_main}, an uncontrolled system does not induce exploration to states at different horizontal distances, in the naive sense of visiting the state along trajectories emanating from the initial states seen in training.
		Hence, the results of this experiment highlight the importance of finding a quantitative measure of exploration for non-linear systems, which may facilitate the theoretical study of extrapolation therein.
		\textbf{Further details in \cref{app:experiments}:} 
		\cref{table:quad_dist_experiments_states} fully specifies the initial and final states depicted above, and \cref{fig:quad_dist_experiments_states_through_time} presents the evolution of states through time under the policy gradient and non-extrapolating controllers.
	}
	\label{fig:quad_experiments_dist}
\end{figure*}

\begin{figure*}[h!]
	\vspace{0mm}
	\begin{center}
		\includegraphics[width=\textwidth]{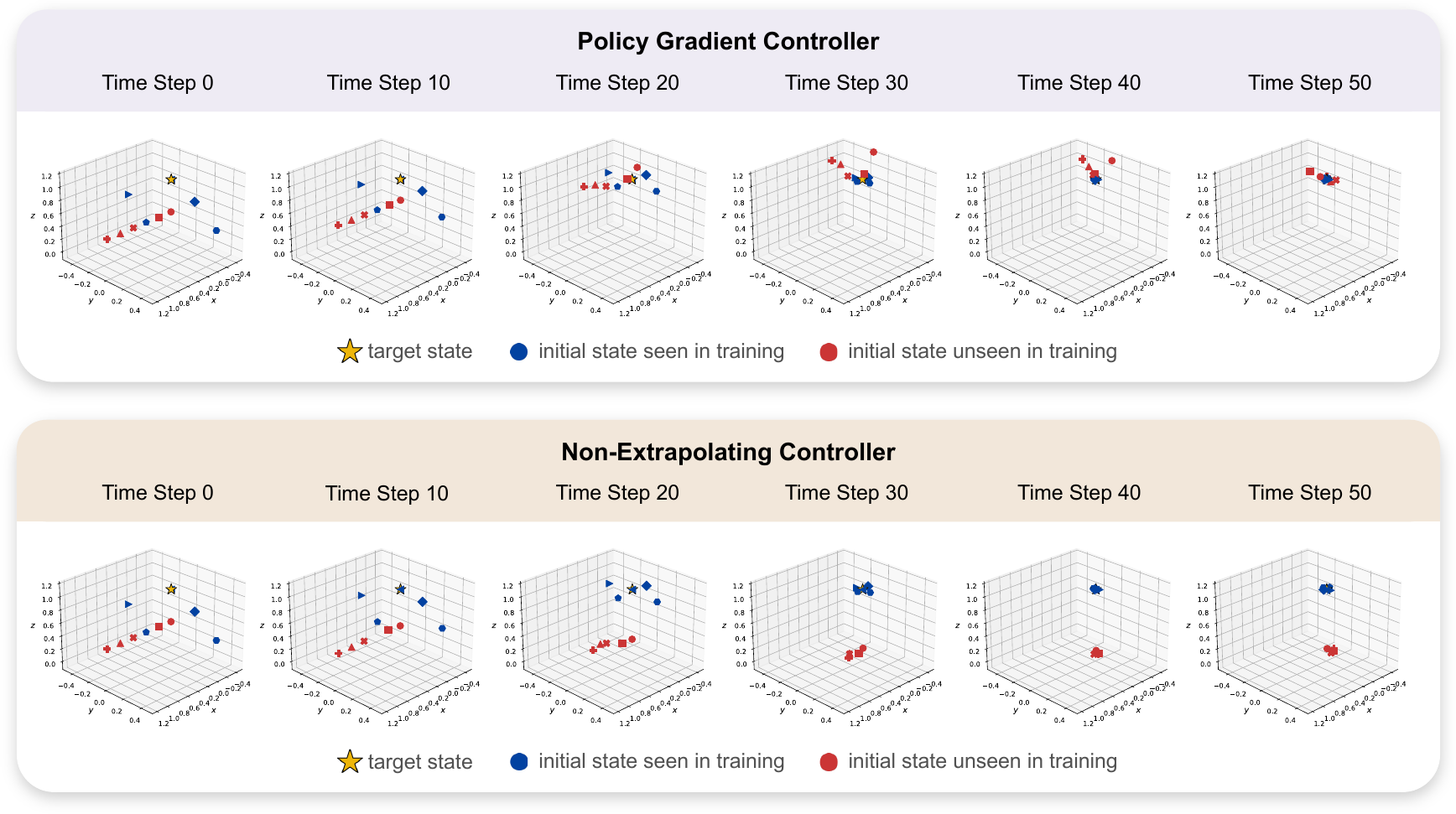}
	\end{center}
	\vspace{-2mm}
	\caption{
		For the policy gradient (top) and non-extrapolating (bottom) controllers from \cref{fig:quad_experiments_dist}, presented is the evolution of states through time.
	}
	\label{fig:quad_dist_experiments_states_through_time}
\end{figure*}

\begin{table*}[h!]
	\caption{
		Target, initial, and final states for the pendulum control experiments depicted in  \cref{fig:pend_and_quad_experiments_main}.
		Each state is described by the vertical angle of the pendulum $\theta \in \R$ and its angular velocity $\dot{\theta} \in \R$.
	}
	\vspace{-1mm}
	\begin{center}
		\fontsize{7.5}{9.5}\selectfont
		\begin{tabular}{c|lcc}
			\toprule
			& & $\theta$ & $\dot{\theta}$ \\
			\midrule
			& \includegraphics[scale=0.45]{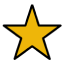} Target State & 3.14 & 0 \\
			\midrule
			\multirow{6}{*}{\rotatebox[origin=c]{90}{\includegraphics[scale=0.45]{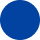} \, Training}} & \multirow{2}{*}{Initial States}  & 2.64 & 0.00 \\
			& & 3.64 & 0.00 \\
			\cmidrule{2-4}
			& \multirow{2}{*}{\shortstack[l]{Final States for \\ Policy Gradient Controller}}  & 3.13 & 0.01 \\
			& & 3.15 & -0.01 \\
			\cmidrule{2-4}
			& \multirow{2}{*}{\shortstack[l]{Final States for \\ Non-Extrapolating Controller}}  & 3.14 & 0.00 \\
			& & 3.15 & -0.00 \\
			\midrule
			\multirow{15}{*}{\rotatebox[origin=c]{90}{\includegraphics[scale=0.45]{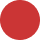} \, Unseen}} & \multirow{5}{*}{Initial States}  & 0.00 & 0.00 \\
			& & 0.79 & 0.00 \\
			& & 1.57 & 0.00 \\
			& & 4.71 & 0.00 \\
			& & 5.50 & 0.00 \\
			\cmidrule{2-4}
			& \multirow{5}{*}{\shortstack[l]{Final States for \\ Policy Gradient Controller}}  & 3.10 & 0.03 \\
			& & 3.10 & 0.03 \\
			& & 3.11 & 0.02 \\
			& & 3.17 & -0.03 \\		
			& & 3.18 & -0.04 \\
			\cmidrule{2-4}
			& \multirow{5}{*}{\shortstack[l]{Final States for \\ Non-Extrapolating Controller}}  & -0.00 & -0.00 \\
			& & 0.01 & -0.00 \\
			& & 0.02 & -0.01 \\
			& & 6.27 & 0.01 \\		
			& & 6.28 & 0.01 \\
			\bottomrule
		\end{tabular}
	\end{center}
	\label{table:pend_experiments_states}
\end{table*}

\begin{table*}[h!]
	\caption{
		Target, initial, and final states for the quadcopter control experiments depicted in  \cref{fig:pend_and_quad_experiments_main}.
		Each state of the system $\xbf = (x, y, z, \phi, \theta, \psi, \dot{x}, \dot{y}, \dot{z}, \dot{\phi}, \dot{\theta}, \dot{\psi}) \in \R^{12}$ comprises the quadcopter's position $(x, y, z)$, tilt angles $(\phi, \theta, \psi)$ (\ie~roll, pitch, and yaw), and their respective velocities.
	}
	\vspace{-1mm}
	\begin{center}
		\fontsize{7.5}{9.5}\selectfont
		\begin{tabular}{c|lcccccccccccc}
			\toprule
			& & $x$ & $y$ & $z$ & $\phi$ & $\theta$ & $\psi$ & $\dot{x}$ & $\dot{y}$ & $\dot{z}$  & $\dot{\phi}$  &  $\dot{\theta}$ &  $\dot{\psi}$ \\
			\midrule
			& \includegraphics[scale=0.45]{icons/target_state} Target State & 0 & 0 & 1 & 0 & 0 & 0 & 0 & 0 & 0 & 0 & 0 & 0 \\
			\midrule
			\multirow{21}{*}{\rotatebox[origin=c]{90}{\includegraphics[scale=0.45]{icons/training_state} \, Training}} & \multirow{7}{*}{Initial States}  & 0.00 & 0.00 & 0.75 & 0.00 & 0.00 & 0.00 & 0.00 & 0.00 & 0.00 & 0.00 & 0.00 & 0.00 \\
			& & 0.00 & 0.00 & 1.00 & 0.00 & 0.00 & 0.00 & 0.00 & 0.00 & 0.00 & 0.00 & 0.00 & 0.00 \\
			& & 0.00 & 0.00 & 1.25 & 0.00 & 0.00 & 0.00 & 0.00 & 0.00 & 0.00 & 0.00 & 0.00 & 0.00 \\
			& & 0.25 & 0.00 & 0.75 & 0.00 & 0.00 & 0.00 & 0.00 & 0.00 & 0.00 & 0.00 & 0.00 & 0.00 \\
			& & 0.00 & 0.25 & 0.75 & 0.00 & 0.00 & 0.00 & 0.00 & 0.00 & 0.00 & 0.00 & 0.00 & 0.00 \\
			& & -0.25 & 0.00 & 0.75 & 0.00 & 0.00 & 0.00 & 0.00 & 0.00 & 0.00 & 0.00 & 0.00 & 0.00 \\
			& & 0.00 & -0.25 & 0.75 & 0.00 & 0.00 & 0.00 & 0.00 & 0.00 & 0.00 & 0.00 & 0.00 & 0.00 \\
			\cmidrule{2-14}
			& \multirow{7}{*}{\shortstack[l]{Final States for \\ Policy Gradient Controller}} & 0.00 & -0.00 & 1.00 & -0.01 & -0.00 & 0.01 & -0.01 & 0.00 & 0.00 & 0.00 & 0.00 & 0.02 \\
			& & 0.00 & 0.00 & 1.00 & -0.01 & -0.00 & 0.01 & 0.00 & -0.00 & -0.01 & -0.01 & 0.00 & 0.01 \\
			& & -0.00 & 0.00 & 1.00 & -0.01 & 0.00 & 0.01 & 0.00 & -0.00 & -0.01 & -0.01 & -0.00 & 0.01 \\
			& & -0.03 & -0.00 & 1.00 & -0.01 & 0.07 & 0.01 & -0.10 & -0.00 & -0.01 & -0.02 & -0.11 & 0.01 \\
			& & -0.00 & -0.02 & 1.00 & -0.05 & -0.00 & -0.02 & 0.00 & 0.02 & 0.02 & 0.11 & 0.02 & 0.15 \\
			& & 0.03 & 0.00 & 1.00 & -0.01 & -0.07 & 0.01 & 0.10 & 0.01 & -0.00 & -0.00 & 0.09 & 0.02 \\
			& & -0.00 & 0.02 & 1.00 & 0.03 & -0.00 & 0.04 & 0.01 & -0.01 & -0.06 & -0.12 & -0.02 & -0.12 \\
			\cmidrule{2-14}
			& \multirow{7}{*}{\shortstack[l]{Final States for \\ Non-Extrapolating Controller}} & 0.00 & -0.00 & 1.00 & 0.02 & 0.00 & -0.02 & 0.00 & 0.01 & 0.01 & 0.02 & 0.00 & -0.01 \\
			& & 0.00 & -0.00 & 1.00 & 0.02 & 0.00 & -0.02 & 0.01 & 0.01 & 0.01 & 0.02 & -0.00 & -0.01 \\
			& & -0.00 & -0.00 & 1.00 & 0.01 & 0.00 & -0.01 & 0.01 & -0.01 & -0.00 & 0.00 & -0.01 & -0.03 \\
			& & -0.03 & 0.00 & 1.00 & 0.03 & 0.09 & -0.04 & -0.09 & 0.00 & -0.02 & 0.04 & -0.12 & -0.05 \\
			& & -0.00 & -0.02 & 1.00 & -0.01 & 0.00 & -0.06 & -0.02 & 0.05 & 0.00 & 0.20 & 0.05 & 0.15 \\
			& & 0.03 & 0.00 & 1.01 & 0.00 & -0.08 & 0.01 & 0.09 & 0.00 & 0.02 & -0.00 & 0.12 & 0.04 \\
			& & 0.00 & 0.02 & 1.01 & 0.05 & 0.00 & 0.02 & 0.02 & -0.04 & -0.00 & -0.17 & -0.06 & -0.18 \\
			\midrule
			\multirow{15}{*}{\rotatebox[origin=c]{90}{\includegraphics[scale=0.45]{icons/unseen_state} \, Unseen}} & \multirow{5}{*}{Initial States} & 0.00 & 0.00 & 0.50 & 0.00 & 0.00 & 0.00 & 0.00 & 0.00 & 0.00 & 0.00 & 0.00 & 0.00 \\
			& & 0.25 & 0.00 & 0.50 & 0.00 & 0.00 & 0.00 & 0.00 & 0.00 & 0.00 & 0.00 & 0.00 & 0.00 \\
			& & 0.00 & 0.25 & 0.50 & 0.00 & 0.00 & 0.00 & 0.00 & 0.00 & 0.00 & 0.00 & 0.00 & 0.00 \\
			& & -0.25 & 0.00 & 0.50 & 0.00 & 0.00 & 0.00 & 0.00 & 0.00 & 0.00 & 0.00 & 0.00 & 0.00 \\
			& & 0.00 & -0.25 & 0.50 & 0.00 & 0.00 & 0.00 & 0.00 & 0.00 & 0.00 & 0.00 & 0.00 & 0.00 \\
			\cmidrule{2-14}
			& \multirow{5}{*}{\shortstack[l]{Final States for \\ Policy Gradient Controller}} & -0.00 & -0.00 & 1.00 & -0.01 & 0.00 & 0.00 & 0.00 & 0.01 & 0.01 & -0.00 & -0.01 & 0.02 \\
			& & -0.03 & 0.01 & 1.02 & 0.01 & 0.07 & 0.01 & -0.08 & 0.02 & -0.03 & 0.04 & -0.12 & 0.05 \\
			& & -0.01 & -0.00 & 1.04 & -0.02 & -0.00 & 0.00 & 0.05 & 0.07 & -0.06 & 0.12 & -0.08 & 0.16 \\
			& & 0.16 & -0.05 & 0.50 & -0.07 & -0.71 & -0.19 & -0.12 & 0.06 & -3.19 & 0.70 & 4.80 & -0.25 \\
			& & -0.01 & 0.05 & 0.88 & 0.06 & -0.01 & 0.03 & -0.02 & -0.38 & -0.05 & -0.68 & 0.93 & -0.44 \\
			\cmidrule{2-14}
			& \multirow{5}{*}{\shortstack[l]{Final States for \\ Non-Extrapolating Controller}} & 0.02 & 0.01 & -0.01 & 0.09 & 0.02 & -0.11 & 0.14 & 0.05 & 0.03 & 0.17 & -0.03 & -0.15 \\
			& & -0.03 & 0.01 & 0.00 & 0.10 & 0.13 & -0.11 & -0.10 & 0.02 & -0.39 & 0.02 & 0.15 & 0.33 \\
			& & 0.02 & -0.02 & 0.00 & 0.03 & 0.02 & -0.16 & 0.08 & 0.05 & 0.04 & 0.38 & -0.03 & 0.06 \\
			& & 0.06 & 0.02 & -0.02 & 0.06 & -0.09 & -0.06 & 0.31 & 0.07 & -0.09 & 0.13 & -0.06 & -0.10 \\
			& & 0.02 & 0.04 & -0.04 & 0.15 & 0.02 & -0.02 & 0.11 & 0.10 & -0.57 & -0.17 & 0.04 & 0.08 \\
			\bottomrule
		\end{tabular}
	\end{center}
	\label{table:quad_below_experiments_states}
\end{table*}

\begin{table*}[h!]
	\caption{
		Target, initial, and final states for the quadcopter control experiments depicted in  \cref{fig:quad_experiments_add_unseen_initial_states}.
		Each state of the system $\xbf = (x, y, z, \phi, \theta, \psi, \dot{x}, \dot{y}, \dot{z}, \dot{\phi}, \dot{\theta}, \dot{\psi}) \in \R^{12}$ comprises the quadcopter's position $(x, y, z)$, tilt angles $(\phi, \theta, \psi)$ (\ie~roll, pitch, and yaw), and their respective velocities.
	}
	\vspace{-1mm}
	\begin{center}
		\fontsize{7.5}{9.5}\selectfont
		\begin{tabular}{c|lcccccccccccc}
			\toprule
			& & $x$ & $y$ & $z$ & $\phi$ & $\theta$ & $\psi$ & $\dot{x}$ & $\dot{y}$ & $\dot{z}$  & $\dot{\phi}$  &  $\dot{\theta}$ &  $\dot{\psi}$ \\
			\midrule
			& \includegraphics[scale=0.45]{icons/target_state} Target State & 0 & 0 & 1 & 0 & 0 & 0 & 0 & 0 & 0 & 0 & 0 & 0 \\
			\midrule
			\multirow{21}{*}{\rotatebox[origin=c]{90}{\includegraphics[scale=0.45]{icons/training_state} \, Training}} & \multirow{7}{*}{Initial States}  & 0.00 & 0.00 & 0.75 & 0.00 & 0.00 & 0.00 & 0.00 & 0.00 & 0.00 & 0.00 & 0.00 & 0.00 \\
			& & 0.00 & 0.00 & 1.00 & 0.00 & 0.00 & 0.00 & 0.00 & 0.00 & 0.00 & 0.00 & 0.00 & 0.00 \\
			& & 0.00 & 0.00 & 1.25 & 0.00 & 0.00 & 0.00 & 0.00 & 0.00 & 0.00 & 0.00 & 0.00 & 0.00 \\
			& & 0.25 & 0.00 & 0.75 & 0.00 & 0.00 & 0.00 & 0.00 & 0.00 & 0.00 & 0.00 & 0.00 & 0.00 \\
			& & 0.00 & 0.25 & 0.75 & 0.00 & 0.00 & 0.00 & 0.00 & 0.00 & 0.00 & 0.00 & 0.00 & 0.00 \\
			& & -0.25 & 0.00 & 0.75 & 0.00 & 0.00 & 0.00 & 0.00 & 0.00 & 0.00 & 0.00 & 0.00 & 0.00 \\
			& & 0.00 & -0.25 & 0.75 & 0.00 & 0.00 & 0.00 & 0.00 & 0.00 & 0.00 & 0.00 & 0.00 & 0.00 \\
			\cmidrule{2-14}
			& \multirow{7}{*}{\shortstack[l]{Final States for \\ Policy Gradient Controller}} & -0.00 & 0.00 & 1.00 & 0.01 & 0.00 & -0.00 & -0.01 & 0.01 & -0.00 & 0.02 & 0.02 & -0.00 \\
			& & 0.00 & 0.00 & 1.00 & 0.01 & -0.00 & -0.01 & -0.00 & -0.01 & 0.00 & 0.00 & 0.01 & -0.01 \\
			& & 0.00 & -0.00 & 1.00 & 0.01 & -0.00 & -0.02 & 0.00 & -0.01 & -0.00 & -0.00 & -0.00 & -0.02 \\
			& & -0.02 & 0.00 & 1.00 & 0.04 & 0.09 & -0.04 & -0.04 & 0.01 & -0.02 & 0.09 & -0.11 & 0.04 \\
			& & -0.00 & -0.02 & 1.00 & -0.01 & 0.00 & -0.07 & 0.01 & 0.03 & 0.01 & 0.21 & -0.01 & 0.16 \\
			& & 0.03 & -0.00 & 1.00 & 0.02 & -0.09 & -0.01 & 0.05 & -0.02 & 0.01 & -0.06 & 0.08 & -0.03 \\
			& & 0.00 & 0.02 & 1.00 & 0.03 & -0.00 & 0.04 & 0.01 & -0.05 & 0.02 & -0.17 & -0.02 & -0.18 \\
			\cmidrule{2-14}
			& \multirow{7}{*}{\shortstack[l]{Final States for \\ Non-Extrapolating Controller}} & -0.00 & 0.00 & 1.00 & 0.02 & -0.00 & -0.02 & 0.01 & 0.03 & 0.01 & 0.04 & -0.01 & -0.00 \\
			& & -0.00 & 0.00 & 1.00 & 0.02 & -0.00 & -0.02 & -0.00 & 0.02 & 0.00 & 0.03 & 0.00 & -0.01 \\
			& & -0.00 & 0.00 & 1.00 & 0.02 & 0.00 & -0.02 & -0.01 & 0.03 & 0.00 & 0.04 & 0.02 & -0.01 \\
			& & -0.03 & -0.00 & 1.00 & 0.02 & 0.10 & -0.01 & -0.03 & -0.02 & -0.03 & -0.02 & -0.16 & -0.01 \\
			& & -0.00 & -0.02 & 1.01 & -0.01 & -0.01 & -0.07 & -0.02 & 0.07 & 0.06 & 0.22 & 0.01 & 0.12 \\
			& & 0.03 & 0.01 & 1.00 & 0.02 & -0.10 & -0.04 & 0.05 & 0.05 & 0.04 & 0.03 & 0.08 & -0.03 \\
			& & -0.00 & 0.02 & 1.00 & 0.04 & 0.01 & 0.03 & 0.01 & -0.04 & -0.05 & -0.17 & -0.00 & -0.15 \\
			\midrule
			\multirow{30}{*}{\rotatebox[origin=c]{90}{\includegraphics[scale=0.45]{icons/unseen_state} \, Unseen}} & \multirow{10}{*}{Initial States} & 0.00 & 0.00 & 0.50 & 0.00 & 0.00 & 0.00 & 0.00 & 0.00 & 0.00 & 0.00 & 0.00 & 0.00 \\
			& & 0.25 & 0.00 & 0.50 & 0.00 & 0.00 & 0.00 & 0.00 & 0.00 & 0.00 & 0.00 & 0.00 & 0.00 \\
			& & 0.00 & 0.25 & 0.50 & 0.00 & 0.00 & 0.00 & 0.00 & 0.00 & 0.00 & 0.00 & 0.00 & 0.00 \\
			& & -0.25 & 0.00 & 0.50 & 0.00 & 0.00 & 0.00 & 0.00 & 0.00 & 0.00 & 0.00 & 0.00 & 0.00 \\
			& & 0.00 & -0.25 & 0.50 & 0.00 & 0.00 & 0.00 & 0.00 & 0.00 & 0.00 & 0.00 & 0.00 & 0.00 \\
			& & 0.00 & 0.00 & 0.25 & 0.00 & 0.00 & 0.00 & 0.00 & 0.00 & 0.00 & 0.00 & 0.00 & 0.00 \\
			& & 0.25 & 0.00 & 0.25 & 0.00 & 0.00 & 0.00 & 0.00 & 0.00 & 0.00 & 0.00 & 0.00 & 0.00 \\
			& & 0.00 & 0.25 & 0.25 & 0.00 & 0.00 & 0.00 & 0.00 & 0.00 & 0.00 & 0.00 & 0.00 & 0.00 \\
			& & -0.25 & 0.00 & 0.25 & 0.00 & 0.00 & 0.00 & 0.00 & 0.00 & 0.00 & 0.00 & 0.00 & 0.00 \\
			& & 0.00 & -0.25 & 0.25 & 0.00 & 0.00 & 0.00 & 0.00 & 0.00 & 0.00 & 0.00 & 0.00 & 0.00 \\
			\cmidrule{2-14}
			& \multirow{10}{*}{\shortstack[l]{Final States for \\ Policy Gradient Controller}} & -0.01 & 0.05 & 1.02 & 0.01 & 0.02 & 0.03 & -0.27 & 0.07 & 0.02 & 0.09 & 0.55 & 0.01 \\
			& & 0.09 & 0.14 & 0.64 & 0.07 & 0.46 & 0.23 & 1.51 & 0.47 & 0.00 & 1.23 & -2.99 & 0.81 \\
			& & 0.01 & 0.02 & 1.01 & -0.00 & -0.00 & -0.04 & -0.15 & 0.00 & 0.00 & 0.15 & 0.23 & 0.09 \\
			& & -0.04 & 0.13 & 0.91 & 0.02 & 0.16 & 0.15 & -0.77 & 0.21 & -0.03 & 0.10 & 2.32 & 0.15 \\
			& & 0.10 & 0.25 & 0.51 & -0.10 & 0.27 & 0.58 & 0.21 & 0.02 & -1.40 & 1.27 & 2.83 & 0.84 \\
			& & 0.24 & 0.11 & 0.85 & -0.57 & -0.54 & 0.28 & 0.17 & 0.35 & -2.93 & 1.37 & -0.08 & 2.25 \\
			& & -0.14 & 0.07 & 0.70 & -0.73 & 0.84 & 0.03 & -0.10 & 0.59 & -3.11 & 1.00 & 2.67 & 1.19 \\
			& & 0.09 & 0.05 & 0.74 & -0.33 & 0.11 & 0.06 & -0.15 & 0.66 & -1.42 & 1.01 & 3.72 & 0.83 \\
			& & 0.55 & 0.14 & 0.95 & -1.13 & -1.16 & 0.17 & 0.89 & 0.52 & -3.04 & 1.43 & -0.90 & 3.42 \\
			& & 0.35 & 0.22 & 1.14 & -0.74 & -0.48 & 0.34 & 0.41 & 0.74 & -2.68 & 0.18 & -1.60 & 2.29 \\
			\cmidrule{2-14}
			& \multirow{10}{*}{\shortstack[l]{Final States for \\ Non-Extrapolating Controller}} & -0.01 & -0.02 & 0.01 & -0.00 & 0.01 & 0.01 & -0.00 & -0.10 & 0.00 & -0.24 & 0.03 & 0.01 \\
			& & -0.04 & -0.01 & -0.01 & 0.04 & 0.17 & -0.02 & -0.01 & -0.05 & -0.09 & -0.01 & -0.39 & 0.01 \\
			& & 0.00 & -0.03 & 0.01 & -0.03 & -0.01 & -0.05 & -0.03 & 0.05 & -0.02 & 0.32 & -0.03 & 0.16 \\
			& & 0.04 & -0.02 & -0.02 & -0.02 & -0.16 & -0.00 & -0.09 & -0.06 & 0.08 & -0.03 & 0.67 & -0.16 \\
			& & 0.00 & 0.02 & -0.01 & 0.05 & 0.01 & 0.06 & -0.06 & -0.14 & -0.13 & -0.27 & 0.08 & -0.25 \\
			& & -0.01 & -0.03 & -0.00 & 0.00 & 0.01 & -0.01 & -0.01 & -0.04 & 0.02 & -0.13 & -0.00 & 0.03 \\
			& & -0.02 & -0.01 & -0.01 & 0.03 & 0.13 & -0.02 & 0.03 & -0.06 & -0.11 & -0.08 & -0.42 & 0.08 \\
			& & 0.00 & -0.02 & 0.00 & -0.03 & -0.02 & -0.05 & 0.02 & 0.02 & 0.04 & 0.19 & -0.17 & 0.24 \\
			& & -0.02 & -0.00 & -0.01 & -0.01 & -0.08 & 0.00 & -0.08 & -0.05 & 0.03 & 0.05 & 0.35 & -0.13 \\
			& & -0.00 & 0.01 & -0.01 & 0.05 & 0.01 & 0.05 & 0.01 & -0.10 & -0.16 & -0.17 & -0.03 & -0.23 \\
			\bottomrule
		\end{tabular}
	\end{center}
	\label{table:quad_experiments_add_unseen_initial_states_states}
\end{table*}

\begin{table*}[h!]
	\caption{
		Target, initial, and final states for the quadcopter control experiments depicted in  \cref{fig:quad_experiments_dist}.
		Each state of the system $\xbf = (x, y, z, \phi, \theta, \psi, \dot{x}, \dot{y}, \dot{z}, \dot{\phi}, \dot{\theta}, \dot{\psi}) \in \R^{12}$ comprises the quadcopter's position $(x, y, z)$, tilt angles $(\phi, \theta, \psi)$ (\ie~roll, pitch, and yaw), and their respective velocities.
	}
	\vspace{-1mm}
	\begin{center}
		\fontsize{7.5}{9.5}\selectfont
		\begin{tabular}{c|lcccccccccccc}
			\toprule
			& & $x$ & $y$ & $z$ & $\phi$ & $\theta$ & $\psi$ & $\dot{x}$ & $\dot{y}$ & $\dot{z}$  & $\dot{\phi}$  &  $\dot{\theta}$ &  $\dot{\psi}$ \\
			\midrule
			& \includegraphics[scale=0.45]{icons/target_state} Target State & 0 & 0 & 1 & 0 & 0 & 0 & 0 & 0 & 0 & 0 & 0 & 0 \\
			\midrule
			\multirow{15}{*}{\rotatebox[origin=c]{90}{\includegraphics[scale=0.45]{icons/training_state} \, Training}} & \multirow{5}{*}{Initial States}  & 0.50 & 0.00 & 0.50 & 0.00 & 0.00 & 0.00 & 0.00 & 0.00 & 0.00 & 0.00 & 0.00 & 0.00 \\
			& & 0.00 & 0.50 & 0.50 & 0.00 & 0.00 & 0.00 & 0.00 & 0.00 & 0.00 & 0.00 & 0.00 & 0.00 \\
			& & -0.50 & 0.00 & 0.50 & 0.00 & 0.00 & 0.00 & 0.00 & 0.00 & 0.00 & 0.00 & 0.00 & 0.00 \\
			& & 0.00 & -0.50 & 0.50 & 0.00 & 0.00 & 0.00 & 0.00 & 0.00 & 0.00 & 0.00 & 0.00 & 0.00 \\
			& & 0.00 & 0.00 & 1.00 & 0.00 & 0.00 & 0.00 & 0.00 & 0.00 & 0.00 & 0.00 & 0.00 & 0.00 \\
			\cmidrule{2-14}
			& \multirow{5}{*}{\shortstack[l]{Final States for \\ Policy Gradient Controller}} & -0.05 & -0.02 & 1.00 & -0.02 & 0.25 & 0.01 & 0.03 & -0.06 & -0.11 & -0.02 & -0.05 & 0.00 \\
			& & -0.01 & -0.03 & 0.99 & -0.14 & 0.01 & -0.13 & -0.04 & 0.27 & -0.07 & 0.14 & 0.06 & 0.10 \\
			& & 0.04 & 0.00 & 1.01 & 0.00 & -0.25 & -0.03 & -0.09 & 0.04 & -0.01 & 0.12 & 0.12 & 0.14 \\
			& & -0.01 & 0.03 & 1.01 & 0.17 & 0.01 & 0.13 & 0.00 & -0.32 & 0.06 & -0.09 & -0.04 & -0.19 \\
			& & 0.00 & -0.00 & 1.00 & 0.01 & 0.00 & -0.01 & 0.01 & 0.00 & -0.01 & 0.01 & -0.01 & -0.01 \\
			\cmidrule{2-14}
			& \multirow{5}{*}{\shortstack[l]{Final States for \\ Non-Extrapolating Controller}} & -0.06 & -0.00 & 1.01 & -0.04 & 0.20 & 0.02 & -0.03 & -0.03 & -0.06 & -0.06 & -0.25 & -0.07 \\
			& & 0.01 & -0.04 & 1.01 & -0.13 & 0.02 & -0.09 & 0.02 & 0.19 & -0.08 & 0.31 & 0.04 & 0.21 \\
			& & 0.06 & -0.00 & 1.01 & 0.04 & -0.20 & -0.02 & -0.02 & 0.02 & 0.15 & 0.09 & 0.42 & 0.08 \\
			& & -0.01 & 0.04 & 1.00 & 0.15 & -0.01 & 0.09 & -0.05 & -0.24 & -0.28 & -0.12 & 0.05 & -0.46 \\
			& & -0.00 & -0.00 & 1.00 & 0.03 & -0.00 & -0.02 & -0.02 & -0.02 & 0.04 & -0.08 & 0.01 & -0.07 \\
			\midrule
			\multirow{15}{*}{\rotatebox[origin=c]{90}{\includegraphics[scale=0.45]{icons/unseen_state} \, Unseen}} & \multirow{5}{*}{Initial States} & 0.00 & 0.00 & 0.50 & 0.00 & 0.00 & 0.00 & 0.00 & 0.00 & 0.00 & 0.00 & 0.00 & 0.00 \\
			& & 0.25 & 0.00 & 0.50 & 0.00 & 0.00 & 0.00 & 0.00 & 0.00 & 0.00 & 0.00 & 0.00 & 0.00 \\
			& & 0.75 & 0.00 & 0.50 & 0.00 & 0.00 & 0.00 & 0.00 & 0.00 & 0.00 & 0.00 & 0.00 & 0.00 \\
			& & 1.00 & 0.00 & 0.50 & 0.00 & 0.00 & 0.00 & 0.00 & 0.00 & 0.00 & 0.00 & 0.00 & 0.00 \\
			& & 1.25 & 0.00 & 0.50 & 0.00 & 0.00 & 0.00 & 0.00 & 0.00 & 0.00 & 0.00 & 0.00 & 0.00 \\
			\cmidrule{2-14}
			& \multirow{5}{*}{\shortstack[l]{Final States for \\ Policy Gradient Controller}} & 0.15 & 0.01 & 1.09 & 0.38 & 0.90 & 0.83 & 2.30 & -2.22 & -1.55 & 1.57 & 1.17 & 0.80 \\
			& & 0.10 & -0.13 & 1.09 & -0.11 & 0.08 & -0.00 & 0.71 & -0.55 & 0.01 & -0.95 & -0.64 & -0.77 \\
			& & -0.10 & 0.05 & 0.99 & -0.10 & 0.54 & -0.09 & 0.38 & 0.77 & 0.25 & 0.69 & 0.10 & -0.62 \\
			& & -0.03 & 0.03 & 0.97 & -0.32 & 1.02 & -0.23 & 1.72 & 1.46 & -0.64 & -0.12 & 0.90 & -1.17 \\
			& & -0.11 & -0.06 & 0.98 & -0.41 & 1.20 & -0.12 & 0.92 & 1.11 & -1.42 & -0.53 & 2.14 & -1.16 \\
			\cmidrule{2-14}
			& \multirow{5}{*}{\shortstack[l]{Final States for \\ Non-Extrapolating Controller}} & -0.09 & -0.04 & 0.02 & -0.01 & 0.04 & -0.06 & -0.04 & 0.03 & 0.21 & -0.12 & 0.20 & -0.24 \\
			& & -0.13 & 0.02 & 0.00 & 0.01 & 0.18 & 0.02 & -0.20 & -0.05 & 0.20 & -0.00 & -0.12 & -0.04 \\
			& & -0.07 & 0.01 & -0.01 & -0.01 & 0.23 & 0.03 & -0.34 & 0.02 & -0.14 & 0.18 & -0.51 & -0.24 \\
			& & -0.11 & 0.00 & 0.01 & -0.05 & 0.30 & -0.01 & -0.18 & 0.08 & -0.11 & 0.15 & -0.81 & -0.10 \\
			& & -0.16 & 0.00 & 0.02 & -0.11 & 0.40 & 0.01 & -0.14 & 0.07 & -0.05 & 0.21 & -0.97 & -0.29 \\
			\bottomrule
		\end{tabular}
	\end{center}
	\label{table:quad_dist_experiments_states}
\end{table*}

\end{document}